\NewDocumentCommand{\codeword}{v}{%
\texttt{\textcolor{blue}{#1}}%
}
\newcommand{\shams}[1]{{\color{red}{#1}}}
\newcommand{\cgb}[1]{{\color{black}{#1}}}
\renewcommand{\shams}[1]{{\color{black}{#1}}}
\renewcommand{\cgb}[1]{{\color{black}{#1}}}
\newtheorem{theorem}{Theorem}
\newtheorem{lemma}{Lemma}
\newtheorem{defn}{Definition}
\newtheorem{corollary}{Corollary}
\newcommand{\E}[2]{\mathbb{E}_{#2}\left[#1\right]}
\newcommand{\R}[1]{\mathbb{R}^{#1}}
\renewcommand{\vec}[3]{\boldsymbol{#1}_{#2}^{#3}}
\newcommand{\tvec}[3]{\widetilde{\boldsymbol{#1}}_{#2}^{#3}}
\newcommand{\subsup}[3]{#1_{#2}^{#3}}
\newcommand{\mset}[3]{\left\{#1\right\}_{#2}^{#3}}
\newcommand{\norm}[2]{\left\Vert#1\right\Vert_{#2}}
\newcommand{\abs}[1]{\left|#1\right|}
\newcommand{\innorm}[2]{\Vert #1\Vert_{#2}}
\newcommand{\dotp}[2]{\left\langle #1, #2 \right\rangle}
\newcommand{\indotp}[2]{\langle #1, #2 \rangle}
\newcommand{\aligneqn}[1]{{\begin{align}#1\end{align}}}
\newcommand{\eqn}[1]{{\begin{equation}#1\end{equation}}}
\newcommand{\bigo}[1]{\mathcal{O}\left( #1 \right)}
\newcommand{\sbigo}[1]{\mathcal{O}\hspace{-1mm}\left( #1 \right)}
\newcommand{\algName}{{\tt LBGM}}
\newcommand{\algFullName}{Look-back Gradient Multiplier}
\title{Recycling Model Updates in Federated \\Learning: Are Gradient Subspaces Low-Rank?}
\author{Sheikh Shams Azam, Seyyedali Hosseinalipour, Qiang Qiu, Christopher Brinton\\
School of ECE, Purdue University\\
\texttt{\{azam1, hosseina, qqiu, cgb\}@purdue.edu} \\
}
\begin{document}

\maketitle
\doparttoc
\faketableofcontents

%%%%%%%%%%%%%%%%%%%%%%%%%%%%%%%%%%%%%%%%%%%%%%%%%%%%%%%%%%%%
%%%%%%%%%%%%%%%%%%%%%%%%%%%%%%%%%%%%%%%%%%%%%%%%%%%%%%%%%%%%

\vspace{-2mm}
\begin{abstract}
  In this paper, we question the rationale behind propagating large numbers of parameters through a distributed system during federated learning. We start by examining the rank characteristics of the subspace spanned by gradients \shams{across epochs} (i.e., the gradient-space) in centralized model training, and observe that \shams{this} gradient-space often consists of a few leading principal components accounting for an overwhelming majority ($95-99\%$) of the explained variance. Motivated by this, we propose the "Look-back Gradient Multiplier" ({\tt LBGM}) algorithm, which \shams{exploits} this low-rank property \shams{to enable gradient recycling} between model update rounds \shams{of federated learning, reducing transmissions of large parameters to single scalars for aggregation}. We analytically characterize the convergence behavior of {\tt LBGM}, revealing the nature of the trade-off between communication savings and model performance. Our subsequent experimental results demonstrate the improvement {\tt LBGM} obtains in communication overhead compared to \shams{conventional }federated learning \shams{on several datasets and deep learning models}. Additionally, we show that {\tt LBGM} is a general plug-and-play algorithm that can be used standalone or stacked on top of existing sparsification techniques for distributed model training.
\end{abstract}

%%%%%%%%%%%%%%%%%%%%%%%%%%%%%%%%%%%%%%%%%%%%%%%%%%%%%%%%%%%%
%%%%%%%%%%%%%%%%%%%%%%%%%%%%%%%%%%%%%%%%%%%%%%%%%%%%%%%%%%%%

\section{Introduction}
\vspace{-1mm}

Federated Learning (FL) \citep{konevcny2016federated} has emerged as a popular  distributed machine learning (ML) paradigm. By having each device conduct local model updates, FL substitutes raw data transmissions with model parameter transmissions,
promoting data privacy \citep{shokri2015privacy, azam2020towards} and  communication savings \citep{wang2020tackling}.
At the same time, overparameterized neural networks (NN) are becoming ubiquitous in the ML models trained by FL, e.g., in computer vision \citep{
liu2016ssd, huang2017densely} and natural language processing \citep{brown2020language, 
liu2019roberta}. While NNs can have parameters in the range of a few million (VGG \citep{simonyan2014very}, ResNet \citep{he2016deep}) to several billions (GPT-3 \citep{brown2020language}, Turing NLG \citep{turing2020nlg}), prior works have demonstrated that a majority of these parameters are often irrelevant \citep{frankle2018lottery, liu2018rethinking, han2015learning, li2016pruning} in optimization and inference. This presents an opportunity to reduce communication overhead by transmitting lighter representations of the model, conventionally achieved through compression/sparsification techniques \citep{wang2018atomo, alistarh2016qsgd, vogels2019powersgd}. 

In this work, we investigate the ``overparameterization'' of NN training, through the lens of rank characteristics of the subspace spanned by the gradients (i.e., the gradient-space) generated during stochastic gradient descent (SGD).
\shams{We start with the} fundamental question: \shams{\textit{can we observe the effect of overparameterization in NN optimization directly through the principal components analysis (PCA) of the gradients generated during SGD-based training?}} And if so: \textit{can this property be used to reduce communication overhead in FL?} Our main hypothesis is that
\begin{equation}
    \textbf{the subspaces spanned by gradients generated \cgb{across SGD epochs} are low-rank.}
    \label{hypothesis}
    \tag{\bf H1}
\end{equation}
This leads us to propose a technique that instead of propagating a million/billion-dimensional vector (i.e., gradient) over the system in each iteration of FL only requires propagating a single scalar in the majority of the iterations. Our algorithm introduces a new class of techniques based on the concept of reusing/recycling device gradients over time. Our main contributions can be summarized as follows:
\vspace{-2mm}
\begin{itemize}[leftmargin=5mm]
    \item \shams{We demonstrate the low-rank characteristics of the gradient-space by directly studying its principal components for several NN models trained on various real-world datasets.} We show that \shams{principal gradient directions (i.e., directions of principal components of the gradient-space)} can be approximated in terms of actual gradients generated during the model training process.
    \vspace{-0.8mm}
    \item Our insights lead us to develop the ``Look-back Gradient Multiplier'' ({\algName}) algorithm to \shams{significantly reduce the communication overhead in FL}. {\algName} recycles \shams{previously transmitted gradients} to represent the newly-generated gradients at each device with a single scalar.
    We further analytically investigate the convergence characteristics of this algorithm.
    \vspace{-0.8mm}
    \item Our experiments show the communication savings obtained via {\algName} in FL both as a standalone solution and a plug-and-play method used with other gradient compression techniques, e.g., top-K. We further reveal that {\algName} can be extended to distributed training, e.g., {\algName} with SignSGD \citep{bernstein2018signsgd} substantially reduces communication overhead in multi-GPU systems.
\end{itemize}

%%%%%%%%%%%%%%%%%%%%%%%%%%%%%%%%%%%%%%%%%%%%%%%%%%%%%%%%%%%%
%%%%%%%%%%%%%%%%%%%%%%%%%%%%%%%%%%%%%%%%%%%%%%%%%%%%%%%%%%%%

\vspace{-3mm}
\section{A Gradient-Space Odyssey} 
\label{sec:explore}
\vspace{-3mm}
We first \shams{start by directly studying the principal component analysis (PCA) of} the gradient-space of overparameterized NNs. Given a centralized ML training task (e.g., classification, regression, segmentation), we exploit principal component analysis (PCA) \citep{pearson1901liii} to answer the following: \textit{how many principal components explain the $99\%$ and $95\%$ variance (termed \textsc{n99-pca} and \textsc{n95-pca}, respectively; together \textsc{n-pca}) of all the gradients generated during model training?} 

\begin{figure}[t]
\centering
\begin{minipage}{.495\textwidth}
  \centering
  \centerline{\includegraphics[width=1.0\textwidth]{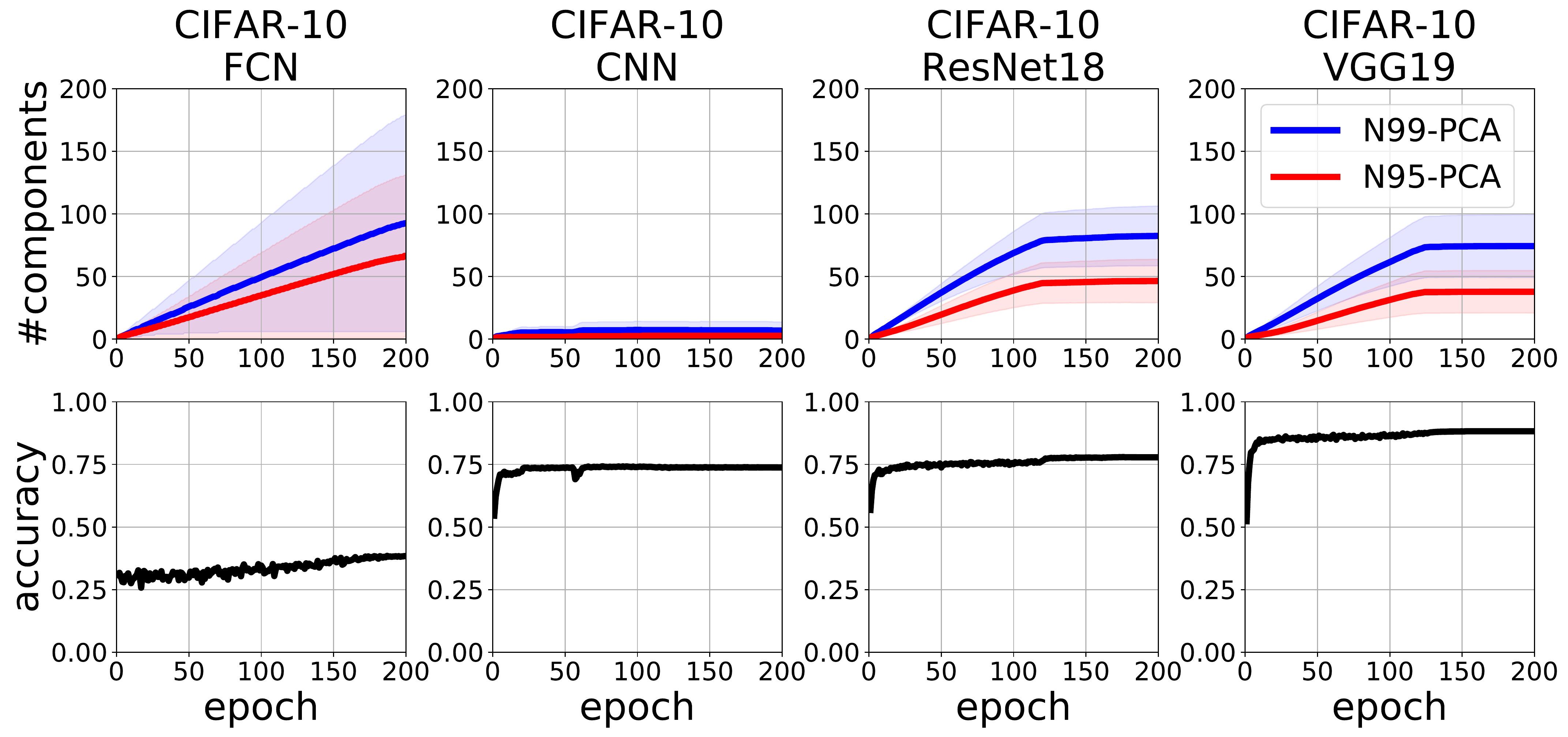}}
\end{minipage}
\begin{minipage}{.495\textwidth}
  \centering
  \centerline{\includegraphics[width=1.0\textwidth]{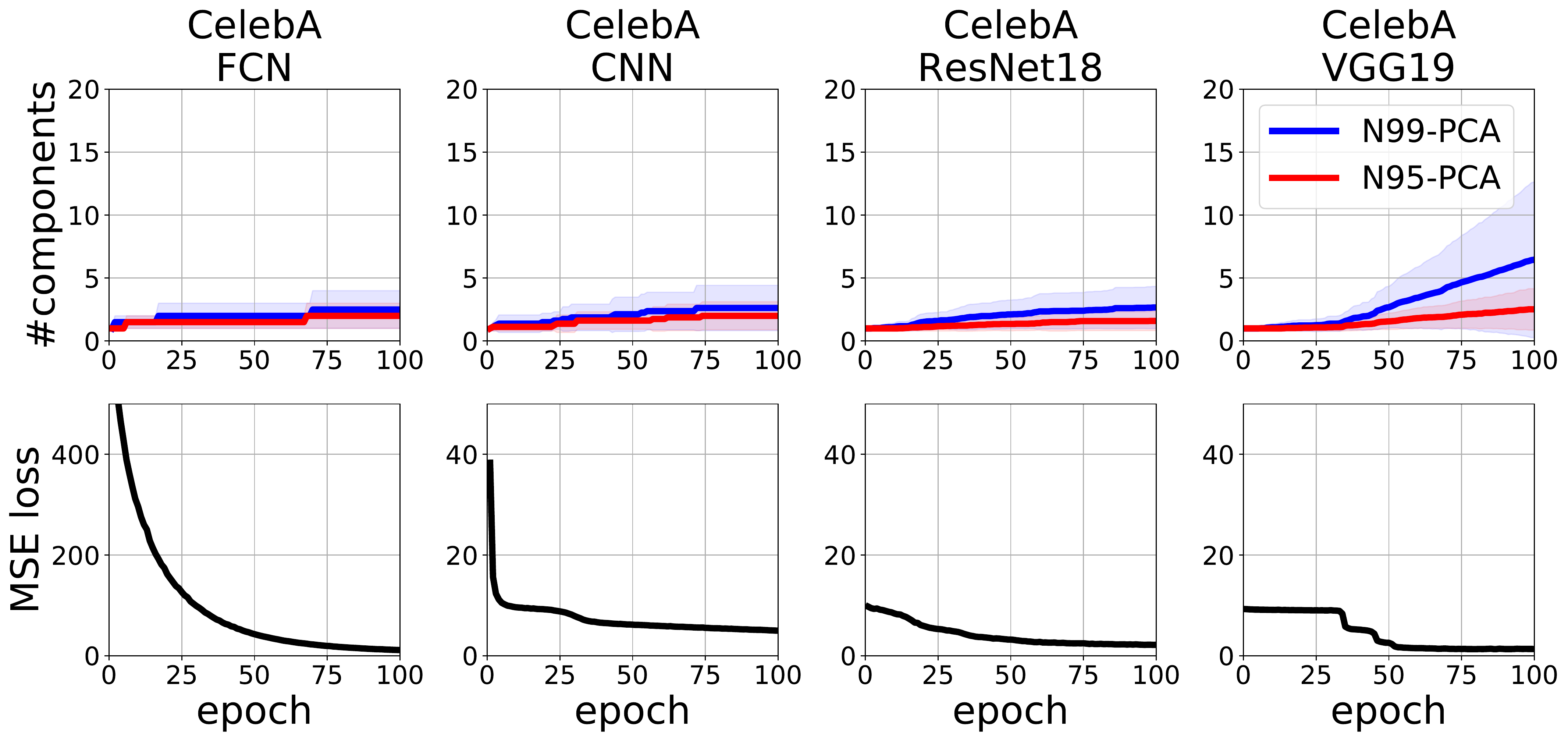}}
\end{minipage}
    \vspace{-4mm}
  \caption{\small{\textit{PCA components progression}. The top row shows the number of components that account for 99\% (N99-PCA in blue) and 95\% (N95-PCA in red) explained variance of all the gradients generated during gradient descent epochs. The bottom row shows the performance of the model on the test data. The results are presented for: (i) CIFAR-10 classification (left 4 columns), and (ii) CelebA regression (right 4 columns).}}
  \label{fig:prelim_1}
   \vspace{-4.5mm}
\end{figure}

We start with 4 different NN architectures: (i) fully-connected neural network (FCN), (ii) convolutional neural network (CNN), (iii) ResNet18 \citep{he2016deep}, and (iv) VGG19 \citep{simonyan2014very}; trained on 2 datasets:  CIFAR-10 \citep{krizhevsky2009learning} and CelebA \citep{liu2015faceattributes}, with classification and regression tasks, respectively. We then compute the \textsc{n-pca} for each epoch by applying PCA on the set of gradients accumulated until that epoch (pseudo-code in Algorithm~\ref{alg:pseudo} in \textbf{Appendix~\ref{app:pseudo}}). The results depicted in Fig.~\ref{fig:prelim_1} agree with our hypothesis \eqref{hypothesis}: both \textsc{n99-pca} and \textsc{n95-pca} are significantly lower than that the total number of gradients generated during model training, e.g., in Fig.~\ref{fig:prelim_1}\footnotemark[1]\footnotetext[1]{\shams{The addition of a learning rate scheduler (e.g., cosine annealing scheduler~\citep{loshchilov2016sgdr}) has an effect on the PCA of the gradient-space. Careful investigation of this phenomenon is left to future work.
}} 
the number of principal components (red and blue lines in top plots) are substantially lower (often as low as 10\% of number of epochs, i.e., gradients generated) for both datasets.
In \textbf{Appendix~\ref{app:prelim_expt_1}}, we further find that \eqref{hypothesis} holds in our experiments using several \shams{additional} datasets: CIFAR-100 \citep{krizhevsky2009learning}, MNIST \citep{lecun2010mnist}, FMNIST \citep{xiao2017fmnist}, CelebA \citep{liu2015faceattributes}, PascalVOC \citep{everingham2010pascal}, COCO \citep{lin2014microsoft}; models: U-Net \citep{ronneberger2015u}, SVM \citep{cortes1995support}; and tasks: segmentation and regression. Note that (especially on CIFAR-10) variations in \textsc{n-pca} across models are not necessarily related to the model performance (CNN performs almost as well as ResNet18 but has much lower \textsc{n-pca}; Fig.~\ref{fig:prelim_1} -- columns 2 \& 3) or complexity (CNN has more parameters than FCN but has lower \textsc{n-pca}; Fig.~\ref{fig:prelim_1} -- columns 1 \& 2). The fact that rank deficiency of the gradient-space is not a consequence of model complexity or performance suggests that the gradient-space of state-of-the-art large-scale ML models could be represented using a few \cgb{principal gradient directions~(PGD)}.

\textbf{N-PCA and FL.} In an ``ideal'' FL framework, if both the server and the workers/devices have the~\cgb{PGDs}, then the newly generated gradients can be transmitted by sharing their projections on the~\cgb{PGDs}, i.e., the ~\cgb{PGD} multipliers (PGM). PGMs and \cgb{PGDs} can together be used to reconstruct the device generated gradients at the server, \shams{dramatically reducing communication costs.} However, this setting is impractical since: (i) it is infeasible to obtain the~\cgb{PGDs} prior to the training, and (ii) PCA is computationally intensive. 
We thus look for an efficient online approximation of the~\cgb{PGDs}.

\begin{figure}[t]
\centering
\begin{minipage}{.495\textwidth}
  \centering
  \centerline{\includegraphics[width=1.0\textwidth]{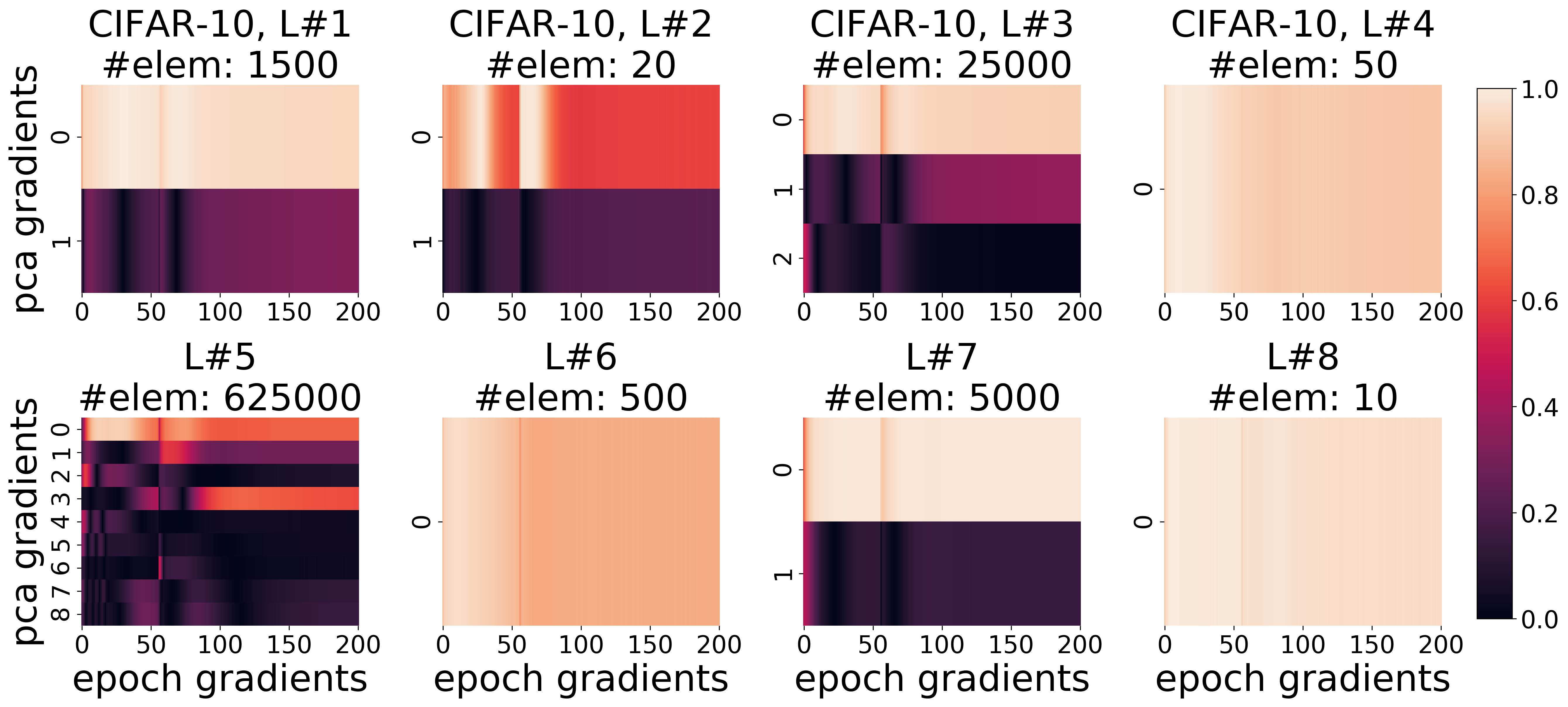}}
\end{minipage}
\begin{minipage}{.495\textwidth}
  \centering
  \centerline{\includegraphics[width=1.0\textwidth]{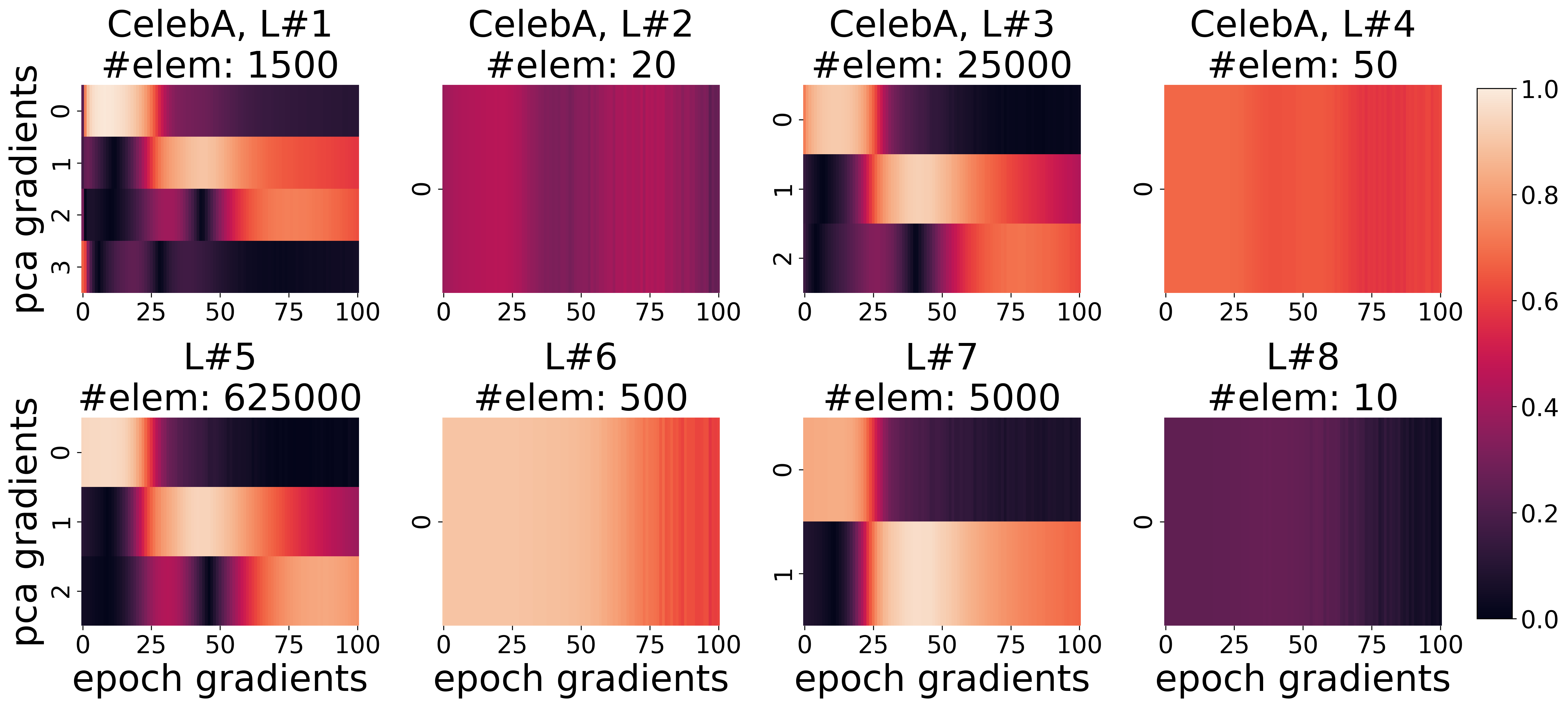}}
\end{minipage}
    \vspace{-4mm}
  \caption{\small{\textit{Overlap of actual and principal gradients}. The heatmap shows the pairwise cosine similarity \shams{between} actual (epoch) gradients and \shams{principal gradient directions (PCA gradients)}. Epoch gradients have a substantial overlap with one or more PCA gradients and consecutive epoch gradients show a gradual variation.
  \shams{This suggest that there may exist a high overlap between the gradients generated during the NN model training.}
The results are shown for a CNN classifier trained on CIFAR-10 (left 4 columns) and CelebA (right 4 columns) datasets. Each subplot is marked with \#L, the layer number of the CNN and \#elem, the number of elements in each layer.}}
  \label{fig:prelim_2}
      \vspace{-2.5mm}
\end{figure}

\textbf{Overlap of Actual Gradients and N-PCA.} To approximate~\cgb{PGDs}, we exploit an observation made in Fig.~\ref{fig:prelim_1} (and further in Appendix~\ref{app:prelim_expt_1}): \textsc{n-pca} mostly remains constant over time, suggesting that the gradients change gradually across SGD epochs.
To further investigate this, in Fig.~\ref{fig:prelim_2} (and further in Appendix~\ref{app:prelim_expt_2}) we plot the cosine similarity between~\cgb{PGDs} and actual gradients as a heatmap.\footnotemark[2] 
 We observe that (i) the similarity of actual gradients to~\cgb{PGDs} varies gradually over time, and (ii) 
 actual gradients have a high cosine similarity with one or more~\cgb{PGDs}. This leads to our second hypothesis:
 \vspace{-.3mm}
\begin{equation}
    \textbf{\cgb{PGDs} can be approximated using a subset of gradients \cgb{generated across SGD epochs}}.
    \label{hypothesis_2}
    \tag{\bf H2}
     \vspace{-.3mm}
\end{equation}
\textbf{Look-back Gradients.} 
Our observations above suggest a significant overlap among consecutive gradients generated during SGD epochs. This is further verified in Fig.~\ref{fig:prelim_3} (and in Appendix~\ref{app:prelim_expt_3}\footnotemark[3]), where we plot the pairwise cosine similarity of consecutive gradients generated during SGD epochs.\footnotemark[2] For example, consider the boxes marked B1, B2, and B3 in layer 1 (L\#1 in Fig.~\ref{fig:prelim_3}). Gradients in each box can be used to approximate other gradients within the box.
Also, interestingly, the number of such boxes that can be drawn is correlated with the corresponding number of~\cgb{PGDs}. Based on \eqref{hypothesis_2}, we next propose our \textit{Look-back Gradient Multiplier} (\algName) algorithm that utilizes a subset of actual gradients, termed ``look-back gradients'', to reuse/recycle gradients transmitted in FL.

\footnotetext[2]{Refer to Algorithm~\ref{alg:pseudo} in Appendix~\ref{app:pseudo} for the detailed pseudocode.}
\footnotetext[3]{\shams{2  of 24 experiments (Fig.~\ref{fig:prelim_3_mnist_vgg19}\&\ref{fig:prelim_3_mnist_resnet18} in Appendix~\ref{app:prelim_expt_3}) show inconsistent gradient overlaps. However, our algorithm discussed in Sec.~\ref{sec:method} still performs well on those datasets and models (see Fig.~\ref{fig:standalone_resnet18} in Appendix~\ref{app:prelim_expt_3}).}}

\begin{figure}[t]
\centering
\begin{minipage}{.495\textwidth}
  \centering
  \centerline{\includegraphics[width=1.0\textwidth]{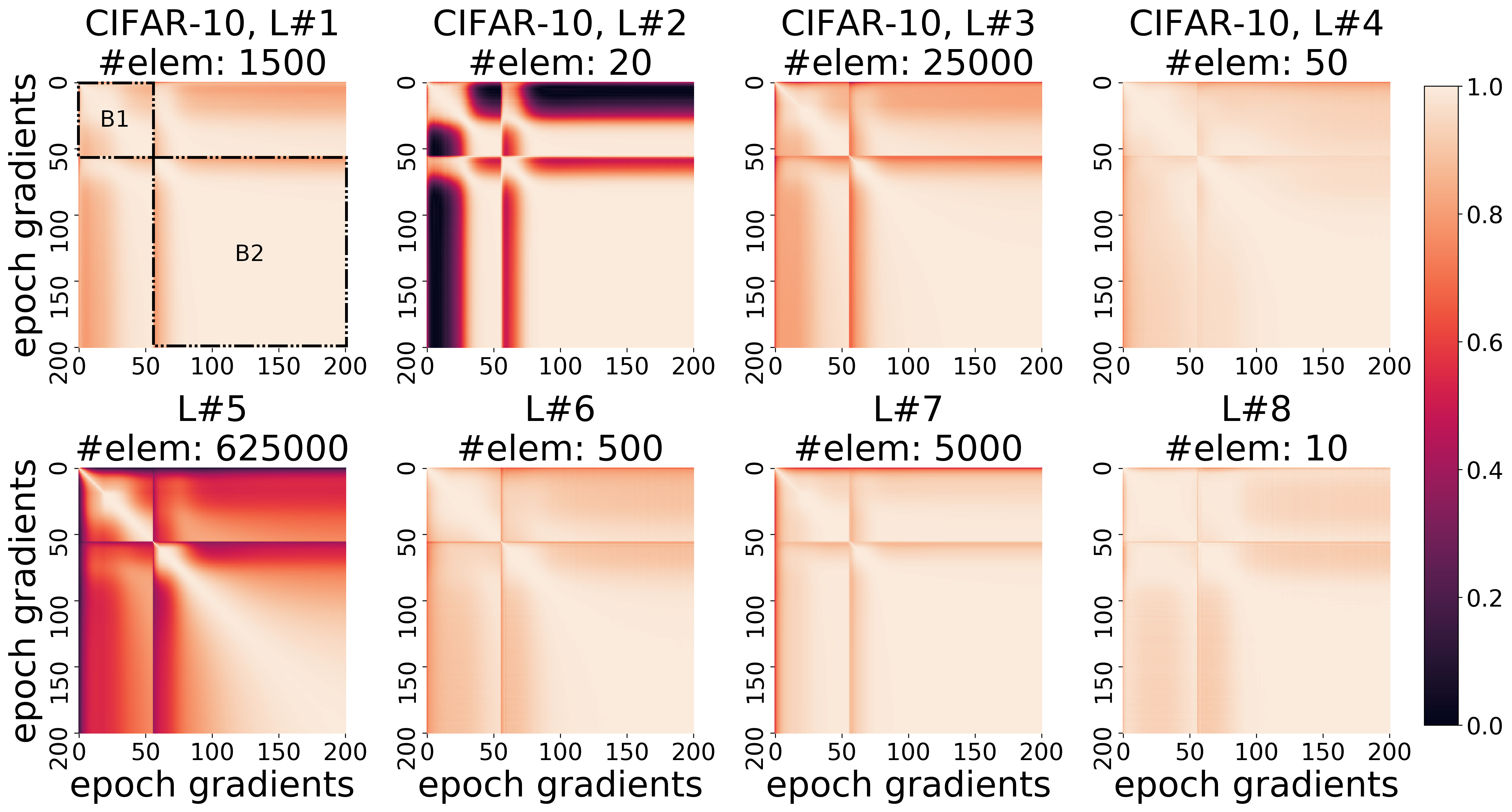}}
\end{minipage}
\begin{minipage}{.495\textwidth}
  \centering
  \vspace{2mm}
  \centerline{\includegraphics[width=1.0\textwidth]{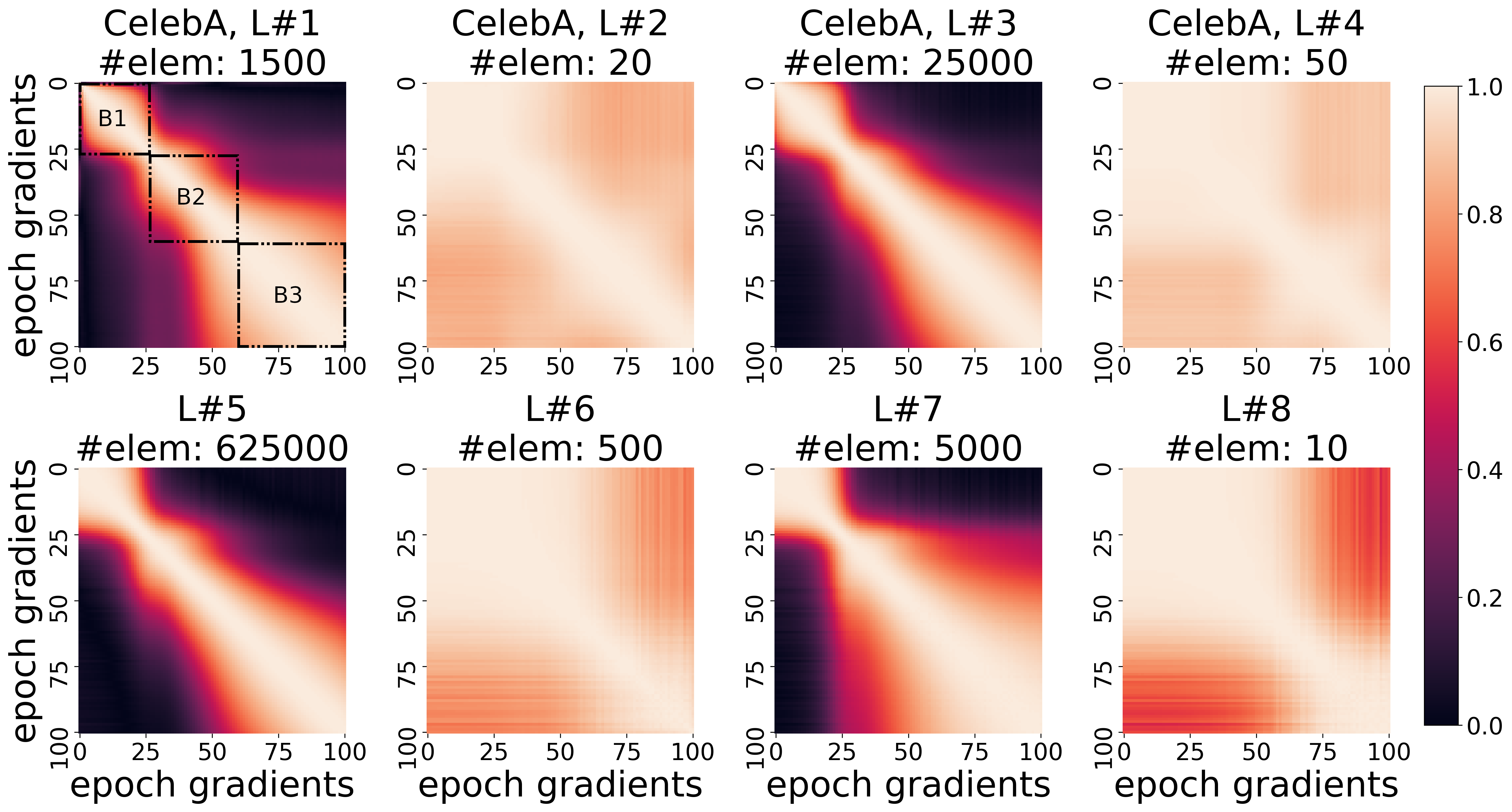}}
\end{minipage}
    \vspace{-4mm}
  \caption{\small{\textit{Similarity among consecutive gradients}. The cosine similarity  of consecutive gradients reveals a gradual change in directions of gradients over epochs. 
Thus,
  the newly generated gradients can be represented \shams{in terms of the previously generated gradients} with low approximation error. \shams{Reusing/recycling gradients can thus lead to significant communication savings during SGD-based  federated optimization.}
}}
  \label{fig:prelim_3}
    \vspace{-2mm}
\end{figure}

%%%%%%%%%%%%%%%%%%%%%%%%%%%%%%%%%%%%%%%%%%%%%%%%%%%%%%%%%%%%
%%%%%%%%%%%%%%%%%%%%%%%%%%%%%%%%%%%%%%%%%%%%%%%%%%%%%%%%%%%%

\vspace{-2.5mm}
\section{{\algFullName} Methodology}
\label{sec:method}
\vspace{-1.5mm}

Federated Learning (FL) considers a system of $K$ workers/devices indexed $1,...,K$, as shown in Fig.~\ref{fig:approx_fedl}. Each worker $k$ possesses a local dataset $\mathcal{D}_k$ with $n_k = |\mathcal{D}_k|$ datapoints. The goal of the system is to minimize the global loss function $F(\cdot)$ expressed through the following problem:
\eqn{
    \min_{\vec{\theta}{}{} \in \mathbb{R}^{M}}  F(\vec{\theta}{}{}) \triangleq \sum_{k=1}^K\omega_k F_k(\vec{\theta}{}{}){}{},
    \label{eqn:opt_obj}
}

\noindent where $M$ is the dimension of the model $\vec{\theta}{}{}$, $\omega_k=n_k/N$, $N=\sum_{k=1}^K n_k$, and $F_k(\vec{\theta}{}{}) = \sum_{d \in \mathcal{D}_k}f_k(\vec{\theta}{}{};d)/n_k$ is the local loss at worker $k$, with $f_k(\vec{\theta}{}{};d)$ denoting the loss function for data sample $d$ given parameter vector $\vec{\theta}{}{}$. FL tackles \eqref{eqn:opt_obj} via engaging the workers in local SGD model training on their own datasets. The local models are periodically transferred to and aggregated at the main server after $\tau$ local updates, forming a global model that is used to synchronize the workers before starting the next round of local model training.

\shams{At the start of round $t$, each model parameter $\vec{\theta}{k}{(t,0)}$ is initialized with the global model $\vec{\theta}{}{(t)}$. Thereafter,} worker $k$ updates its parameters $\vec{\theta}{k}{(t,b)}$ as: $\vec{\theta}{k}{(t, b+1)} \hspace{-0.5mm}\leftarrow\hspace{-0.5mm} \vec{\theta}{k}{(t, b)} - \eta \vec{g}{k}{}(\vec{\theta}{k}{(t, b)})$, where $ \vec{g}{k}{}(\vec{\theta}{k}{(t, b)})$ is the stochastic gradient at local step $b$, and $\eta$ is the step size. During a vanilla FL aggregation, the global model parameters are updated as $\vec{\theta}{}{(t+1)} \leftarrow \vec{\theta}{}{(t)} -  \eta \sum_{k=1}^K\omega_k\vec{g}{k}{(t)}$, where $\vec{g}{k}{(t)} = \sum_{b=0}^{\tau-1}  \vec{g}{k}{}(\vec{\theta}{k}{(t, b)})$ is the accumulated stochastic gradient (ASG) at worker $k$. \cgb{More generally, this aggregation may be conducted over a subset of workers at time $t$. 
} We define $\nabla \subsup{F}{k}{}(\vec{\theta}{k}{(t)}) = \sum_{b=0}^{\tau-1}  \nabla F_k(\vec{\theta}{k}{(t,b)})$ and $\vec{d}{k}{(t)} = \vec{g}{k}{(t)}/\tau$ as the corresponding accumulated true gradient and normalized ASG, respectively. 

%%%%%%%%%%%%%%%%%%%%%%%%%%%%%%%%%%%%%%%%%%%%%%%%%%%%%%%%%%%%
%%%%%%%%%%%%%%%%%%%%%%%%%%%%%%%%%%%%%%%%%%%%%%%%%%%%%%%%%%%%

\textbf{Indexing and Notations.}  In \textit{superscripts with parenthesis} expressed as tuples, the first element denotes the global aggregation round while the second element denotes the local update round, e.g., $\vec{g}{k}{(t,b)}$ is the gradient at worker $k$ at global aggregation round $t$ at local update $b$.  \textit{Superscripts without parenthesis} denote the index for look-back gradients, e.g., $\ell$ in $\subsup{\rho}{k}{(t),\ell}$ defined below.

\textbf{{{\algName}} Algorithm.}  {\algName} (see Fig.~\ref{fig:approx_fedl}) consists of three main steps: (i) workers initialize and propagate their look-back gradients (LBGs) to the server; (ii) workers estimate their look-back coefficients (LBCs), i.e., the scalar projection of subsequent ASGs on their LBGs, and the look-back phase (LBP), i.e., the angle between the ASG and the LBG; and (iii) workers update their LBGs and propagate them to the server if the LBP passes a threshold, otherwise they only transmit the scalar LBC. Thus, {\algName} propagates only a subset of actual gradients generated at the devices to the server. The intermediate global aggregation steps between two LBG propagation rounds only involve transfer of a \textit{single scalar}, i.e., the LBC, from each worker, instead of the entire ASG vector.

In {\algName}, the local model training is conducted in the same way as vanilla FL, while model aggregations at the server are conducted via the following rule:
\eqn{
    \vec{\theta}{}{(t+1)} = \vec{\theta}{}{(t)} - \eta\,\sum_{k=1}^K\omega_k \tvec{g}{k}{(t)},
    \label{eqn:lbgm_update}
    \vspace{-2mm}
}

\noindent where $\tvec{g}{k}{(t)}$ is the approximation of worker $k$'s accumulated stochastic gradient, given by Definition~\ref{lem:grad_proj}.

%%%%%%%%%%%%%%%%%%%%%%%%%%%%%%%%%%%%%%%%%%%%%%%%%%%%%%%%%%%%
%%%%%%%%%%%%%%%%%%%%%%%%%%%%%%%%%%%%%%%%%%%%%%%%%%%%%%%%%%%%
\begin{figure}[t]
  \centering
  \centerline{\includegraphics[width=.95\textwidth]{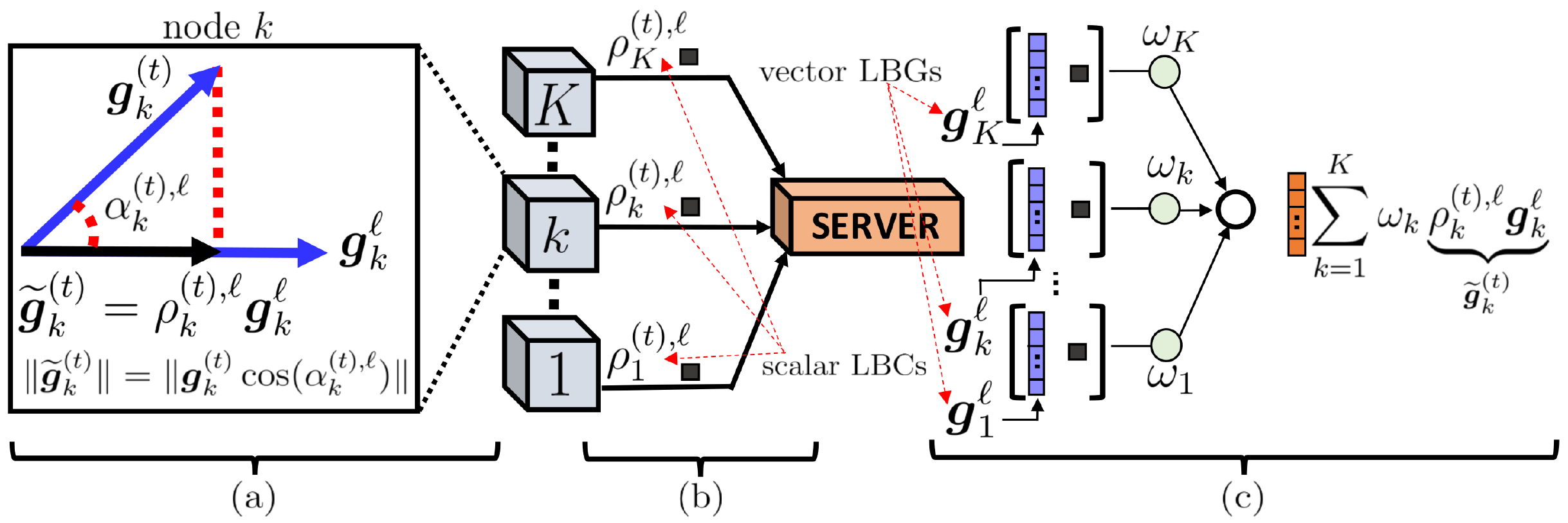}}
  \vskip -0.15in
  \caption{\small{\textit{Look-back gradient multiplier}. (a) The Look-back Coefficients (LBCs) are the projection of accumulated stochastic gradients at the workers on their Look-back Gradients (LBGs). (b) Scalar LBCs, i.e., the $\subsup{\rho}{k}{(t),\ell}$, are transmitted to the server. (c) LBG-based gradient approximations are reconstructed at the server.}}
  \label{fig:approx_fedl}
  \vspace{-3mm}
\end{figure}
%%%%%%%%%%%%%%%%%%%%%%%%%%%%%%%%%%%%%%%%%%%%%%%%%%%%%%%%%%%%
%%%%%%%%%%%%%%%%%%%%%%%%%%%%%%%%%%%%%%%%%%%%%%%%%%%%%%%%%%%%

%%%%%%%%%%%%%%%%%%%%%%%%%%%%%%%%%%%%%%%%%%%%%%%%%%%%%%%%%%%%
% gradient projection lemma
%%%%%%%%%%%%%%%%%%%%%%%%%%%%%%%%%%%%%%%%%%%%%%%%%%%%%%%%%%%%
\begin{defn}
\label{lem:grad_proj}
\textbf{(Gradient Approximation in {\algName})} Given the ASG $\vec{g}{k}{(t)}$, and the LBG $\vec{g}{k}{\ell}$, the gradient approximation $\tvec{g}{k}{(t)}$ recovered by the server is given by:
\eqn{
    \vspace{-1.1mm}
    \tvec{g}{k}{(t)} = \subsup{\rho}{k}{(t),\ell} \vec{g}{k}{\ell} \; \text{ for } \; \norm{\subsup{\rho}{k}{(t),\ell}\vec{d}{k}{\ell}}{} = \norm{\vec{d}{k}{(t)} \cos(\subsup{\alpha}{k}{(t),\ell})}{},
    \label{eqn:grad_proj}
    \tag{\bf D1}
    \vspace{-2mm}
}

\noindent where LBC $\subsup{\rho}{k}{(t),\ell} = { \indotp{ \vec{g}{k}{(t)} }{ \vec{g}{k}{\ell}} }\big/{ \innorm{\vec{g}{k}{\ell}}{}^2}$ is the projection of the accumulated gradient $\vec{g}{k}{(t)}$ on the LBG $\vec{g}{k}{\ell}$, and LBP {$\subsup{\alpha}{k}{(t),\ell}$ denotes the angle between $\vec{g}{k}{(t)}$ and $\vec{g}{k}{\ell}$} (see Fig.~\ref{fig:approx_fedl}(a)).
\end{defn}
%%%%%%%%%%%%%%%%%%%%%%%%%%%%%%%%%%%%%%%%%%%%%%%%%%%%%%%%%%%%
%%%%%%%%%%%%%%%%%%%%%%%%%%%%%%%%%%%%%%%%%%%%%%%%%%%%%%%%%%%%

{\algName} initializes the LBGs $\vec{g}{k}{\ell}$ with the first actual gradients propagated by the devices at $t=1$.
For the subsequent aggregations, each worker $k$ shares only its LBC $\subsup{\rho}{k}{(t),\ell}$ with the server if the value of LBP error $\sin^2(\subsup{\alpha}{k}{(t),\ell})$ is below a threshold, i.e., $\sin^2(\subsup{\alpha}{k}{(t),\ell}) \leq \subsup{\delta}{k}{\mathsf{threshold}}$, where $\subsup{\delta}{k}{\mathsf{threshold}}\in[0,1]$ is a tunable parameter; otherwise it updates the LBG via transmitting the entire gradient to the server.

The details of {\algName} are given in Algorithm~\ref{alg:algo_training}. 
We next conduct convergence analysis for \algName.

\begin{algorithm}[t]
{\footnotesize 
  \caption{\algName: \algFullName}
\label{alg:algo_training}
\begin{algorithmic}[1]
    \Statex \textbf{\underline{Notation:}}
    \Statex $\vec{\theta}{}{(t)}$: global model parameter at global aggregation round $t$.
    \Statex $\vec{\theta}{k}{(t, b)}$: model parameter at worker $k$, at global aggregation round $t$ and local update $b$.
    \Statex $\vec{g}{k}{(t)}$: accumulated gradient at worker $k$ at global aggregation round $t$.
    \Statex $\vec{g}{k}{\ell}$: last full gradient transmitted to server, termed look-back gradient (LBG).
    \Statex $\subsup{\alpha}{k}{(t), \ell}$: phase between the accumulated gradient $\vec{g}{k}{(t)}$ and LBG $\vec{g}{k}{\ell}$, termed look-back phase (LBP).
    \Statex \textbf{\underline{Training at worker $k$:}}
    \State Update local parameters: $\vec{\theta}{k}{(t, 0)} \leftarrow \vec{\theta}{}{(t)}$, and initialize gradient accumulator: $\vec{g}{k}{(t)} \leftarrow \vec{0}{}{}$.
    \For{$b = 0$ to ($\tau$-1)}        
        \State Sample a minibatch of  datapoints $\mathcal{B}_k$ from  $\mathcal{D}_{k}$ and compute $\vec{g}{k}{(t, b)}= \sum_{d\in \mathcal{B}_k} \nabla f_k(\vec{\theta}{k}{(t, b)};d)/|\mathcal{B}_k|$.
        
        \State Update local parameters: $\vec{\theta}{k}{(t, b+1)} \leftarrow \vec{\theta}{k}{(t, b)} - \eta \cdot \vec{g}{k}{(t, b)}$, and accumulate gradient: $\vec{g}{k}{(t)} \leftarrow \vec{g}{k}{(t)} + \vec{g}{k}{(t,b)}$.
    \EndFor
        \State Calculate the LBP error:  $ \sin^2(\subsup{\alpha}{k}{(t), \ell}) = 1 - \left( \indotp{ \vec{g}{k}{(t)} }{ \vec{g}{k}{\ell}}  \big/ \big(\innorm{ \vec{g}{k}{(t)} }{} \times \innorm{ \vec{g}{k}{\ell} }{} \big) \right)^2$ \label{line:lbp}
    \If{ $\sin^2(\subsup{\alpha}{k}{(t), \ell}) \leq \subsup{\delta}{k}{\mathsf{threshold}}$ } \label{line:lbp_cond} \Comment{checking the LBP error}
        \State Send scalar LBC to the server: $\vec{\mu}{k}{(t)} \leftarrow \subsup{\rho}{k}{(t),\ell} = \indotp{ \vec{g}{k}{(t)} }{ \vec{g}{k}{\ell} }/\innorm{ \vec{g}{k}{\ell} }{}^2$.
    \Else \Comment{updating the LBG}
        \State Send actual gradient to the server: $\vec{\mu}{k}{(t)} \leftarrow \vec{g}{k}{(t)}$. \label{line:update1}
        \State Update worker-copy of LBG: $\vec{g}{k}{\ell} \leftarrow \vec{g}{k}{(t)}$. \label{line:update2}
    \EndIf
    \Statex \textbf{\underline{Global update at the aggregation server:}}
    \State Initialize global parameter $\vec{\theta}{}{(0)}$ and broadcast it across workers.
    \For{$t = 0$ to $(T-1)$}
        \State Receive updates from workers $\{ \vec{\mu}{k}{(t)} \}_{k=1}^{K}$.
        \State Update global parameters: $\vec{\theta}{}{(t+1)} \leftarrow \vec{\theta}{}{(t)} - \eta \sum_{k=1}^{K}\omega_k \left[ s_k \cdot  \vec{\mu}{k}{(t)} \cdot \vec{g}{k}{\ell} + (1-s_k)\cdot \vec{\mu}{k}{(t)}\right]$, 
        \Statex ~~~~~~where $s_k$ is an indicator function given by, $s_k = \begin{cases} 1, \quad \text{if } 
        % |\vec{\mu}{k}{(t)}| = 1\text{, i.e.,  }
        \vec{\mu}{k}{(t)}\text{ is a scalar} \\
        0, \quad \text{otherwise, i.e., if }\vec{\mu}{k}{(t)}\text{ is a vector} \end{cases}$.
        \State Update server-copy of LBGs: $ \vec{g}{k}{\ell} \leftarrow (1-s_k) \vec{\mu}{k}{(t)} + (s_k) \vec{g}{k}{\ell },~~\forall k$.
    \EndFor
\end{algorithmic}
}
\end{algorithm}
%%%%%%%%%%%%%%%%%%%%%%%%%%%%%%%%%%%%%%%%%%%%%%%%%%%%%%%%%%%%
%%%%%%%%%%%%%%%%%%%%%%%%%%%%%%%%%%%%%%%%%%%%%%%%%%%%%%%%%%%%

%%%%%%%%%%%%%%%%%%%%%%%%%%%%%%%%%%%%%%%%%%%%%%%%%%%%%%%%%%%%
% assumptions
%%%%%%%%%%%%%%%%%%%%%%%%%%%%%%%%%%%%%%%%%%%%%%%%%%%%%%%%%%%%
\textbf{Assumptions:} It is presumed that the local loss functions are bounded below: $\min_{\vec{\theta}{}{}\in\mathbb{R}^{M}} F_k(\vec{\theta}{}{}) > -\infty$, $\forall k$. Let {\small$\Vert\cdot\Vert$} denote the 2-norm. \shams{Our analysis uses} the following \shams{standard} assumptions \citep{wang2020tackling, friedlander2012hybrid, hosseinalipour2020multi, li2019convergence, stich2018local}:
\begin{enumerate}[leftmargin=5mm]
    \vspace{-2.2mm}
    \item \textit{Smoothness of Local Loss Functions:} Local loss function $F_k: \R{M} \to \R{}$, $\forall k$, is $\beta$-smooth: 
    \eqn{
        \norm{ \nabla F_k(\vec{\theta}{x}{}) - \nabla F_k(\vec{\theta}{y}{}) }{} \leq \beta\norm{ \vec{\theta}{x}{}-\vec{\theta}{y}{} }{}, \\~~\forall \vec{\theta}{x}{}, \vec{\theta}{y}{} \in \R{M}.
        \label{eqn:lipschitz}
        \tag{\bf A1}
    }
    \vspace{-3.1mm}
    \item \textit{SGD Characteristics:} Local gradients $\vec{g}{k}{}(\vec{\theta}{}{})$, $\forall k$, estimated by SGD are unbiased estimators of the true gradients $\nabla F_k(\vec{\theta}{}{})$, and have a bounded variance $\sigma^2 \geq 0$; mathematically:
    \eqn{
        \E{ \,\vec{g}{k}{}(\vec{\theta}{}{})\, }{}= \nabla F_k(\vec{\theta}{}{}) \text{, and } \E{  \norm{\vec{g}{k}{}(\vec{\theta}{}{}) - \nabla F_k(\vec{\theta}{}{})}{}^2}{} \leq \sigma^2, ~~\forall \vec{\theta}{}{}\in \R{M}.
        \label{eqn:sgd_noise}
        \tag{\bf A2}
    }
    \vspace{-3.5mm}
    \item \textit{Bounded Dissimilarity of Local Loss Functions:} For any realization of weights $\mset{\omega_k \geq 0}{k=1}{K}$, where $\sum_{k=1}^{K}\omega_k = 1$, there exist non-negative constants $\Upsilon^2 \geq 1$ and $\Gamma^2\geq 0$ such that
    \vspace{-2mm}
    \eqn{
        \sum_{k=1}^{K} \omega_k \norm{\nabla F_k(\vec{\theta}{}{})}{}^2 \leq \Upsilon^2 \norm{ \sum_{k=1}^{K}\omega_k \nabla F_k(\vec{\theta}{}{}) }{}^2 + \Gamma^2,~~\forall \vec{\theta}{}{}\in \R{M}.
        \label{eqn:bbd_diversity}
        \tag{\bf A3}
        \vspace{-.1mm}
    }
\end{enumerate}
%%%%%%%%%%%%%%%%%%%%%%%%%%%%%%%%%%%%%%%%%%%%%%%%%%%%%%%%%%%%
%%%%%%%%%%%%%%%%%%%%%%%%%%%%%%%%%%%%%%%%%%%%%%%%%%%%%%%%%%%%

%%%%%%%%%%%%%%%%%%%%%%%%%%%%%%%%%%%%%%%%%%%%%%%%%%%%%%%%%%%%
% Convergence of LBGM-FL
%%%%%%%%%%%%%%%%%%%%%%%%%%%%%%%%%%%%%%%%%%%%%%%%%%%%%%%%%%%%
\begin{theorem}
\label{thm:conv_char}
\textbf{(General Convergence Characteristic of {\algName})} 
    Assume \ref{eqn:lipschitz}, \ref{eqn:sgd_noise}, \ref{eqn:bbd_diversity}, and that $\eta\beta \leq \min\big\{ 1/(2\tau), 1/\big(\tau\sqrt{2(1+4\Upsilon^2)}\big) \big\}$.
    If the threshold value in step \ref{line:lbp_cond} of Algorithm~\ref{alg:algo_training} satisfies the condition $\delta^{\mathsf{threshold}}_k \leq \Delta^2/\innorm{\vec{d}{k}{(t)}}{}^2$, $\forall k$ 
    , where $\Delta^2\geq 0$ is a constant, then after $T$ rounds of global aggregations, the performance of {\algName} is characterized by the following upper bound:
    \vspace{-6mm}
    
{\small    
\aligneqn{
    \hspace{-3mm}\frac{1}{T}\hspace{-0.45mm} \sum_{t=0}^{T-1} \hspace{-0.45mm}\E{\norm{\nabla F(\vec{\theta}{}{(t)})}{}^2}{} \hspace{-0.65mm} \leq \hspace{-0.65mm}\frac{8\hspace{-0.35mm}\big[\hspace{-0.45mm}F(\vec{\theta}{}{(0)})\hspace{-0.45mm} -\hspace{-0.45mm} F^\star\hspace{-0.45mm}\big]}{\eta\tau T}\hspace{-0.5mm} \hspace{-0.5mm}+ \hspace{-0.5mm} 16 \Delta^2 +\hspace{-0.5mm} 8\eta\beta\sigma^2  \hspace{-0.5mm}+ \hspace{-0.5mm} 5\eta^2\beta^2\sigma^2(\hspace{-0.2mm}\tau-1\hspace{-0.2mm}) \hspace{-0.55mm}+ \hspace{-0.55mm}20 \eta^2\beta^2 \Gamma^2 \tau(\hspace{-0.2mm}\tau-1\hspace{-0.2mm}).\hspace{-3mm}
    \label{eqn:thm_conv_char}
}}
\end{theorem}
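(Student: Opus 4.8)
The plan is to follow the standard telescoping descent argument for non-convex local SGD, augmented with a clean bound on the look-back approximation error. First I would invoke the $\beta$-smoothness of $F$ (Assumption~\ref{eqn:lipschitz}) along the aggregation update \eqref{eqn:lbgm_update} to obtain the one-round descent inequality
\[
F(\vec{\theta}{}{(t+1)}) \leq F(\vec{\theta}{}{(t)}) - \eta \dotp{\nabla F(\vec{\theta}{}{(t)})}{\sum_{k=1}^K\omega_k \tvec{g}{k}{(t)}} + \frac{\beta\eta^2}{2}\norm{\sum_{k=1}^K\omega_k \tvec{g}{k}{(t)}}{}^2 .
\]
The central device is to split each transmitted gradient as $\tvec{g}{k}{(t)} = \vec{g}{k}{(t)} + (\tvec{g}{k}{(t)} - \vec{g}{k}{(t)})$, isolating the exact ASG from the look-back error so that the two can be treated by separate machinery.

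The look-back error is the piece unique to {\algName}, and it is the cleanest. Since $\tvec{g}{k}{(t)} = \subsup{\rho}{k}{(t),\ell}\vec{g}{k}{\ell}$ is by Definition~\ref{lem:grad_proj} the orthogonal projection of $\vec{g}{k}{(t)}$ onto the LBG direction, Pythagoras gives $\norm{\tvec{g}{k}{(t)} - \vec{g}{k}{(t)}}{}^2 = \norm{\vec{g}{k}{(t)}}{}^2\sin^2(\subsup{\alpha}{k}{(t),\ell})$. Substituting $\vec{d}{k}{(t)} = \vec{g}{k}{(t)}/\tau$ into the threshold hypothesis $\delta^{\mathsf{threshold}}_k \leq \Delta^2/\norm{\vec{d}{k}{(t)}}{}^2$ collapses this to the uniform bound $\norm{\tvec{g}{k}{(t)} - \vec{g}{k}{(t)}}{}^2 \leq \Delta^2\tau^2$, which holds whether or not a scalar is transmitted (the error is exactly zero on rounds where the full gradient is sent). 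After weighting by $\omega_k$, applying Young's inequality to the cross term, and dividing through by $\eta\tau$ at the end, this is what produces the $16\Delta^2$ term.

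For the exact-ASG part I would reuse the standard local-SGD analysis. Taking conditional expectations and using Assumption~\ref{eqn:sgd_noise}, $\E{\vec{g}{k}{(t)}}{} = \nabla\subsup{F}{k}{}(\vec{\theta}{k}{(t)})$, the accumulated true gradient. The inner-product term then requires relating $\nabla\subsup{F}{k}{}(\vec{\theta}{k}{(t)})$ back to $\tau\,\nabla F(\vec{\theta}{}{(t)})$; the discrepancy is governed by the local-model drift $\E{\norm{\vec{\theta}{k}{(t,b)} - \vec{\theta}{}{(t)}}{}^2}{}$, which I would bound by a recursion over the $\tau$ local steps using $\beta$-smoothness and the bounded variance $\sigma^2$, feeding in Assumption~\ref{eqn:bbd_diversity} to convert $\sum_k\omega_k\norm{\nabla F_k}{}^2$ into $\Upsilon^2\norm{\nabla F}{}^2 + \Gamma^2$. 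This recursion is where the $\tau(\tau-1)$ factors and the higher-order $\sigma^2$ and $\Gamma^2$ terms originate, and it is precisely what forces the step-size restriction $\eta\beta \leq 1/\big(\tau\sqrt{2(1+4\Upsilon^2)}\big)$: that bound keeps the drift contribution small enough that the coefficient eventually multiplying $\norm{\nabla F(\vec{\theta}{}{(t)})}{}^2$ stays positive.

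The final step is to assemble the per-round inequality, sum over $t = 0,\dots,T-1$ so the $F(\vec{\theta}{}{(t)})$ terms telescope to $F(\vec{\theta}{}{(0)}) - F^\star$, divide by $T$, and rearrange to move the averaged gradient-norm term to the left. The main obstacle I anticipate is not the projection error (which is immediate) but the bookkeeping in the drift recursion together with the subsequent coefficient matching: one must track several $\sigma^2$ and $\Gamma^2$ contributions appearing at different orders in $\eta\beta\tau$ and verify, using both halves of the step-size condition, that the leading coefficient on $\norm{\nabla F(\vec{\theta}{}{(t)})}{}^2$ is bounded below by the constant $\eta\tau/8$, which is exactly what yields the stated $8\big[F(\vec{\theta}{}{(0)}) - F^\star\big]/(\eta\tau T)$ prefactor after rearrangement.
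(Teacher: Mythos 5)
Your proposal follows essentially the same route as the paper's proof: the $\beta$-smoothness descent inequality on the aggregation update, the decomposition of the transmitted gradient into the exact ASG plus the look-back projection error, the Pythagorean identity combined with the threshold condition to bound that error by $\Delta^2$ (up to the $\tau^2$ normalization), the standard local-SGD drift recursion with Assumption \ref{eqn:bbd_diversity} producing the $\tau(\tau-1)$ terms and the step-size restriction that keeps the coefficient of $\norm{\nabla F(\vec{\theta}{}{(t)})}{}^2$ at $1/8$, and the final telescoping. The only differences are cosmetic (you work with unnormalized gradients where the paper uses $\vec{d}{k}{(t)}=\vec{g}{k}{(t)}/\tau$), so the proposal is correct and matches the paper's argument.
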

\vspace{-3mm}
%%%%%%%%%%%%%%%%%%%%%%%%%%%%%%%%%%%%%%%%%%%%%%%%%%%%%%%%%%%%
\vspace{-0.05in}
\begin{proof} The proof is provided in Appendix~\ref{app:proof_1}.
\end{proof}
\vspace{-0.1in}
%%%%%%%%%%%%%%%%%%%%%%%%%%%%%%%%%%%%%%%%%%%%%%%%%%%%%%%%%%%%
%%%%%%%%%%%%%%%%%%%%%%%%%%%%%%%%%%%%%%%%%%%%%%%%%%%%%%%%%%%%

The condition on $\delta^{\mathsf{threshold}}_k$ and LBP error $\sin^2(\subsup{\alpha}{k}{(t), \ell})$ in the above theorem implies that to have a fixed bound in~\eqref{eqn:thm_conv_char}, for a fixed $\Delta^2$, a larger gradient norm $\innorm{\vec{d}{k}{(t)}}{}^2$ is associated with a tighter condition on the LBP error $\sin^2(\subsup{\alpha}{k}{(t), \ell})$ and  $\subsup{\delta}{k}{\mathsf{threshold}}$. This is intuitive because a larger gradient norm corresponds to a larger estimation error when the gradient is recovered at the server for a given LBP (see Fig.~\ref{fig:approx_fedl}). In practice, since the gradient norm $\innorm{\vec{d}{k}{(t)}}{}^2$ does not grow to infinity during model training, the condition on LBP error in Theorem~\ref{thm:conv_char}, i.e., $ \sin^2(\subsup{\alpha}{k}{(t), \ell}) \leq \Delta^2/\innorm{\vec{d}{k}{(t)}}{}^2$, can always be satisfied for any $\Delta^2\geq 0$, since transmitting actual gradients of worker $k$ makes $\subsup{\alpha}{k}{(t), \ell}=0$. Given the general convergence behavior in Theorem~\ref{thm:conv_char}, we next obtain a specific choice of step size and an upper bound on $\Delta^2$ for which {\algName} approaches a stationary point of the global loss function \eqref{eqn:opt_obj}.

%%%%%%%%%%%%%%%%%%%%%%%%%%%%%%%%%%%%%%%%%%%%%%%%%%%%%%%%%%%%
% Convergence to Stationary Pt
%%%%%%%%%%%%%%%%%%%%%%%%%%%%%%%%%%%%%%%%%%%%%%%%%%%%%%%%%%%%

\vspace{1mm}
\begin{corollary}
\label{corr:stationary_conv}
 \textbf{(Convergence of {\algName} to a Stationary Point)} 
Assuming the conditions of Theorem~\ref{thm:conv_char}, if $\Delta^2 \leq \eta$, where $\eta=1/\sqrt{\tau T}$, then {\algName} converges to a stationary point of the global loss function, with the convergence bound characterized below:
\vspace{-6mm}

{\small
\aligneqn{
\hspace{-2mm}
    \hspace{-0.5mm}\frac{1}{T} \hspace{-0.5mm} \sum_{t=0}^{T-1} \hspace{-0.2mm} \E{\norm{\nabla F(\vec{\theta}{}{(t)})}{}^2}{} \hspace{-0.5mm} \leq \hspace{-0.3mm} \sbigo{\hspace{-0.3mm}\hspace{-0.5mm}\frac{1}{\sqrt{\tau T}}\hspace{-0.5mm}\hspace{-0.3mm}} \hspace{-0.5mm} + \hspace{-0.3mm} \sbigo{\hspace{-0.3mm}\hspace{-0.5mm}\frac{\sigma^2}{\sqrt{\tau T}}\hspace{-0.5mm}\hspace{-0.3mm}} \hspace{-0.5mm}+\hspace{-0.3mm} \sbigo{\hspace{-0.3mm}\hspace{-0.5mm}\frac{1}{\sqrt{\tau T}}\hspace{-0.5mm}\hspace{-0.3mm}} \hspace{-0.5mm}+\hspace{-0.3mm} \sbigo{\hspace{-0.3mm}\hspace{-0.5mm}\frac{\sigma^2(\hspace{-0.2mm}\tau-1\hspace{-0.2mm})}{\tau T}\hspace{-0.5mm}\hspace{-0.3mm}} \hspace{-0.5mm}+\hspace{-0.3mm} \sbigo{\hspace{-0.3mm}\hspace{-0.5mm}\frac{ (\hspace{-0.2mm}\tau-1\hspace{-0.2mm})\Gamma^2}{ T}\hspace{-0.3mm}\hspace{-0.3mm}}\hspace{-0.3mm}.
}\hspace{-6mm}
}
\end{corollary}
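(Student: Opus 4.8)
The plan is to obtain the corollary as a direct specialization of Theorem~\ref{thm:conv_char}: I would substitute the prescribed step size $\eta = 1/\sqrt{\tau T}$ together with the threshold-induced bound $\Delta^2 \leq \eta$ into the five-term upper bound \eqref{eqn:thm_conv_char}, and track the asymptotic order of each term separately. Treating $\beta$, $\sigma^2$, $\Gamma^2$, and the initial suboptimality $F(\vec{\theta}{}{(0)}) - F^\star$ as fixed constants absorbed into the $\bigo{\cdot}$ notation, this substitution is essentially mechanical and produces the five $\bigo{\cdot}$ terms one at a time.

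Before substituting, I would first verify that the chosen step size is \emph{admissible}, i.e., that it satisfies the hypothesis $\eta\beta \leq \min\{1/(2\tau),\, 1/(\tau\sqrt{2(1+4\Upsilon^2)})\}$ required by Theorem~\ref{thm:conv_char}. Since $\eta\beta = \beta/\sqrt{\tau T}$ decays to zero as $T$ grows while the right-hand side is a fixed positive quantity depending only on $\tau$ and $\Upsilon^2$, the condition holds for all $T$ beyond a problem-dependent threshold; I would therefore read the corollary as an asymptotic ($T$ large) statement.

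The core computation then evaluates each term. Using $\eta\tau T = \sqrt{\tau T}$, the first term $8[F(\vec{\theta}{}{(0)}) - F^\star]/(\eta\tau T)$ becomes $\bigo{1/\sqrt{\tau T}}$; the threshold term obeys $16\Delta^2 \leq 16\eta = \bigo{1/\sqrt{\tau T}}$; and the SGD-noise term $8\eta\beta\sigma^2 = \bigo{\sigma^2/\sqrt{\tau T}}$. Using $\eta^2 = 1/(\tau T)$, the term $5\eta^2\beta^2\sigma^2(\tau-1) = \bigo{\sigma^2(\tau-1)/(\tau T)}$, and the dissimilarity term $20\eta^2\beta^2\Gamma^2\tau(\tau-1) = \bigo{(\tau-1)\Gamma^2/T}$, where the leading factor $\tau$ cancels against the $1/(\tau T)$. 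Summing these five bounds reproduces exactly the stated expression.

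Because the corollary merely inserts a specific parameter choice into an already-established bound, there is no genuinely hard step; the only point requiring care is the admissibility check for $\eta$, which is precisely why the result must hold only for sufficiently large $T$ (equivalently, once $\beta/\sqrt{\tau T}$ falls below the Theorem~\ref{thm:conv_char} threshold). A secondary subtlety is confirming that the threshold condition $\delta^{\mathsf{threshold}}_k \leq \Delta^2/\innorm{\vec{d}{k}{(t)}}{}^2$ remains feasible under $\Delta^2 \leq \eta$: since the normalized accumulated-gradient norms $\innorm{\vec{d}{k}{(t)}}{}^2$ stay bounded throughout training, a valid $\delta^{\mathsf{threshold}}_k$ always exists, so the hypotheses of Theorem~\ref{thm:conv_char} are genuinely met and the specialization is legitimate.
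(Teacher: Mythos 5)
Your proposal is correct and follows essentially the same route as the paper: substitute $\Delta^2 \leq \eta$ and $\eta = 1/\sqrt{\tau T}$ into the five-term bound of Theorem~\ref{thm:conv_char} and read off the orders term by term, using $\eta\tau T = \sqrt{\tau T}$ and the cancellation of $\tau$ in the last term. Your additional remarks on the admissibility of the step size for large $T$ and the feasibility of the threshold condition are sensible housekeeping that the paper leaves implicit, but they do not change the argument.
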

%%%%%%%%%%%%%%%%%%%%%%%%%%%%%%%%%%%%%%%%%%%%%%%%%%%%%%%%%%%%
\vspace{-3mm}
\begin{proof} The proof is provided in Appendix~\ref{app:corrolary_1}.
\end{proof}
\vspace{-0.1in}
%%%%%%%%%%%%%%%%%%%%%%%%%%%%%%%%%%%%%%%%%%%%%%%%%%%%%%%%%%%%
%%%%%%%%%%%%%%%%%%%%%%%%%%%%%%%%%%%%%%%%%%%%%%%%%%%%%%%%%%%%

Considering the definition of $\Delta^2$ in Theorem~\ref{thm:conv_char} and the condition imposed on it in Corollary~\ref{corr:stationary_conv}, the LBP error (i.e., $\sin^2(\subsup{\alpha}{k}{(t), \ell})$) should satisfy $\innorm{\vec{d}{k}{(t)}}{}^2 \sin^2(\subsup{\alpha}{k}{(t), \ell}) \leq \eta=1/\sqrt{\tau T}$ for {\algName} to reach a stationary point of the global loss. Since $\innorm{\vec{d}{k}{(t)}}{}^2 $ is bounded during the model training, this condition on the LBP error can always be satisfied by tuning the frequency of the full (actual) gradient transmission, i.e., updating the LBG as in lines \ref{line:update1}\&\ref{line:update2} of Algorithm~\ref{alg:algo_training}.
Specifically, since the correlation across consecutive gradients is high and drops as the gradients are sampled from distant epochs (see Fig.~\ref{fig:prelim_3}), a low value of the LBP error can be obtained via more frequent LBG transmissions to the server.

\textbf{Main Takeaways from Theorem~\ref{thm:conv_char}, Corollary~\ref{corr:stationary_conv}, and Algorithm~\ref{alg:algo_training}:} 
\vspace{-1mm}
\begin{enumerate}[leftmargin=5mm]
    \vspace{-0.25mm}
    \item \textit{Recovering Vanilla-FL Bound:} In~\eqref{eqn:thm_conv_char}, if the LBGs are always propagated by all the devices, we have $\subsup{\alpha}{k}{(t), \ell} = 0$, $\forall k$, and thus $\Delta^2 = 0$ satisfies the condition on the LBP error. Then,~\eqref{eqn:thm_conv_char} recovers the bound for vanilla FL \citep{wang2020tackling,stich2018local,wang2018cooperative}. 
    \vspace{-0.25mm}
    \item \textit{Recovering Centralized SGD Bound:} In~\eqref{eqn:thm_conv_char}, if the LBGs are always propagated by all the devices, i.e., $\Delta^2 = 0$, the local dataset sizes are equal, i.e., $w_k=1/K, \forall k$, and $\tau=1$, then \eqref{eqn:thm_conv_char} recovers the bound for centralized SGD, e.g., see \citet{friedlander2012hybrid}.
    \vspace{-0.25mm}
    \item \textit{Unifying Algorithm~\ref{alg:algo_training} and Theorem~\ref{thm:conv_char}:}\label{tk:3} The value of $\Delta^2$ in \eqref{eqn:thm_conv_char} is determined by the value of the LBP error $\sin^2(\subsup{\alpha}{k}{(t),\ell})$, which is also reflected in step \ref{line:lbp_cond} of Algorithm~\ref{alg:algo_training}. This suggests that the performance improves when the allowable threshold on $\sin^2(\subsup{\alpha}{k}{(t),\ell})$ is decreased (i.e., smaller $\Delta^2$), which is the motivation behind introducing the tunable threshold $\subsup{\delta}{k}{\mathsf{threshold}}$ in our algorithm.
    \vspace{-0.25mm}
    \item \textit{Effect of LPB Error on Convergence:} As the value of $\sin^2(\subsup{\alpha}{k}{(t),\ell})$ increases, the term in \eqref{eqn:thm_conv_char} containing $\Delta^2$ will start diverging (it can become the same order as the gradient $\innorm{\vec{d}{k}{(t)}}{}^2$). The condition in Corollary~\ref{corr:stationary_conv} on $\Delta^2$ avoids this scenario, achieving convergence to a stationary point. 
    \vspace{-0.3mm}
    \item \textit{Performance vs. Communication Overhead Trade-off:}\label{tk:5} Considering step \ref{line:lbp_cond} of Algorithm~\ref{alg:algo_training}, increasing the tolerable threshold on the LBP error increases the chance of transmitting a scalar (i.e., LBC) instead of the entire gradient to the server from each worker, leading to communication savings. However, as seen in Theorem~\ref{thm:conv_char} and the condition on $\Delta^2$ in Corollary~\ref{corr:stationary_conv}, the threshold on the LBP error cannot be increased arbitrarily since the {\algName} may show diverging behavior. 
\end{enumerate}

%%%%%%%%%%%%%%%%%%%%%%%%%%%%%%%%%%%%%%%%%%%%%%%%%%%%%%%%%%%%
%%%%%%%%%%%%%%%%%%%%%%%%%%%%%%%%%%%%%%%%%%%%%%%%%%%%%%%%%%%%

\vspace{-3mm}
\section{Experiments}
\vspace{-2mm}
\label{sec:expt}

\textbf{Model Settings.} We run experiments on several NN models and datasets. Specifically, we consider: \textbf{S1:} CNN on FMNIST, MNIST, CelebA, and CIFAR-10, \textbf{S2:} FCN on FMNIST and MNIST, and \textbf{S3:} ResNet18 on FMNIST, MNIST, CelebA, CIFAR-10 and CIFAR-100 for both independently and identically distributed (iid) and non-iid data distributions. We present results of \textbf{S1} (on non-iid data) in this section and defer the rest (including \textbf{S1} on iid data and U-Net on PascalVOC) to Appendix~\ref{app:addl_lbgm_expt}.

\begin{figure}[t]
\centering
\begin{minipage}{.48\textwidth}
  \centering
  \centerline{\includegraphics[width=1.0\textwidth]{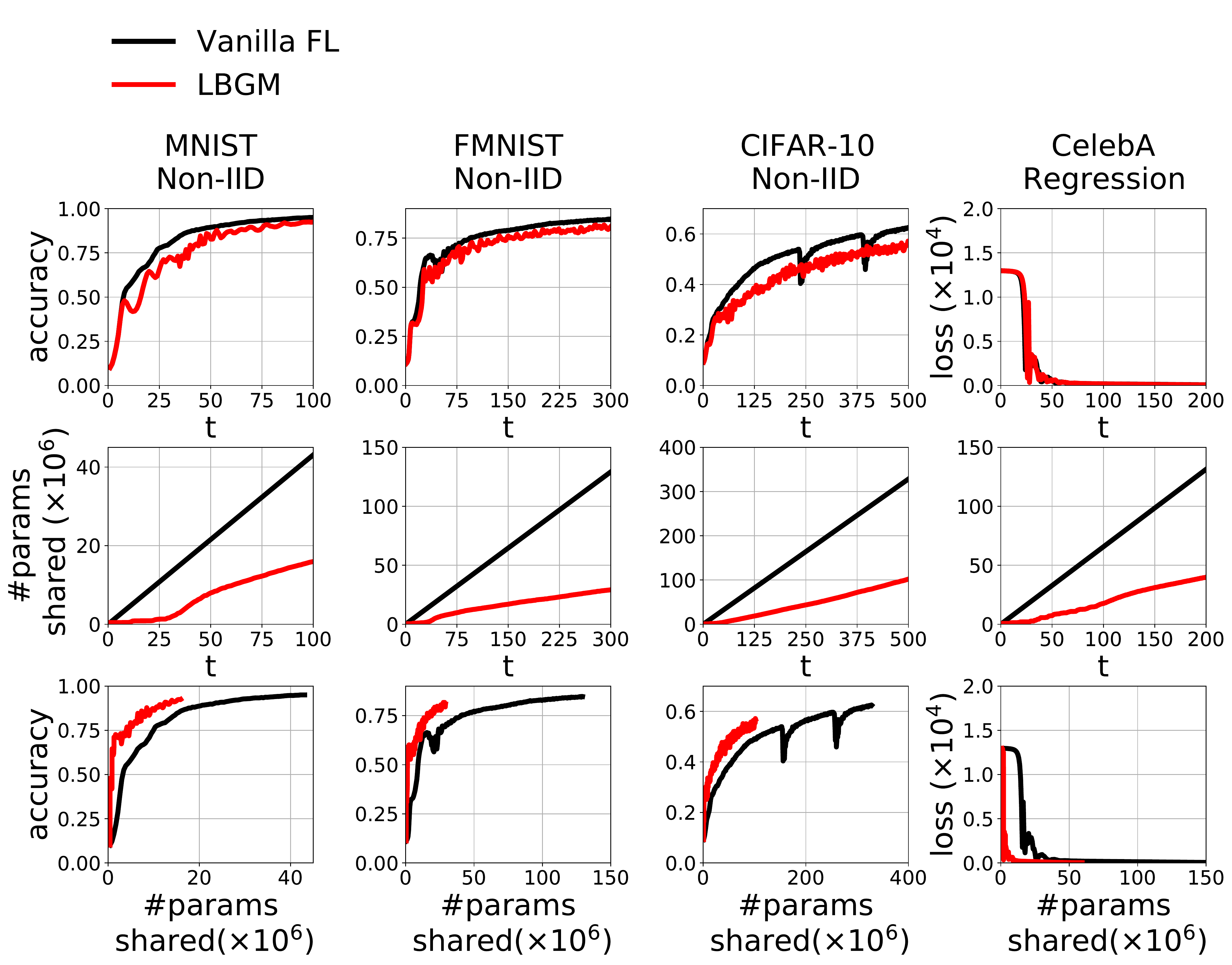}}
  \vspace{-3.5mm}
  \caption{\small{\textit{{\algName} as a Standalone Algorithm}. Irrespective of the dataset/data configuration across workers, {\algName} consistently outperforms vanilla FL in terms of the total parameters shared (middle row) while achieving comparable accuracy (top row). The bottom row shows accuracy vs. \# parameters shared.}}
  \label{fig:standalone}
\vspace{-2mm}
\end{minipage}~~~
\begin{minipage}{.48\textwidth}
  \centering
  \centerline{\includegraphics[width=1.0\textwidth]{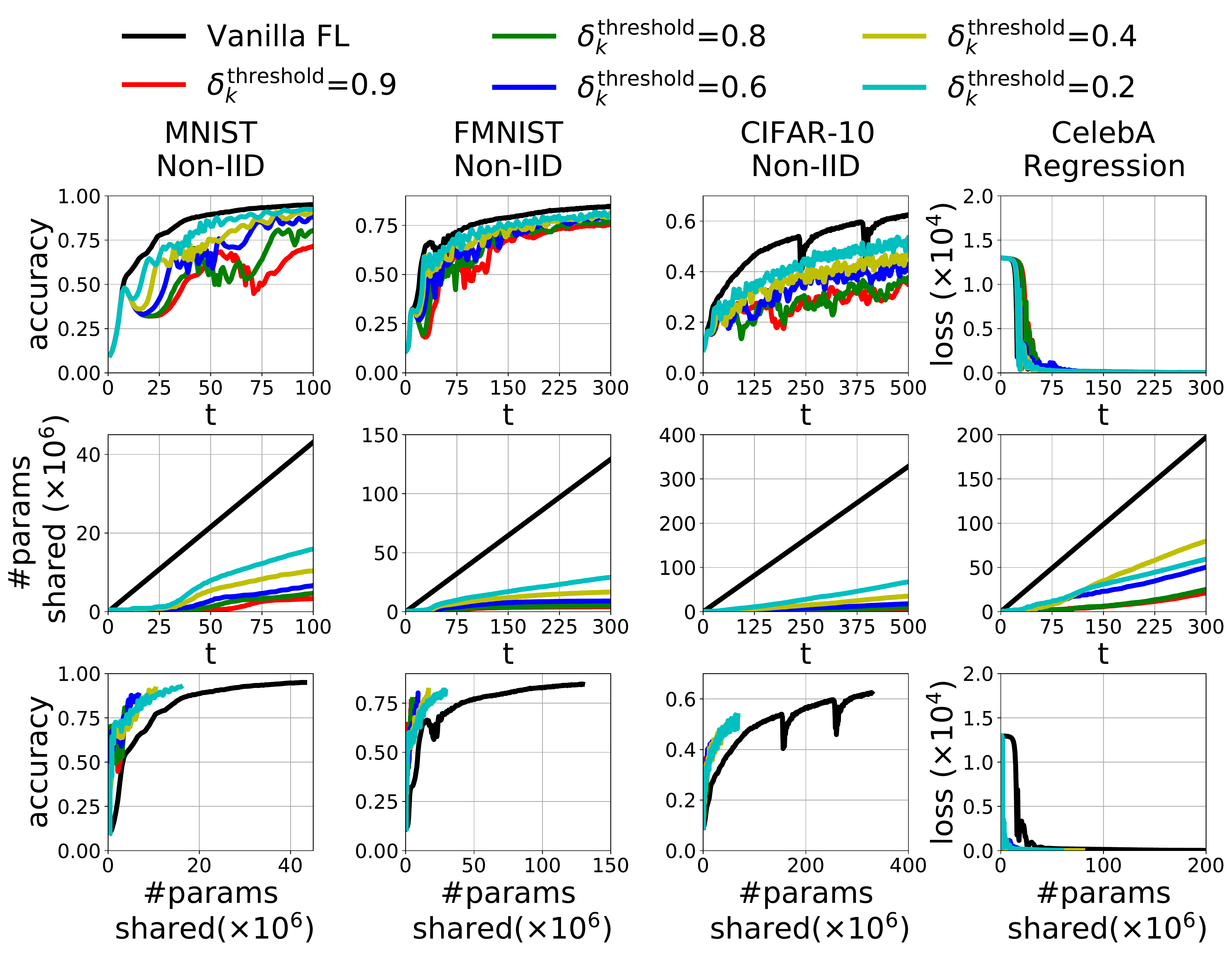}}
    \vspace{-3mm}
  \caption{\small{\textit{Effect of $\delta_k^{\mathsf{threshold}}$ on {\algName}}. As $\delta_k^{\mathsf{threshold}}$ decreases, the training may become unstable. For larger values of $\delta_k^{\mathsf{threshold}}$, {\algName} achieves communication benefits (middle row) while maintaining a performance identical to vanilla FL (top row). The bottom row shows accuracy vs. \# parameters shared.}}
  \label{fig:rho_effect}
\vspace{-3mm}
\end{minipage}
\vspace{-3mm}
\end{figure}

\vspace{-1mm}
\textbf{Properties Studied.} We specifically focus on four properties of {\algName}: \textbf{P1:} the benefits of gradient recycling by \textit{{\algName} as a standalone algorithm}, \textbf{P2:} the \textit{effect of $\subsup{\delta}{k}{\mathsf{threshold}}$} on {\algName} from Theorem~\ref{thm:conv_char}, \textbf{P3:} practical capabilities of \textit{{\algName} as a general plug-and-play algorithm} that can be stacked on top of other gradient compression techniques 
in FL training, and finally \textbf{P4:} generalizability of \textit{{\algName} to distributed learning} frameworks, e.g., multi-processor or multi-GPU ML systems.

\vspace{-1mm}
\textbf{Baselines.} For \textbf{P1} and \textbf{P2}, we compare {\algName} with vanilla FL. For \textbf{P3}, we stack {\algName} on top of top-K and ATOMO \citep{wang2018atomo}, two state-of-the-art techniques for sparsification and low-rank approximation-based gradient compression, respectively. For \textbf{P4}, we stack {\algName} on top of SignSGD \citep{bernstein2018signsgd}, a state-of-the-art method in gradient compression for distributed learning.

\vspace{-1mm}
\textbf{Implementation Details.} We consider an FL system consisting of 100 workers. We consider both the iid and non-iid data distributions among the workers. {Under the iid setting, each worker has training data from all the labels, while under the non-iid setting each worker has training data only from a subset of all labels (e.g., from 3 of 10 classes in MNIST/FMNIST).} The workers train with mini-batch sizes ranging from 128 to 512 based on the choice of dataset. We implement {\algName} with uniform $\subsup{\delta}{k}{\mathsf{threshold}}$ across workers. We also use error feedback \citep{karimireddy2019error} as standard only if top-K sparsification is used in the training. The FL system is simulated using PyTorch \citep{paszke2019pytorch} and PySyft \citep{ryffel2018generic} and trained on a 48GB Tesla-P100 GPU with 128GB RAM. All of our code and hyperparameters are available at \url{https://github.com/shams-sam/FedOptim}. \shams{Appendix~\ref{app:hyperparams} details the process of hyperparameter selection for the baselines.}

\vspace{-1mm}
\textbf{Complexity.} Compared to other gradient compression techniques, the processing overhead introduced by {\algName} is negligible. Considering Algorithm~\ref{alg:algo_training}, the calculation of LBCs and LBP errors involves inner products and division of scalars, while reconstruction of LBG-based gradient approximations at the server is no more expensive than the global aggregation step: since the global aggregation step requires averaging of local model parameters, it can be combined with gradient reconstruction. This also holds for {\algName} as a plug-and-play algorithm, as top-K and ATOMO \citep{wang2018atomo} introduce considerable computation overhead. In particular, {\algName} has $\mathcal{O}(M)$ complexity, where $M$ is the dimension of the NN parameter, which is inexpensive to plug on top of top-K ($\mathcal{O}(M\log M)$), ATOMO \citep{wang2018atomo} ($\mathcal{O}(M^2)$), and SignSGD \citep{bernstein2018signsgd} ($\mathcal{O}(M)$) methods. \shams{The corresponding space complexity of {\algName} for the server and devices is discussed in Appendix~\ref{app:storage}.}

\vspace{-1mm}
\textbf{{\tt \algName} as a Standalone Algorithm.} We first evaluate the effect of gradient recycling by {\algName} in FL. Fig.~\ref{fig:standalone} depicts the accuracy/loss values (top row) and total floating point parameters transferred over the system (middle row) across training epochs for $\smash{\subsup{\delta}{k}{\text{threshold}} = 0.2,~ \forall k}$ on diferent datasets. The parameters transferred indicates the communication overhead, leading to a corresponding performance vs. efficiency tradeoff (bottom row). For each dataset, we observe that {\algName} reduces communication overhead on the order of $10^7$ floating point parameters per worker. \shams{Similar results on other datasets and NN models are deferred to Appendix~\ref{app:standalone_expt}. We also consider {\algName} under device sampling (see Algorithm~\ref{alg:sampling} in Appendix~\ref{ssec:sampling_algo}) and present the results in Appendix~\ref{app:sampling_expt}, which are qualitatively similar.}

\begin{figure}[t]
  \centering
      \centerline{\includegraphics[width=1.0\textwidth]{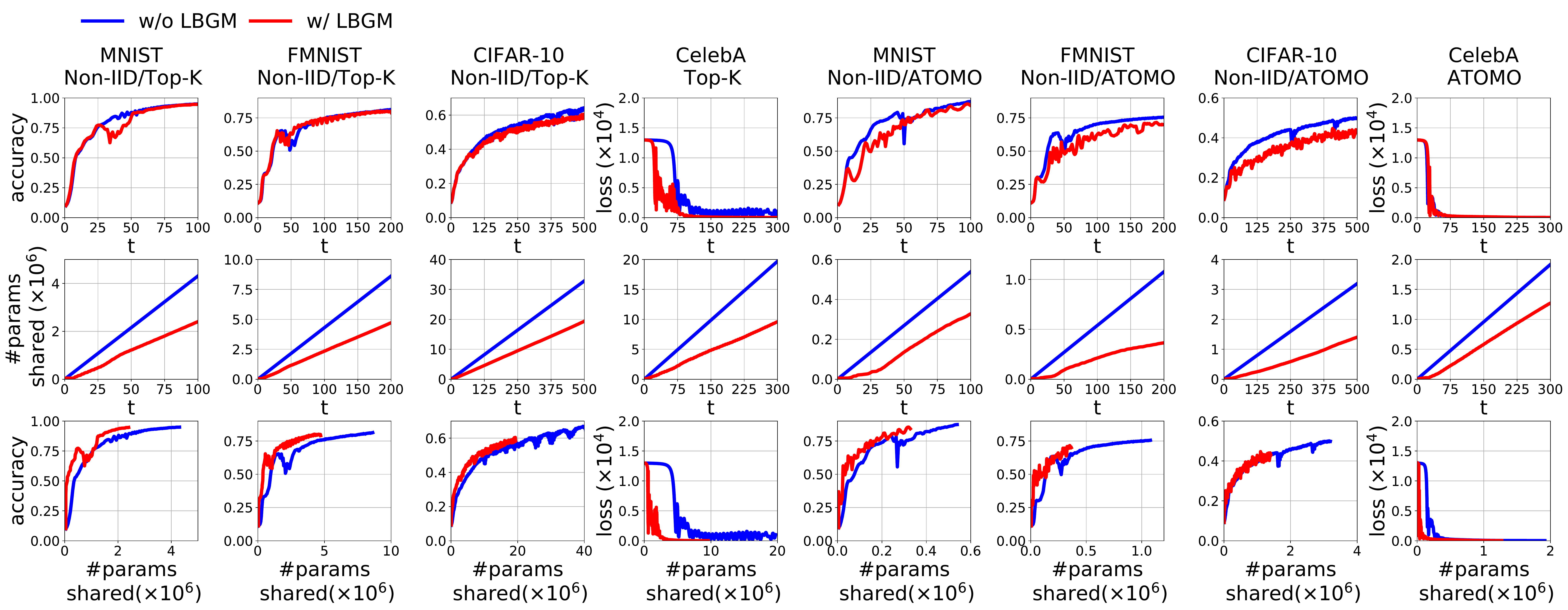}}
      \vspace{-4mm}
  \caption{\small{\textit{{\tt \algName} as a Plug-and-Play Algorithm}. {\algName} obtains substantial communication benefits when implemented on top of existing gradient compression techniques by exploiting the rank-characteristics of the gradient-space. Top-K and ATOMO are known to achieve state-of-the-art performance of their respective domains of sparsification and low-rank approximation respectively. 
  }}
  \label{fig:plugnplay}
  \vspace{-5mm}
\end{figure}

\vspace{-1mm}
\textbf{Effect of $\subsup{\delta}{k}{\mathsf{threshold}}$ on Accuracy vs. Communication Savings.} \shams{In Fig.~\ref{fig:standalone}, the drops in accuracy for the corresponding communication savings are small except for on CIFAR-10. The $14\%$ reduction in accuracy here is a result of the hyperparameter setting $\smash{\subsup{\delta}{k}{\text{threshold}} = 0.2}$. As noted in takeaway~\ref{tk:3} in Sec.~\ref{sec:method}, a decrease in the allowable threshold on the LBP error improves the accuracy; the effect of threshold value is controlled by changing $\subsup{\delta}{k}{\mathsf{threshold}}$ values in Algorithm~\ref{alg:algo_training}. Thus, we can improve the accuracy by lowering $\subsup{\delta}{k}{\text{threshold}}$: for $\subsup{\delta}{k}{\text{threshold}} = 0.05$, the accuracy drops by $4\%$ only while still retaining a communication saving of $55\%$, and for $\subsup{\delta}{k}{\text{threshold}} = 0.01$, we get a $22\%$ communication saving for a negligible drop in accuracy (by only $0.01\%$).} In Fig.~\ref{fig:rho_effect}, we analyze {\algName} under different $\subsup{\delta}{k}{\mathsf{threshold}}$ values for different datasets. A drop in model performance can be observed as we increase $\subsup{\delta}{k}{\mathsf{threshold}}$, which is accompanied by an increase in communication savings. This is consistent with takeaway \ref{tk:5} from Sec.~\ref{sec:method}, i.e., while a higher threshold requires less frequent updates of the LBGs, it reduces the convergence speed. Refer to Appendix~\ref{app:rho_effect} for additional results.

\begin{wrapfigure}{r}{0.5\textwidth}
\vspace{-16pt}
  \begin{center}
    \includegraphics[width=0.5\textwidth]{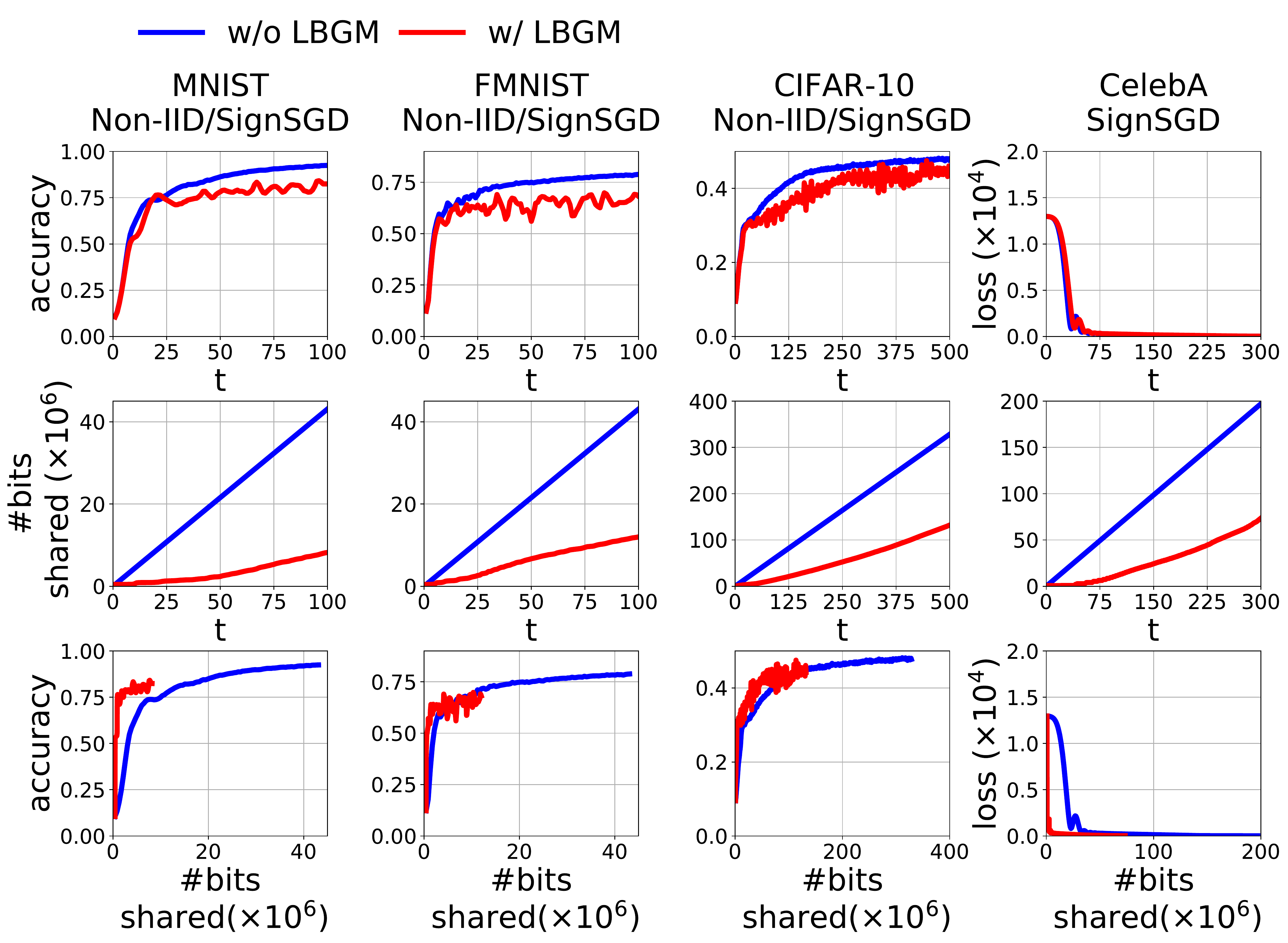}
  \end{center}
  \vspace{-16pt}
  \caption{\small{Application of {\algName} as a plug-and-play algorithm on top of SignSGD in distributed training.}}
  \label{fig:dist_training}  
  \vspace{-9pt}
\end{wrapfigure}

\vspace{-1mm}
\textbf{{\tt \algName} as a Plug-and-Play Algorithm.} For the plug-and-play setup, {\algName} follows the same steps as in Algorithm~\ref{alg:algo_training}, with the slight modification that the output of gradient compression techniques, top-K and ATOMO, are used in place of accumulated gradients $\vec{g}{k}{(t)}$ and LBGs $\vec{g}{k}{\ell}$, $\forall k$. In Fig.~\ref{fig:plugnplay}, we see that {\algName} adds on top of existing communication benefits of both top-K and ATOMO, on the order of $10^6$ and $10^5$ floating point parameters shared per worker, respectively \shams{($30-70\%$ savings across the datasets)}. \shams{The bottom row shows the accuracy/loss improvements that can be obtained for the same number of parameters transferred.} While top-K and ATOMO compress gradients through approximation, they do not alter the underlying low-rank characteristics of the gradient-space. {\algName} exploits this property to obtain substantial communication savings on top of these algorithms. Refer to Appendix~\ref{app:plugnplay_expt} for additional experiments.

\vspace{-1mm}
\textbf{Generalizability of {\algName} to Distributed Training.} {\algName} can be applied to more general distributed gradient computation settings, e.g., multi-core systems.
While \shams{heterogeneous (non-iid)} data distributions are not as much of a consideration in these settings as they are in FL (since data can be transferred/allocated across nodes), there is research interest in minimizing parameter exchange among nodes to reduce communication latency.
SignSGD \citep{bernstein2018signsgd} is known to reduce the communication requirements by several order of magnitude by converting floating-point parameters to sign bit communication. In Fig.~\ref{fig:dist_training}, we apply {\algName} as a plug-and-play addition on top of SignSGD and find that {\algName} further reduces the overall bits transferred by SignSGD on the order of $10^7$ bits \shams{($\smash{60-80\%}$ savings across the datasets)}. Refer to Appendix~\ref{app:general_expt} for additional experiments.

%%%%%%%%%%%%%%%%%%%%%%%%%%%%%%%%%%%%%%%%%%%%%%%%%%%%%%%%%%%%
%%%%%%%%%%%%%%%%%%%%%%%%%%%%%%%%%%%%%%%%%%%%%%%%%%%%%%%%%%%%

\vspace{-4mm}
\section{Related Work}
\vspace{-2mm}

\vspace{-1.2mm}
\shams{\textbf{NN Overparameterization Analysis.} Several prior works on NN overparameterization have focused on Hessian-based analysis.} \citet{sagun2016eigenvalues, sagun2017empirical} divide the eigenspace of the Hessians into two parts: bulk 
and edges,
and show that increasing network complexity only affects the bulk component. \citet{ghorbani2019investigation} argues that the existence of large isolated eigenvalues in the Hessian eigenspace is correlated with slow convergence. \shams{\citet{gur2018gradient} studies the overlap of gradients with Hessians and shows the Hessian edge space remains invariant during training, and thus that SGD occurs in low-rank subspaces. \citet{gur2018gradient} also suggest that the edge space cardinality is equal to the number of classification classes, which does not align with our observations in Sec.~\ref{sec:explore}.
In contrast to these, our work explores the low-rank property by studying the PCA of the gradient-space directly. \cite{li2021low}, a contemporary of ours, employs the spectral decomposition of the NN gradient space to improve centralized SGD training time. Our methodology based on Hypothesis \eqref{hypothesis_2} is more suitable for FL since having the resource-constrained workers/devices execute spectral decomposition as a component of the training process would add significant computational burden.}

\vspace{-1.2mm}
\shams{The partitioning of the gradient subspace has also been observed in the domain of continual learning~\citep{chaudhry2020continual,saha2020gradient}. However, the subspace addressed in continual learning is the one spanned by gradient with respect to data samples for the final model, which is different than the subspace we consider, i.e., the subspace of gradient updates generated during SGD epochs.}

\vspace{-1.2mm}
\textbf{Gradient Compression.} Gradient compression techniques can be broadly categorized into (i) sparsification~\citep{wangni2017gradient,sattler2019robust}, (ii) quantization~\citep{seide20141,alistarh2016qsgd}, and (iii) low-rank approximations~\citep{wang2018atomo,vogels2019powersgd, albasyoni2020optimal,haddadpour2021federated}. Our work falls under the third category, where prior works have aimed to decompose large gradient matrices as an outer product of smaller matrices to reduce communication cost. This idea was also proposed in~\citet{konevcny2016federated}, one of the pioneering works in FL. \shams{While these prior works study the low-rank property in the context of gradient compression during a single gradient transfer step, 
our work explores the low rank property of the
gradients generated across successive gradient epochs during FL. Existing techniques for gradient compression can also benefit from employing {\algName} during FL, as we show in our experiments for top-K, ATOMO, and SignSGD.} 

\vspace{-1.2mm}
\textbf{Model Compression.} Model compression techniques have also been proposed to reduce NN complexity, e.g., model distillation \citep{ba2013deep, hinton2015distilling}, model pruning \citep{lecun1990optimal, hinton2015distilling}, 
and parameter clustering
\citep{son2018clustering,cho2021dkm}. 
\citep{li2020lotteryfl} extends the lottery ticket hypothesis to the FL setting. \shams{These methods have the potential to be employed in conjunction with LBGM to reduce the size of the LBGs stored at the server.}

\vspace{-1.2mm}
\textbf{FL Communication Efficiency.} Other techniques have focused on reducing the aggregation frequency in FL. \citet{hosseinalipour2020multi,lin2021two} use peer-to-peer local network communication, while SloMo \citep{wang2019slowmo} uses momentum to delay the global aggregations.

%%%%%%%%%%%%%%%%%%%%%%%%%%%%%%%%%%%%%%%%%%%%%%%%%%%%%%%%%%%%
%%%%%%%%%%%%%%%%%%%%%%%%%%%%%%%%%%%%%%%%%%%%%%%%%%%%%%%%%%%%

\vspace{-3mm}
\section{Discussion \& Conclusions}
\vspace{-2mm}
\shams{In this paper, we explored the effect of overparameterization in NN optimization through the PCA of the gradient-space, and employed this to optimize the accuracy vs. communication tradeoff in FL.}
We proposed the {\algName} algorithm, which uses \shams{our hypothesis that PGDs can be approximated using a subset of gradients generated across SGD epochs}, and recycles previously generated gradients at the devices to represent the newly generated gradients. {\algName} reduces communication overhead in FL by several orders of magnitude by replacing the transmission of gradient parameter vectors with a single scalars from each device. We theoretically characterized the convergence behavior of {\algName} algorithm and experimentally substantiated our claims \shams{on several datasets and models}. Furthermore, we showed that {\algName} can be extended to further reduce latency of communication in large distributed training systems by plugging {\algName} on top of other gradient compression techniques. More generally, our work gives a novel insight to designing a class of techniques based on ``Look-back Gradients'' that can be used in distributed machine learning systems \shams{to enhance communication savings}.

%%%%%%%%%%%%%%%%%%%%%%%%%%%%%%%%%%%%%%%%%%%%%%%%%%%%%%%%%%%%
%%%%%%%%%%%%%%%%%%%%%%%%%%%%%%%%%%%%%%%%%%%%%%%%%%%%%%%%%%%%

{\small
\bibliographystyle{iclr2022_conference}
\bibliography{ms}
}

%%%%%%%%%%%%%%%%%%%%%%%%%%%%%%%%%%%%%%%%%%%%%%%%%%%%%%%%%%%%
%%%%%%%%%%%%%%%%%%%%%%%%%%%%%%%%%%%%%%%%%%%%%%%%%%%%%%%%%%%%
\newpage
%%%%%%%%%%%%%%%%%%%%%%%%%%%%%%%%%%%%%%%%%%%%%%%%%%%%%%%%%%%%
%%%%%%%%%%%%%%%%%%%%%%%%%%%%%%%%%%%%%%%%%%%%%%%%%%%%%%%%%%%%

\appendix
\vspace{-1.0in}
\part{Appendix}
\parttoc

%%%%%%%%%%%%%%%%%%%%%%%%%%%%%%%%%%%%%%%%%%%%%%%%%%%%%%%%%%%%
%%%%%%%%%%%%%%%%%%%%%%%%%%%%%%%%%%%%%%%%%%%%%%%%%%%%%%%%%%%%

\newpage
\section{Proof of Theorem 1}
\label{app:proof_1}

We start by introducing a lemma, that is used prominently throughout the analysis that follows.

%%%%%%%%%%%%%%%%%%%%%%%%%%%%%%%%%%%%%%%%%%%%%%%%%%%%%%%%%%%%
% Expectation of Norm of Sum equals sum of Expectation of norms
%%%%%%%%%%%%%%%%%%%%%%%%%%%%%%%%%%%%%%%%%%%%%%%%%%%%%%%%%%%%

\begin{lemma}
\label{lemma:exp_norm_sum}
For a sequence of vectors $\mset{\vec{a}{i}{}}{i=1}{N}$, such that $\E{\vec{a}{i}{}|\vec{a}{i-1}{}, \vec{a}{i-2}{},\cdots,\vec{a}{1}{}}{} = \vec{0}{}{}, \forall i$,
\eqn{
    \E{ \norm{\sum_{i=1}^N \vec{a}{i}{}}{}^2 }{} = \sum_{i=1}^N\E{ \norm{ \vec{a}{i}{}}{}^2 }{}.
    \label{eqn:exp_norm_sum}
    \tag{\bf L2}
}
\end{lemma}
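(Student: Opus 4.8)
The plan is to prove this standard orthogonality identity for a martingale-difference sequence by expanding the squared norm and showing that every cross term vanishes. First I would expand
\[
\norm{\sum_{i=1}^N \vec{a}{i}{}}{}^2 = \sum_{i=1}^N \norm{\vec{a}{i}{}}{}^2 + 2\sum_{1 \le i < j \le N} \dotp{\vec{a}{i}{}}{\vec{a}{j}{}},
\]
and take expectations of both sides, using linearity to separate the diagonal sum from the cross terms. The diagonal sum immediately produces the target right-hand side $\sum_{i=1}^N \E{\norm{\vec{a}{i}{}}{}^2}{}$, so the entire content of the lemma reduces to showing $\E{\dotp{\vec{a}{i}{}}{\vec{a}{j}{}}}{} = 0$ for every pair $i < j$.

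For each such pair I would invoke the tower property of conditional expectation, conditioning on the history $\vec{a}{1}{}, \ldots, \vec{a}{j-1}{}$. Since $i < j$, the vector $\vec{a}{i}{}$ is determined by this history, so by bilinearity of the inner product its coordinates act as constants and can be pulled outside the inner conditional expectation:
\[
\E{\dotp{\vec{a}{i}{}}{\vec{a}{j}{}}\,\big|\,\vec{a}{1}{},\ldots,\vec{a}{j-1}{}}{} = \dotp{\vec{a}{i}{}}{\E{\vec{a}{j}{}\,\big|\,\vec{a}{1}{},\ldots,\vec{a}{j-1}{}}{}} = \dotp{\vec{a}{i}{}}{\vec{0}{}{}} = 0,
\]
where the middle equality uses exactly the hypothesis $\E{\vec{a}{j}{}\mid \vec{a}{j-1}{},\ldots,\vec{a}{1}{}}{} = \vec{0}{}{}$. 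Taking the outer expectation and applying the tower property yields $\E{\dotp{\vec{a}{i}{}}{\vec{a}{j}{}}}{} = 0$, and summing over all pairs eliminates every cross term, leaving precisely \eqref{eqn:exp_norm_sum}.

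There is no serious obstacle here; the argument is routine and the only points requiring mild care are bookkeeping ones. First, one should be explicit that $\vec{a}{i}{}$ is measurable with respect to the conditioning $\sigma$-algebra generated by $\vec{a}{1}{},\ldots,\vec{a}{j-1}{}$ whenever $i<j$, which is what licenses pulling it out of the inner expectation via bilinearity. Second, I would note the implicit integrability requirement that each $\vec{a}{i}{}$ has finite second moment so that the cross terms are well defined; this is always satisfied in our applications, where the $\vec{a}{i}{}$ are zero-mean stochastic-gradient noise terms with bounded variance under assumption \ref{eqn:sgd_noise}.
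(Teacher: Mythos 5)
Your proposal is correct and follows essentially the same route as the paper: expand the squared norm, isolate the cross terms $\E{\dotp{\vec{a}{i}{}}{\vec{a}{j}{}}}{}$ for $i<j$, and kill each one with the tower property together with the martingale-difference hypothesis. The only (immaterial) difference is that you condition on the full history $\vec{a}{1}{},\ldots,\vec{a}{j-1}{}$ while the paper conditions on $\vec{a}{1}{},\ldots,\vec{a}{i}{}$; your choice matches the stated hypothesis more directly, and your remarks on measurability and integrability are sensible but not needed beyond what the paper assumes.
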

%%%%%%%%%%%%%%%%%%%%%%%%%%%%%%%%%%%%%%%%%%%%%%%%%%%%%%%%%%%%
\begin{proof}
\aligneqn{
    \E{ \norm{\sum_{i=1}^N \vec{a}{i}{}}{}^2 }{} = \sum_{i=1}^N\E{ \norm{ \vec{a}{i}{}}{}^2 }{} + \sum_{i=1}^N\sum_{\substack{j=1\\j\neq i}}^N \E{ \vec{a}{i}{\top} \vec{a}{j}{} }{}.
}
Using the law of total expectation, assuming $i<j$, we get 
\eqn{
    \E{ \vec{a}{i}{\top} \vec{a}{j}{} }{} = \E{ \vec{a}{i}{\top}\E{ \vec{a}{j}{}|\vec{a}{i}{},\cdots,\vec{a}{1}{} }{} }{} = \vec{0}{}{},
}
which completes the proof.
\end{proof}
%%%%%%%%%%%%%%%%%%%%%%%%%%%%%%%%%%%%%%%%%%%%%%%%%%%%%%%%%%%%
%%%%%%%%%%%%%%%%%%%%%%%%%%%%%%%%%%%%%%%%%%%%%%%%%%%%%%%%%%%%

Next, we define a few auxiliary variables that would be referenced in the proof later: as defined in the main text, $\vec{g}{k}{(t)} = \sum_{b=0}^{\tau-1} \subsup{g}{k}{}(\vec{\theta}{k}{(t, b)})$ is the accumulated stochastic gradient at worker $k$, where $b$ ranging from $0$ to $\tau-1$ denotes the rounds of local updates. Using Assumption~\eqref{eqn:sgd_noise}, error in stochastic gradient approximation  $\subsup{g}{k}{}(\vec{\theta}{k}{(t, b)})$ can be defined as $\vec{\epsilon}{k}{(t, b)} = \subsup{g}{k}{}(\vec{\theta}{k}{(t, b)}) - \nabla\subsup{F}{k}{}(\vec{\theta}{k}{(t, b)})$. Consequently, we can write the stochastic gradient as $\subsup{g}{k}{}(\vec{\theta}{k}{(t, b)}) = \nabla\subsup{F}{k}{}(\vec{\theta}{k}{(t, b)}) + \vec{\epsilon}{k}{(t, b)}$ where $\nabla\subsup{F}{k}{}(\vec{\theta}{k}{(t, b)})$ is the true gradient. From Assumption~\eqref{eqn:sgd_noise} it follows that

\vspace{-0.2in}
\aligneqn{
    \E{\vec{\epsilon}{k}{(t, b)}}{} = \mathbf{0} \text{ and } \E{ \norm{\vec{\epsilon}{k}{(t, b)}}{}^2 }{} \leq \sigma^2.
}
We also introduce the normalized stochastic gradient $\vec{d}{k}{(t)}$ given by
\eqn{
   \vec{d}{k}{(t)} = \frac{\vec{g}{k}{(t)}}{\tau} = \frac{1}{\tau} \sum_{b=0}^{\tau-1} \subsup{\mathbf{g}}{k}{}(\vec{\theta}{k}{(t, b)}) = \frac{1}{\tau}\sum_{b=0}^{\tau-1} \nabla\subsup{F}{k}{}(\vec{\theta}{k}{(t, b)}) + \frac{1}{\tau}\sum_{b=0}^{\tau-1} \vec{\epsilon}{k}{(t, b)} = \vec{h}{k}{(t)} + \vec{\epsilon}{k}{(t)}, \label{eqn:normalized_approx_grad}
}
where we define
\aligneqn{
    \text{Cumulative Average of the true gradient: } &\vec{h}{k}{(t)} = \frac{1}{\tau}\sum_{b=0}^{\tau-1} \nabla\subsup{F}{k}{}(\vec{\theta}{k}{(t, b)}) \label{eqn:h_k_t}, \text{ and}\\
    \text{Cumulative Average of the SGD error: } &\vec{\epsilon}{k}{(t)} = \frac{1}{\tau}\sum_{b=0}^{\tau-1} \vec{\epsilon}{k}{(t, b)}.
}
Next, we evaluate the first and second moment of normalized SGD error, $\vec{\epsilon}{k}{(t)}$ as follows
\aligneqn{
    \E{\vec{\epsilon}{k}{(t)}}{} &= \E{\frac{1}{\tau}\sum_{b=0}^{\tau-1} \vec{\epsilon}{k}{(t, b)}}{} = \frac{1}{\tau}\sum_{b=0}^{\tau-1} \E{\vec{\epsilon}{k}{(t, b)}}{} = \vec{0}{}{},
    \label{eqn:e_k_t} \\
    \E{\norm{\vec{\epsilon}{k}{(t)}}{}^2}{} &= \E{\norm{\frac{1}{\tau}\sum_{b=0}^{\tau-1} \vec{\epsilon}{k}{(t, b)}}{}^2}{} = \frac{1}{\tau^2}\sum_{b=0}^{\tau-1}\E{\norm{\vec{\epsilon}{k}{(t, b)}}{}^2}{} \leq \frac{\sigma^2}{\tau} ,
    \label{eqn:e_k_t2}
}
where \eqref{eqn:e_k_t} uses Assumption~\eqref{eqn:sgd_noise} and \eqref{eqn:e_k_t2} uses Lemma~\ref{lemma:exp_norm_sum}. 

%%%%%%%%%%%%%%%%%%%%%%%%%%%%%%%%%%%%%%%%%%%%%%%%%%%%%%%%%%%%
%%%%%%%%%%%%%%%%%%%%%%%%%%%%%%%%%%%%%%%%%%%%%%%%%%%%%%%%%%%%
Now we can proceed to the proof of Theorem~\ref{thm:conv_char}. We start with the update rule of {\algName}, where a round of global update is given by
\eqn{
\vec{\theta}{}{(t+1)} = \vec{\theta}{}{(t)} - \eta\sum_{k=1}^K\omega_k\tvec{g}{k}{(t)} = \vec{\theta}{}{(t)} - \tau\eta\sum_{k=1}^K\omega_k\tvec{d}{k}{(t)},     
}
where $\tvec{g}{k}{(t)}$ is the approximate gradient shared by worker $k$ with the server and $\tvec{d}{k}{(t)}$ is given by
\eqn{
    \tvec{d}{k}{(t)} = \tvec{g}{k}{(t)}/\tau = \subsup{\rho}{k}{(t),\ell}\vec{g}{k}{\ell}/\tau = \subsup{\rho}{k}{(t),\ell}\vec{d}{k}{\ell}
    \label{eqn:d_l}
}
where $\vec{d}{k}{\ell}=\vec{g}{k}{\ell}/\tau$ is the normalized stochastic gradient w.r.t. the last LBG shared by worker $k$. Also, from the trigonometric relationship shown in Fig.~\ref{fig:approx_fedl}, we have,
\eqn{
    \norm{\subsup{\rho}{k}{(t),\ell}\vec{d}{k}{\ell}}{} = \norm{\vec{d}{k}{(t)} \cos(\subsup{\alpha}{k}{(t),\ell})}{}
    \label{eqn:norm_equality}
}
Similar to $ \vec{d}{k}{(t)}$ from \eqref{eqn:normalized_approx_grad},  $\vec{d}{k}{\ell}$ can be split into the cumulative average of true look-back gradient $\vec{h}{k}{\ell}$ and the corresponding cumulative average of SGD error $\vec{\epsilon}{k}{\ell}$, given by
\eqn{
    \vec{d}{k}{\ell} = \vec{h}{k}{\ell} + \vec{\epsilon}{k}{\ell}.
    \label{eqn:normalized_lbg_grad}
}
From Assumption~\eqref{eqn:lipschitz}, the global loss function is $\beta$ smooth, which implies:

\eqn{
    F(\vec{\theta}{}{(t+1)}) - F(\vec{\theta}{}{(t)}) \leq \dotp{\nabla F(\vec{\theta}{}{(t)})}{\vec{\theta}{}{(t+1)}-\vec{\theta}{}{(t)}}+\frac{\beta}{2}\norm{\vec{\theta}{}{(t+1)}-\vec{\theta}{}{(t)}}{}^2. \\
}
Taking expectation over $\vec{\epsilon}{k}{(t,b)},  k\in\{1,2,\cdots,K\}, b\in\{0,1,\cdots,\tau-1\}$,
\aligneqn{
    \E{F(\vec{\theta}{}{(t+1)})}{} &- F(\vec{\theta}{}{(t)}) \nonumber\\
    &\leq -\tau\eta
    \underbrace{\E{\dotp{\nabla F(\vec{\theta}{}{(t)})}{\sum_{k=1}^K\omega_k\cdot \tvec{d}{k}{(t)}}}{}}_{\bf Z_1} + \frac{\tau^2\eta^2\beta}{2}\underbrace{\E{\norm{\sum_{k=1}^K\omega_k \tvec{d}{k}{(t)}}{}^2}{}}_{\bf Z_2}
    \label{eqn:fl_sgd_i1}.
}
We evaluate ${\bf Z_1}$ as follows 
\aligneqn{
    {\bf Z_1} &= \E{\dotp{\nabla F(\vec{\theta}{}{(t)})}{\sum_{k=1}^K\omega_k \tvec{d}{k}{(t)}}}{} =  \E{\dotp{\nabla F(\vec{\theta}{}{(t)})}{\sum_{k=1}^K\omega_k \left(\vec{d}{k}{(t)} - \vec{d}{k}{(t)} + \tvec{d}{k}{(t)}\right)}}{} \nonumber\\
    &= \underbrace{\E{\dotp{\nabla F(\vec{\theta}{}{(t)})}{\sum_{k=1}^K\omega_k \vec{d}{k}{(t)} }}{} }_{\bf Z_{1, 1}} - \underbrace{\E{\dotp{\nabla F(\vec{\theta}{}{(t)})}{\sum_{k=1}^K\omega_k \left(\vec{d}{k}{(t)} - \tvec{d}{k}{(t)}\right)}}{} }_{\bf Z_{1,2}}, \label{eqn:t1_pre}
}
where ${\bf Z_{1,1}}$ is given by,
\aligneqn{
    {\bf Z_{1,1}} &\overset{(i)}{=} \E{\dotp{\nabla F(\vec{\theta}{}{(t)})}{\sum_{k=1}^K\omega_k \vec{h}{k}{(t)} }}{} + \E{\dotp{\nabla F(\vec{\theta}{}{(t)})}{\sum_{k=1}^K\omega_k \vec{\epsilon}{k}{(t)} }}{} \nonumber\\
    &\overset{(ii)}{=} \frac{1}{2}\norm{\nabla F(\vec{\theta}{}{(t)})}{}^2 \hspace{-0.5mm}+\hspace{-0.5mm} \frac{1}{2}\E{\norm{\sum_{k=1}^K\omega_k
    \vec{h}{k}{(t)} }{}^2}{} \hspace{-0.5mm}-\hspace{-0.5mm} \frac{1}{2}\E{\norm{\nabla F(\vec{\theta}{}{(t)}) \hspace{-0.5mm}-\hspace{-0.5mm} \sum_{k=1}^K\omega_k 
    \vec{h}{k}{(t)} }{}^2}{}\hspace{-1mm},
    \label{eqn:t1_1}
}
where $(i)$ follows from \eqref{eqn:normalized_lbg_grad} and $(ii)$ uses $2\dotp{\mathbf{a}}{\mathbf{b}} = \norm{\mathbf{a}}{}^2 + \norm{\mathbf{b}}{}^2 - \norm{\mathbf{a}-\mathbf{b}}{}^2$ for any two real vectors $\mathbf{a}$ and $\mathbf{b}$. We next upper bound the term ${\bf Z_{1,2}}$ (although ${Z_{1,2}}$ has a negative sign in \eqref{eqn:t1_pre}, ${Z_1}$ also appears with a negative sign in \eqref{eqn:fl_sgd_i1} which allows us to do the upper bound) as follows
\aligneqn{
    {\bf Z_{1,2}} &= \E{\dotp{\nabla F(\vec{\theta}{}{(t)})}{\sum_{k=1}^K\omega_k \left(\vec{d}{k}{(t)} - \tvec{d}{k}{(t)}\right)}}{} \nonumber\\
    &\overset{(i)}{\leq} \frac{1}{4} \norm{\nabla F(\vec{\theta}{}{(t)})}{}^2 + \E{\norm{ \sum_{k=1}^K\omega_k \left(\vec{d}{k}{(t)} - \tvec{d}{k}{(t)}\right) }{}^2}{}, \label{eqn:t1_2}
}
where $(i)$ follows from $\indotp{a}{b} \leq (1/4)\innorm{a}{}^2+\innorm{b}{}^2$ (result of Cauchy-Schwartz and Young's inequalities). Substituting \eqref{eqn:t1_1} and \eqref{eqn:t1_2} back in \eqref{eqn:t1_pre}, we get

\aligneqn{
     -{\bf Z_1} &\leq -\frac{1}{4}\norm{\nabla F(\vec{\theta}{}{(t)})}{}^2 \hspace{-0.5mm}-\hspace{-0.5mm} \frac{1}{2}\E{\norm{\sum_{k=1}^K\omega_k
    \vec{h}{k}{(t)} }{}^2}{} \hspace{-0.5mm}+\hspace{-0.5mm} \frac{1}{2}\E{\norm{\nabla F(\vec{\theta}{}{(t)}) \hspace{-0.5mm}-\hspace{-0.5mm} \sum_{k=1}^K\omega_k 
    \vec{h}{k}{(t)} }{}^2}{}\hspace{-1mm} \nonumber\\
    &~~~~ +  \E{\norm{ \sum_{k=1}^K\omega_k \left(\vec{d}{k}{(t)} - \tvec{d}{k}{(t)}\right) }{}^2}{}. \label{eqn:t1}
}
We next bound the term ${\bf Z_2}$ in \eqref{eqn:fl_sgd_i1} as follows 
\aligneqn{
    {\bf Z_2} &= \E{\norm{\sum_{k=1}^K\omega_k \tvec{d}{k}{(t)}}{}^2}{} = \E{\norm{ \sum_{k=1}^K\omega_k \left(\vec{d}{k}{(t)} - \vec{d}{k}{(t)} + \tvec{d}{k}{(t)}\right) }{}^2}{} \nonumber\\  
    &\overset{(i)}{\leq} 2\underbrace{\E{\norm{ \sum_{k=1}^K\omega_k \vec{d}{k}{(t)} }{}^2}{}}_{\bf Z_{2,1}} + 2\E{\norm{ \sum_{k=1}^K\omega_k \left(\vec{d}{k}{(t)} - \tvec{d}{k}{(t)}\right) }{}^2}{},
    \label{eqn:t2_pre}
}
where ${\bf Z_{2,1}}$ is given by,
\aligneqn{
    {\bf Z_{2,1}} &= \E{\norm{ \sum_{k=1}^K\omega_k \vec{d}{k}{(t)} }{}^2}{} \overset{(i)}{=} \E{\norm{ \sum_{k=1}^K\omega_k \vec{h}{k}{(t)} }{}^2}{} + \E{\norm{ \sum_{k=1}^K\omega_k \vec{\epsilon}{k}{(t)} }{}^2}{} \nonumber\\
    &\overset{(ii)}{\leq} \E{\norm{ \sum_{k=1}^K\omega_k \vec{h}{k}{(t)} }{}^2}{} + \sum_{k=1}^K\omega_k \E{\norm{ \vec{\epsilon}{k}{(t)} }{}^2}{} \overset{(iii)}{=} \E{\norm{ \sum_{k=1}^K\omega_k \vec{h}{k}{(t)} }{}^2}{} + \frac{\sigma^2}{\tau},\label{eqn:t2_1}
}
where $(i)$ follows from \eqref{eqn:normalized_approx_grad}, $(ii)$ uses Jensen's inequality:  $\innorm{\sum_{k=1}^{K} \omega_k \vec{a}{k}{} }{}^2\leq \sum_{k=1}^{K} \omega_k \norm{ \mathbf{a}_k }{}^2$, s.t. $\sum_{k=1}^{K} \omega_k =1$, and $(iii)$ uses \eqref{eqn:e_k_t2}. Plugging \eqref{eqn:t2_1} back into \eqref{eqn:t2_pre}, we get
\aligneqn{
    {\bf Z_2} \leq 2\E{\norm{ \sum_{k=1}^K\omega_k \vec{h}{k}{(t)} }{}^2}{} + \frac{2\sigma^2}{\tau} + 2\E{\norm{ \sum_{k=1}^K\omega_k \left(\vec{d}{k}{(t)} - \tvec{d}{k}{(t)}\right) }{}^2}{}.
    \label{eqn:t2}
}
Substituting \eqref{eqn:t1} and \eqref{eqn:t2} back in \eqref{eqn:fl_sgd_i1}, we get 
\aligneqn{
    &\E{F(\vec{\theta}{}{(t+1)})}{} - F(\vec{\theta}{}{(t)}) \leq  -\frac{\tau\eta}{4}\norm{\nabla F(\vec{\theta}{}{(t)})}{}^2 \hspace{-0.5mm}-\hspace{-0.5mm} \frac{\tau\eta}{2}\E{\norm{\sum_{k=1}^K\omega_k
    \vec{h}{k}{(t)} }{}^2}{} \nonumber\\
    &~~~~\hspace{-0.5mm}+\hspace{-0.5mm} \frac{\tau\eta}{2}\E{\norm{\nabla F(\vec{\theta}{}{(t)}) \hspace{-0.5mm}-\hspace{-0.5mm} \sum_{k=1}^K\omega_k 
    \vec{h}{k}{(t)} }{}^2}{}\hspace{-1mm} +  \tau\eta \E{\norm{ \sum_{k=1}^K\omega_k \left(\vec{d}{k}{(t)} - \tvec{d}{k}{(t)}\right) }{}^2}{} \nonumber\\
    &~~~~+ \tau^2\eta^2\beta\E{\norm{ \sum_{k=1}^K\omega_k \vec{h}{k}{(t)} }{}^2}{} + \tau\eta^2\beta\sigma^2 + \tau^2\eta^2\beta\E{\norm{ \sum_{k=1}^K\omega_k \left(\vec{d}{k}{(t)} - \tvec{d}{k}{(t)}\right) }{}^2}{} \nonumber\\
    &=  -\frac{\tau\eta}{4}\norm{\nabla F(\vec{\theta}{}{(t)})}{}^2 \hspace{-0.5mm}-\hspace{-0.5mm} \frac{\tau\eta}{2}(1-2\tau\eta\beta)\E{\norm{\sum_{k=1}^K\omega_k
    \vec{h}{k}{(t)} }{}^2}{} \hspace{-0.5mm}+\hspace{-0.5mm} \frac{\tau\eta}{2}\E{\norm{\nabla F(\vec{\theta}{}{(t)}) \hspace{-0.5mm}-\hspace{-0.5mm} \sum_{k=1}^K\omega_k 
    \vec{h}{k}{(t)} }{}^2}{}\hspace{-1mm} \nonumber\\
    &~~~~ +  \tau\eta(1+\tau\eta\beta) \E{\norm{ \sum_{k=1}^K\omega_k \left(\vec{d}{k}{(t)} - \tvec{d}{k}{(t)}\right) }{}^2}{} + \tau\eta^2\beta\sigma^2.
    \label{eqn:t2_2}
}
Choosing  $\tau\eta\beta \leq 1/2$, implies that $- \frac{\tau\eta}{2}(1-2\tau\eta\beta)\leq 0$ and $1+\tau\eta\beta \leq 3/2 < 2$, which results in simplification of \eqref{eqn:t2_2} as:
\aligneqn{
    \frac{\E{F(\vec{\theta}{}{(t+1)})}{} - F(\vec{\theta}{}{(t)})}{\eta\tau} &\leq -\frac{1}{4}\norm{\nabla F(\vec{\theta}{}{(t)})}{}^2 + \frac{1}{2} \E{\norm{\nabla F(\vec{\theta}{}{(t)}) \hspace{-0.5mm}-\hspace{-0.5mm} \sum_{k=1}^K\omega_k 
    \vec{h}{k}{(t)} }{}^2}{}\nonumber\\
    &~~~~+ 2\E{\norm{ \sum_{k=1}^K\omega_k \left(\vec{d}{k}{(t)} - \tvec{d}{k}{(t)}\right) }{}^2}{} + \eta\beta\sigma^2 \nonumber\\
    &\overset{(i)}{\leq} -\frac{1}{4}\norm{\nabla F(\vec{\theta}{}{(t)})}{}^2 + \frac{1}{2} \sum_{k=1}^K\omega_k \E{\norm{\nabla F_k(\vec{\theta}{k}{(t)}) \hspace{-0.5mm}-\hspace{-0.5mm} 
    \vec{h}{k}{(t)} }{}^2}{}\nonumber\\
    &~~~~+ 2\sum_{k=1}^K\omega_k \underbrace{\E{\norm{ \vec{d}{k}{(t)} - \tvec{d}{k}{(t)} }{}^2}{}}_{\bf Z_3} + \eta\beta\sigma^2,
    \label{eqn:t3}
}
where $(i)$ follows from Jensen's inequality $\innorm{\sum_{k=1}^{K} \omega_k \vec{a}{k}{} }{}^2\leq \sum_{k=1}^{K} \omega_k \norm{ \mathbf{a}_k }{}^2$, s.t. $\sum_{k=1}^{K} \omega_k =1$ and $\vec{\theta}{k}{(t)} = \vec{\theta}{}{(t)}, \forall k$ due to local synchronization. Now ${\bf Z_3}$ can be bounded as follows:

\aligneqn{
    {\bf Z_3} &= \E{\norm{ \vec{d}{k}{(t)} - \tvec{d}{k}{(t)} }{}^2}{} \overset{(i)}{=} \E{\norm{\vec{d}{k}{(t)} -
    \subsup{\rho}{k}{(t),\ell} \vec{d}{k}{\ell} }{}^2}{} \overset{(ii)}{=} \E{\norm{\vec{d}{k}{(t)} -
    \frac{ \frac{1}{\tau^2} \dotp{ \vec{g}{k}{(t)} }{ \vec{g}{k}{\ell}} }{\frac{1}{\tau^2} \norm{\vec{g}{k}{\ell}}{}^2} \vec{d}{k}{\ell} }{}^2}{} \nonumber\\
    &\overset{(iii)}{=} \E{\norm{\vec{d}{k}{(t)} -
    \frac{ \dotp{ \vec{d}{k}{(t)} }{ \vec{d}{k}{\ell}} }{ \norm{\vec{d}{k}{\ell}}{}^2} \vec{d}{k}{\ell} }{}^2}{} \overset{}{=} \E{\norm{\vec{d}{k}{(t)}}{}^2 - \frac{\dotp{ \vec{d}{k}{(t)} }{ \vec{d}{k}{\ell}}^2}{\norm{\vec{d}{k}{\ell}}{}^2}}{} \nonumber\\
    &\overset{(iv)}{=} \E{\norm{\vec{d}{k}{(t)}}{}^2 - \norm{ \vec{d}{k}{(t)} }{}^2\cos^2(\vec{\alpha}{k}{(t),\ell})}{} \overset{}{=} \E{\norm{\vec{d}{k}{(t)}}{}^2 \left(1 -\cos^2(\vec{\alpha}{k}{(t),\ell})\right)}{} \nonumber\\
    &= 2\E{\norm{\vec{d}{k}{(t)}}{}^2 \sin^2(\vec{\alpha}{k}{(t),\ell})}{} \overset{(v)}{\leq} \E{ \Delta^2}{} = \Delta^2,
    \label{eqn:lbgm_diff}
}

where $(i)$ uses \eqref{eqn:d_l}, $(ii)$ uses {\algName} definition from \eqref{eqn:grad_proj}, $(iii)$ uses the fact that $\vec{d}{k}{(t)} = \vec{g}{k}{\ell}/\tau$, $(iv)$ uses $\indotp{a}{b} = \innorm{a}{}\innorm{b}{}\cos(\alpha)$, and $(v)$ follows from the condition in the theorem. Substituting \eqref{eqn:lbgm_diff} back in \eqref{eqn:t3}, we get
\aligneqn{
    &\frac{\E{F(\vec{\theta}{}{(t+1)})}{} - F(\vec{\theta}{}{(t)})}{\eta\tau} \nonumber\\
    &\leq -\frac{1}{4}\norm{\nabla F(\vec{\theta}{}{(t)})}{}^2 + \frac{1}{2} \sum_{k=1}^K\omega_k \underbrace{\E{\norm{\nabla F_k(\vec{\theta}{k}{(t)}) \hspace{-0.5mm}-\hspace{-0.5mm} 
    \vec{h}{k}{(t)} }{}^2}{}}_{\bf Z_4} + 2\Delta^2 + \eta\beta\sigma^2,
    \label{eqn:t3_1}
}
where ${\bf Z_4}$ is given by,
\aligneqn{
    {\bf Z_4} &= \E{\norm{\nabla F_k(\vec{\theta}{k}{(t)}) \hspace{-0.5mm}-\hspace{-0.5mm} 
    \vec{h}{k}{(t)} }{}^2}{} = \frac{1}{\tau^2}\E{\norm{\sum_{b=0}^{\tau-1}\left(\nabla F_k(\vec{\theta}{k}{(t,0)}) \hspace{-0.5mm}-\hspace{-0.5mm} 
    \nabla F_k(\vec{\theta}{k}{(t,b)}) \right)}{}^2}{} \nonumber\\
    &\leq \frac{1}{\tau} \sum_{b=0}^{\tau-1} \E{\norm{\nabla F_k(\vec{\theta}{k}{(t,0)}) \hspace{-0.5mm}-\hspace{-0.5mm} 
    \nabla F_k(\vec{\theta}{k}{(t,b)}) }{}^2}{} \leq \frac{\beta^2}{\tau} \sum_{b=0}^{\tau-1} \E{\norm{\vec{\theta}{k}{(t,0)} \hspace{-0.5mm}-\hspace{-0.5mm} 
    \vec{\theta}{k}{(t,b)} }{}^2}{}.
}

Also, using the local update rule $\vec{\theta}{k}{(t,b)} \leftarrow \vec{\theta}{k}{(t,0)} - \eta \sum_{s=0}^{b-1} \mathbf{g}_k(\vec{\theta}{k}{(t,s)})$, where $\vec{\theta}{k}{(t,b)}$ is the model parameter obtained at the $b$-th local iteration of the global round $t$ at device $k$, we get:
\aligneqn{
    {\bf Z_5} &= \E{\norm{ \vec{\theta}{k}{(t,0)} - \vec{\theta}{k}{(t,b)}}{}^2}{} \nonumber\\
    &= \eta^2 \E{\norm{ \sum_{s=0}^{b-1} \mathbf{g}_k(\vec{\theta}{k}{(t,s)}) }{}^2}{} \overset{(i)}{\leq} 2\eta^2 \E{\norm{ \sum_{s=0}^{b-1} \nabla F_k(\vec{\theta}{k}{(t,s)}) }{}^2}{} + 2\eta^2 \E{\norm{ \sum_{s=0}^{b-1} \vec{\epsilon}{k}{(t,s)} }{}^2}{} \nonumber\\
    &\overset{(ii)}{\leq} 2\eta^2b \sum_{s=0}^{b-1} \E{\norm{ \nabla F_k(\vec{\theta}{k}{(t,s)}) }{}^2}{} + 2\eta^2 \sum_{s=0}^{b-1}  \E{\norm{ \vec{\epsilon}{k}{(t,s)} }{}^2}{} \nonumber\\& \leq 2\eta^2b \sum_{s=0}^{\tau-1} \E{\norm{ \nabla F_k(\vec{\theta}{k}{(t,s)}) }{}^2}{} + 2\eta^2\sigma^2b, \label{eqn:t4_3}
}

where $(i)$ uses Cauchy-Schwartz inequality and $(ii)$ uses Lemma~\ref{lemma:exp_norm_sum} and Cauchy-Schwartz inequality. Also note that
\aligneqn{
    \sum_{b=0}^{\tau-1}b &= \frac{\tau(\tau-1)}{2}.
    \label{eqn:sum_b}
}

Taking the cumulative sum of both hand sides of ${\bf Z_5}$ from \eqref{eqn:t4_3} over all batches, i.e., $\frac{1}{\tau}\sum_{b=0}^{\tau-1}$, and using \eqref{eqn:sum_b}, we get:
\aligneqn{
    \frac{1}{\tau}\sum_{b=0}^{\tau-1} \E{\norm{ \vec{\theta}{k}{(t,0)} - \vec{\theta}{k}{(t,b)}}{}^2}{} &\leq \sigma^2\eta^2(\tau-1) + \eta^2(\tau-1) \sum_{b=0}^{\tau-1} \underbrace{\E{\norm{ \nabla F_k(\vec{\theta}{k}{(t,b)}) }{}^2}{}}_{\bf Z_6}.\label{eqn:t5_1}
}

Furthermore, term ${\bf Z_6}$ can be bounded as follows:
\aligneqn{
    {\bf Z_6} &= \E{\norm{ \nabla F_k(\vec{\theta}{k}{(t,b)}) }{}^2}{} \overset{(i)}{\leq} 2\E{\norm{ \nabla F_k(\vec{\theta}{k}{(t,b)}) - \nabla F_k(\vec{\theta}{k}{(t,0)}) }{}^2}{} + 2\E{\norm{ \nabla F_k(\vec{\theta}{k}{(t,0)}) }{}^2}{} \nonumber\\
    &\overset{(ii)}{\leq} 2\beta^2\E{\norm{ \vec{\theta}{k}{(t,b)} - \vec{\theta}{k}{(t,0)} }{}^2}{} + 2\E{\norm{ \nabla F_k(\vec{\theta}{k}{(t,0)}) }{}^2}{},
    \label{eqn:t6}
}

where $(i)$ uses Cauchy-Schwartz inequality and $(ii)$ uses \eqref{eqn:lipschitz}. Replacing ${\bf Z_6}$ in \eqref{eqn:t5_1} using \eqref{eqn:t6},  we get:
\aligneqn{
    &\frac{1}{\tau}\sum_{b=0}^{\tau-1} \underbrace{\E{\norm{ \vec{\theta}{k}{(t,0)} - \vec{\theta}{k}{(t,b)}}{}^2}{}}_{\bf Z_5} \nonumber\\
    &\leq \eta^2\sigma^2(\tau-1) + 2\eta^2\beta^2(\tau-1) \sum_{b=0}^{\tau-1} \underbrace{\E{\norm{ \vec{\theta}{k}{(t,b)} - \vec{\theta}{k}{(t,0)} }{}^2}{}}_{\bf Z_5} + 2\eta^2\tau(\tau-1) \E{\norm{ \nabla F_k(\vec{\theta}{k}{(t,0)}) }{}^2}{}.
}

Note that ${\bf Z_5}$, which is originally defined in~\eqref{eqn:t4_3}, appears both in the left hand side (LHS) and right hand side (RHS) of the above expression. Rearranging the terms in the above inequality yields:
\eqn{
    \frac{1}{\tau}\sum_{b=0}^{\tau-1} \E{\norm{ \vec{\theta}{k}{(t,0)} - \vec{\theta}{k}{(t,b)}}{}^2}{} \leq \frac{\eta^2\sigma^2(\tau-1)}{1-2\eta^2\beta^2\tau(\tau-1)} + \frac{2\eta^2\tau(\tau-1)}{1-2\eta^2\beta^2\tau(\tau-1)}\E{\norm{ \nabla F_k(\vec{\theta}{k}{(t,0)}) }{}^2}{}.
}
Defining $H=2\eta^2\beta^2\tau(\tau-1)$, the above inequality can be re-written to evaluate ${\bf Z_4}$,
\aligneqn{
    {\bf Z_4} &= \frac{\beta^2}{\tau}\sum_{b=0}^{\tau-1} \E{\norm{ \vec{\theta}{}{(t,0)} - \vec{\theta}{}{(t,b)}}{}^2}{} \nonumber\\
    &\leq \frac{\eta^2\beta^2\sigma^2(\tau-1)}{1-2\eta^2\beta^2\tau(\tau-1)} + \frac{2\eta^2\beta^2\tau(\tau-1)}{1-2\eta^2\beta^2\tau(\tau-1)}\E{\norm{ \nabla F_k(\vec{\theta}{k}{(t,0)}) }{}^2}{} \nonumber\\
    &= \frac{\eta^2\beta^2\sigma^2}{1-H}(\tau-1) + \frac{H}{1-H}\E{\norm{ \nabla F_k(\vec{\theta}{k}{(t,0)}) }{}^2}{}.
}

Taking a weighted sum from the both hand sides of the above inequality across all the workers and using \eqref{eqn:bbd_diversity}, we get:
\aligneqn{
    & \frac{1}{2}\sum_{k=1}^{K} \omega_k \E{\norm{\nabla F_k(\vec{\theta}{k}{(t)}) \hspace{-0.5mm}-\hspace{-0.5mm} 
    \vec{h}{k}{(t)} }{}^2}{} \nonumber\\
    &\leq \frac{\eta^2\beta^2\sigma^2 (\tau-1)}{2(1-H)} \sum_{k=1}^{K} \omega_k + \frac{H}{2(1-H)}\sum_{k=1}^{K} \omega_k \E{\norm{ \nabla F_k(\vec{\theta}{k}{(t,0)}) }{}^2}{} \nonumber\\
    &=\frac{\eta^2 \beta^2\sigma^2(\tau-1)}{2(1-H)} + \frac{H}{2(1-H)}\sum_{k=1}^{K} \omega_k \E{\norm{ \nabla F_k(\vec{\theta}{k}{(t)}) }{}^2}{} \nonumber\\
    &\overset{(i)}{\leq}\frac{\eta^2\beta^2\sigma^2(\tau-1)}{2(1-H)} + \frac{H\Upsilon^2}{2(1-H)} \E{\norm{ \nabla F(\vec{\theta}{}{(t)}) }{}^2}{} + \frac{H\Gamma^2}{2(1-H)},
    \label{eqn:avg_diff_grad}
}
where $(i)$ follows from \eqref{eqn:bbd_diversity} and $\vec{\theta}{k}{(t)}=\vec{\theta}{}{(t)},~\forall k$ since computation occurs at the instance of global aggregation.
Next, plugging \eqref{eqn:avg_diff_grad} back in \eqref{eqn:t3_1}, we get:
\aligneqn{
 &\frac{\E{F(\vec{\theta}{}{(t+1)})}{} - F(\vec{\theta}{}{(t)})}{\eta\tau} \nonumber\\
    &\leq -\frac{1}{4}\norm{\nabla F(\vec{\theta}{}{(t)})}{}^2 + \frac{1}{2} \sum_{k=1}^K\omega_k \E{\norm{\nabla F_k(\vec{\theta}{k}{(t)}) \hspace{-0.5mm}-\hspace{-0.5mm} 
    \vec{h}{k}{(t)} }{}^2}{} + 2\Delta^2 + \eta\beta\sigma^2 \nonumber\\
    &\leq -\frac{1}{4}\norm{\nabla F(\vec{\theta}{}{(t)})}{}^2 + \frac{\eta^2\beta^2\sigma^2(\tau-1)}{2(1-H)} + \frac{H\Upsilon^2}{2(1-H)} \E{\norm{ \nabla F(\vec{\theta}{}{(t)}) }{}^2}{} + \frac{H\Gamma^2}{2(1-H)} \nonumber\\ 
    &~~~~ + 2\Delta^2 + \eta\beta\sigma^2 \nonumber\\
    &\leq -\frac{1}{4}\left( 1 - \frac{2H\Upsilon^2}{1-H}  \right) \norm{\nabla F(\vec{\theta}{}{(t)})}{}^2 + \frac{\eta^2\beta^2\sigma^2(\tau-1)}{2(1-H)} + \frac{H\Gamma^2}{2(1-H)} + 2\Delta^2 + \eta\beta\sigma^2.
}

If $H \leq \frac{1}{1+2\alpha\Upsilon^2}$ for some constant $\alpha > 1$, then it follows that $\frac{1}{1-H} \leq 1+ \frac{1}{2\alpha\Upsilon^2}$ and $\frac{2H\Upsilon^2}{1-H} \leq \frac{1}{\alpha}$. Choosing $\alpha = 2$ we can simplify the above expression as follows:

\aligneqn{
    &\frac{\E{F(\vec{\theta}{}{(t+1)})}{} - F(\vec{\theta}{}{(t)})}{\eta\tau} \nonumber\\
    &\leq -\frac{1}{8}\norm{\nabla F(\vec{\theta}{}{(t)})}{}^2 + 2\Delta^2 + \eta\beta\sigma^2 + \eta^2\beta^2\sigma^2(\tau-1) \left(\frac{1}{2}+\frac{1}{8\Upsilon^2}\right) + 2 \eta^2\beta^2 \Gamma^2 \tau(\tau-1) \left(1+\frac{1}{4\Upsilon^2}\right) \nonumber\\
    &\leq -\frac{1}{8}\norm{\nabla F(\vec{\theta}{}{(t)})}{}^2 + 2\Delta^2 + \eta\beta\sigma^2  + \frac{5}{8}\eta^2\beta^2\sigma^2(\tau-1) + \frac{5}{2} \eta^2\beta^2 \Gamma^2 \tau(\tau-1).
}
Rearranging the terms in the above inequality and taking the  average across all aggregation rounds from the both hand sides, yields:
\aligneqn{
    &\frac{1}{T} \sum_{t=0}^{T-1} \E{\norm{\nabla F(\vec{\theta}{}{(t, 0)})}{}^2}{} \nonumber\\
    &\leq \frac{8\left[\sum_{t=0}^{T-1}\E{F(\vec{\theta}{}{(t)})}{} - F(\vec{\theta}{}{(t+1)})\right]}{\eta\tau T} + 16\Delta^2 + 8\eta\beta\sigma^2 + 5\eta^2\beta^2\sigma^2(\tau-1) + 20 \eta^2\beta^2 \Gamma^2 \tau(\tau-1) \nonumber\\
    &= \frac{8\left[F(\vec{\theta}{}{(0)}) - F(\vec{\theta}{}{(T)})\right]}{\eta\tau T} + 16\Delta^2 + 8\eta\beta\sigma^2 + 5\eta^2\beta^2\sigma^2(\tau-1) + 20 \eta^2\beta^2 \Gamma^2 \tau(\tau-1) \nonumber\\
    &\leq \frac{8\left[F(\vec{\theta}{}{(0)}) - F^\star\right]}{\eta\tau T} + 16\Delta^2 + 8\eta\beta\sigma^2 + 5\eta^2\beta^2\sigma^2(\tau-1) + 20 \eta^2\beta^2 \Gamma^2 \tau(\tau-1) \label{eqn:final},
}
where we used the fact that $F$ is bounded below, since $F_k$-s are presumed to be bounded below, and $F^\star\leq F(\vec{\theta}{}{})$, $\forall \vec{\theta}{}{}\in\mathbb{R}^{M}$. This completes the proof of Theorem~\ref{thm:conv_char}.
%%%%%%%%%%%%%%%%%%%%%%%%%%%%%%%%%%%%%%%%%%%%%%%%%%%%%%%%%%%%
%%%%%%%%%%%%%%%%%%%%%%%%%%%%%%%%%%%%%%%%%%%%%%%%%%%%%%%%%%%%
\subsection{Condition on Learning Rate}
From the two conditions on the learning rate used in the analysis above, we have
\aligneqn{
    \eta\beta &\leq \frac{1}{2\tau} \\
    2\eta^2\beta^2\tau(\tau-1) &\leq \frac{1}{1+4\Upsilon^2}
}
We can further tighten the second constraint as,
\eqn{
    2\eta^2\beta^2\tau(\tau-1) \leq 2\eta^2\beta^2\tau^2 \leq \frac{1}{1+4\Upsilon^2}
}
Combining the two we have, 
\aligneqn{
    \eta\beta &\leq \min\left\{ \frac{1}{2\tau}, \frac{1}{\tau\sqrt{2(1+4\Upsilon^2)}} \right\}.
}

%%%%%%%%%%%%%%%%%%%%%%%%%%%%%%%%%%%%%%%%%%%%%%%%%%%%%%%%%%%%
%%%%%%%%%%%%%%%%%%%%%%%%%%%%%%%%%%%%%%%%%%%%%%%%%%%%%%%%%%%%

\newpage
\section{Proof of Corollary 1}
\label{app:corrolary_1}

Using \eqref{eqn:final} we have:
\aligneqn{
    &\frac{1}{T} \sum_{t=0}^{T-1} \E{\norm{\nabla F(\vec{\theta}{}{(t, 0)})}{}^2}{} \nonumber\\
    &\leq \frac{8\left[F(\vec{\theta}{}{(0)}) - F^\star\right]}{\eta\tau T} + 16\Delta^2 + 8\eta\beta\sigma^2 + 5\eta^2\beta^2\sigma^2(\tau-1) + 20 \eta^2\beta^2 \Gamma^2 \tau(\tau-1).
    \label{eqn:simple_2}
}
Next, using the assumption in the corollary statement, we have $\Delta^2 \leq \eta$. We then can upper bound the RHS of \eqref{eqn:simple_2} to get:
\aligneqn{
    &\frac{1}{T} \sum_{t=0}^{T-1} \E{\norm{\nabla F(\vec{\theta}{}{(t, 0)})}{}^2}{} \nonumber\\
    &\leq \frac{8\left[F(\vec{\theta}{}{(0)}) - F^\star\right]}{\eta\tau T} + 16\eta + 8\eta\beta\sigma^2 + 5\eta^2\beta^2\sigma^2(\tau-1) + 20 \eta^2\beta^2 \Gamma^2 \tau(\tau-1).
}
Choosing $\eta = \sqrt{\frac{1}{\tau T}}$, we get, 
\aligneqn{
    &\frac{1}{T} \sum_{t=0}^{T-1} \E{\norm{\nabla F(\vec{\theta}{}{(t, 0)})}{}^2}{} \nonumber\\
    &\leq \frac{8\left[F(\vec{\theta}{}{(0)}) - F^\star\right]}{\sqrt{\tau T}} + \frac{8\beta\sigma^2}{\sqrt{\tau T}} + \frac{16}{\sqrt{\tau T}} + \frac{5\beta^2\sigma^2(\tau-1)}{\tau T} + \frac{20 \beta^2 \Gamma^2 \tau(\tau-1)}{\tau T}.
}
We can write the above expression as, 
\aligneqn{
    &\frac{1}{T} \sum_{t=0}^{T-1} \E{\norm{\nabla F(\vec{\theta}{}{(t, 0)})}{}^2}{} \nonumber\\
    &\leq \bigo{\frac{1}{\sqrt{\tau T}}} + \bigo{\frac{\sigma^2}{\sqrt{\tau T}}} + \bigo{\frac{1}{\sqrt{\tau T}}} + \bigo{\frac{\sigma^2(\tau-1)}{\tau T})} + \bigo{\frac{ (\tau-1)\Gamma^2}{ T}}.
}
This completes the proof for Corollary~\ref{corr:stationary_conv}.

%%%%%%%%%%%%%%%%%%%%%%%%%%%%%%%%%%%%%%%%%%%%%%%%%%%%%%%%%%%%
%%%%%%%%%%%%%%%%%%%%%%%%%%%%%%%%%%%%%%%%%%%%%%%%%%%%%%%%%%%%

\newpage
\section{Additional Discussions}

\subsection{{\algName} Storage Considerations}
\label{app:storage}
As discussed in Section~\ref{sec:method}, {\algName} requires both server and workers to store the last copy of the workers' look-back gradient (LBG). While storing a single LBG (same size as the model) at each device might be trivial since the space complexity increases by a constant factor (i.e., the space complexity increases from $\bigo{M}$ to $\bigo{2M}=\bigo{M}$ where $M$ is the model size), storage of LBGs at the server might require more careful considerations since it scales linearly with the number of devices (i.e., space complexity increases from $\bigo{M}$ to $\bigo{KM}$ where $M$ is the model size and $K$ is the number of workers). Thus, the storage requirements can scale beyond memory capabilities of an aggregation server for a very large scale federated learning systems. We, therefore propose the following solutions for addressing the storage challenge:
\begin{itemize}[leftmargin=5mm]
    \item \textit{Storage Offloading.} In a large scale federated learning system, it is realistic to assume network hierarchy \cite{hosseinalipour2020multi}, e.g., base stations, edge servers, cloud servers, etc. In such cases the storage burden can be offloaded and distributed across the network hierarchy where the LBGs of the devices are stored.
    \item \textit{LBG Compression.} If the {\algName} is applied on top of existing compression techniques such as Top-K, ATOMO, etc., the size of LBGs to be stored at the server also gets compressed which reduces the storage burden. Alternatively, we could use parameter clustering techniques \citep{son2018clustering,cho2021dkm} to reduce LBG size at the server.
    \item \textit{LBG Clustering.} In a very large scale federated learning system, say with a billion workers, it's unrealistic to assume that all the billion clients have very different LBGs given the low rank hypothesis \eqref{hypothesis} and possible local data similarities across the workers. It should therefore be possible to cluster the LBGs at the server into a smaller number of centroids and only saving the centroids of the clusters instead of saving all the LBGs of the devices individually. The centroids can be broadcast across the devices to update the local version of the LBGs.
\end{itemize}

\subsection{Hyperparameter Tuning}
\label{app:hyperparams}

Most of the compression/communication savings baselines operate on a tradeoff between communication savings and accuracy. For hyperparameter selection, we first optimize the hyperparameters for the base algorithm such that we achieve the best possible communication saving subject to constraint that accuracy does not fall off much below the corresponding vanilla federated learning approach. For example, we optimize the value of $K$ in top-K sparsification by changing $K$ in orders of 10, i.e. $K=10\%$, $K=1\%$, $K=0.1\%$, etc. and choose the value that gives the best tradeoff between the final model accuracy and communication savings (this value is generally around $K=10\%$). Similarly, for ATOMO we consider rank-1, rank-2, and rank-3 approximations. While rank-1 approximation gives better communication savings, the corresponding accuracy falls off sharply. Rank-3 approximation gives only a marginal accuracy benefit over rank-2 approximation but adds considerably more communication burden. Thus we use rank-2 approximations in ATOMO. In the plug-and-play evaluations, the LBGM algorithm is applied on top of the base algorithms once their hyperparameters are tuned as a final step to show the additional benefits we can attain by exploiting the low-rank characteristic of the  gradient subspace. Our chosen hyperparameters can be found in our code repository: \url{https://github.com/shams-sam/FedOptim}.

%%%%%%%%%%%%%%%%%%%%%%%%%%%%%%%%%%%%%%%%%%%%%%%%%%%%%%%%%%%%
%%%%%%%%%%%%%%%%%%%%%%%%%%%%%%%%%%%%%%%%%%%%%%%%%%%%%%%%%%%%

\section{Additional Pseudocodes}

\subsection{Pseudocode for Preliminary Experiments}
\label{app:pseudo}
In this Appendix, we provide a psuedocode for generating the preliminary experimental results in Sec.~\ref{sec:explore}. The actual implementation of the following function calls used in the pseudocode can be found in the listed files of our code repository: \url{https://github.com/shams-sam/FedOptim}:
\begin{itemize}[leftmargin=5mm]
    \item ${\tt get\_num\_PCA\_components}$: implemented in function $\mathsf{estimate\_optimal\_ncomponents}$, file: $\mathsf{src/common/nb\_utils.py}$ of the repository. \shams{In summary, we stack the accumulated gradients over the epochs an perform singular value decomposition, after which we do the standard analysis for estimating the number of components explaining a given amount of variance in the datasets. Specifically,  we count the number of singular values that account for the $99\%$ and $95\%$ of the aggregated singular values.}
    \item ${\tt get\_PCA\_components}$: implemented in function~$\mathsf{pca\_transform}$,~file:~$\mathsf{src/common/nb\_utils.py}$ of the repository. \shams{In summary, we stack the accumulated gradients over the epochs an perform singular value decomposition, after which we do the standard analysis for recovering the principal components explaining a given amount of variance in the datasets. Specifically,  we recover the left singular vectors corresponding the singular values that account for the $99\%$ and $95\%$ of the aggregated singular values.}
    \item ${\tt cosine\_similarity}$: implemented using functions $\mathsf{sklearn.preprocessing.normalize}$ and $\mathsf{numpy.dot}$, such that ${\tt cosine\_similarity(a,b)=normalize(a).dot(normalize(b))}$ where ${\tt a}$ and ${\tt b}$ are $\mathsf{numpy.array}$, \shams{where ${\tt normalize}$ performs the vector normalization and ${\tt dot}$ is the standard vector dot product}.
    \item ${\tt plot\_1}$, ${\tt plot\_2}$, and ${\tt plot\_3}$: implemented in files $\mathsf{src/viz/prelim\_1.py}$,  $\mathsf{src/viz/prelim\_2.py}$, and  $\mathsf{src/viz/prelim\_3.py}$ respectively.
\end{itemize}
\begin{algorithm}[h!]
{\footnotesize 
  \caption{Pseudocode for Preliminary Experiments in Section~\ref{sec:explore}}
\label{alg:pseudo}
\begin{algorithmic}[1]
    \State Initialize model parameter $\vec{\theta}{}{(0)}$.
    \State Initialize ${\tt actual\_grads}$ = \{\} \Comment{store gradients for ${\tt PCA}$}
    \State Initialize ${\tt pca95\_store}$ = \{\} \Comment{store \#components accounting for 95\% variance}
    \State Initialize ${\tt pca99\_store}$ = \{\} \Comment{store \#components accounting for 99\% variance}
    \Statex
    \For{$t = 0$ to $T-1$} \Comment{training for $T$ epochs}
        \State Initialize $\vec{g}{}{(t)} \leftarrow \vec{0}{}{}$. \Comment{initialize gradient accumulator}
        \State Set $\vec{\theta}{}{(t,0)} \leftarrow \vec{\theta}{}{(t)}$.
        \For{$b=0$ to $B-1$} \Comment{$B$ minibatches per epoch}
            \State Sample a minibatch of  datapoints $\mathcal{B}$ from dataset $\mathcal{D}$. 
            \State Compute $\vec{g}{}{(t,b)}= \sum_{d\in \mathcal{B}} \nabla f(\vec{\theta}{}{(t,b)};d)/|\mathcal{B}|$.
            \State Update parameter: $\vec{\theta}{}{(t,b+1)} \leftarrow \vec{\theta}{}{(t,b)} - \eta \cdot \vec{g}{}{(t,b)}$.
            \State Accumulate gradient: $\vec{g}{}{(t)} \leftarrow \vec{g}{}{(t)} + \vec{g}{}{(t,b)}$.
        \EndFor    
        \State Set $\vec{\theta}{}{(t+1)} \leftarrow \vec{\theta}{}{(t, B)}$.
        \State ${\tt actual\_grads.append(} \vec{g}{}{(t)} {\tt )}$ \Comment{append accumulated gradient to store}
        \State ${\tt pca95\_store.append(get\_num\_PCA\_components(actual\_grads, variance=0.95))}$
        \State ${\tt pca99\_store.append(get\_num\_PCA\_components(actual\_grads, variance=0.99))}$
        \Statex
    \EndFor
    \Statex
    \State ${\tt plot\_1(pca95\_store, pca99\_store)}$ \Comment{plot of PCA component progression}
    \Statex
    \State ${\tt principal\_grads = get\_PCA\_components(actual\_grads, variance=0.99)}$
    \State ${\tt heatmap = zeros(len(actual\_grads), len(principal\_grads))}$
    \For{$i=0$ to ${\tt len(actual\_grads)-1}$ }
        \For{$j=0$ to ${\tt len(principal\_grads)-1}$ }
            \State ${\tt heatmap[i,j] = cosine\_similarity(actual\_grads[i], principal\_grads[j])}$
        \EndFor
    \EndFor
    \Statex
    \State ${\tt plot\_2(heatmap)}$ \Comment{plot of overlap of actual and principal gradients}
    \Statex
    \State ${\tt heatmap = zeros(len(actual\_grads), len(actual\_grads))}$
    \For{$i=0$ to ${\tt len(actual\_grads)-1}$ }
        \For{$j=i$ to ${\tt len(actual\_grads)-1}$ }
            \State ${\tt heatmap[i,j] = heatmap[j, i]= cosine\_similarity(actual\_grads[i], actual\_grads[j])}$
        \EndFor
    \EndFor
    \Statex
    \State ${\tt plot\_3(heatmap)}$ \Comment{plot of similarity among actual gradients}
\end{algorithmic}
}
\end{algorithm}

\clearpage
\subsection{Pseudocode for Device Sampling Experiments}
\label{ssec:sampling_algo}

The pseudocode for {\algName} with device sampling is given below. The process is largely similar to Algorithm~\ref{alg:algo_training}, except modifications in the global aggregation strategy. During global aggregation the server samples a subset $K'$ of all the available clients and receives updates from only those clients for aggregation as shown in line~\ref{line:start_agg}-\ref{line:end_agg} of the Algorithm~\ref{alg:sampling}.

In terms of the effect of sampling on the unsampled devices, as long as an unsampled device that joins at a later step of the training have a fairly good look-back gradient (i.e., its newly generated gradient are close to its look-back gradient (LBG)), there is no need for transmission of the entire parameter vector. This would often happen in practice unless the device engages in model training after a very long period of inactivity, in which case it would need to transmit its updated LBG before engaging in {\algName} communication savings.

\begin{algorithm}[h!]
{\footnotesize 
  \caption{{\algName} with Device Sampling}
\label{alg:sampling}
\begin{algorithmic}[1]
    \Statex \textbf{\underline{Notation:}}
    \Statex $\vec{\theta}{}{(t)}$: global model parameter at global aggregation round $t$.
    \Statex $\vec{\theta}{k}{(t, b)}$: model parameter at worker $k$, at global aggregation round $t$ and local update $b$.
    \Statex $\vec{g}{k}{(t)}$: accumulated gradient at worker $k$ at global aggregation round $t$.
    \Statex $\vec{g}{k}{\ell}$: last full gradient transmitted to server, termed look-back gradient (LBG).
    \Statex $\subsup{\alpha}{k}{(t), \ell}$: phase between the accumulated gradient $\vec{g}{k}{(t)}$ and LBG $\vec{g}{k}{\ell}$, termed look-back phase (LBP).
    \Statex \textbf{\underline{Training at worker $k$:}}
    \State Update local parameters: $\vec{\theta}{k}{(t, 0)} \leftarrow \vec{\theta}{}{(t)}$, and initialize gradient accumulator: $\vec{g}{k}{(t)} \leftarrow \vec{0}{}{}$.
    \For{$b = 0$ to ($\tau$-1)}        
        \State Sample a minibatch of  datapoints $\mathcal{B}_k$ from  $\mathcal{D}_{k}$ and compute $\vec{g}{k}{(t, b)}= \sum_{d\in \mathcal{B}_k} \nabla f_k(\vec{\theta}{k}{(t, b)};d)/|\mathcal{B}_k|$.
        
        \State Update local parameters: $\vec{\theta}{k}{(t, b+1)} \leftarrow \vec{\theta}{k}{(t, b)} - \eta \cdot \vec{g}{k}{(t, b)}$, and accumulate gradient: $\vec{g}{k}{(t)} \leftarrow \vec{g}{k}{(t)} + \vec{g}{k}{(t,b)}$.
    \EndFor
        \State Calculate the LBP error:  $ \sin^2(\subsup{\alpha}{k}{(t), \ell}) = 1 - \left( \indotp{ \vec{g}{k}{(t)} }{ \vec{g}{k}{\ell}}  \big/ \big(\innorm{ \vec{g}{k}{(t)} }{} \times \innorm{ \vec{g}{k}{\ell} }{} \big) \right)^2$ \label{line:lbp}
    \If{ $\sin^2(\subsup{\alpha}{k}{(t), \ell}) \leq \subsup{\delta}{k}{\mathsf{threshold}}$ } \label{line:lbp_cond} \Comment{checking the LBP error}
        \State Send scalar LBC to the server: $\vec{\mu}{k}{(t)} \leftarrow \subsup{\rho}{k}{(t),\ell} = \indotp{ \vec{g}{k}{(t)} }{ \vec{g}{k}{\ell} }/\innorm{ \vec{g}{k}{\ell} }{}^2$.
    \Else \Comment{updating the LBG}
        \State Send actual gradient to the server: $\vec{\mu}{k}{(t)} \leftarrow \vec{g}{k}{(t)}$. \label{line:update1}
        \State Update worker-copy of LBG: $\vec{g}{k}{\ell} \leftarrow \vec{g}{k}{(t)}$. \label{line:update2}
    \EndIf
    \Statex \textbf{\underline{Global update at the aggregation server:}}
    \State Initialize global parameter $\vec{\theta}{}{(0)}$ and broadcast it across workers.
    \For{$t = 0$ to $(T-1)$}
        \State \shams{sample set of indices $K'$, a random subset from the pool of devices $\{1, 2, ..., K\}$.} \label{line:start_agg}
        \State Receive updates from workers \shams{$\{ \vec{\mu}{k}{(t)} \}_{k\in K'}$}.
        \State Update global parameters: $\vec{\theta}{}{(t+1)} \leftarrow \vec{\theta}{}{(t)} - \frac{\eta}{\shams{\abs{K'}{}}} \sum_{k \in K'}\omega_k \left[ s_k \cdot  \vec{\mu}{k}{(t)} \cdot \vec{g}{k}{\ell} + (1-s_k)\cdot \vec{\mu}{k}{(t)}\right]$, \label{line:end_agg}
        \Statex ~~~~~~where $s_k$ is an indicator function given by, $s_k = \begin{cases} 1, \quad \text{if } 
        \vec{\mu}{k}{(t)}\text{ is a scalar} \\
        0, \quad \text{otherwise, i.e., if }\vec{\mu}{k}{(t)}\text{ is a vector} \end{cases}$.
        \State Update server-copy of look-back gradients (LBGs): $ \vec{g}{k}{\ell} \leftarrow (1-s_k) \vec{\mu}{k}{(t)} + (s_k) \vec{g}{k}{\ell },~~\forall k \in K'$.
    \EndFor
\end{algorithmic}
}
\end{algorithm}

%%%%%%%%%%%%%%%%%%%%%%%%%%%%%%%%%%%%%%%%%%%%%%%%%%%%%%%%%%%%
%%%%%%%%%%%%%%%%%%%%%%%%%%%%%%%%%%%%%%%%%%%%%%%%%%%%%%%%%%%%

\newpage
\section{Additional Preliminary Experiments}
\label{app:addl_expt}

We next present the additional experiments performed to test hypotheses, \eqref{hypothesis} \& \eqref{hypothesis_2}. As mentioned in Section~\ref{sec:explore}, we study the rank-characteristics of centralized training using SGD on multiple datasets: FMNIST, MNIST, CIFAR-10, CelebA, COCO, and PascalVOC, and model architectures: CNN, FCN, Resnet18, VGG19, and U-Net. Section~\ref{app:prelim_expt_1} presents experiments complementary to the one presented in Fig.~\ref{fig:prelim_1}, while Section~\ref{app:prelim_expt_2} \& \ref{app:prelim_expt_3} present experiments complementary to those presented in Fig.~\ref{fig:prelim_2}, \& \ref{fig:prelim_3}, respectively. \textit{Please follow the hyperlinks for the ease of navigating through the figures.}

\subsection{PCA Component Progression}
\label{app:prelim_expt_1}

Together with Fig.~\ref{fig:prelim_1}, Figs.~\ref{fig:prelim_1_mnist}-\ref{fig:prelim_1_seg} show that both N99-PCA and N95-PCA are significantly lower than that the total number of gradients calculated during model training irrespective of model/dataset/learning task for multiple datasets and models which all agree with \eqref{hypothesis}. Specifically, the principal gradients (i.e., red and blue lines in the top row of the plots) are substantially lower (often as low as 10\% of number of epochs, i.e., gradients generated) in these experiments. Refer below for the details of the figures.

\begin{enumerate}[leftmargin=5mm]
    \item Fig.~\ref{fig:prelim_1_cifar100} repeats the experiment conducted in Fig.~\ref{fig:prelim_1} on CIFAR-100 using FCN, CNN, Resnet18, \& VGG19.
    \item Fig.~\ref{fig:prelim_1_mnist} repeats the experiment conducted in Fig.~\ref{fig:prelim_1} on MNIST using FCN, CNN, Resnet18, \& VGG19.
    \item Fig.~\ref{fig:prelim_1_fmnist} repeats the experiment conducted in Fig.~\ref{fig:prelim_1} on FMNIST using FCN, CNN, Resnet18, \& VGG19.
    \item Fig.~\ref{fig:prelim_1_svm} repeats the experiment conducted in Fig.~\ref{fig:prelim_1} on CIFAR-10, FMNIST, and MNIST using SVM, suggesting that we can use {\algName} for classic classifiers that are not necessary neural networks.
    \item Fig.~\ref{fig:prelim_1_seg} repeats the experiment conducted in Fig.~\ref{fig:prelim_1} on COCO, and PascalVOC using U-Net.
\end{enumerate}

\subsection{Overlap of Actual and Principal Gradient}
\label{app:prelim_expt_2}

Next, we perform experiments summarized in Fig.~\ref{fig:prelim_2_celeba_vgg19}-\ref{fig:prelim_2_coco_unet} to further validate our observation in Fig.~\ref{fig:prelim_2}: (i) cosine similarity of actual gradients with principal gradients varies gradually over time, and (ii) actual gradients have a high cosine similarity with one or more of the principal gradients. Refer below for the details of the figures. \textit{Each subplot is marked with \#L, the layer number of the neural network, and \#elem, the number of elements in each layer.}

\shams{In the plots with dense heatmaps (a large number of gradients in along x and y axis) for larger models such as Fig.~\ref{fig:prelim_2_celeba_vgg19}, it is harder to observe the overlap among the actual gradient and the PCA gradients. However, we can still notice the number of prinicipal components (along y axis) is substantially lower than the total number of epochs gradients (along y axis). A better picture of gradient overlap with other gradients can still be seen in corresponding inter-gradient overlap plot in Fig.~\ref{fig:prelim_3_celeba_vgg19}, which is consistent with the corresponding PCA progression shown in Fig.~\ref{fig:prelim_1}. Note that the lesser number of prinicipal gradient directions (e.g., CNN in Fig.~\ref{fig:prelim_1}) implies a higher overlap among the generated gradients (e.g., CNN in Fig.~\ref{fig:prelim_3}), while a larger number of PGDs (e.g., VGG19 in Fig.~\ref{fig:prelim_1}) implies a lower overlap among generated gradients (e.g., VGG19 in Fig.~\ref{fig:prelim_3_cifar_vgg19}).}

\begin{enumerate}[leftmargin=5mm]
    \item Fig.~\ref{fig:prelim_2_celeba_vgg19} repeats the experiment conducted in Fig.~\ref{fig:prelim_2} on CelebA using VGG19.
    \item Fig.~\ref{fig:prelim_2_celeba_resnet18} repeats the experiment conducted in Fig.~\ref{fig:prelim_2} on CelebA using Resnet18.
    \item Fig.~\ref{fig:prelim_2_celeba_fcn} repeats the experiment conducted in Fig.~\ref{fig:prelim_2} on CelebA using FCN.
    \item Fig.~\ref{fig:prelim_2_celeba_cnn} repeats the experiment conducted in Fig.~\ref{fig:prelim_2} on CelebA using CNN.
    
    \item Fig.~\ref{fig:prelim_2_cifar_vgg19} repeats the experiment conducted in Fig.~\ref{fig:prelim_2} on CIFAR-10 using VGG19.
    \item Fig.~\ref{fig:prelim_2_cifar_resnet18} repeats the experiment conducted in Fig.~\ref{fig:prelim_2} on CIFAR-10 using Resnet18.
    \item Fig.~\ref{fig:prelim_2_cifar_fcn} repeats the experiment conducted in Fig.~\ref{fig:prelim_2} on CIFAR-10 using FCN.
    \item Fig.~\ref{fig:prelim_2_cifar_cnn} repeats the experiment conducted in Fig.~\ref{fig:prelim_2} on CIFAR-10 using CNN.
    
    \item Fig.~\ref{fig:prelim_2_cifar100_vgg19} repeats the experiment conducted in Fig.~\ref{fig:prelim_2} on CIFAR-100 using VGG19.
    \item Fig.~\ref{fig:prelim_2_cifar100_resnet18} repeats the experiment conducted in Fig.~\ref{fig:prelim_2} on CIFAR-100 using Resnet18.
    \item Fig.~\ref{fig:prelim_2_cifar100_fcn} repeats the experiment conducted in Fig.~\ref{fig:prelim_2} on CIFAR-100 using FCN.
    \item Fig.~\ref{fig:prelim_2_cifar100_cnn} repeats the experiment conducted in Fig.~\ref{fig:prelim_2} on CIFAR-100 using CNN.
    
    \item Fig.~\ref{fig:prelim_2_fmnist_vgg19} repeats the experiment conducted in Fig.~\ref{fig:prelim_2} on FMNIST using VGG19.
    \item Fig.~\ref{fig:prelim_2_fmnist_resnet18} repeats the experiment conducted in Fig.~\ref{fig:prelim_2} on FMNIST using Resnet18.
    \item Fig.~\ref{fig:prelim_2_fmnist_fcn} repeats the experiment conducted in Fig.~\ref{fig:prelim_2} on FMNIST using FCN.
    \item Fig.~\ref{fig:prelim_2_fmnist_cnn} repeats the experiment conducted in Fig.~\ref{fig:prelim_2} on FMNIST using CNN.
    
    \item Fig.~\ref{fig:prelim_2_mnist_vgg19} repeats the experiment conducted in Fig.~\ref{fig:prelim_2} on MNIST using VGG19.
    \item Fig.~\ref{fig:prelim_2_mnist_resnet18} repeats the experiment conducted in Fig.~\ref{fig:prelim_2} on MNIST using Resnet18.
    \item Fig.~\ref{fig:prelim_2_mnist_fcn} repeats the experiment conducted in Fig.~\ref{fig:prelim_2} on MNIST using FCN.
    \item Fig.~\ref{fig:prelim_2_mnist_cnn} repeats the experiment conducted in Fig.~\ref{fig:prelim_2} on MNIST using CNN.
    
    \item Fig.~\ref{fig:prelim_2_voc_unet} repeats the experiment conducted in Fig.~\ref{fig:prelim_2} on PascalVOC using U-Net.
    \item Fig.~\ref{fig:prelim_2_coco_unet} repeats the experiment conducted in Fig.~\ref{fig:prelim_2} on COCO using U-Net.
\end{enumerate}

\subsection{Similarity among Consecutive Generated Gradients}
\label{app:prelim_expt_3}

Furthermore, we perform experiments summarized in Fig.~\ref{fig:prelim_3_celeba_vgg19}-\ref{fig:prelim_3_coco_unet}. Together with Fig.~\ref{fig:prelim_3}, these experiments show that there is a significant overlap of consecutive gradients generated during SGD iterations, which further substantiates \eqref{hypothesis_2} and bolsters our main idea that \textit{``gradients transmitted in FL can be recycled/reused to represent the gradients generated in the subsequent iterations''}. Refer below for the details of the figures.

\begin{enumerate}[leftmargin=5mm]
    \item Fig.~\ref{fig:prelim_3_celeba_vgg19} repeats the experiment conducted in Fig.~\ref{fig:prelim_3} on CelebA using VGG19.
    \item Fig.~\ref{fig:prelim_3_celeba_resnet18} repeats the experiment conducted in Fig.~\ref{fig:prelim_3} on CelebA using Resnet18.
    \item Fig.~\ref{fig:prelim_3_celeba_fcn} repeats the experiment conducted in Fig.~\ref{fig:prelim_3} on CelebA using FCN.
    \item Fig.~\ref{fig:prelim_3_celeba_cnn} repeats the experiment conducted in Fig.~\ref{fig:prelim_3} on CelebA using CNN.
    
    \item Fig.~\ref{fig:prelim_3_cifar_vgg19} repeats the experiment conducted in Fig.~\ref{fig:prelim_3} on CIFAR-10 using VGG19.
    \item Fig.~\ref{fig:prelim_3_cifar_resnet18} repeats the experiment conducted in Fig.~\ref{fig:prelim_3} on CIFAR-10 using Resnet18.
    \item Fig.~\ref{fig:prelim_3_cifar_fcn} repeats the experiment conducted in Fig.~\ref{fig:prelim_3} on CIFAR-10 using FCN.
    \item Fig.~\ref{fig:prelim_3_cifar_cnn} repeats the experiment conducted in Fig.~\ref{fig:prelim_3} on CIFAR-10 using CNN.
    
    \item Fig.~\ref{fig:prelim_3_cifar100_vgg19} repeats the experiment conducted in Fig.~\ref{fig:prelim_3} on CIFAR-100 using VGG19.
    \item Fig.~\ref{fig:prelim_3_cifar100_resnet18} repeats the experiment conducted in Fig.~\ref{fig:prelim_3} on CIFAR-100 using Resnet18.
    \item Fig.~\ref{fig:prelim_3_cifar100_fcn} repeats the experiment conducted in Fig.~\ref{fig:prelim_3} on CIFAR-100 using FCN.
    \item Fig.~\ref{fig:prelim_3_cifar100_cnn} repeats the experiment conducted in Fig.~\ref{fig:prelim_3} on CIFAR-100 using CNN.
    
    \item Fig.~\ref{fig:prelim_3_fmnist_vgg19} repeats the experiment conducted in Fig.~\ref{fig:prelim_3} on FMNIST using VGG19.
    \item Fig.~\ref{fig:prelim_3_fmnist_resnet18} repeats the experiment conducted in Fig.~\ref{fig:prelim_3} on FMNIST using Resnet18.
    \item Fig.~\ref{fig:prelim_3_fmnist_fcn} repeats the experiment conducted in Fig.~\ref{fig:prelim_3} on FMNIST using FCN.
    \item Fig.~\ref{fig:prelim_3_fmnist_cnn} repeats the experiment conducted in Fig.~\ref{fig:prelim_3} on FMNIST using CNN.
    
    \item Fig.~\ref{fig:prelim_3_mnist_vgg19} repeats the experiment conducted in Fig.~\ref{fig:prelim_3} on MNIST using VGG19.
    \item Fig.~\ref{fig:prelim_3_mnist_resnet18} repeats the experiment conducted in Fig.~\ref{fig:prelim_3} on MNIST using Resnet18.
    \item Fig.~\ref{fig:prelim_3_mnist_fcn} repeats the experiment conducted in Fig.~\ref{fig:prelim_3} on MNIST using FCN.
    \item Fig.~\ref{fig:prelim_3_mnist_cnn} repeats the experiment conducted in Fig.~\ref{fig:prelim_3} on MNIST using CNN.
    
    \item Fig.~\ref{fig:prelim_3_voc_unet} repeats the experiment conducted in Fig.~\ref{fig:prelim_3} on PascalVOC using U-Net.
    \item Fig.~\ref{fig:prelim_3_coco_unet} repeats the experiment conducted in Fig.~\ref{fig:prelim_3} on COCO using U-Net.
\end{enumerate}

%%%%%%%%%%%%%%%%%%%%%%%%%%%%%%%%%%%%%%%%%%%%%%%%%%%%%%%%%%%%
%%%%%%%%%%%%%%%%%%%%%%%%%%%%%%%%%%%%%%%%%%%%%%%%%%%%%%%%%%%%

\section{Additional {\algName} Experiments}
\label{app:addl_lbgm_expt}

In this section, we present complimentary experiments to the properties studies in Section~\ref{sec:expt} of the main text. In particular, we show that our observations hold for datasets: CIFAR-10, CIFAR-100, CelebA, FMNIST, and MNIST. We also present results when using shallower models FCN or deeper models Resnet18 different from CNNs. Section~\ref{app:standalone_expt} gives further evidence of utility of {\algName} as a standalone solution. Section~\ref{app:rho_effect} lists figures that summarize the effect of changing $\subsup{\delta}{k}{\mathsf{threshold}}$ on model performance for mentioned datasets and models. In Section~\ref{app:plugnplay_expt} we list figures that summarize additional experiments to support the observations made in Fig.~\ref{fig:plugnplay} and Section~\ref{app:general_expt} lists figures for additional experiments corresponding to the observations made in Fig.~\ref{fig:dist_training}. Finally, Section~\ref{app:sampling_expt} presents results  on {\algName} algorithm corresponding to the case wherein $50\%$ of clients are randomly sampled during global aggregation. \textit{Since the two layer FCN considered is a simple classifier, it does not perform well on complex datasets such as CIFAR-10, CIFAR-100, and CelebA. Thus, the respective results are omitted for FCN on these complicated datasets and the performance of FCN is only studied for MNIST and FMNIST datasets. Similarly, the 4-layer CNN architecture does not perform well on CIFAR-100 dataset and hence the corresponding results are ommited. We also present results using U-Net architecture for semantic segmentation on PascalVOC dataset.}

\subsection{{\tt \algName} as a Standalone Algorithm.}
\label{app:standalone_expt}
\begin{enumerate}[leftmargin=5mm]
    \item Fig.~\ref{fig:standalone_cnn} shows the result of repeating the experiment conducted in Fig.~\ref{fig:standalone} (CNNs on non-iid data distribution) on CNNs for iid data distribution for datasets CIFAR-10, FMNIST, and MNIST, and U-Net for segmentation for dataset PascalVOC. 
    \item Fig.~\ref{fig:standalone_fcn} shows the result of repeating the experiment conducted in Fig.~\ref{fig:standalone} (CNNs on non-iid data distribution) on FCNs for both non-iid and iid data distributions on datasets FMNIST and MNIST.
    \item Fig.~\ref{fig:standalone_resnet18} shows the result of repeating the experiment conducted in Fig.~\ref{fig:standalone} (CNNs on non-iid data distribution) on Resnet18s for non-iid data distribution on datasets CelebA, CIFAR-10, CIFAR-100, FMNIST, and MNIST using a setup similar to that of \cite{wang2018atomo}.
\end{enumerate}

\subsection{Effect of $\subsup{\delta}{k}{\mathsf{threshold}}$ on {\algName}.}
\label{app:rho_effect}
\begin{enumerate}[leftmargin=5mm]
    \item Fig.~\ref{fig:rho_effect_cnn} shows the result of repeating the experiment conducted in Fig.~\ref{fig:rho_effect} (CNNs on non-iid data distribution) on CNNs for iid data distribution for datasets CIFAR-10, FMNIST, and MNIST, and U-Net for segmentation for dataset PascalVOC.
    \item Fig.~\ref{fig:rho_effect_fcn} shows the result of repeating the experiment conducted in Fig.~\ref{fig:rho_effect} (CNNs on non-iid data distribution) on FCNs for both non-iid and iid data distributions on datasets FMNIST and MNIST.
    \item Fig.~\ref{fig:rho_effect_resnet18} shows the result of repeating the experiment conducted in Fig.~\ref{fig:rho_effect} (CNNs on non-iid data distribution) on Resnet18s for non-iid data distribution on datasets CelebA, CIFAR-10 and CIFAR-100 using a setup similar to that of \cite{wang2018atomo}.
\end{enumerate}

\subsection{{\algName} as a Plug-and-Play Algorithm.}
\label{app:plugnplay_expt}
\begin{enumerate}[leftmargin=5mm]
    \item Fig.~\ref{fig:plugnplay_cnn} shows the result of repeating the experiment conducted in Fig.~\ref{fig:plugnplay} (CNNs on non-iid data distribution) on CNNs for iid data distribution for datasets CIFAR-10, FMNIST, and MNIST.
    \item Fig.~\ref{fig:plugnplay_fcn} shows the result of repeating the experiment conducted in Fig.~\ref{fig:plugnplay} (CNNs on non-iid data distribution) on FCNs for both non-iid and iid data distributions on datasets FMNIST and MNIST.
    \item Fig.~\ref{fig:plugnplay_resnet18} shows the result of repeating the experiment conducted in Fig.~\ref{fig:plugnplay} (CNNs on non-iid data distribution) on Resnet18s for non-iid data distribution on datasets CelebA, CIFAR-10 and CIFAR-100 using a setup similar to that of \cite{wang2018atomo}.
\end{enumerate}

\subsection{Generalizability of {\algName} to Distributed Training.}
\label{app:general_expt}
\begin{enumerate}[leftmargin=5mm]
    \item Fig.~\ref{fig:dist_training_cnn} shows the result of repeating the experiment conducted in Fig.~\ref{fig:dist_training} (CNNs on non-iid data distribution) on CNNs for iid data distribution for datasets CIFAR-10, FMNIST, and MNIST.
    \item Fig.~\ref{fig:dist_training_fcn} shows the result of repeating the experiment conducted in Fig.~\ref{fig:dist_training} (CNNs on non-iid data distribution) on FCNs for both non-iid and iid data distributions on datasets FMNIST and MNIST.
    \item Fig.~\ref{fig:dist_training_resnet18} shows the result of repeating the experiment conducted in Fig.~\ref{fig:dist_training} (CNNs on non-iid data distribution) on Resnet18s for non-iid data distribution on datasets CelebA, CIFAR-10 and CIFAR-100 using a setup similar to that of \cite{wang2018atomo}.
\end{enumerate}

\subsection{{\algName} under Client Sampling.}
\label{app:sampling_expt}

We present results with {\algName} under client sampling in this subsection. The results are qualitatively similar to those presented in Sec.~\ref{sec:expt} under ``{\algName} as Standalone Algorithm''. For example, our results on the MNIST dataset for $50\%$ client participation shows a $35\%$ and $55\%$ improvement in communication efficiency for only $0.2\%$ and $4\%$ drop in accuracy for the corresponding i.i.d and non-i.i.d cases (see column 1 of Fig.~\ref{fig:sampled_training_cnn_iid}\&\ref{fig:sampled_training_cnn_non_iid} respectively).
\begin{enumerate}[leftmargin=5mm]
    \item Fig.~\ref{fig:sampled_training_cnn_non_iid} shows the result of repeating the experiment conducted in Fig.~\ref{fig:standalone} (CNNs on non-iid data distribution) under $50\%$ client sampling using CNNs for non-iid data distribution for datasets CIFAR-10, FMNIST, and MNIST, and regression for dataset CelebA.
    \item Fig.~\ref{fig:sampled_training_cnn_iid} shows the result of repeating the experiment conducted in Fig.~\ref{fig:standalone} (CNNs on non-iid data distribution) under $50\%$ client sampling using CNNs for iid data distribution for datasets CIFAR-10, FMNIST, and MNIST.
\end{enumerate}

%%%%%%%%%%%%%%%%%%%%%%%%%%%%%%%%%%%%%%%%%%%%%%%%%%%%%%%%%%%%
%%%%%%%%%%%%%%%%%%%%%%%%%%%%%%%%%%%%%%%%%%%%%%%%%%%%%%%%%%%%

\begin{figure}[h!]
  \centering
  \centerline{\includegraphics[width=0.8\textwidth]{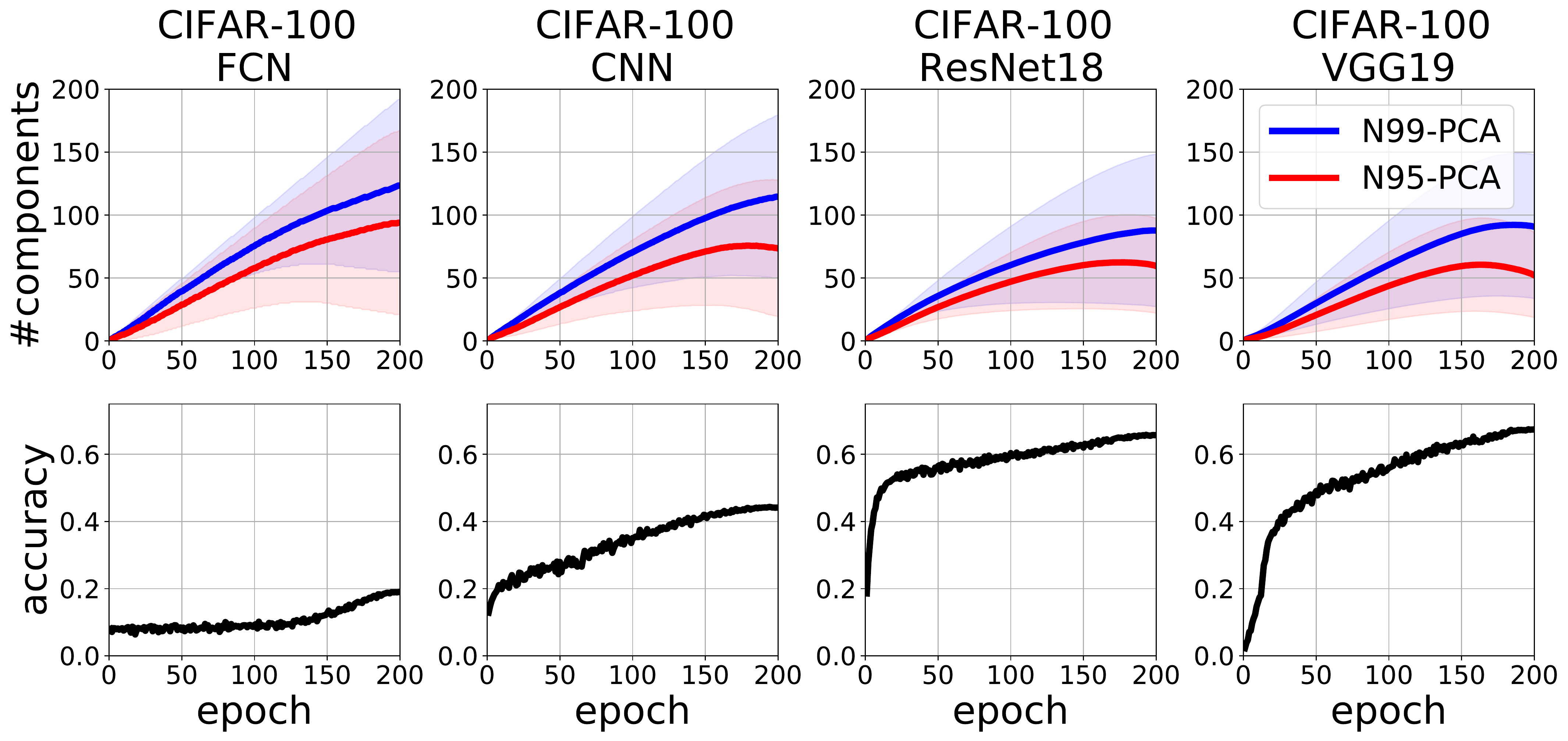}}
  \caption{\small{\textit{PCA Components Progression}. Repeat of Fig.~\ref{fig:prelim_1} on CIFAR-100 dataset.}}
  \label{fig:prelim_1_cifar100}
\end{figure}

\begin{figure}[h!]
  \centering
  \centerline{\includegraphics[width=0.8\textwidth]{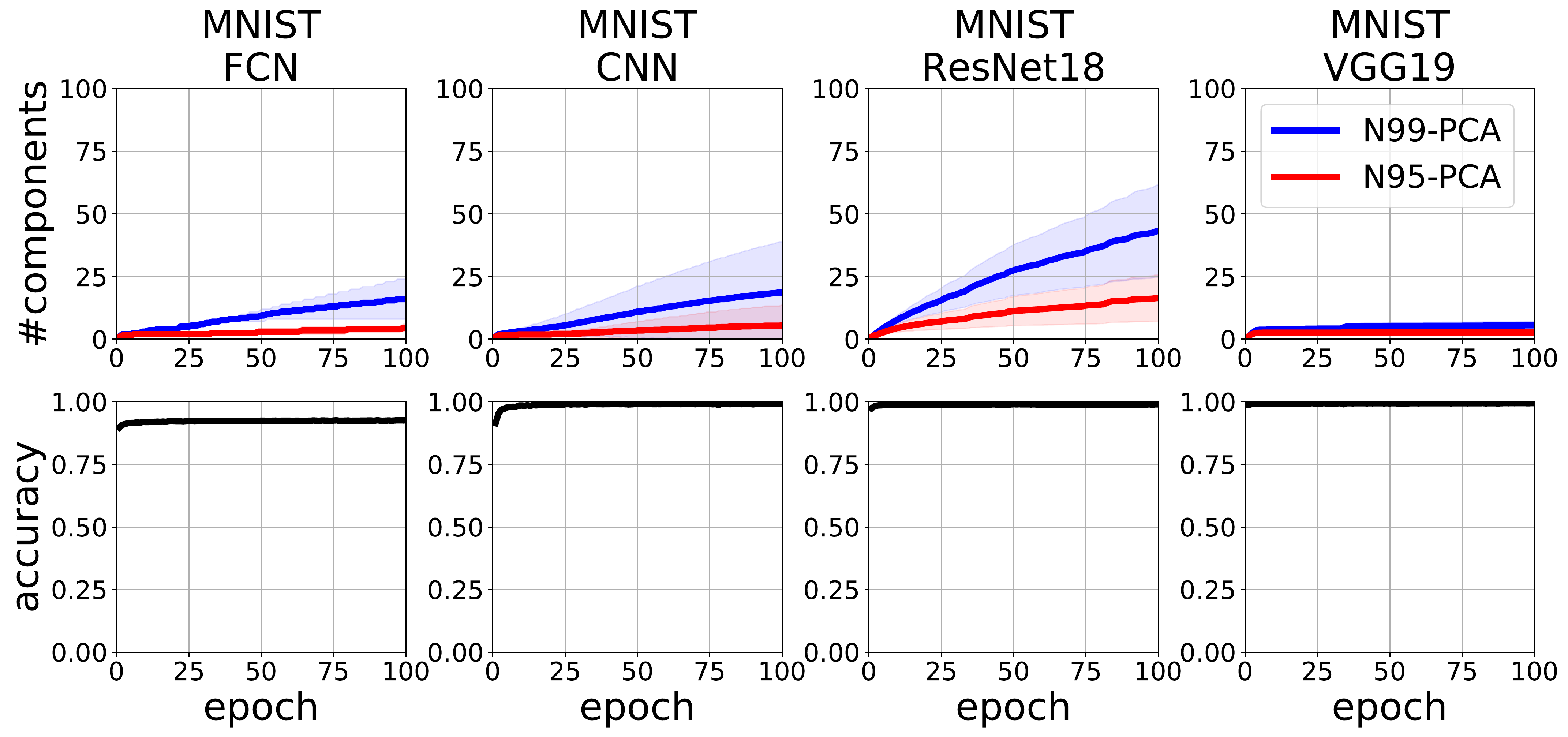}}
  \caption{\small{\textit{PCA Components Progression}. Repeat of Fig.~\ref{fig:prelim_1} on MNIST dataset.}}
  \label{fig:prelim_1_mnist}
\end{figure}

\begin{figure}[h!]
  \centering
  \centerline{\includegraphics[width=0.8\textwidth]{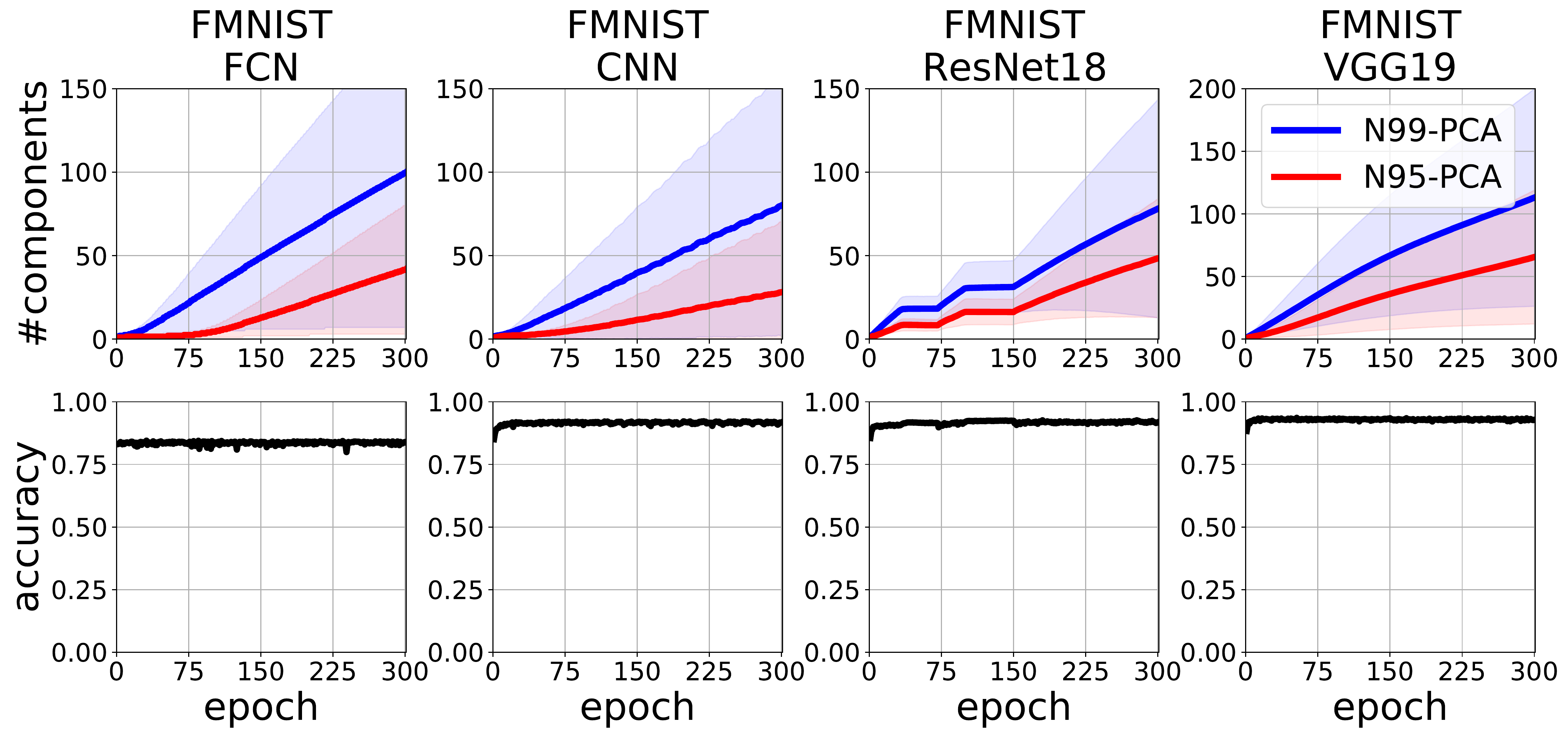}}
  \caption{\small{\textit{PCA Components Progression}. Repeat of Fig.~\ref{fig:prelim_1} on FMNIST dataset.}}
  \label{fig:prelim_1_fmnist}
\end{figure}

\begin{figure}[h!]
  \centering
  \centerline{\includegraphics[width=0.6\textwidth]{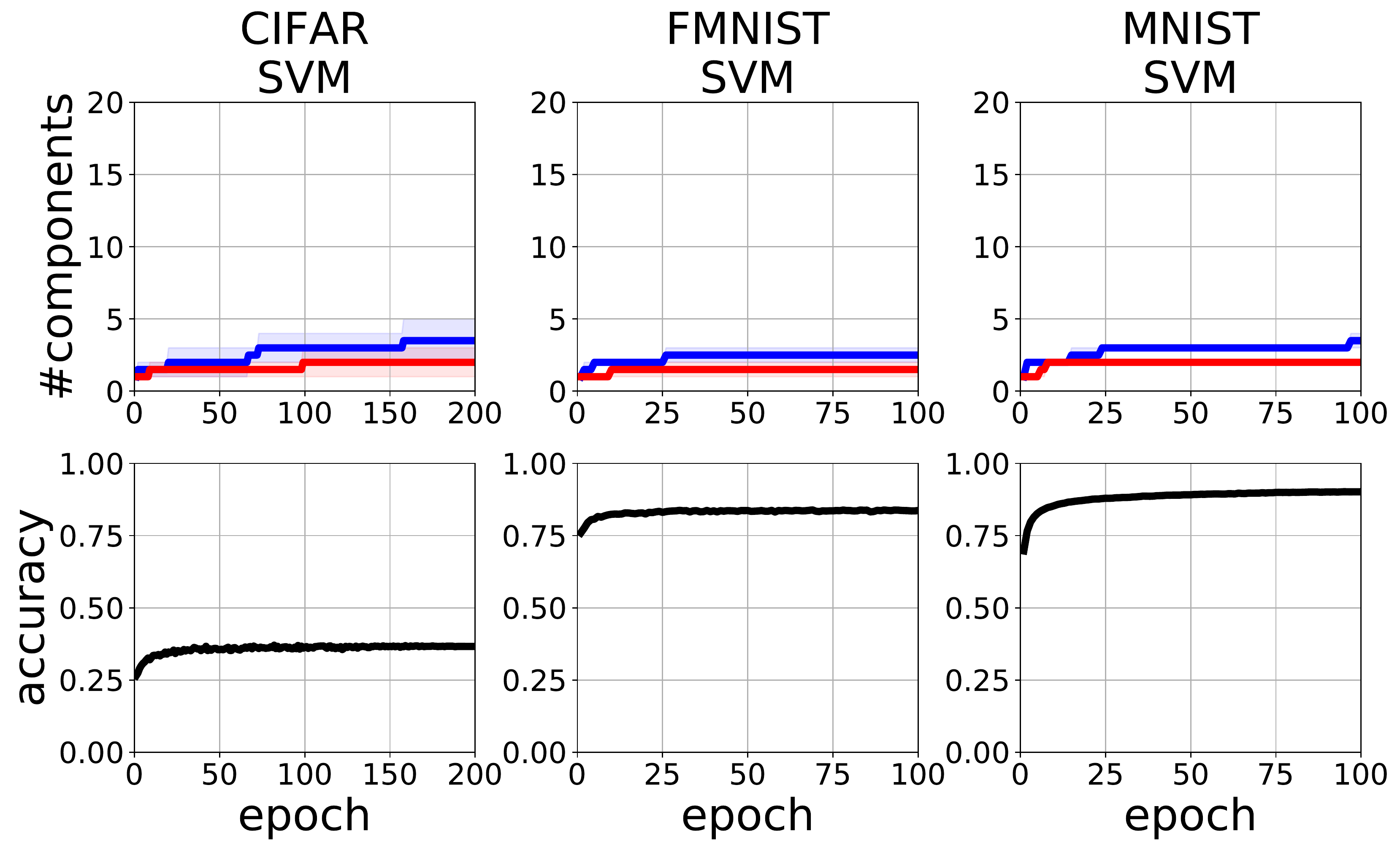}}
  \caption{\small{\textit{PCA Components Progression}. Repeat of Fig.~\ref{fig:prelim_1} on CIFAR-10, F-MNIST, and MNIST datasets but using squared SVM classifier.}}
  \label{fig:prelim_1_svm}
\end{figure}

\begin{figure}[h!]
  \centering
  \centerline{\includegraphics[width=0.5\textwidth]{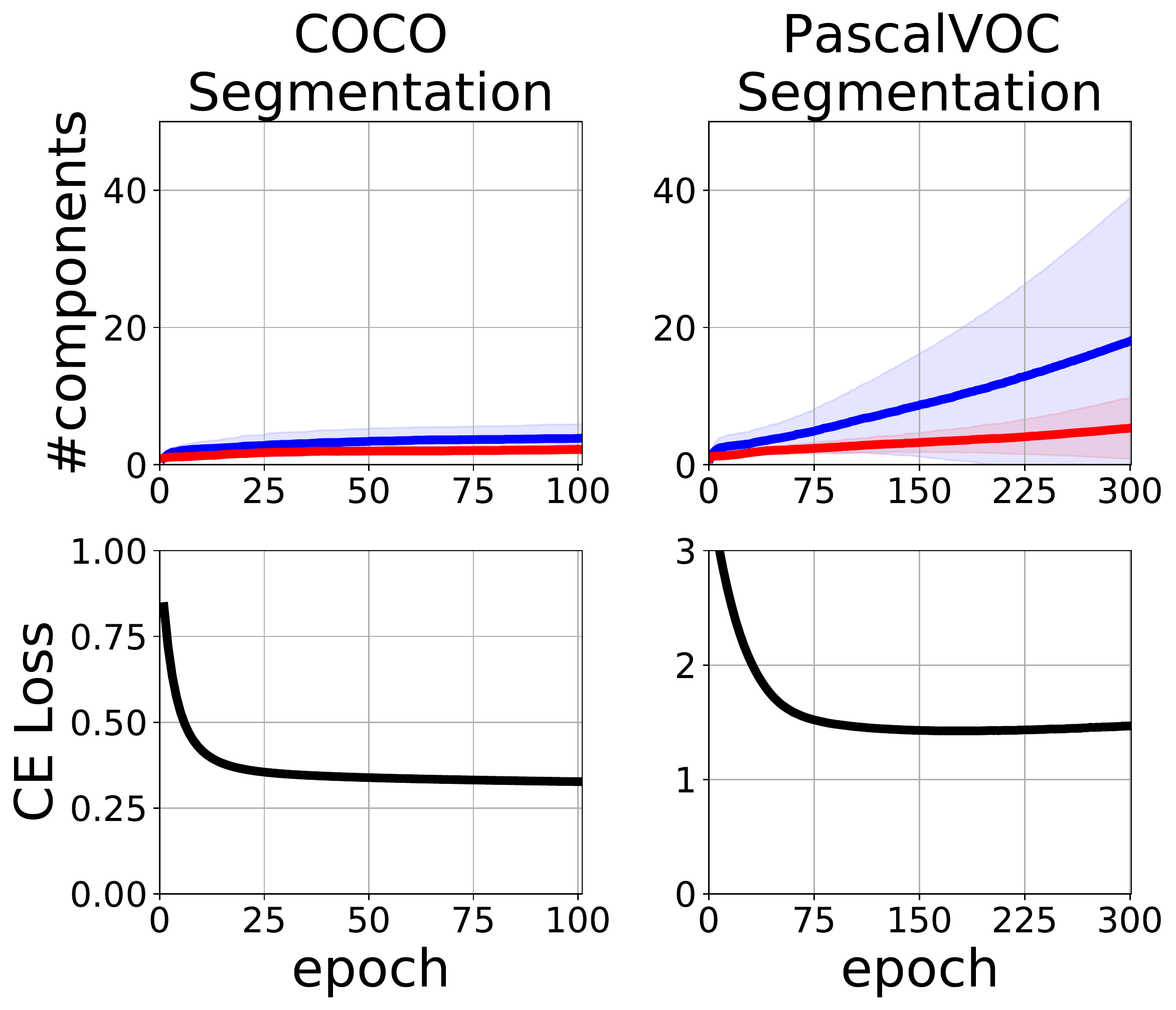}}
  \caption{\small{\textit{PCA Components Progression}. Repeat of Fig.~\ref{fig:prelim_1} on COCO, and PascalVOC datasets but using U-Net classifier.}}
  \label{fig:prelim_1_seg}
\end{figure}

%%%%%%%%%%%%%%%%%%%%%%%%%%%%%%%%%%%%%%%%%%%%%%%%%%%%%%%%%%%%
\newpage
%%%%%%%%%%%%%%%%%%%%%%%%%%%%%%%%%%%%%%%%%%%%%%%%%%%%%%%%%%%%

\begin{figure}[h!]
  \centering
  \centerline{\includegraphics[width=1.2\textwidth]{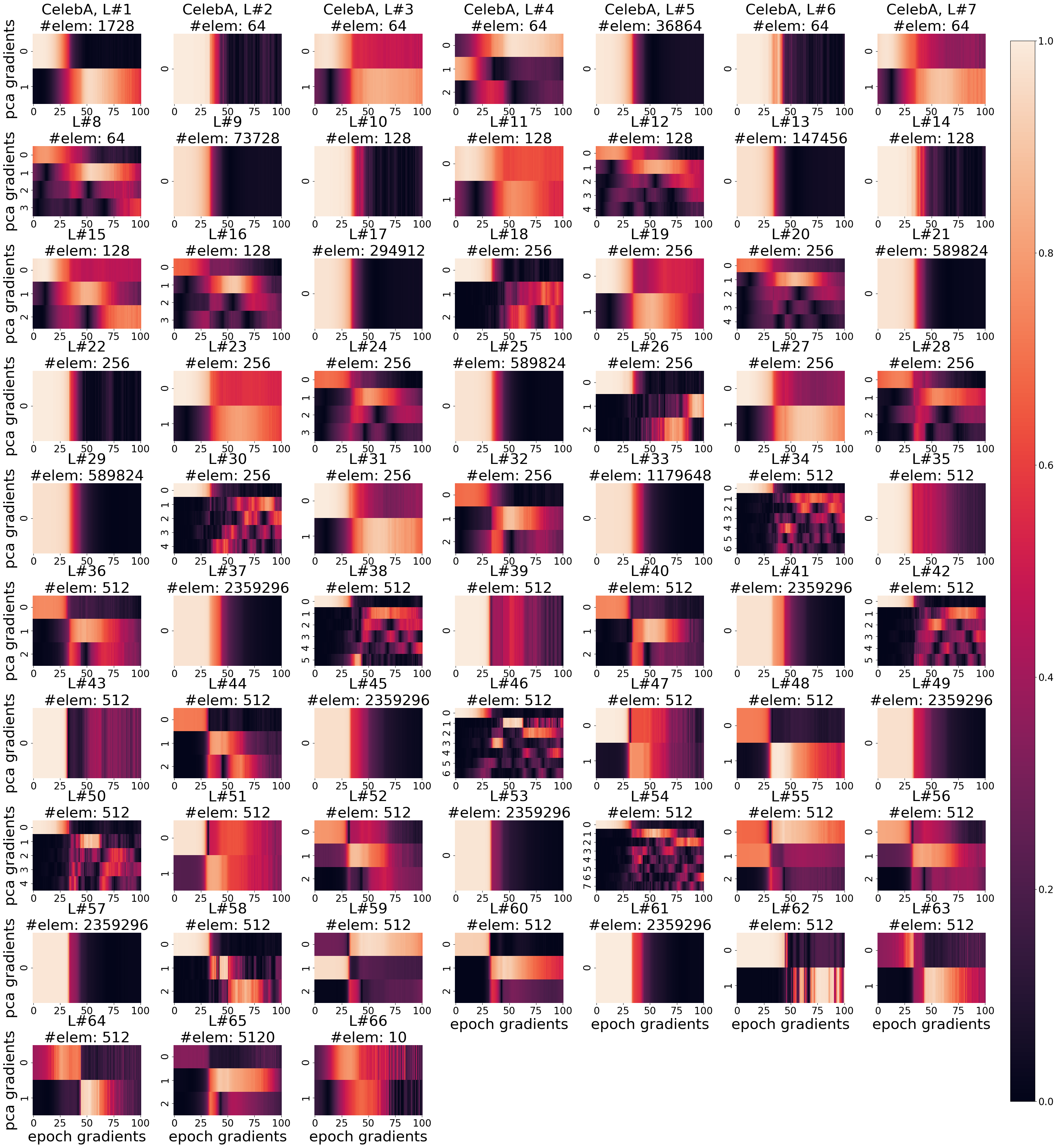}}
  \caption{\small{\textit{PCA Components Overlap with Gradient}. Repeat of Fig.~\ref{fig:prelim_2} on \textbf{VGG19} trained on \textbf{CelebA} dataset.}}
  \label{fig:prelim_2_celeba_vgg19}
\end{figure}

\begin{figure}[h!]
  \centering
  \centerline{\includegraphics[width=1.2\textwidth]{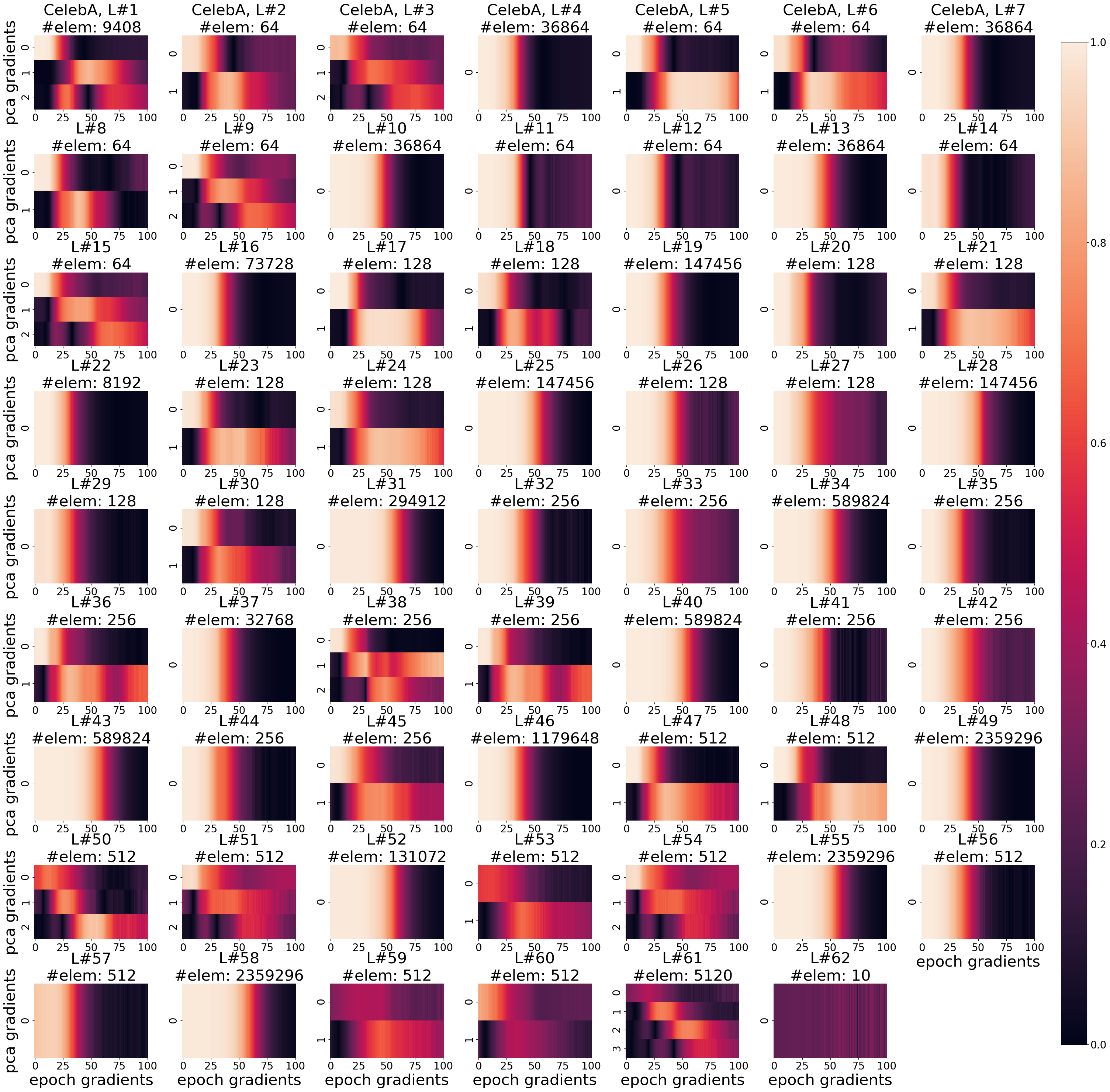}}
  \caption{\small{\textit{PCA Components Overlap with Gradient}. Repeat of Fig.~\ref{fig:prelim_2} on \textbf{ResNet18} trained on \textbf{CelebA} dataset.}}
  \label{fig:prelim_2_celeba_resnet18}
\end{figure}

\begin{figure}[h!]
  \centering
  \centerline{\includegraphics[width=0.7\textwidth]{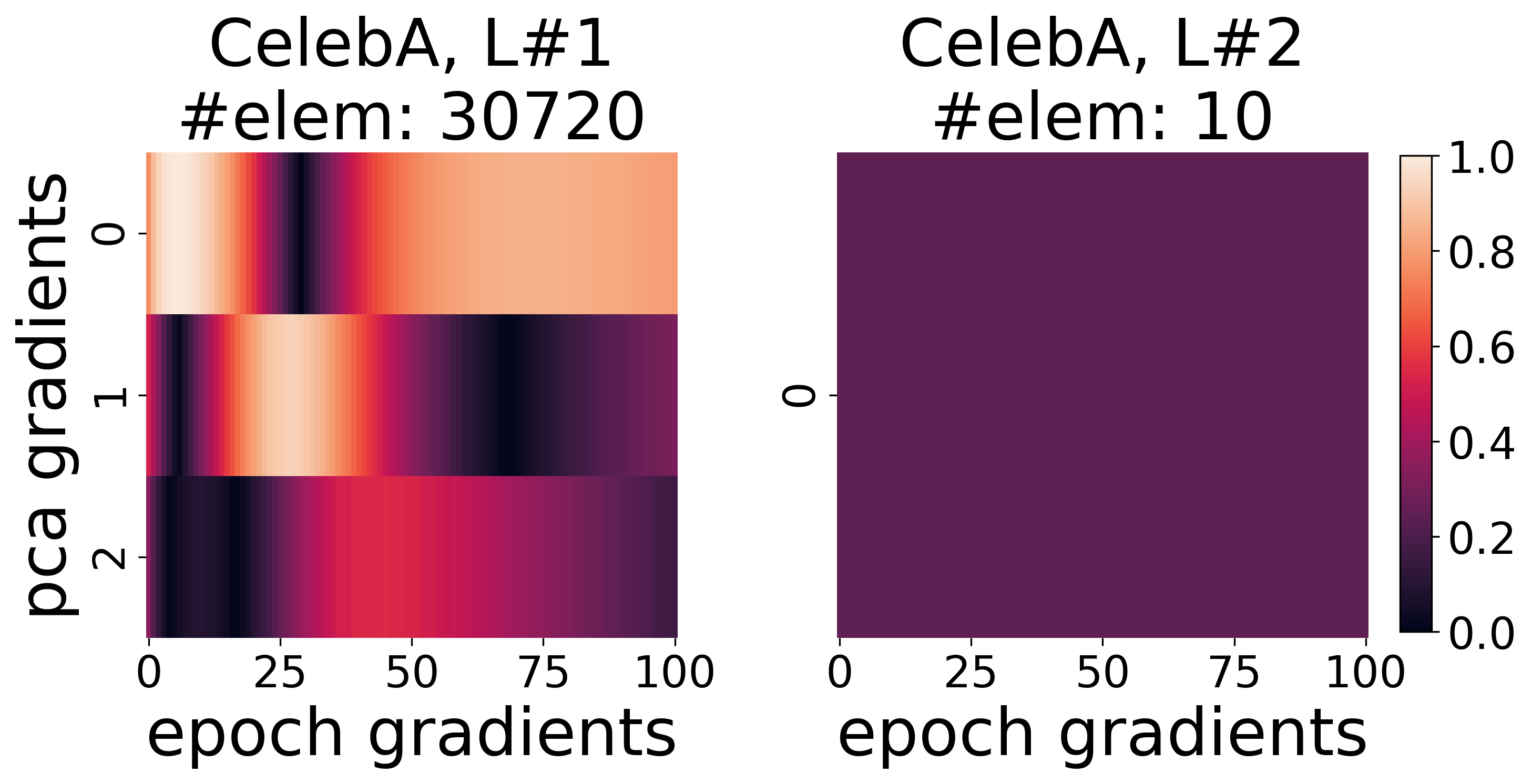}}
  \caption{\small{\textit{PCA Components Overlap with Gradient}. Repeat of Fig.~\ref{fig:prelim_2} on \textbf{FCN} trained on \textbf{CelebA} dataset.}}
  \label{fig:prelim_2_celeba_fcn}
\end{figure}

\begin{figure}[h!]
  \centering
  \centerline{\includegraphics[width=1.0\textwidth]{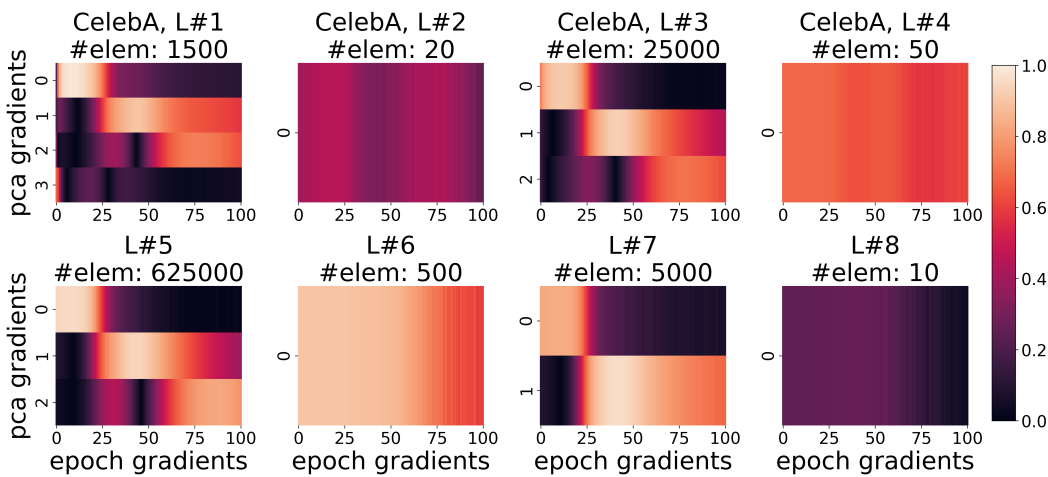}}
  \caption{\small{\textit{PCA Components Overlap with Gradient}. Fig.~\ref{fig:prelim_2} on \textbf{CNN} trained on \textbf{CelebA} dataset.}}
  \label{fig:prelim_2_celeba_cnn}
\end{figure}

%%%%%%%%%%%%%%%%%%%%%%%%%%%%%%%%%%%%%%%%%%%%%%%%%%%%%%%%%%%%
%%%%%%%%%%%%%%%%%%%%%%%%%%%%%%%%%%%%%%%%%%%%%%%%%%%%%%%%%%%%
\clearpage

\begin{figure}[h!]
  \centering
  \centerline{\includegraphics[width=1.2\textwidth]{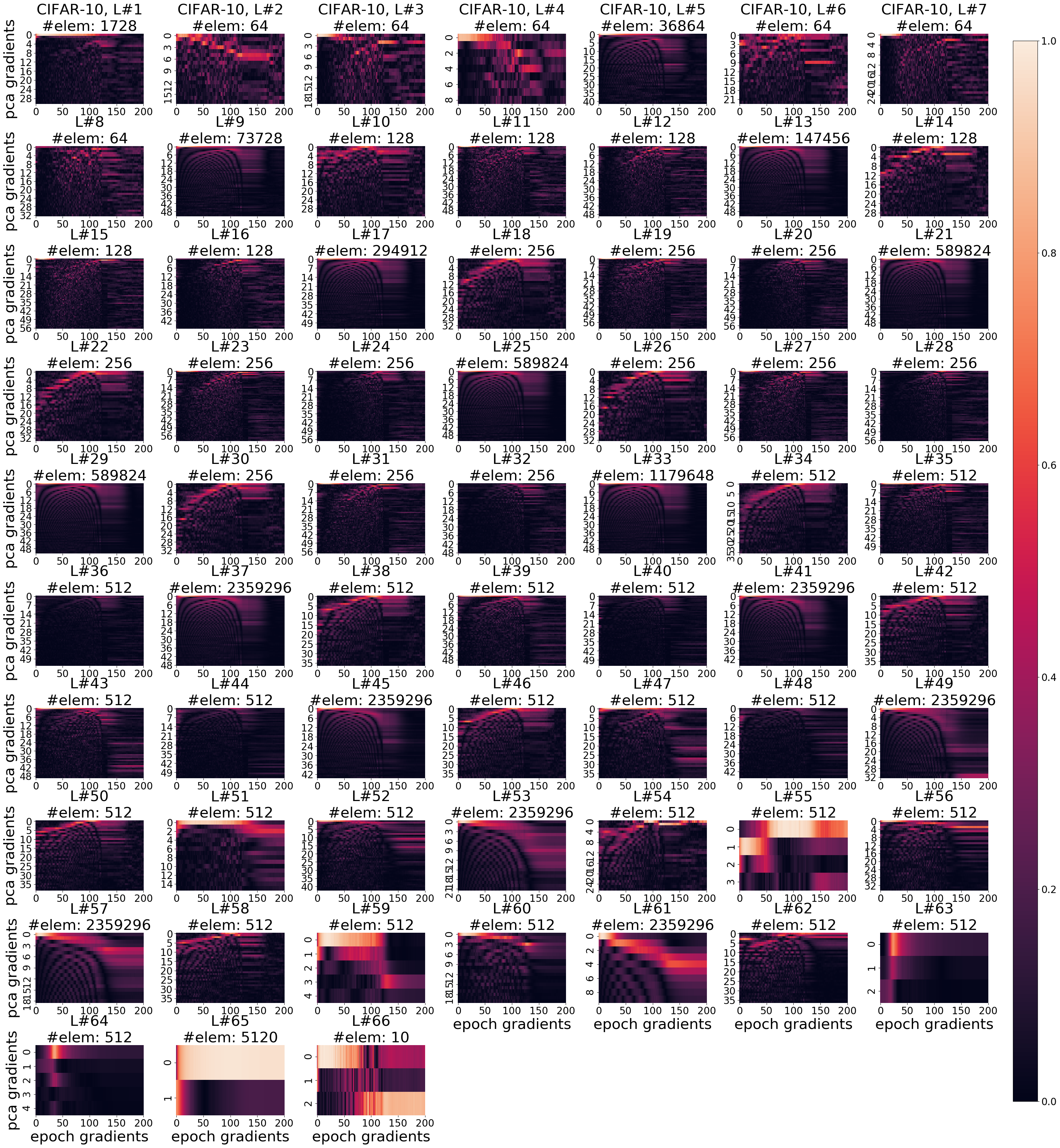}}
  \caption{\small{\textit{PCA Components Overlap with Gradient}. Repeat of Fig.~\ref{fig:prelim_2} on \textbf{VGG19} trained on \textbf{CIFAR-10} dataset.}}
  \label{fig:prelim_2_cifar_vgg19}
\end{figure}

\begin{figure}[h!]
  \centering
  \centerline{\includegraphics[width=1.2\textwidth]{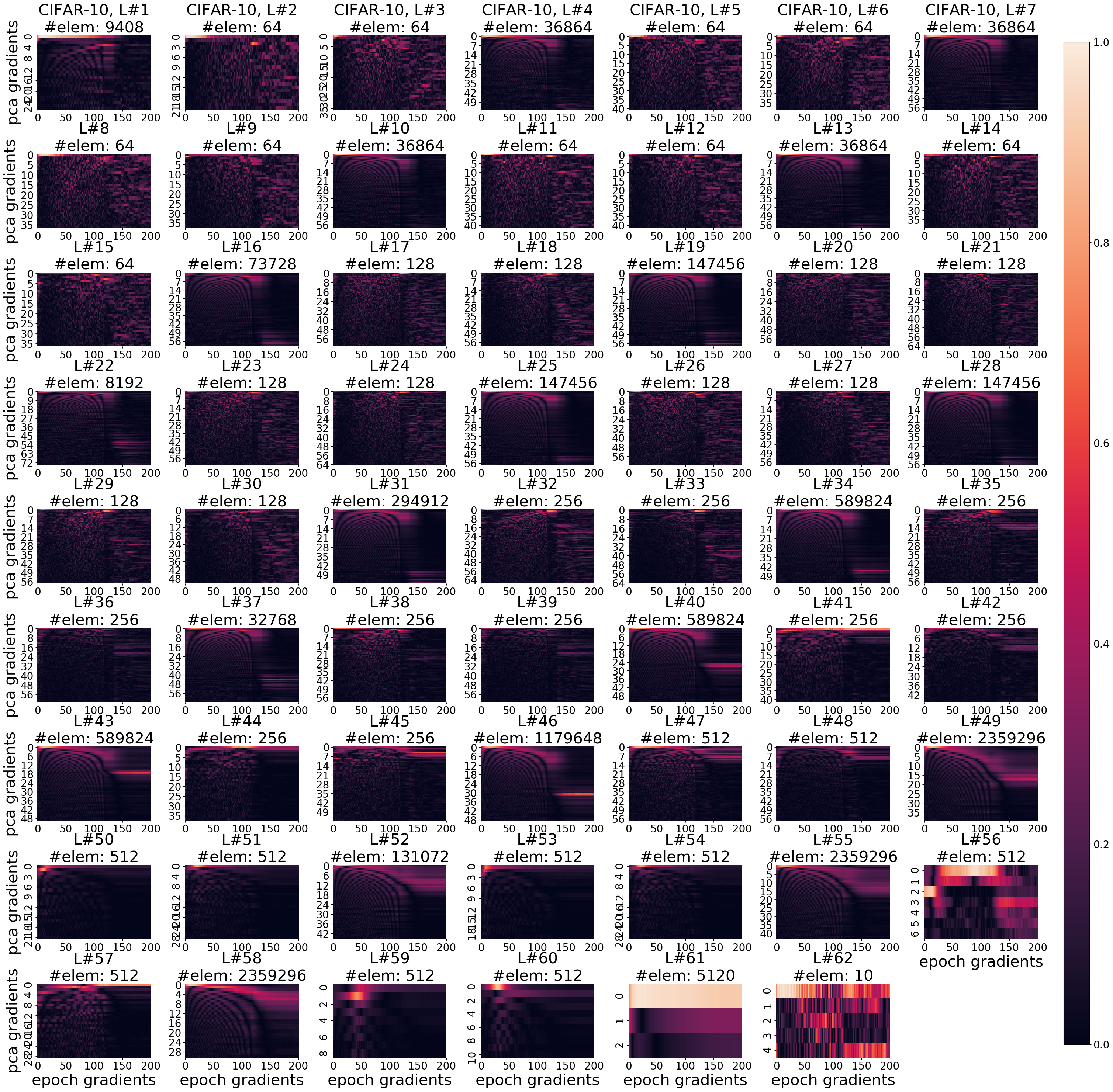}}
  \caption{\small{\textit{PCA Components Overlap with Gradient}. Repeat of Fig.~\ref{fig:prelim_2} on \textbf{ResNet18} trained on \textbf{CIFAR-10} dataset.}}
  \label{fig:prelim_2_cifar_resnet18}
\end{figure}

\begin{figure}[h!]
  \centering
  \centerline{\includegraphics[width=0.7\textwidth]{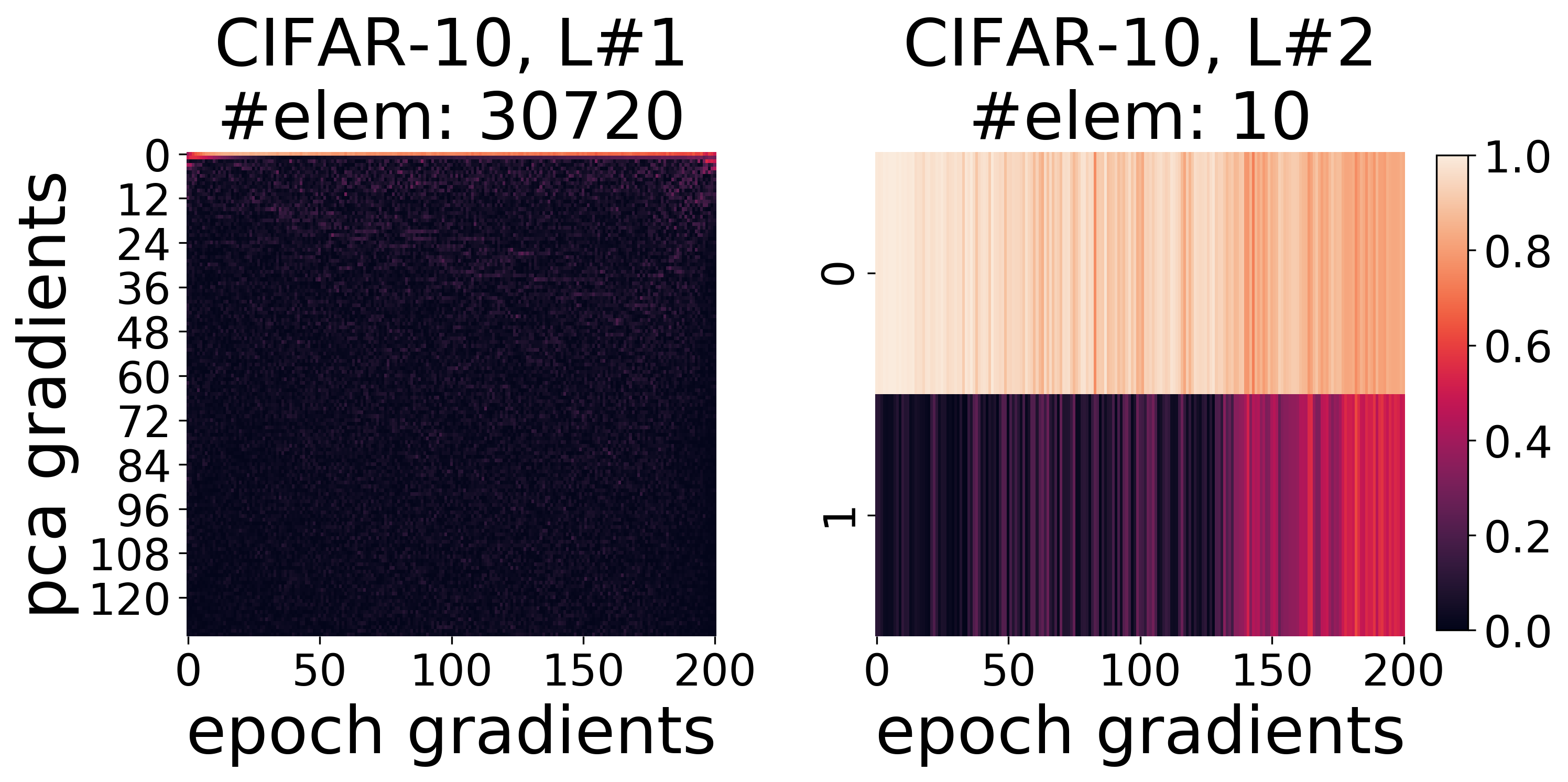}}
  \caption{\small{\textit{PCA Components Overlap with Gradient}. Repeat of Fig.~\ref{fig:prelim_2} on \textbf{FCN} trained on \textbf{CIFAR-10} dataset.}}
  \label{fig:prelim_2_cifar_fcn}
\end{figure}

\begin{figure}[h!]
  \centering
  \centerline{\includegraphics[width=1.0\textwidth]{images/prelim_2/prelim_2_cifar_cnn_128.png}}
  \caption{\small{\textit{PCA Components Overlap with Gradient}. Fig.~\ref{fig:prelim_2} on \textbf{CNN} trained on \textbf{CIFAR-10} dataset.}}
  \label{fig:prelim_2_cifar_cnn}
\end{figure}

%%%%%%%%%%%%%%%%%%%%%%%%%%%%%%%%%%%%%%%%%%%%%%%%%%%%%%%%%%%%
%%%%%%%%%%%%%%%%%%%%%%%%%%%%%%%%%%%%%%%%%%%%%%%%%%%%%%%%%%%%
\clearpage

\begin{figure}[h!]
  \centering
  \centerline{\includegraphics[width=1.2\textwidth]{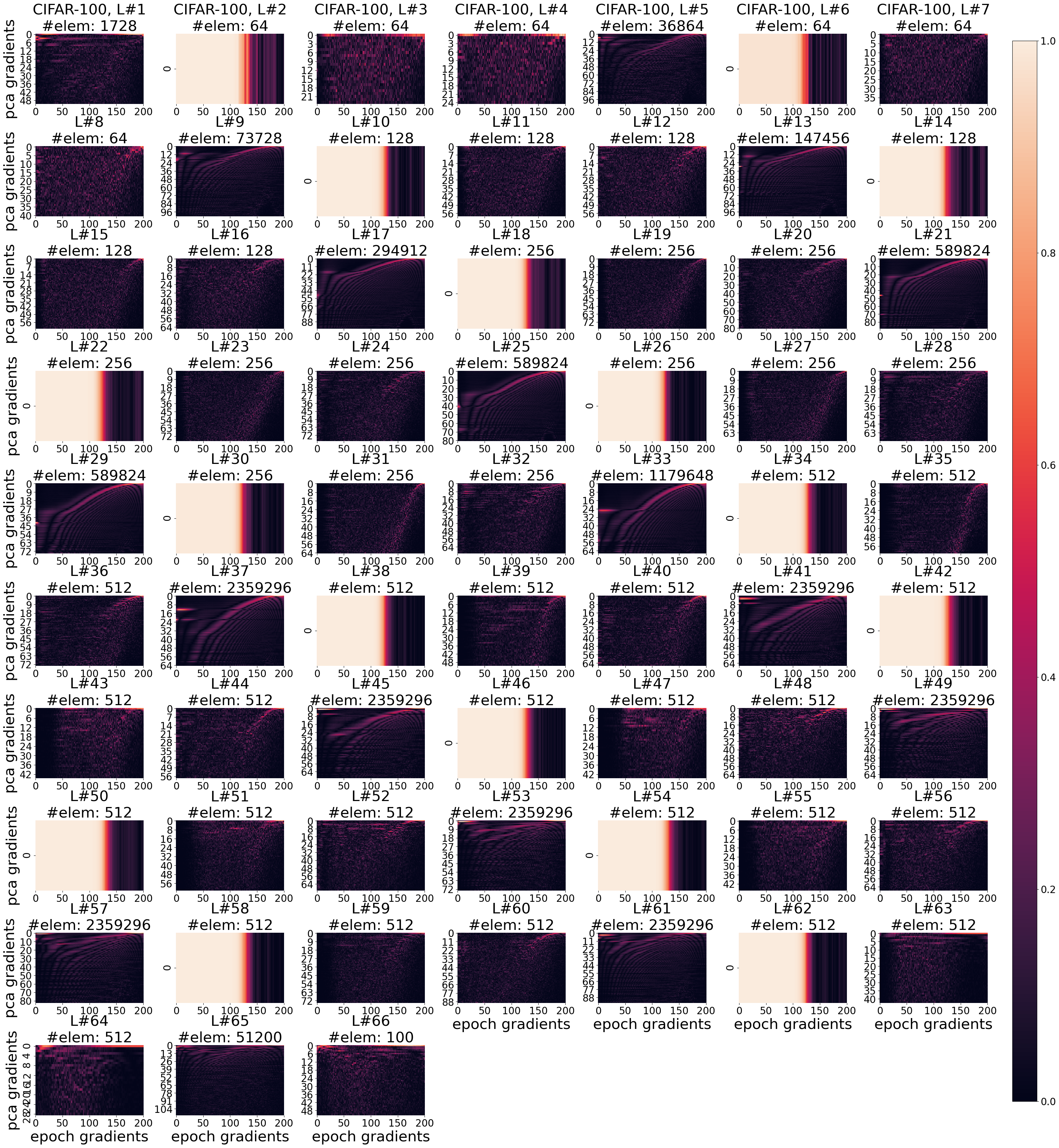}}
  \caption{\small{\textit{PCA Components Overlap with Gradient}. Repeat of Fig.~\ref{fig:prelim_2} on \textbf{VGG19} trained on \textbf{CIFAR-100} dataset.}}
  \label{fig:prelim_2_cifar100_vgg19}
\end{figure}

\begin{figure}[h!]
  \centering
  \centerline{\includegraphics[width=1.2\textwidth]{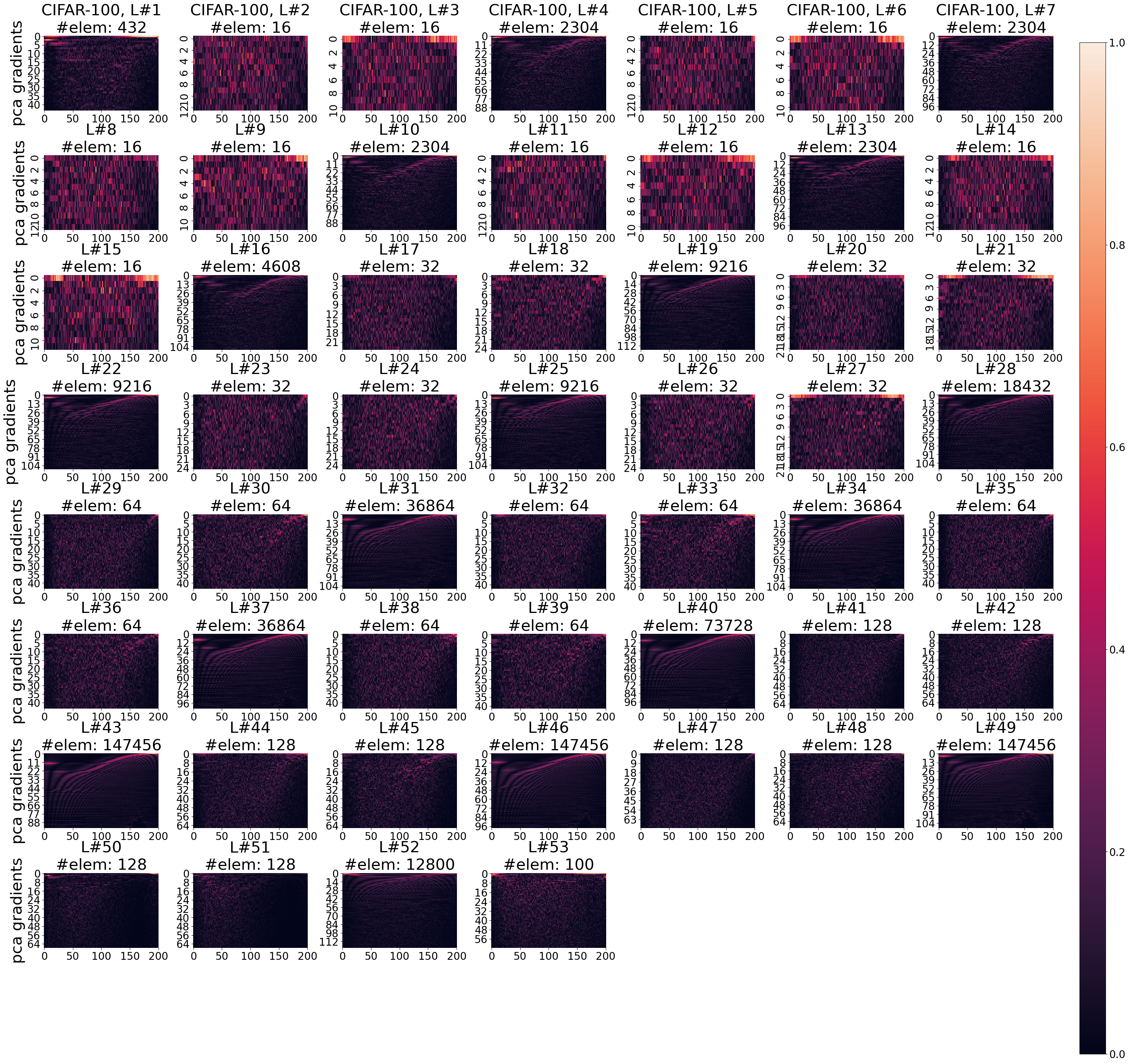}}
  \caption{\small{\textit{PCA Components Overlap with Gradient}. Repeat of Fig.~\ref{fig:prelim_2} on \textbf{ResNet18} trained on \textbf{CIFAR-100} dataset.}}
  \label{fig:prelim_2_cifar100_resnet18}
\end{figure}
\clearpage
\begin{figure}[h!]
  \centering
  \centerline{\includegraphics[width=0.7\textwidth]{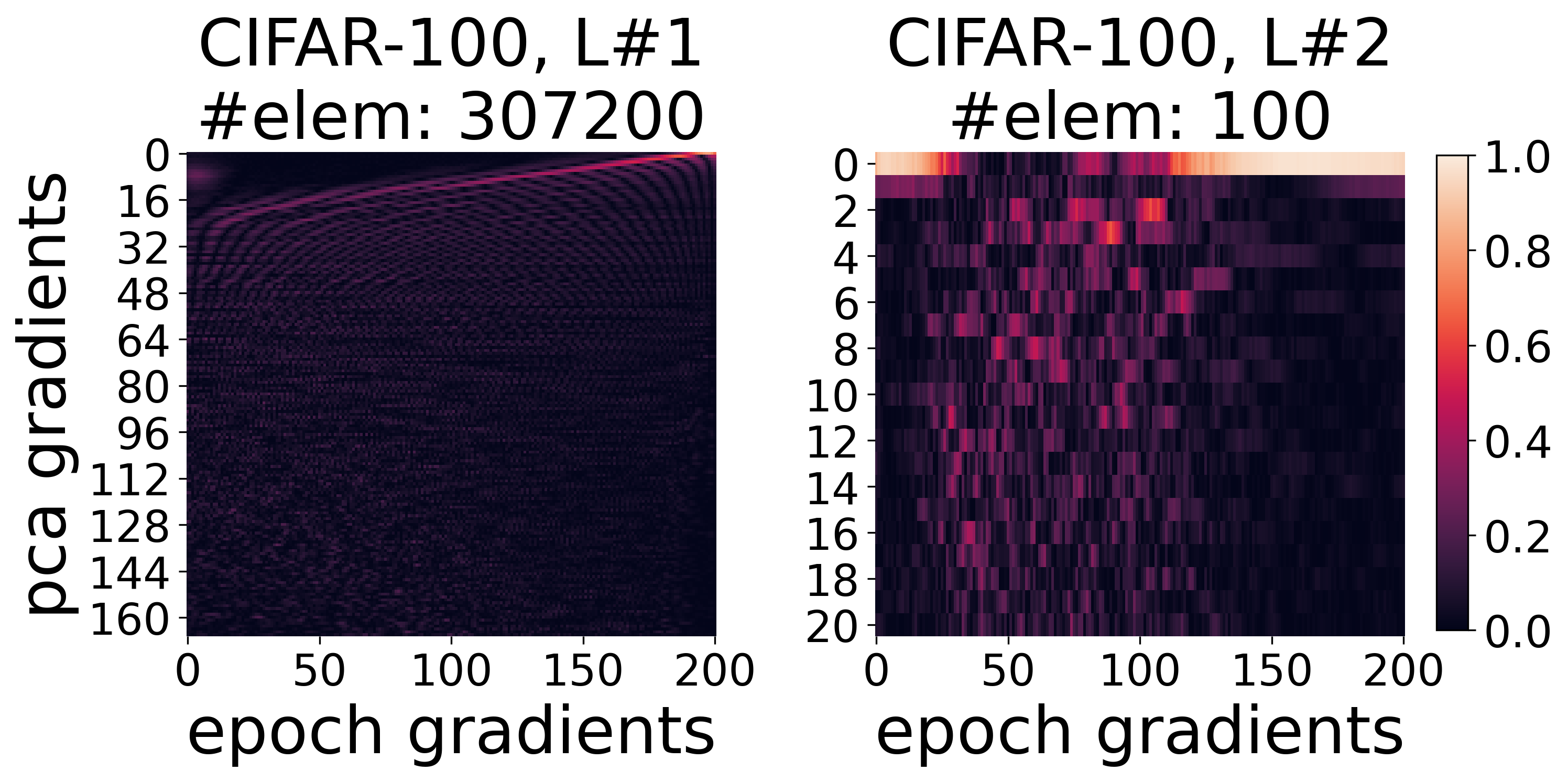}}
  \caption{\small{\textit{PCA Components Overlap with Gradient}. Repeat of Fig.~\ref{fig:prelim_2} on \textbf{FCN} trained on \textbf{CIFAR-100} dataset.}}
  \label{fig:prelim_2_cifar100_fcn}
\end{figure}

\begin{figure}[h!]
  \centering
  \centerline{\includegraphics[width=1.0\textwidth]{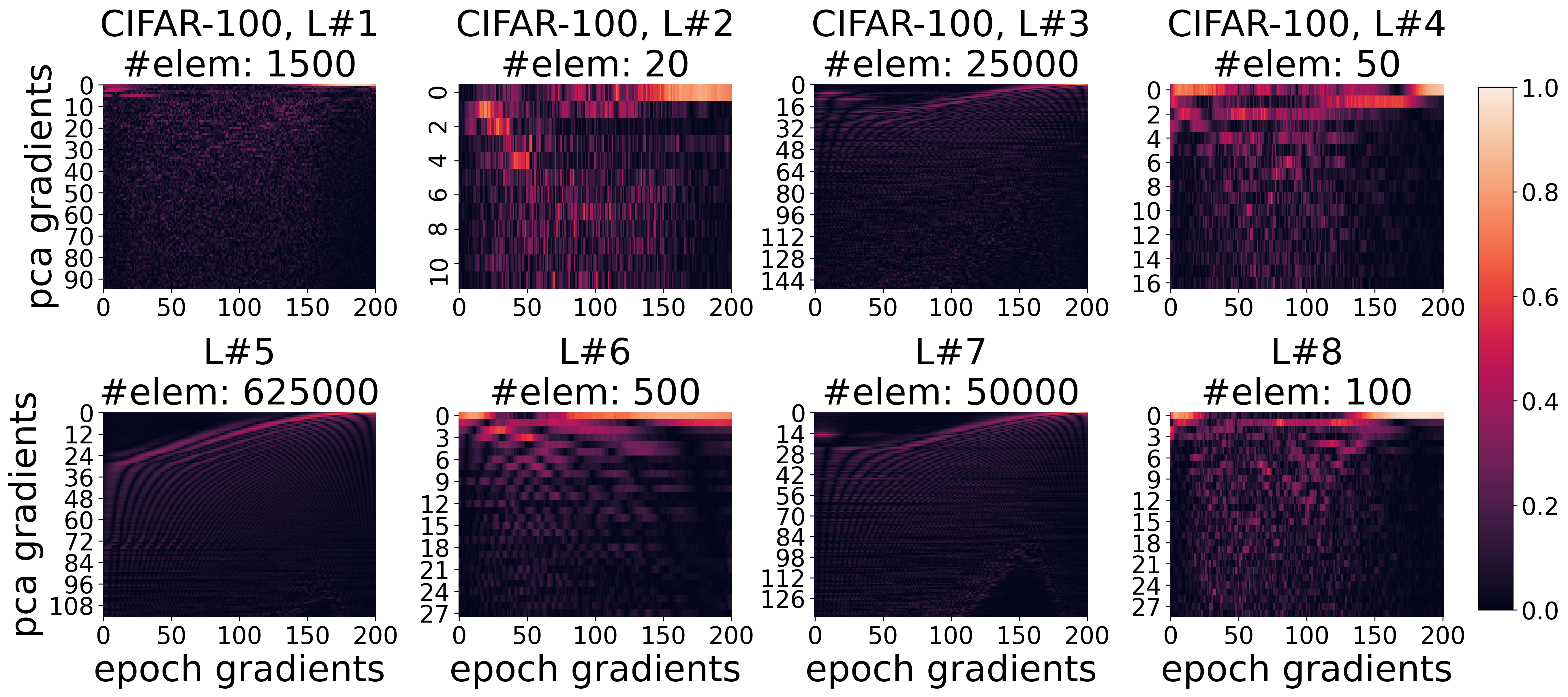}}
  \caption{\small{\textit{PCA Components Overlap with Gradient}. Fig.~\ref{fig:prelim_2} on \textbf{CNN} trained on \textbf{CIFAR-100} dataset.}}
  \label{fig:prelim_2_cifar100_cnn}
\end{figure}

%%%%%%%%%%%%%%%%%%%%%%%%%%%%%%%%%%%%%%%%%%%%%%%%%%%%%%%%%%%%
%%%%%%%%%%%%%%%%%%%%%%%%%%%%%%%%%%%%%%%%%%%%%%%%%%%%%%%%%%%%
\clearpage

\begin{figure}[h!]
  \centering
  \centerline{\includegraphics[width=1.2\textwidth]{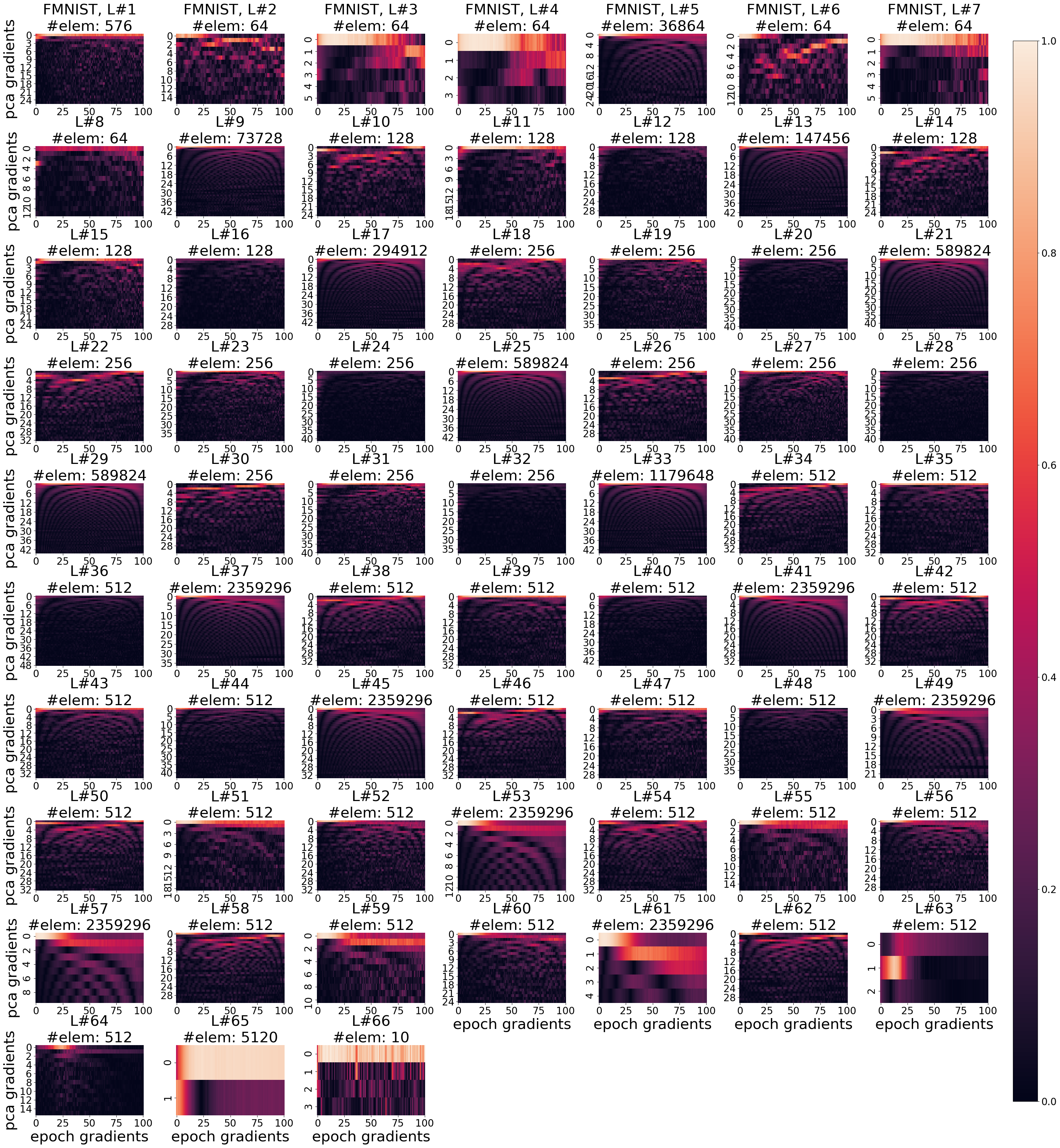}}
  \caption{\small{\textit{PCA Components Overlap with Gradient}. Repeat of Fig.~\ref{fig:prelim_2} on \textbf{VGG19} trained on \textbf{FMNIST} dataset.}}
  \label{fig:prelim_2_fmnist_vgg19}
\end{figure}

\begin{figure}[h!]
  \centering
  \centerline{\includegraphics[width=1.2\textwidth]{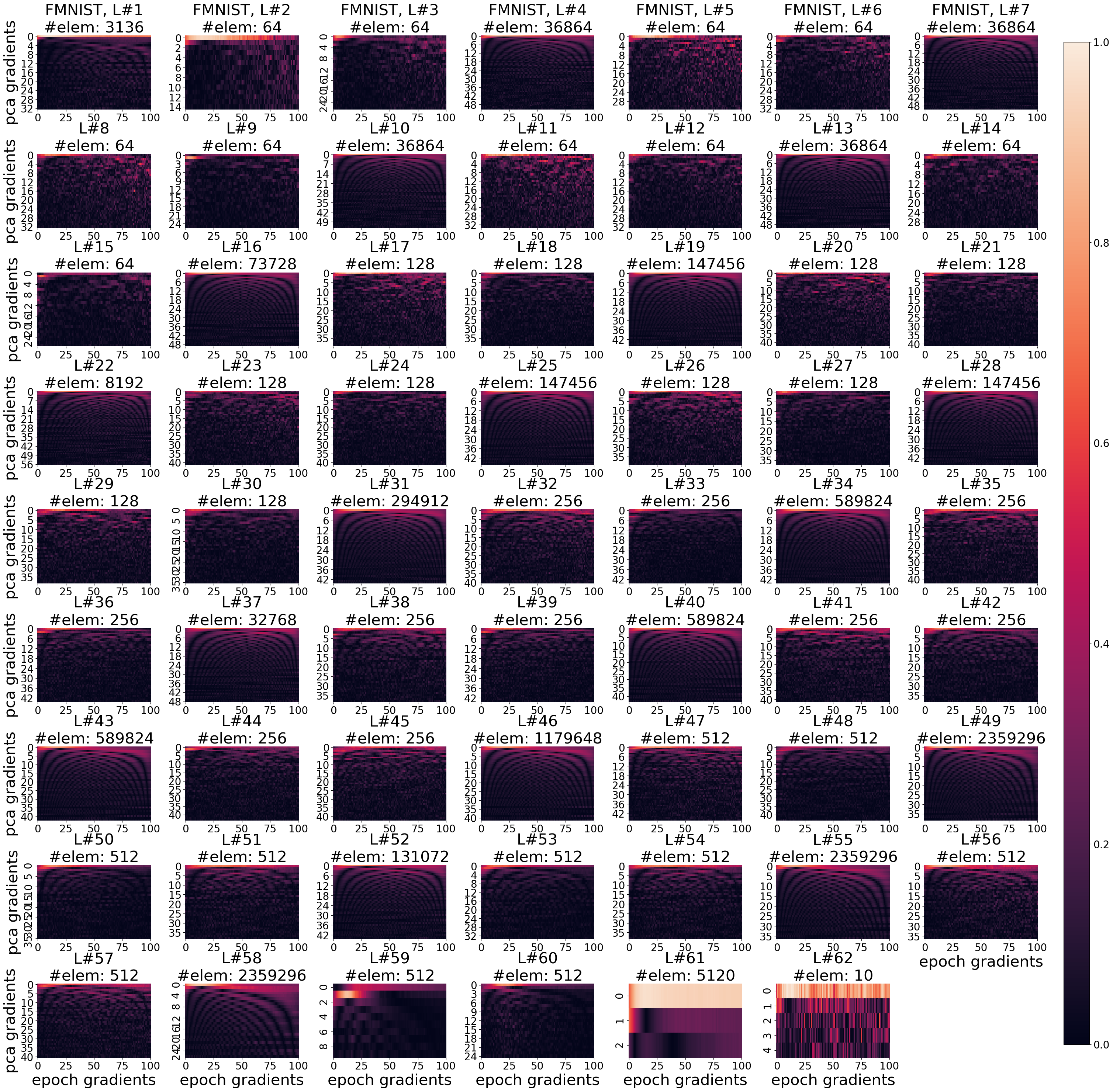}}
  \caption{\small{\textit{PCA Components Overlap with Gradient}. Repeat of Fig.~\ref{fig:prelim_2} on \textbf{ResNet18} trained on \textbf{FMNIST} dataset.}}
  \label{fig:prelim_2_fmnist_resnet18}
\end{figure}

\begin{figure}[h!]
  \centering
  \centerline{\includegraphics[width=0.7\textwidth]{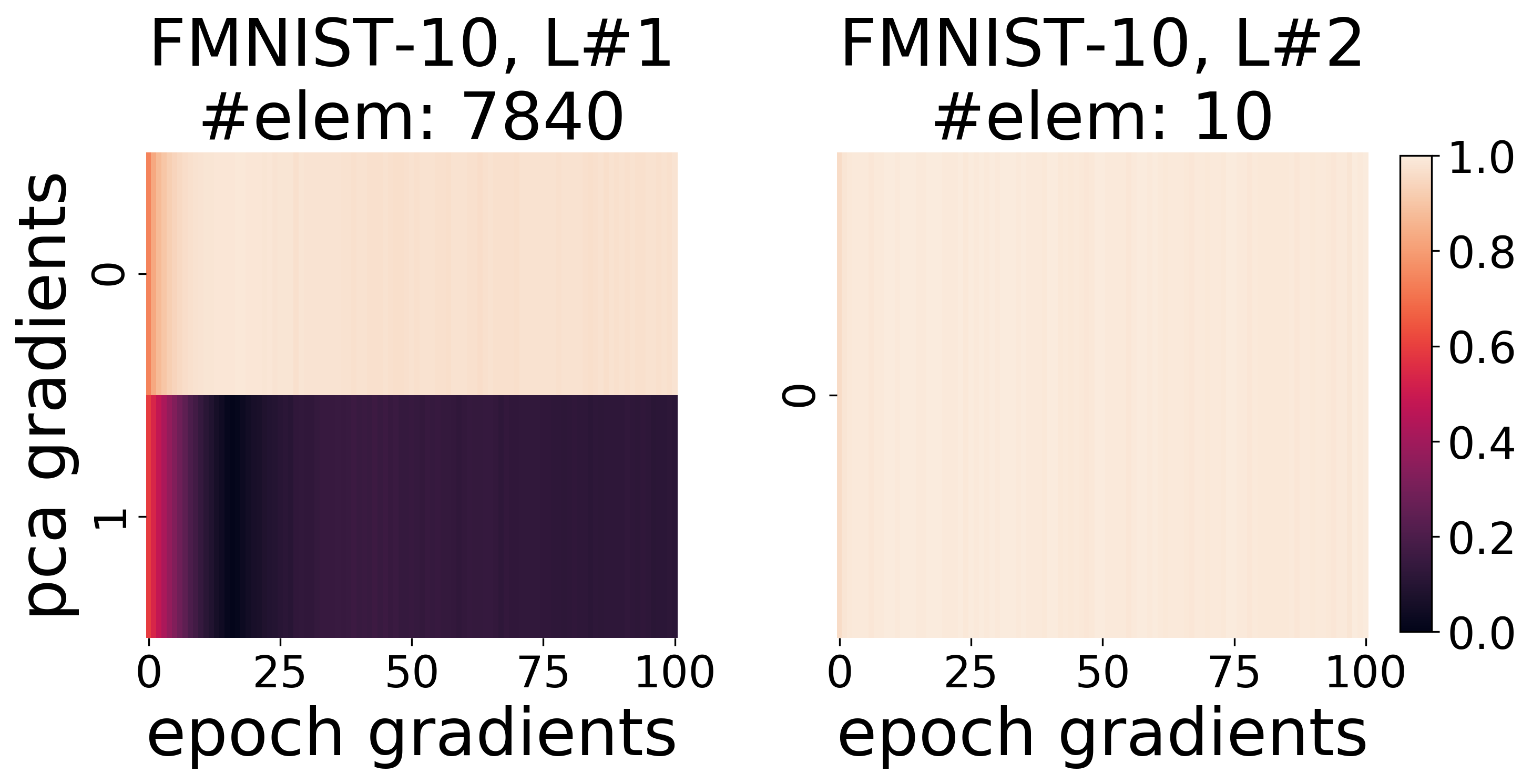}}
  \caption{\small{\textit{PCA Components Overlap with Gradient}. Repeat of Fig.~\ref{fig:prelim_2} on \textbf{FCN} trained on \textbf{FMNIST} dataset.}}
  \label{fig:prelim_2_fmnist_fcn}
\end{figure}

\begin{figure}[h!]
  \centering
  \centerline{\includegraphics[width=1.0\textwidth]{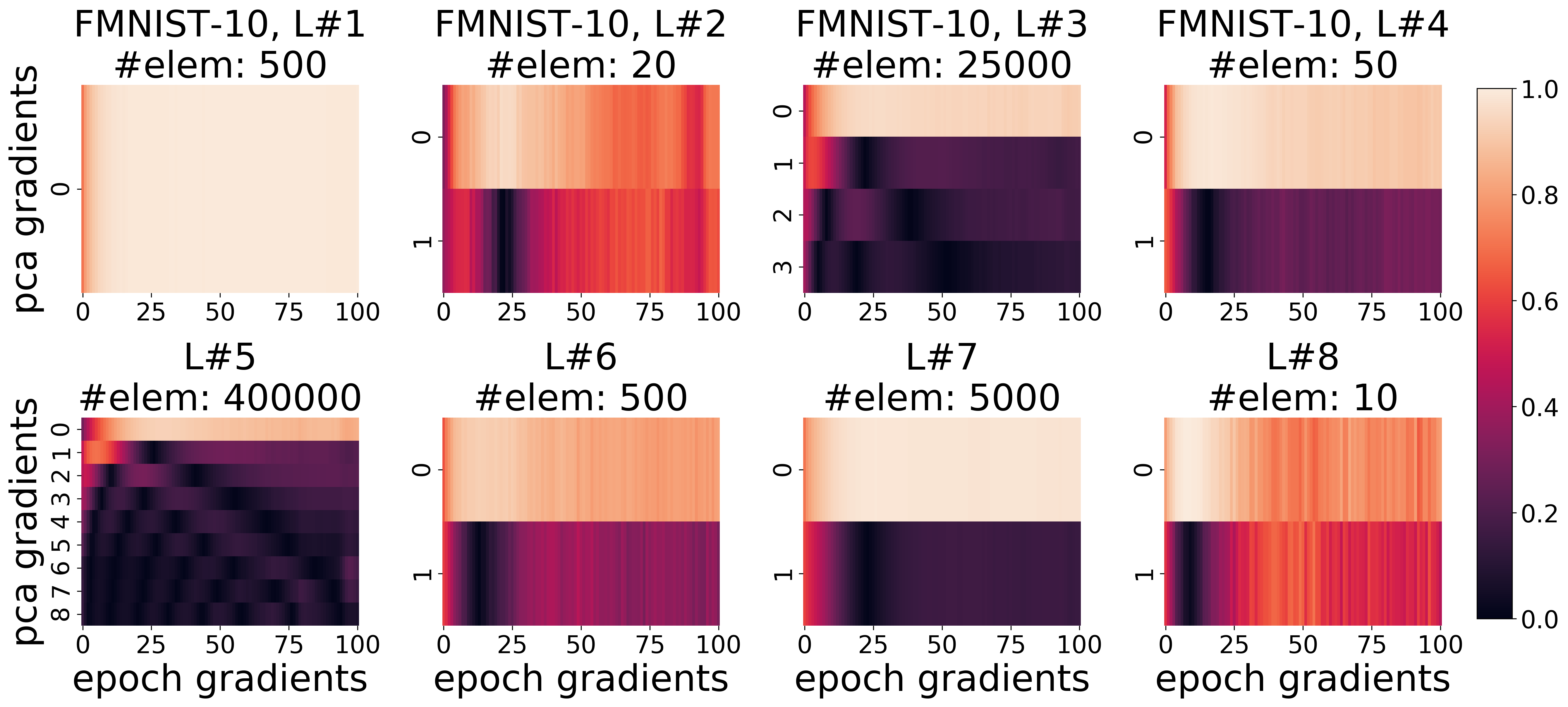}}
  \caption{\small{\textit{PCA Components Overlap with Gradient}. Fig.~\ref{fig:prelim_2} on \textbf{CNN} trained on \textbf{FMNIST} dataset.}}
  \label{fig:prelim_2_fmnist_cnn}
\end{figure}

%%%%%%%%%%%%%%%%%%%%%%%%%%%%%%%%%%%%%%%%%%%%%%%%%%%%%%%%%%%%
%%%%%%%%%%%%%%%%%%%%%%%%%%%%%%%%%%%%%%%%%%%%%%%%%%%%%%%%%%%%
\clearpage

\begin{figure}[h!]
  \centering
  \centerline{\includegraphics[width=1.2\textwidth]{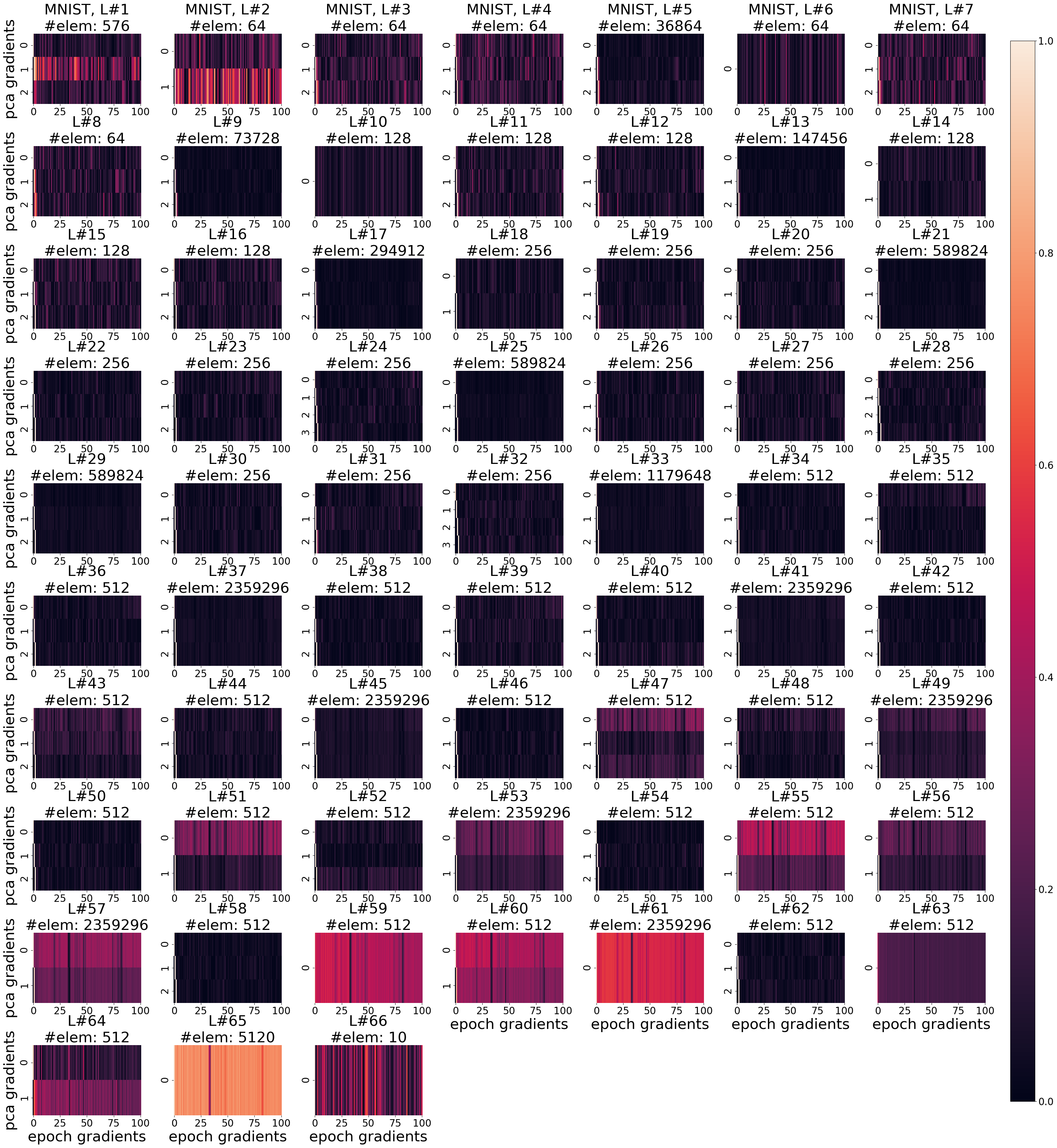}}
  \caption{\small{\textit{PCA Components Overlap with Gradient}. Repeat of Fig.~\ref{fig:prelim_2} on \textbf{VGG19} trained on \textbf{MNIST} dataset.}}
  \label{fig:prelim_2_mnist_vgg19}
\end{figure}

\begin{figure}[h!]
  \centering
  \centerline{\includegraphics[width=1.2\textwidth]{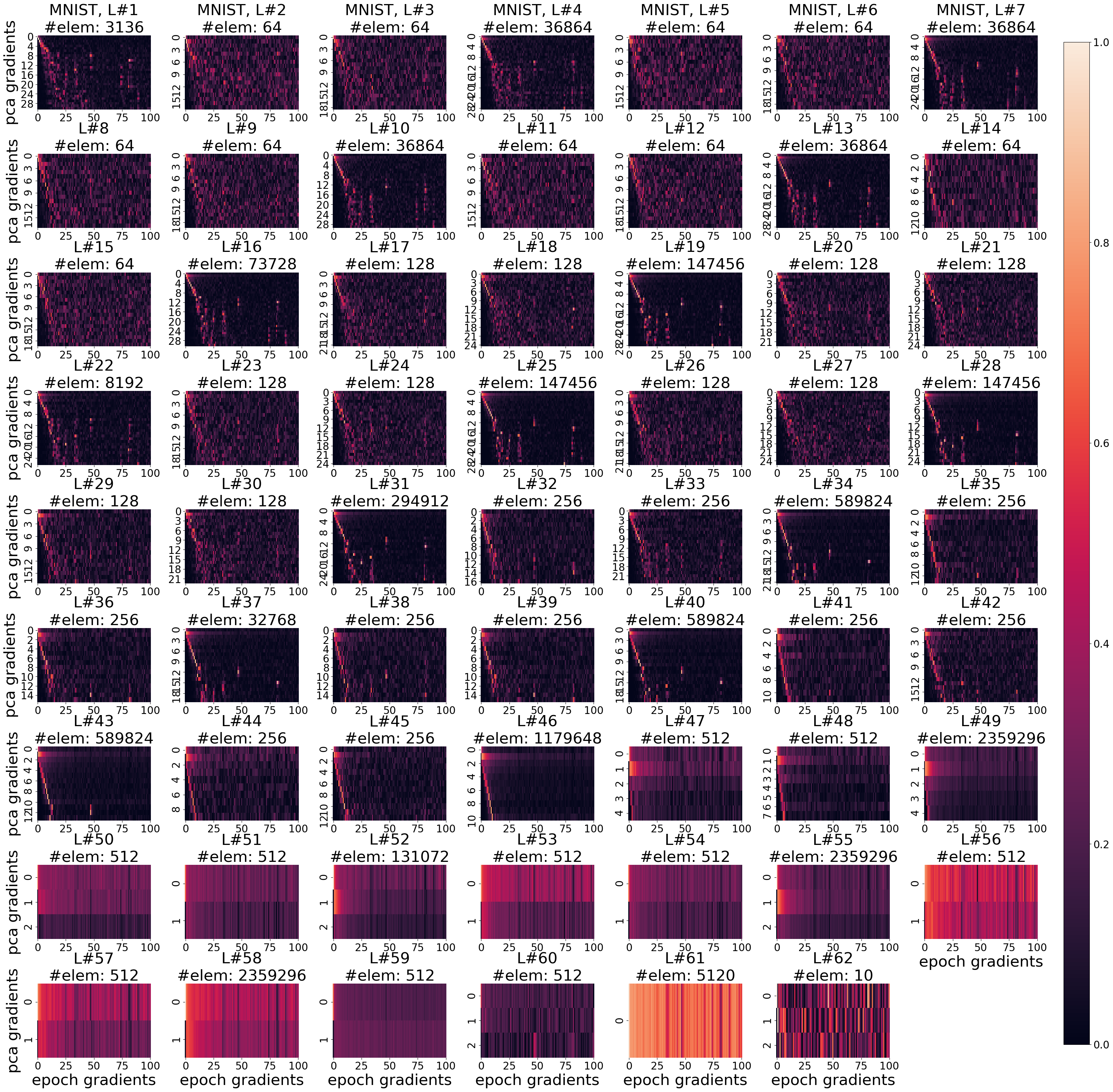}}
  \caption{\small{\textit{PCA Components Overlap with Gradient}. Repeat of Fig.~\ref{fig:prelim_2} on \textbf{ResNet18} trained on \textbf{MNIST} dataset.}}
  \label{fig:prelim_2_mnist_resnet18}
\end{figure}

\begin{figure}[h!]
  \centering
  \centerline{\includegraphics[width=0.7\textwidth]{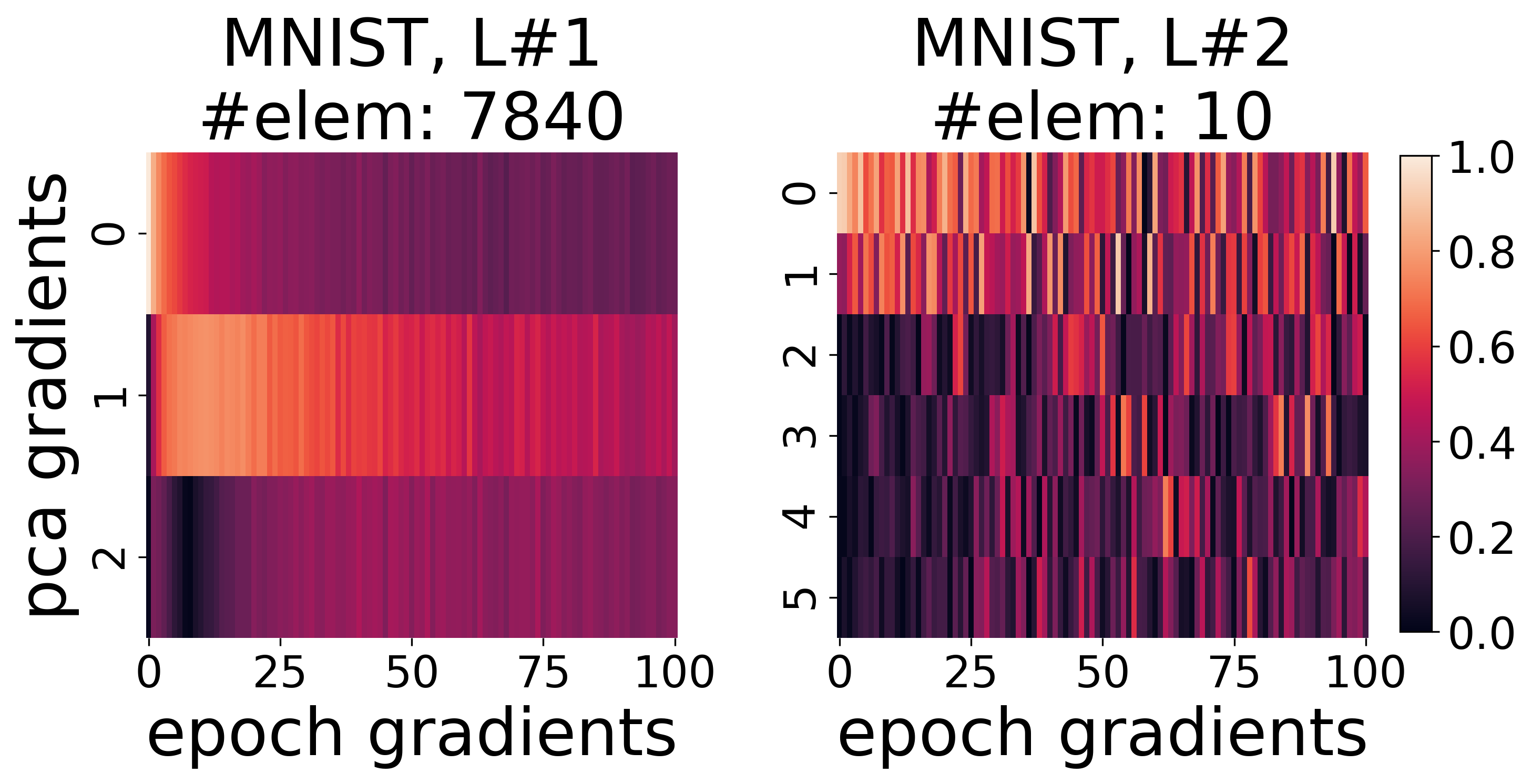}}
  \caption{\small{\textit{PCA Components Overlap with Gradient}. Repeat of Fig.~\ref{fig:prelim_2} on \textbf{FCN} trained on \textbf{MNIST} dataset.}}
  \label{fig:prelim_2_mnist_fcn}
\end{figure}

\begin{figure}[h!]
  \centering
  \centerline{\includegraphics[width=1.0\textwidth]{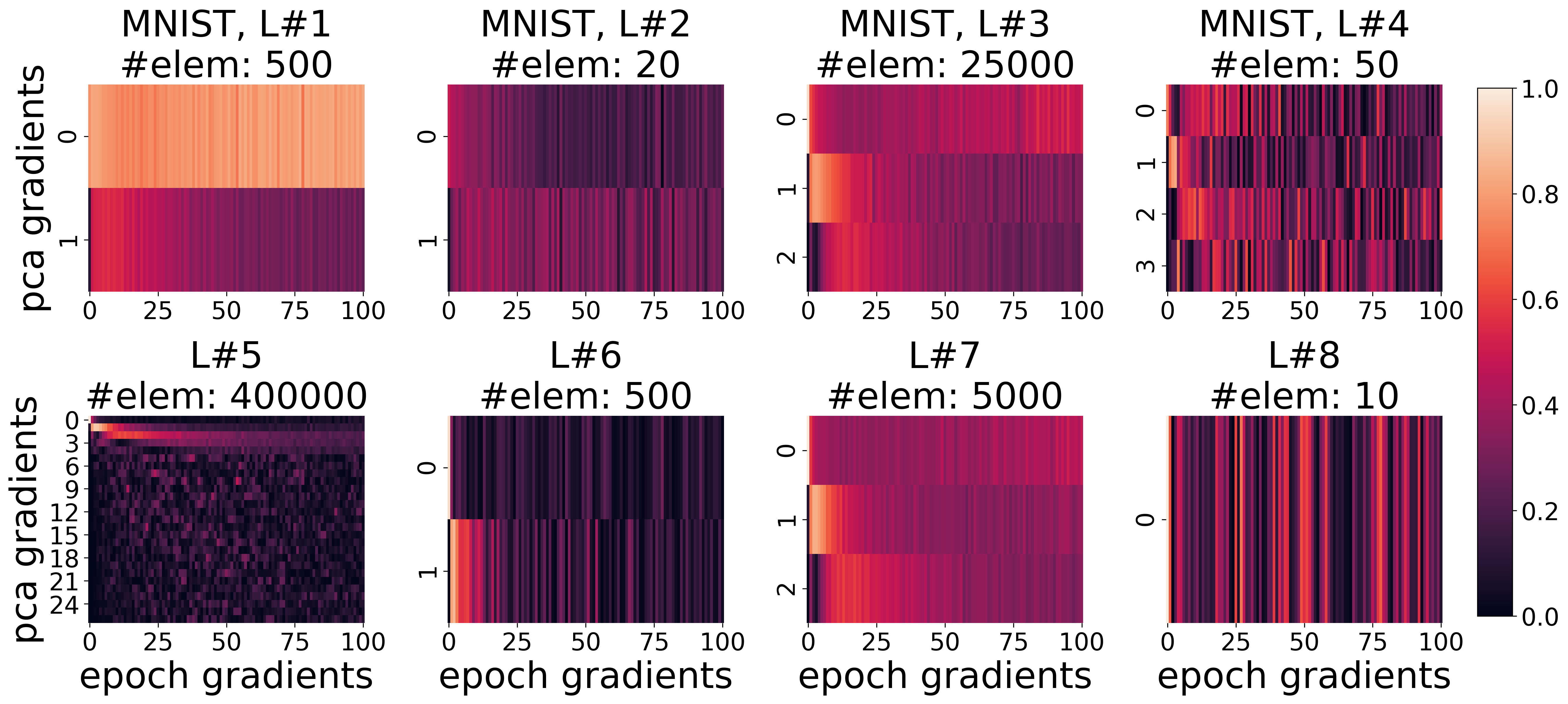}}
  \caption{\small{\textit{PCA Components Overlap with Gradient}. Fig.~\ref{fig:prelim_2} on \textbf{CNN} trained on \textbf{MNIST} dataset.}}
  \label{fig:prelim_2_mnist_cnn}
\end{figure}

%%%%%%%%%%%%%%%%%%%%%%%%%%%%%%%%%%%%%%%%%%%%%%%%%%%%%%%%%%%%
%%%%%%%%%%%%%%%%%%%%%%%%%%%%%%%%%%%%%%%%%%%%%%%%%%%%%%%%%%%%
\clearpage

\begin{figure}[h!]
  \centering
  \centerline{\includegraphics[width=1.2\textwidth]{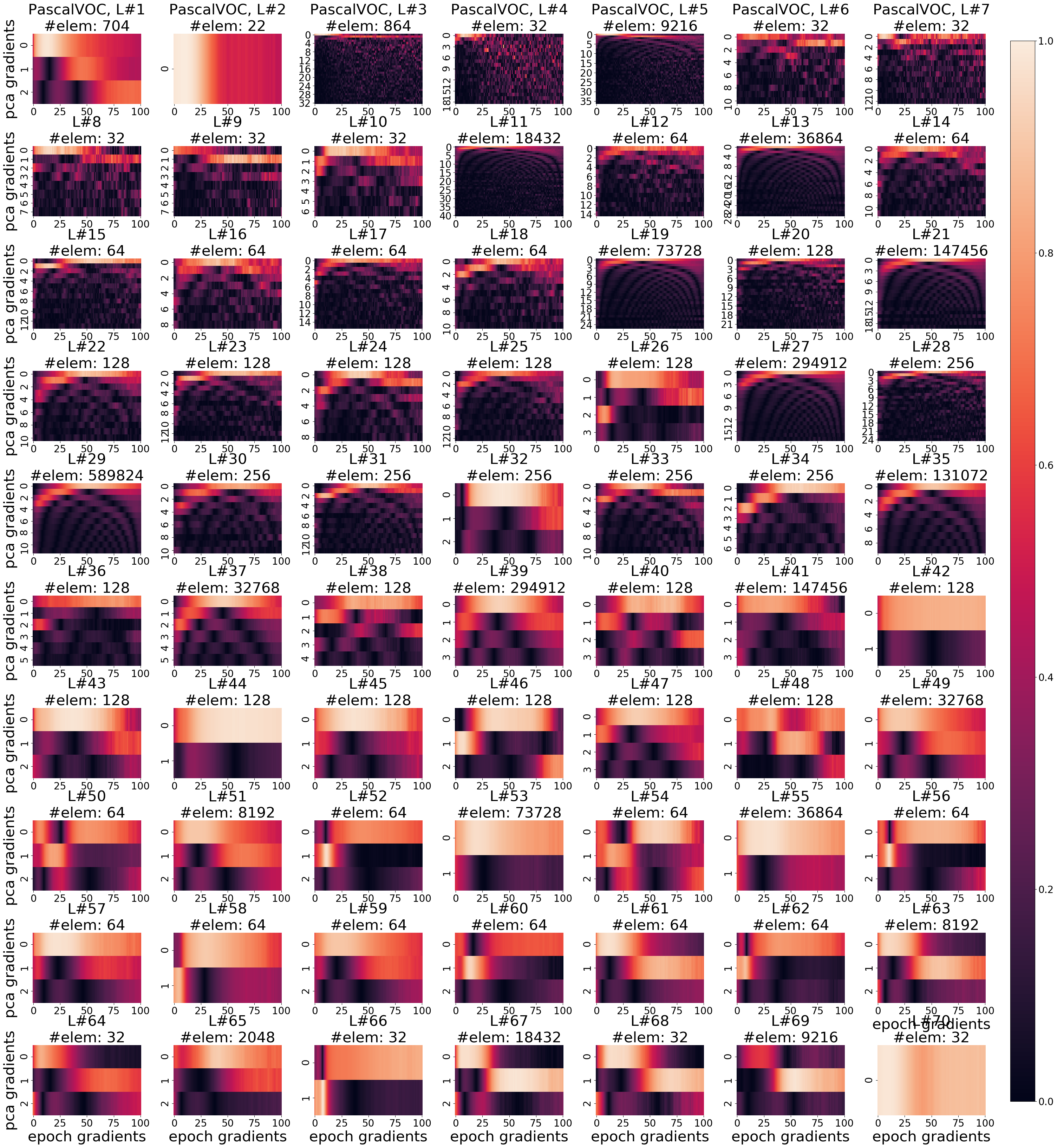}}
  \caption{\small{\textit{PCA Components Overlap with Gradient}. Repeat of Fig.~\ref{fig:prelim_2} on \textbf{U-Net} trained on \textbf{PascalVOC} dataset.}}
  \label{fig:prelim_2_voc_unet}
\end{figure}

\begin{figure}[h!]
  \centering
  \centerline{\includegraphics[width=1.2\textwidth]{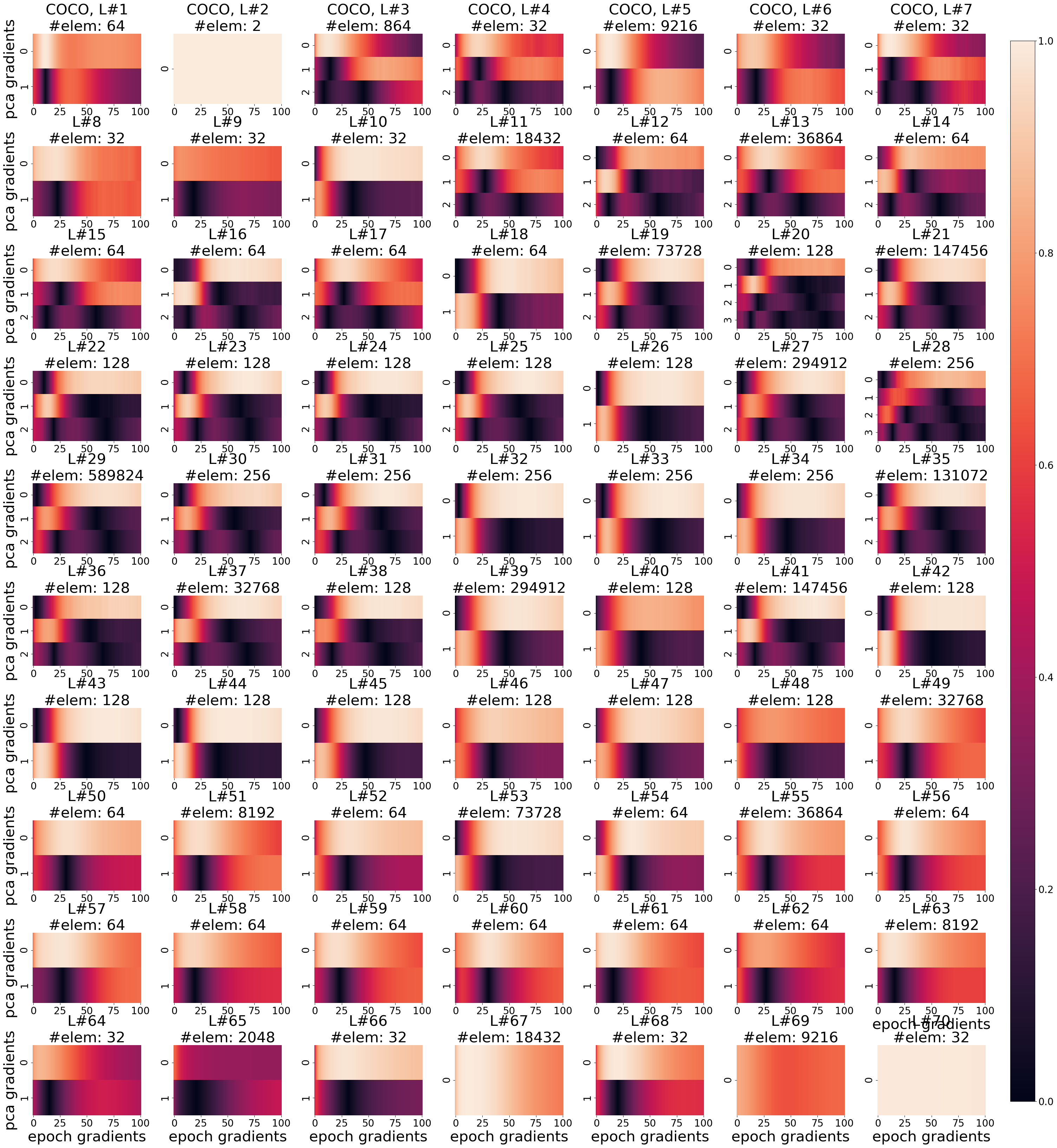}}
  \caption{\small{\textit{PCA Components Overlap with Gradient}. Repeat of Fig.~\ref{fig:prelim_2} on \textbf{U-Net} trained on \textbf{COCO} dataset.}}
  \label{fig:prelim_2_coco_unet}
\end{figure}

%%%%%%%%%%%%%%%%%%%%%%%%%%%%%%%%%%%%%%%%%%%%%%%%%%%%%%%%%%%%
\newpage
%%%%%%%%%%%%%%%%%%%%%%%%%%%%%%%%%%%%%%%%%%%%%%%%%%%%%%%%%%%%

\begin{figure}[h!]
  \centering
  \centerline{\includegraphics[width=1.2\textwidth]{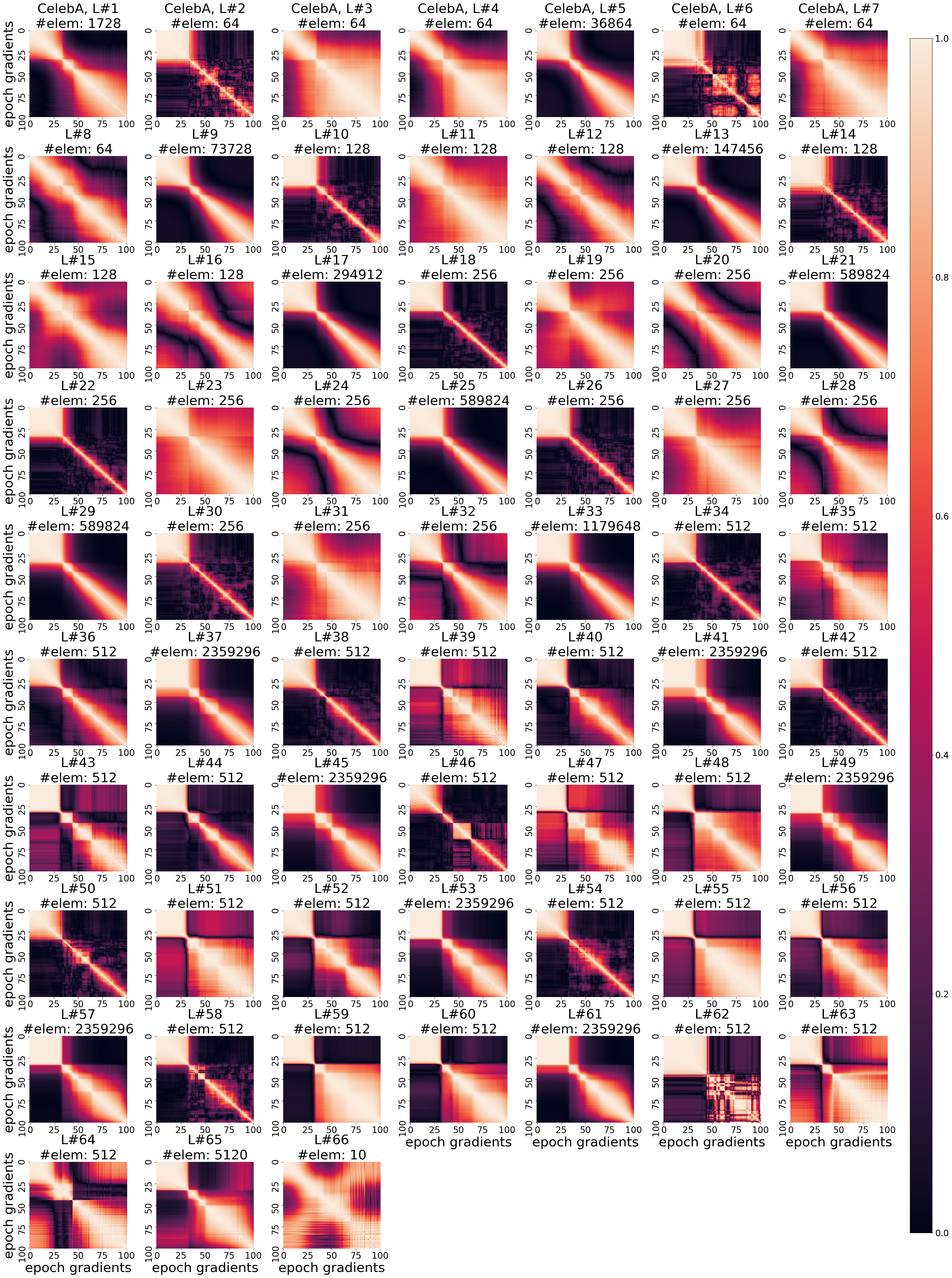}}
  \caption{\small{\textit{PCA Components Overlap with Gradient}. Repeat of Fig.~\ref{fig:prelim_3} on \textbf{VGG19} trained on \textbf{CelebA} dataset.}}
  \label{fig:prelim_3_celeba_vgg19}
\end{figure}

\begin{figure}[h!]
  \centering
  \centerline{\includegraphics[width=1.2\textwidth]{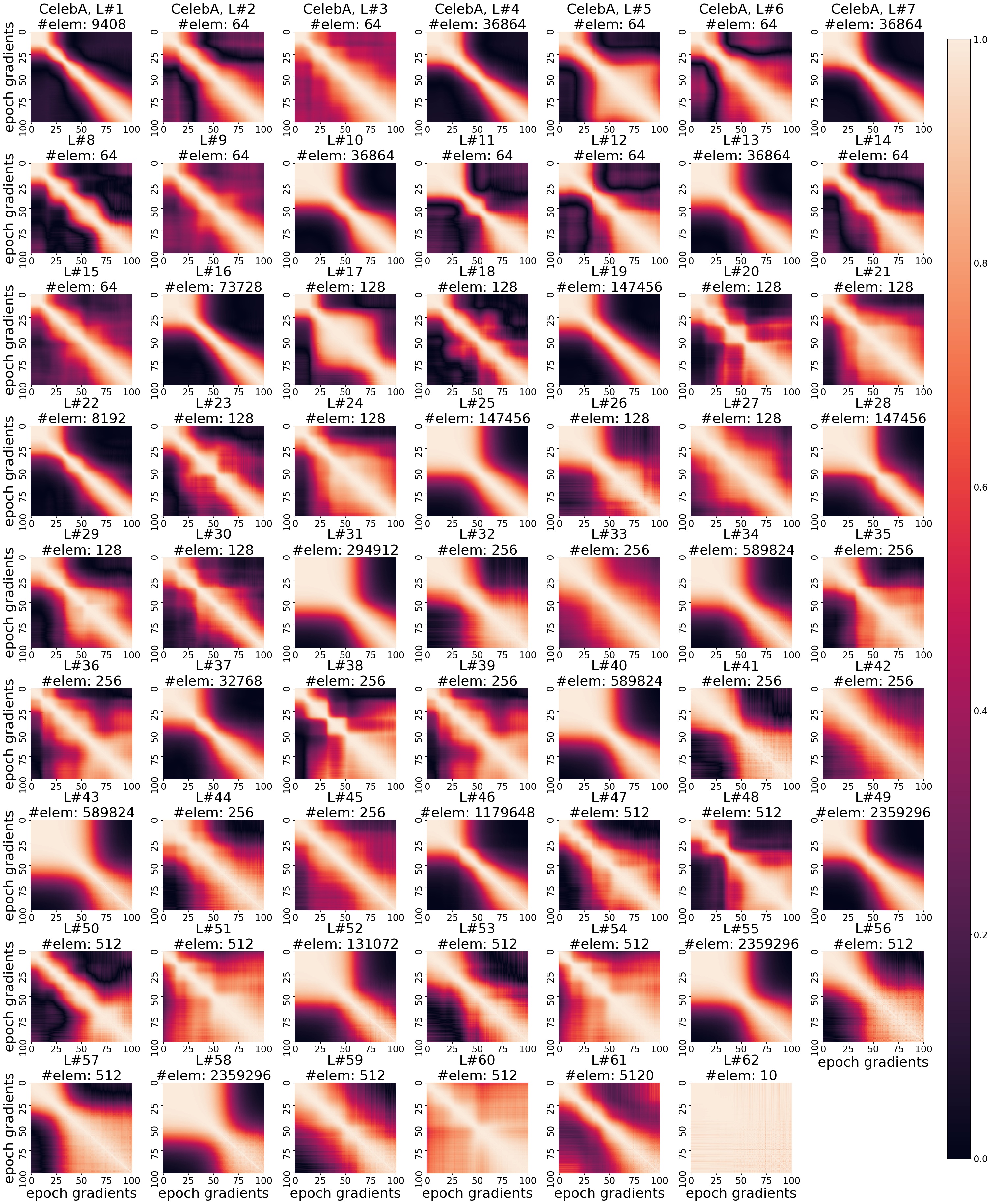}}
  \caption{\small{\textit{PCA Components Overlap with Gradient}. Repeat of Fig.~\ref{fig:prelim_3} on \textbf{ResNet18} trained on \textbf{CelebA} dataset.}}
  \label{fig:prelim_3_celeba_resnet18}
\end{figure}

\begin{figure}[h!]
  \centering
  \centerline{\includegraphics[width=0.7\textwidth]{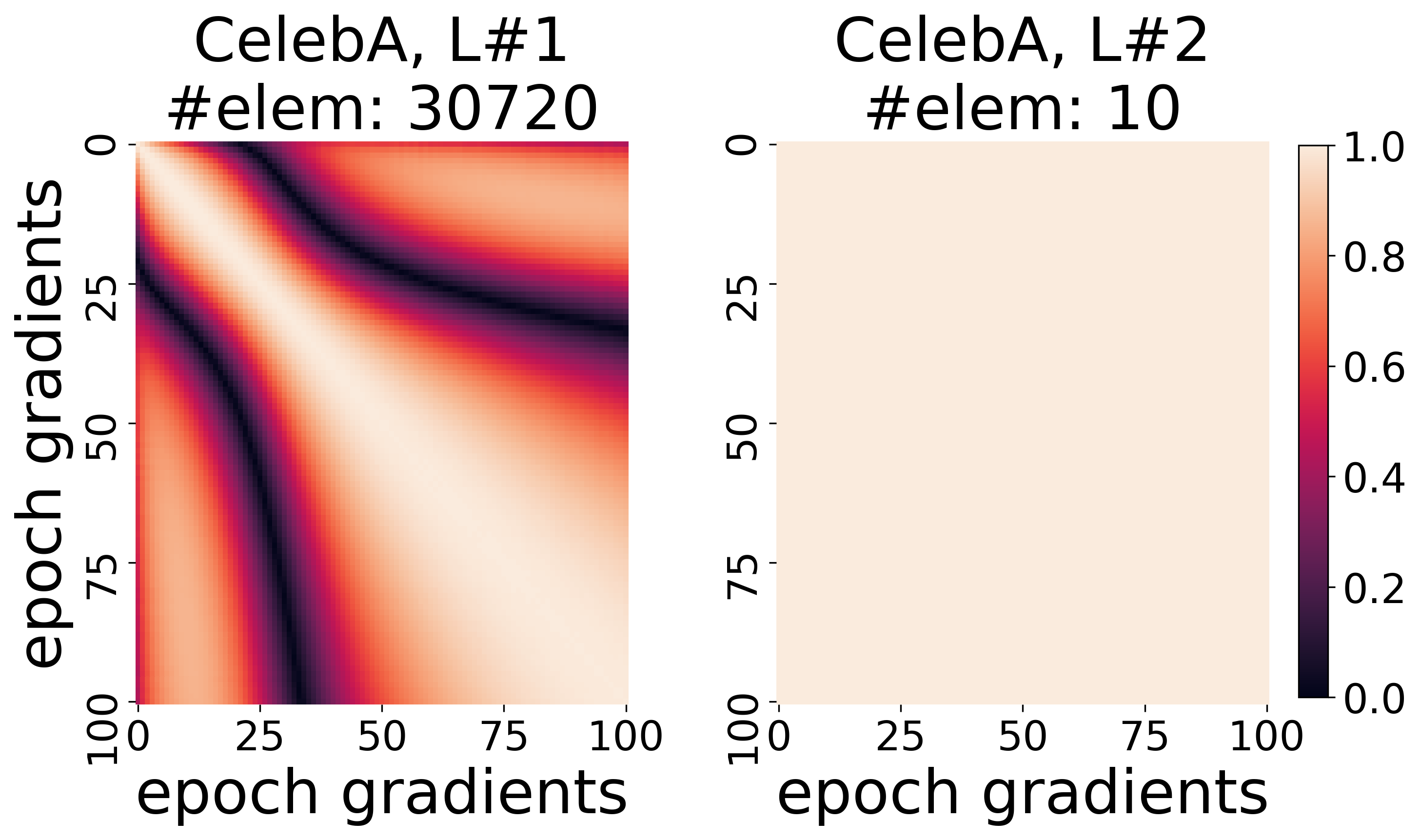}}
  \caption{\small{\textit{PCA Components Overlap with Gradient}. Repeat of Fig.~\ref{fig:prelim_3} on \textbf{FCN} trained on \textbf{CelebA} dataset.}}
  \label{fig:prelim_3_celeba_fcn}
\end{figure}

\begin{figure}[h!]
  \centering
  \centerline{\includegraphics[width=1.0\textwidth]{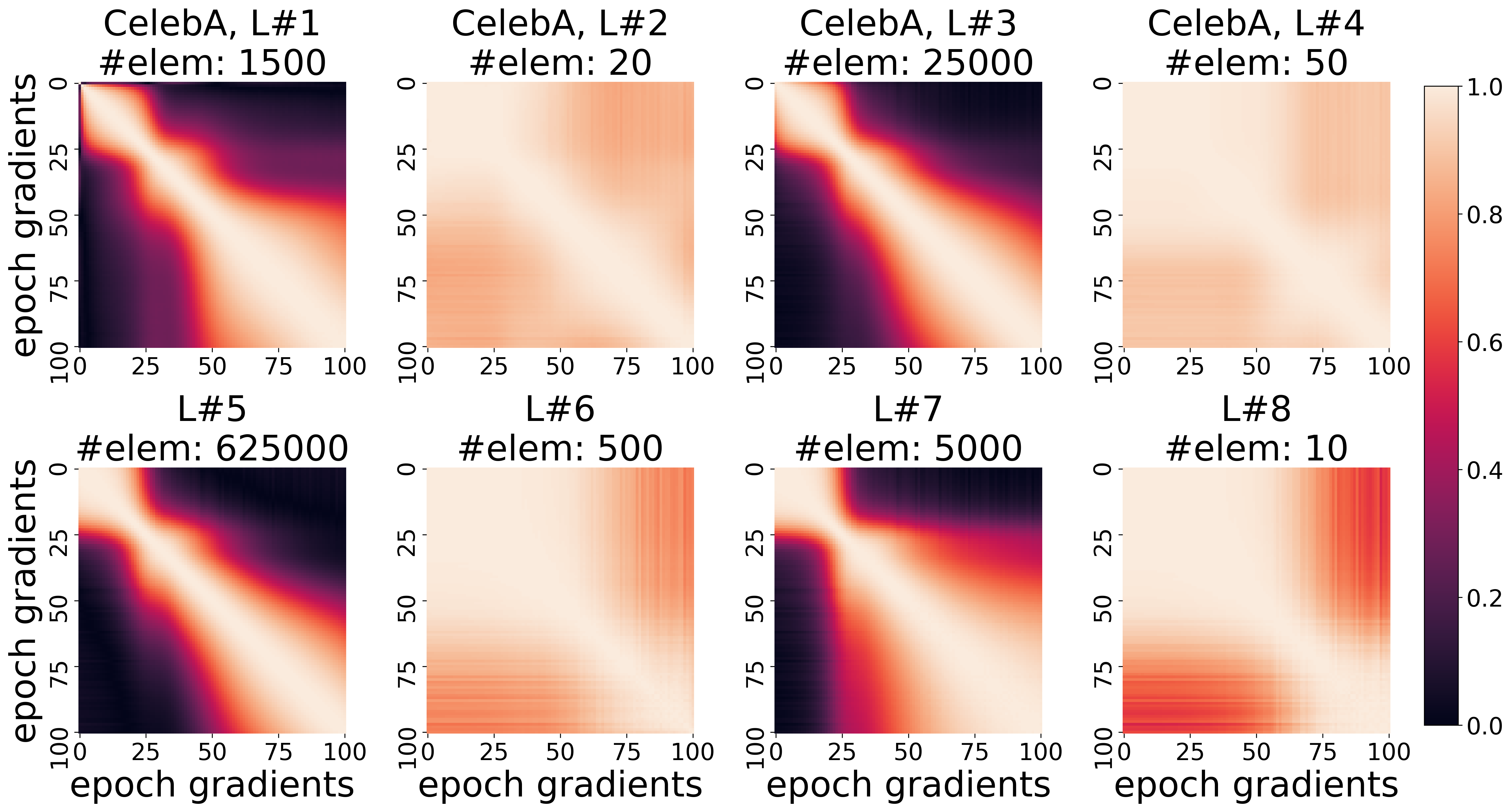}}
  \caption{\small{\textit{PCA Components Overlap with Gradient}. Fig.~\ref{fig:prelim_3} on \textbf{CNN} trained on \textbf{CelebA} dataset.}}
  \label{fig:prelim_3_celeba_cnn}
\end{figure}

%%%%%%%%%%%%%%%%%%%%%%%%%%%%%%%%%%%%%%%%%%%%%%%%%%%%%%%%%%%%
%%%%%%%%%%%%%%%%%%%%%%%%%%%%%%%%%%%%%%%%%%%%%%%%%%%%%%%%%%%%
% \clearpage

\begin{figure}[h!]
  \centering
  \centerline{\includegraphics[width=1.2\textwidth]{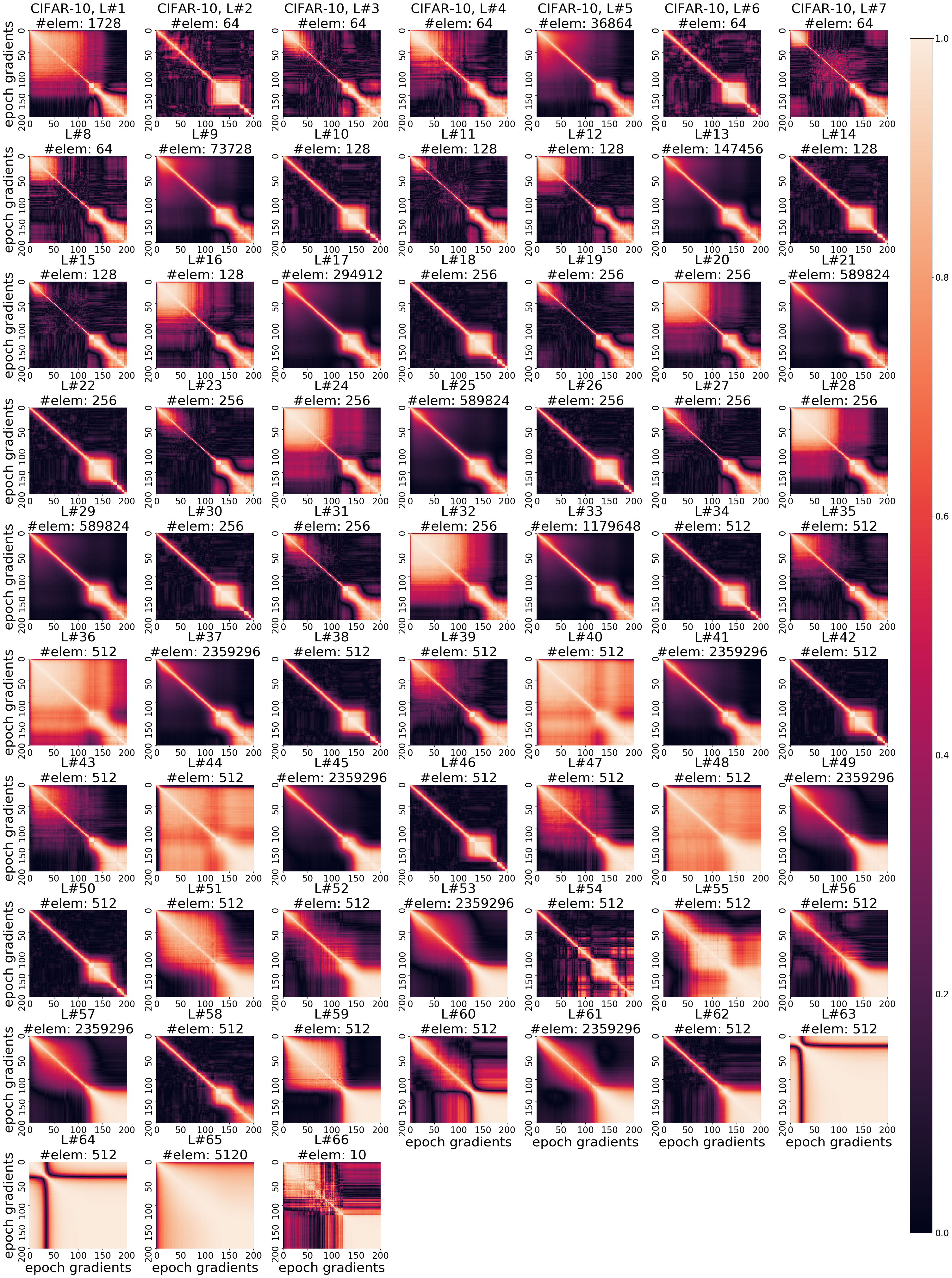}}
  \caption{\small{\textit{PCA Components Overlap with Gradient}. Repeat of Fig.~\ref{fig:prelim_3} on \textbf{VGG19} trained on \textbf{CIFAR-10} dataset.}}
  \label{fig:prelim_3_cifar_vgg19}
\end{figure}

\begin{figure}[h!]
  \centering
  \centerline{\includegraphics[width=1.2\textwidth]{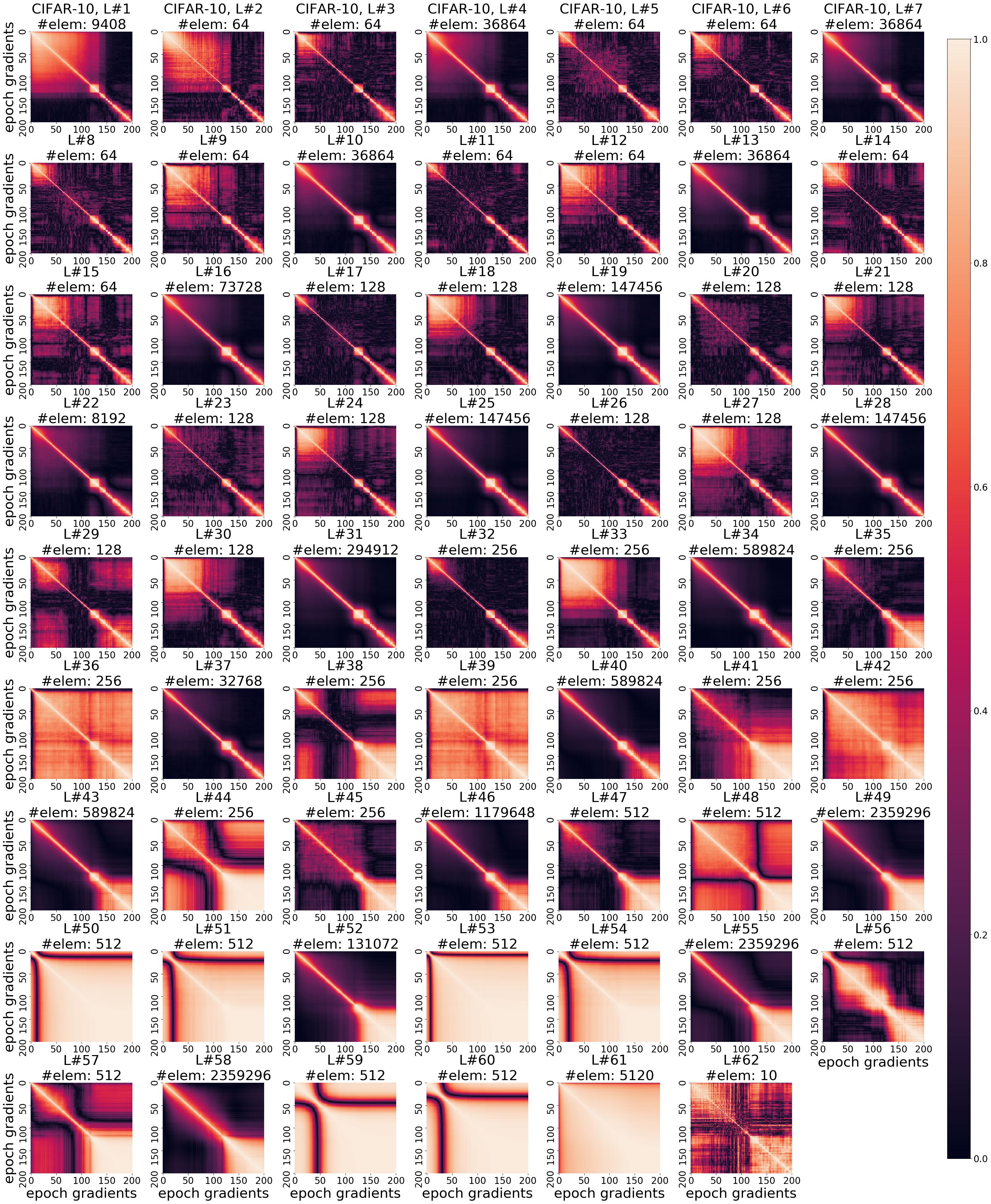}}
  \caption{\small{\textit{PCA Components Overlap with Gradient}. Repeat of Fig.~\ref{fig:prelim_3} on \textbf{ResNet18} trained on \textbf{CIFAR-10} dataset.}}
  \label{fig:prelim_3_cifar_resnet18}
\end{figure}

\begin{figure}[h!]
  \centering
  \centerline{\includegraphics[width=0.7\textwidth]{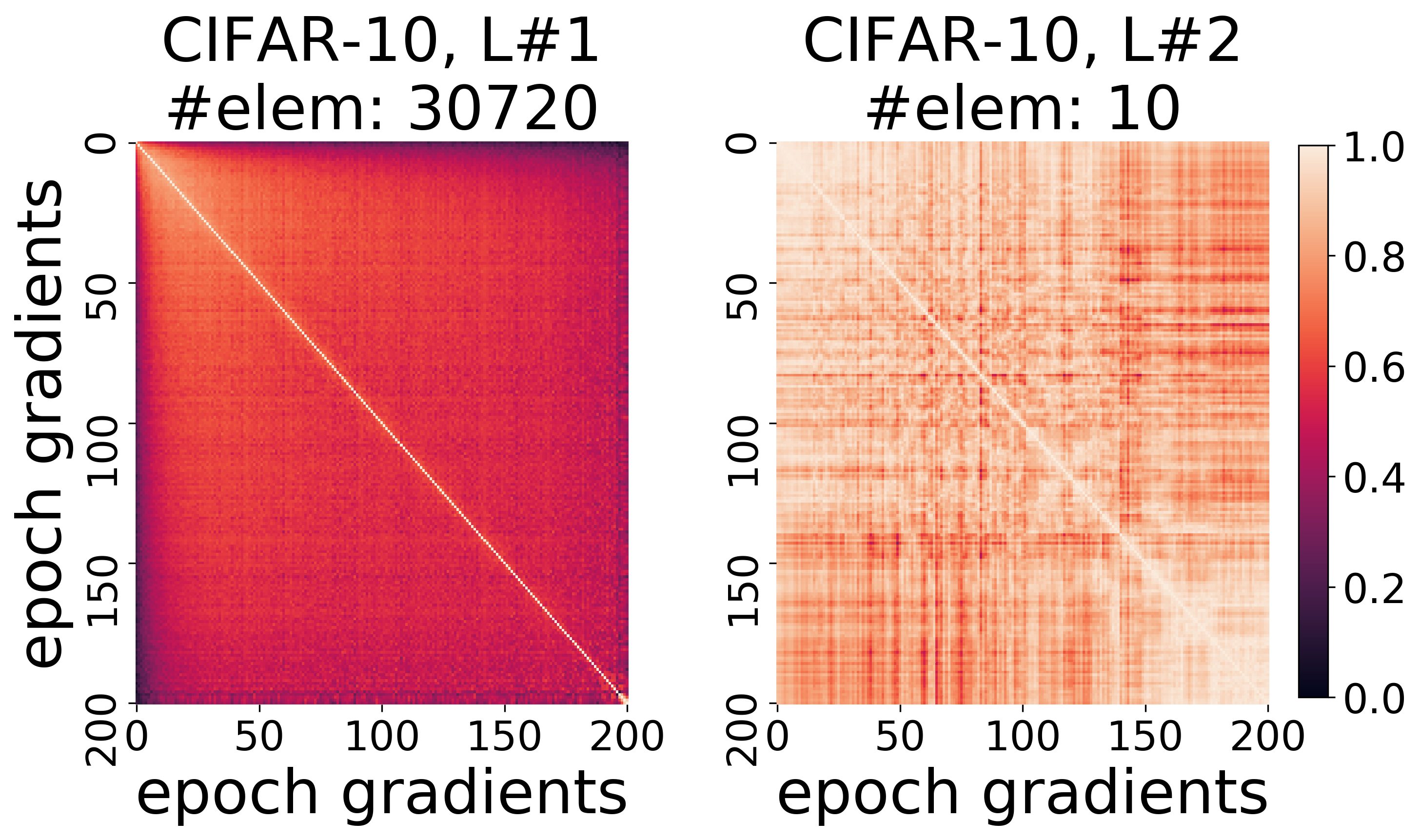}}
  \caption{\small{\textit{PCA Components Overlap with Gradient}. Repeat of Fig.~\ref{fig:prelim_3} on \textbf{FCN} trained on \textbf{CIFAR-10} dataset.}}
  \label{fig:prelim_3_cifar_fcn}
\end{figure}

\begin{figure}[h!]
  \centering
  \centerline{\includegraphics[width=1.0\textwidth]{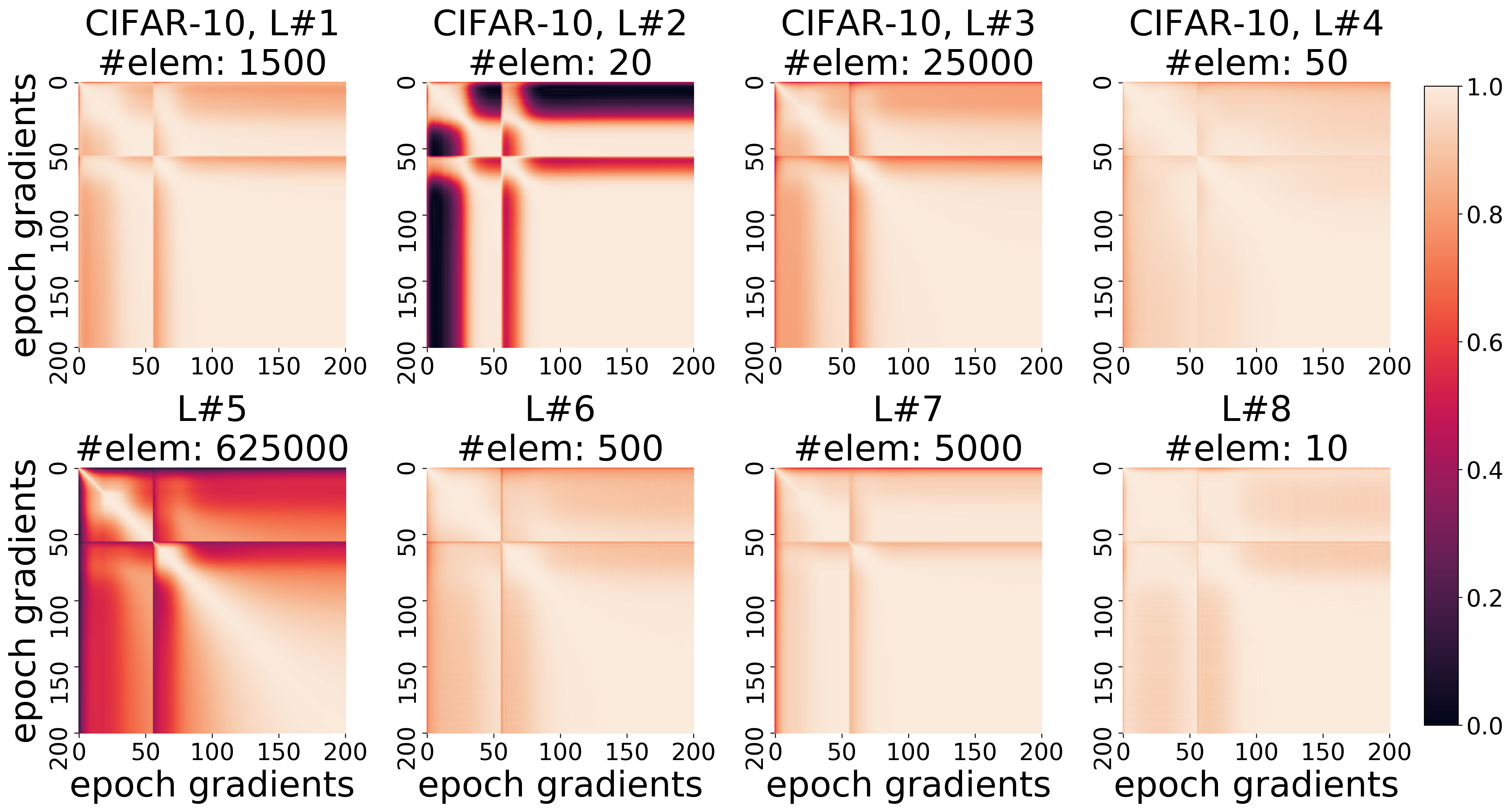}}
  \caption{\small{\textit{PCA Components Overlap with Gradient}. Fig.~\ref{fig:prelim_3} on \textbf{CNN} trained on \textbf{CIFAR-10} dataset.}}
  \label{fig:prelim_3_cifar_cnn}
\end{figure}

%%%%%%%%%%%%%%%%%%%%%%%%%%%%%%%%%%%%%%%%%%%%%%%%%%%%%%%%%%%%
%%%%%%%%%%%%%%%%%%%%%%%%%%%%%%%%%%%%%%%%%%%%%%%%%%%%%%%%%%%%

\begin{figure}[h!]
  \centering
  \centerline{\includegraphics[width=1.2\textwidth]{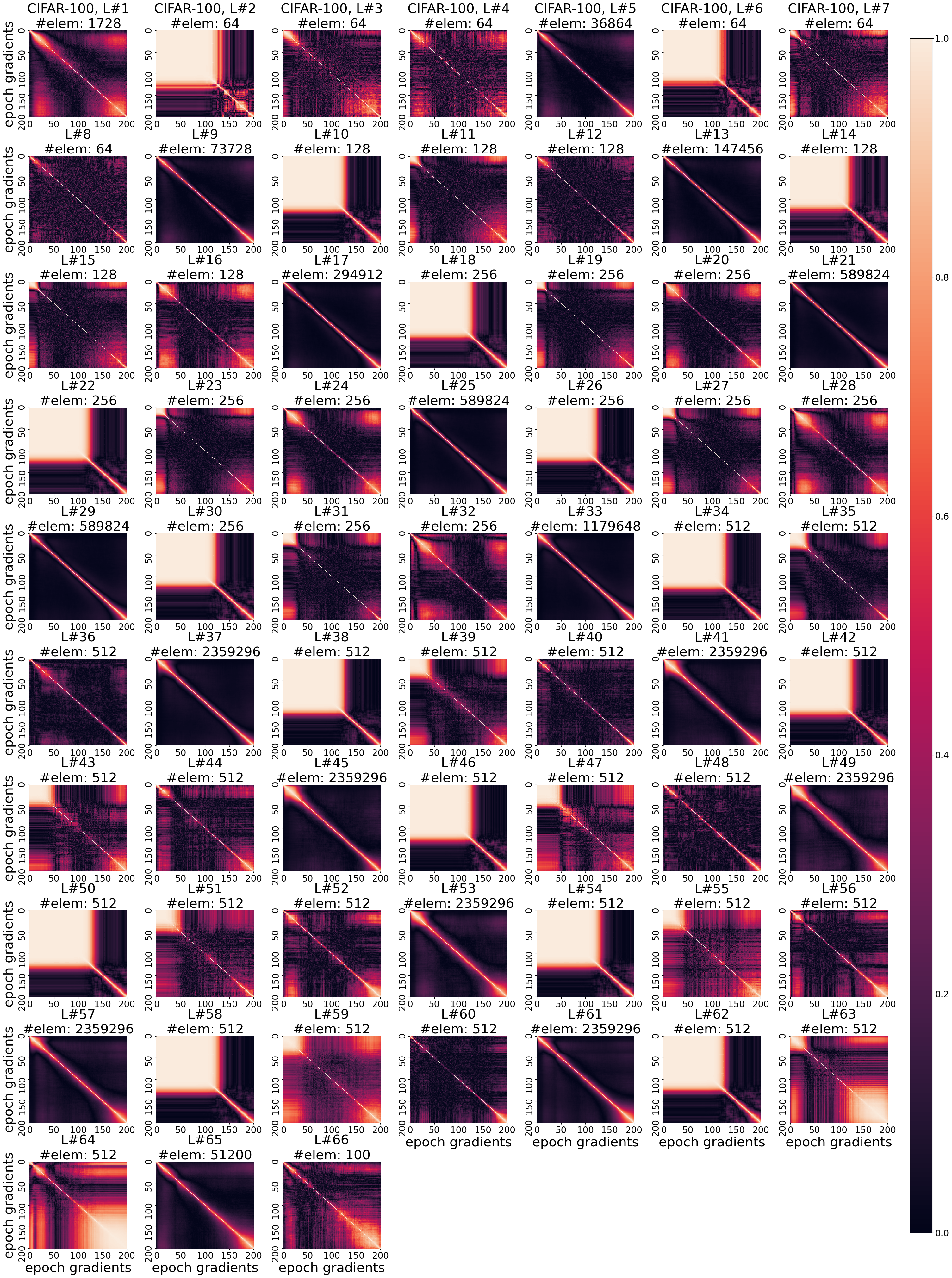}}
  \caption{\small{\textit{PCA Components Overlap with Gradient}. Repeat of Fig.~\ref{fig:prelim_3} on \textbf{VGG19} trained on \textbf{CIFAR-100} dataset.}}
  \label{fig:prelim_3_cifar100_vgg19}
\end{figure}

\begin{figure}[h!]
  \centering
  \centerline{\includegraphics[width=1.2\textwidth]{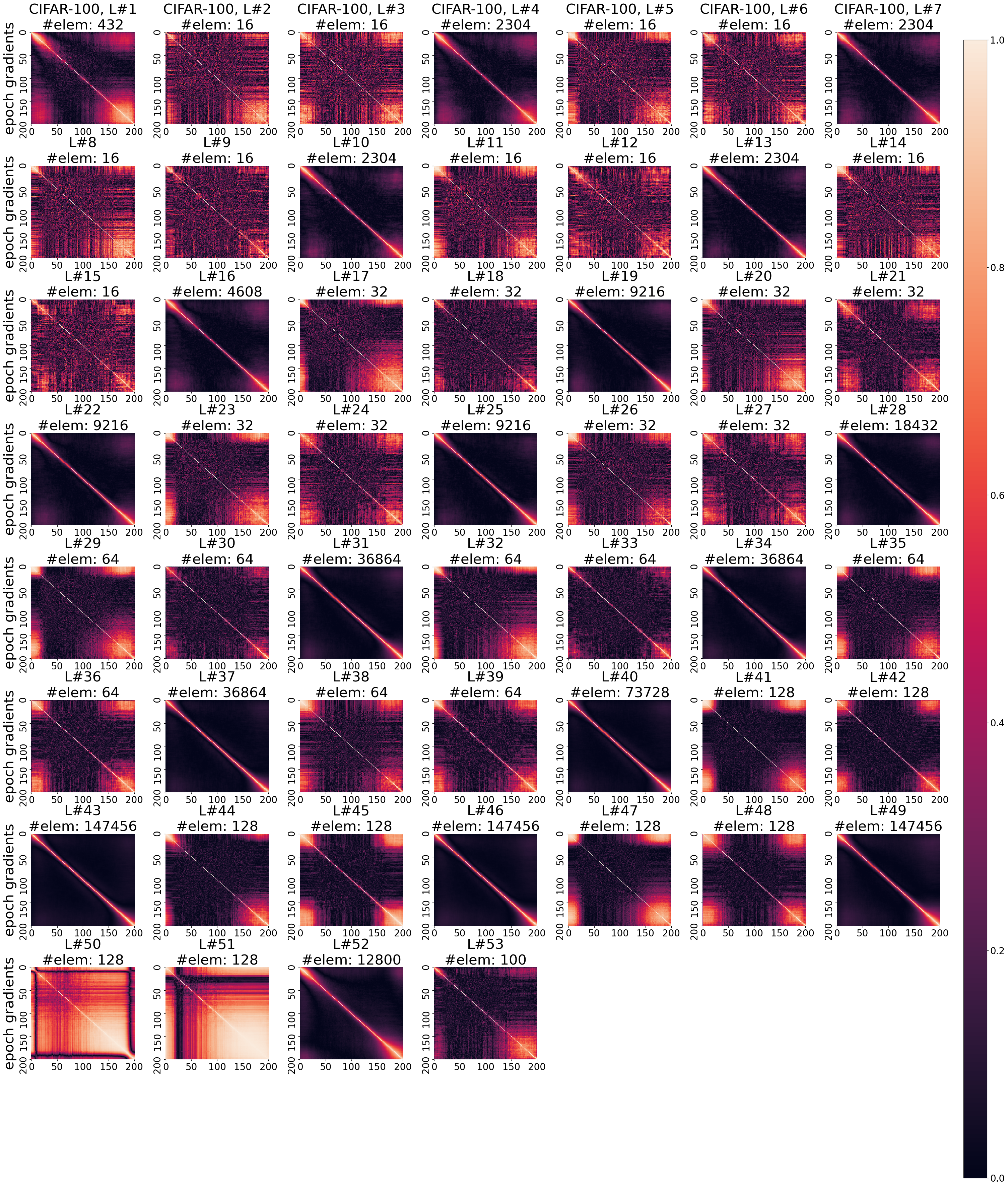}}
  \caption{\small{\textit{PCA Components Overlap with Gradient}. Repeat of Fig.~\ref{fig:prelim_3} on \textbf{ResNet18} trained on \textbf{CIFAR-100} dataset.}}
  \label{fig:prelim_3_cifar100_resnet18}
\end{figure}

\begin{figure}[h!]
  \centering
  \centerline{\includegraphics[width=0.7\textwidth]{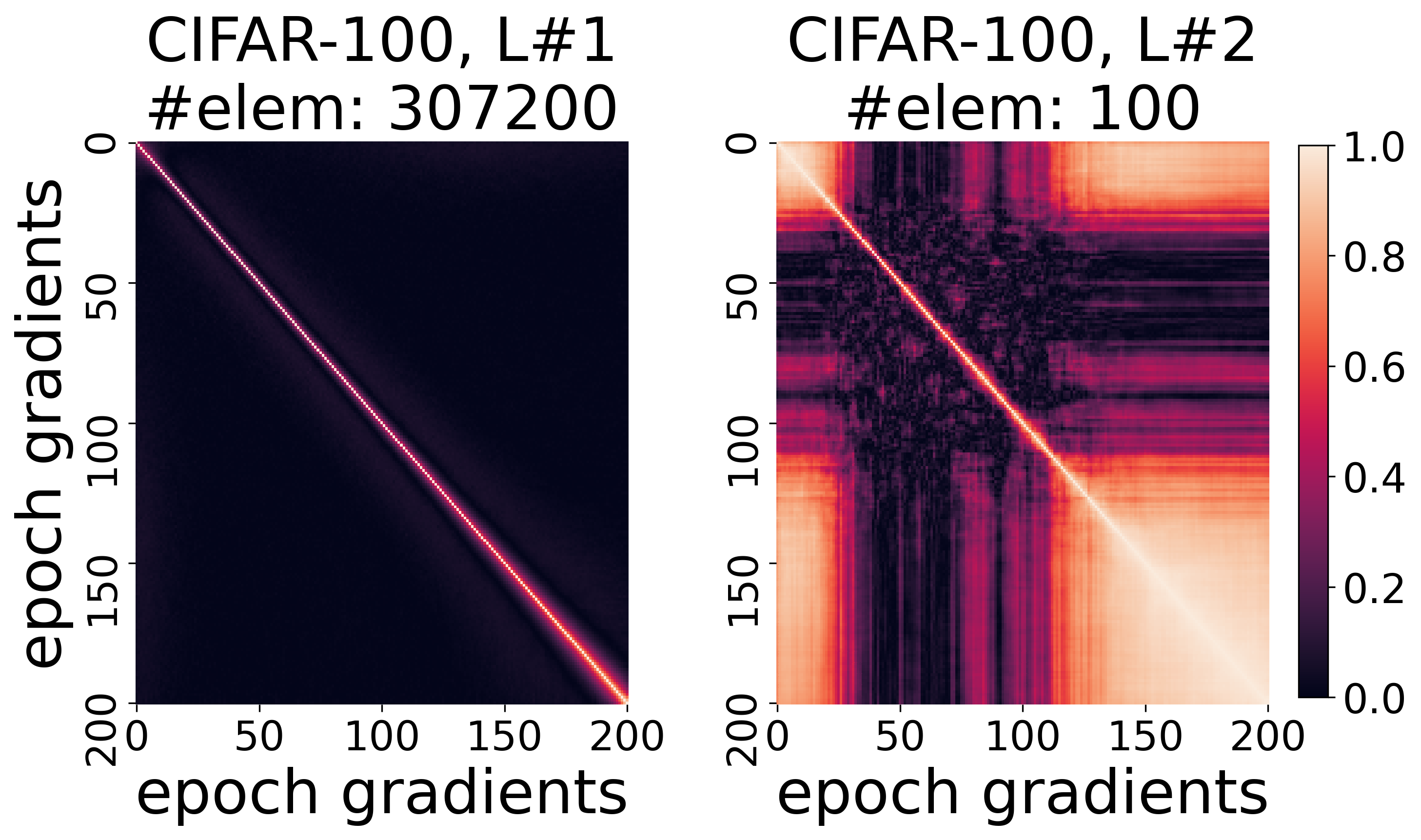}}
  \caption{\small{\textit{PCA Components Overlap with Gradient}. Repeat of Fig.~\ref{fig:prelim_3} on \textbf{FCN} trained on \textbf{CIFAR-100} dataset.}}
  \label{fig:prelim_3_cifar100_fcn}
\end{figure}

\begin{figure}[h!]
  \centering
  \centerline{\includegraphics[width=1.0\textwidth]{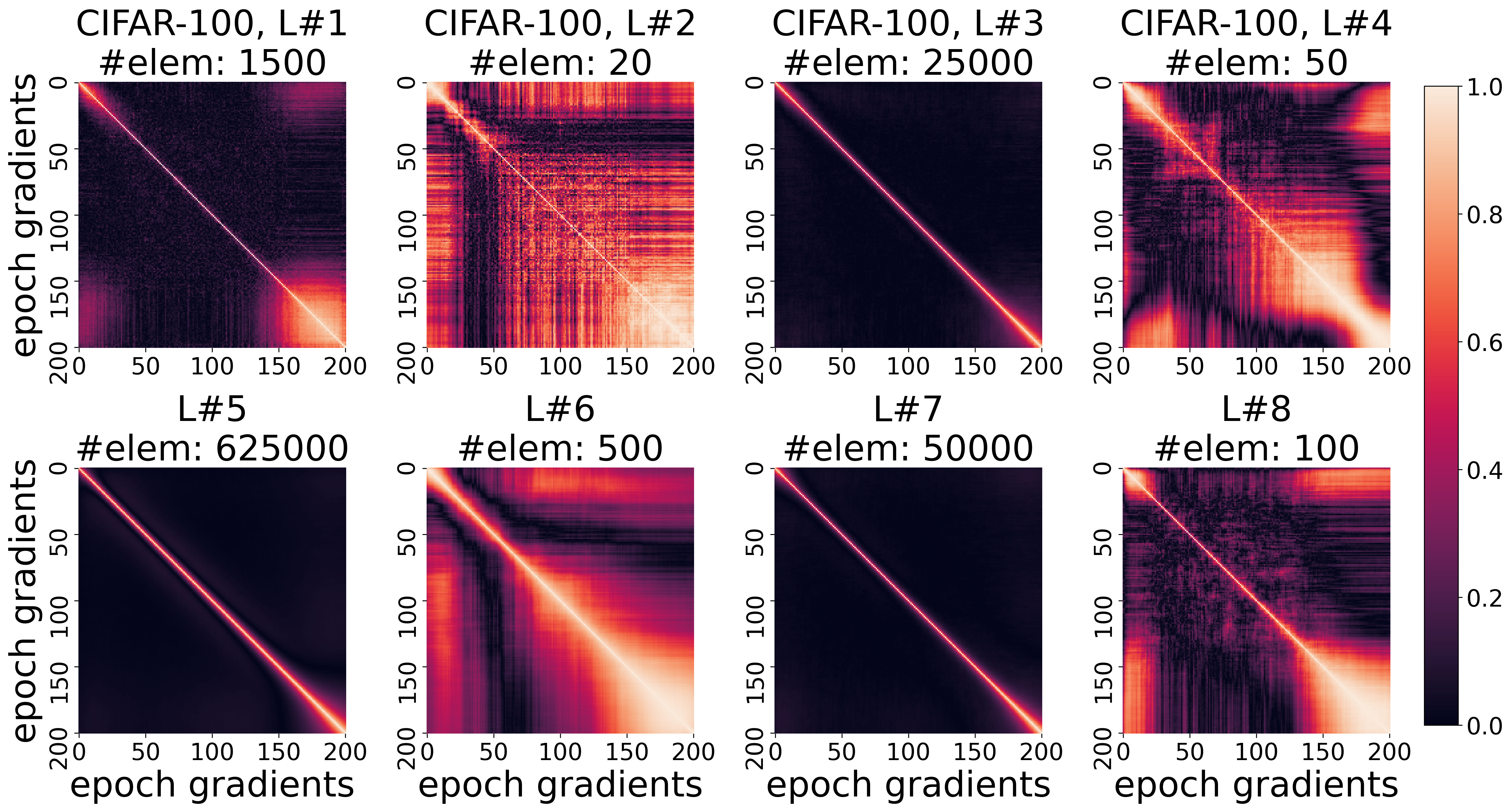}}
  \caption{\small{\textit{PCA Components Overlap with Gradient}. Fig.~\ref{fig:prelim_3} on \textbf{CNN} trained on \textbf{CIFAR-100} dataset.}}
  \label{fig:prelim_3_cifar100_cnn}
\end{figure}

%%%%%%%%%%%%%%%%%%%%%%%%%%%%%%%%%%%%%%%%%%%%%%%%%%%%%%%%%%%%
%%%%%%%%%%%%%%%%%%%%%%%%%%%%%%%%%%%%%%%%%%%%%%%%%%%%%%%%%%%%

\begin{figure}[h!]
  \centering
  \centerline{\includegraphics[width=1.2\textwidth]{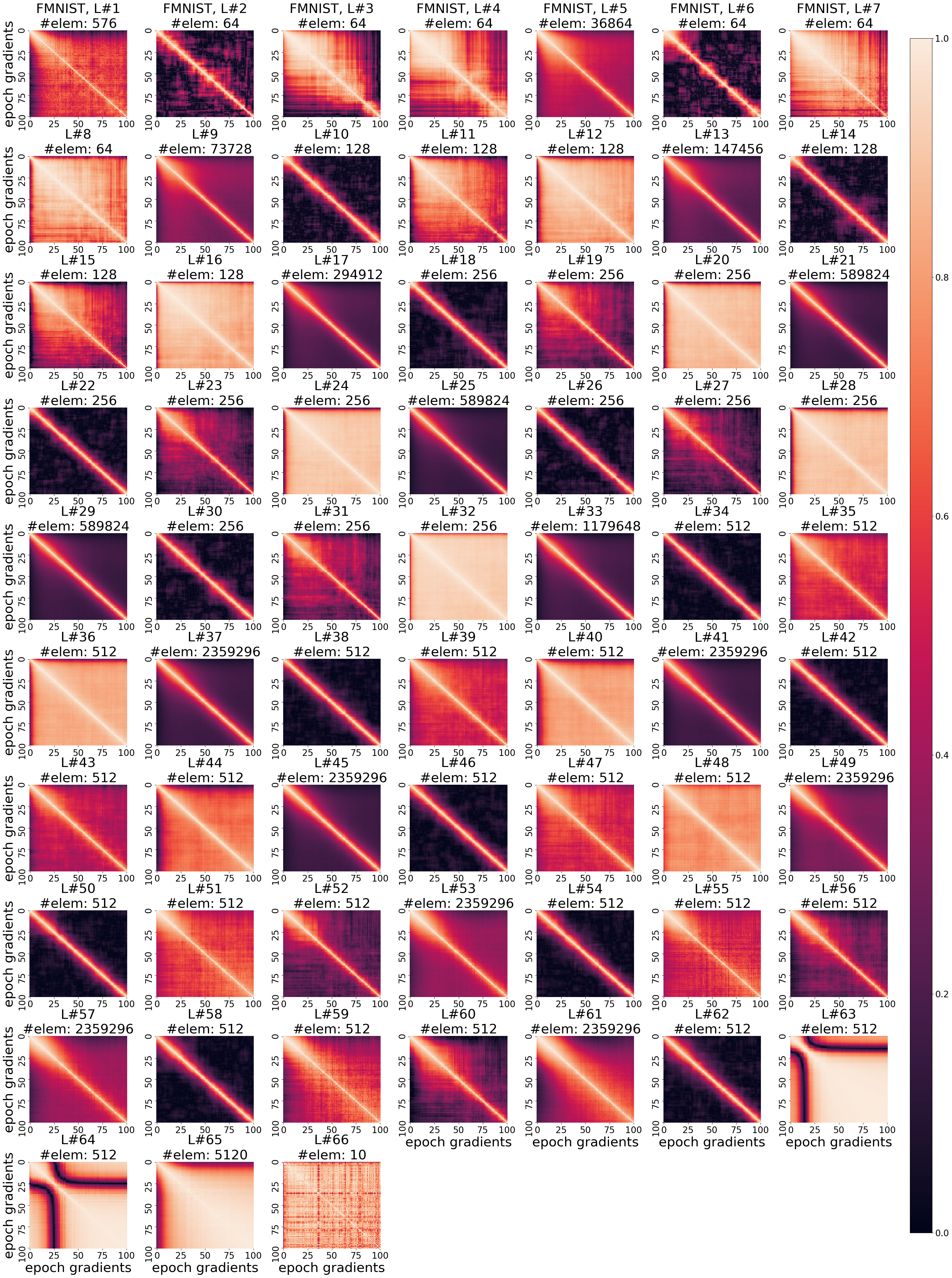}}
  \caption{\small{\textit{PCA Components Overlap with Gradient}. Repeat of Fig.~\ref{fig:prelim_3} on \textbf{VGG19} trained on \textbf{FMNIST} dataset.}}
  \label{fig:prelim_3_fmnist_vgg19}
\end{figure}

\begin{figure}[h!]
  \centering
  \centerline{\includegraphics[width=1.2\textwidth]{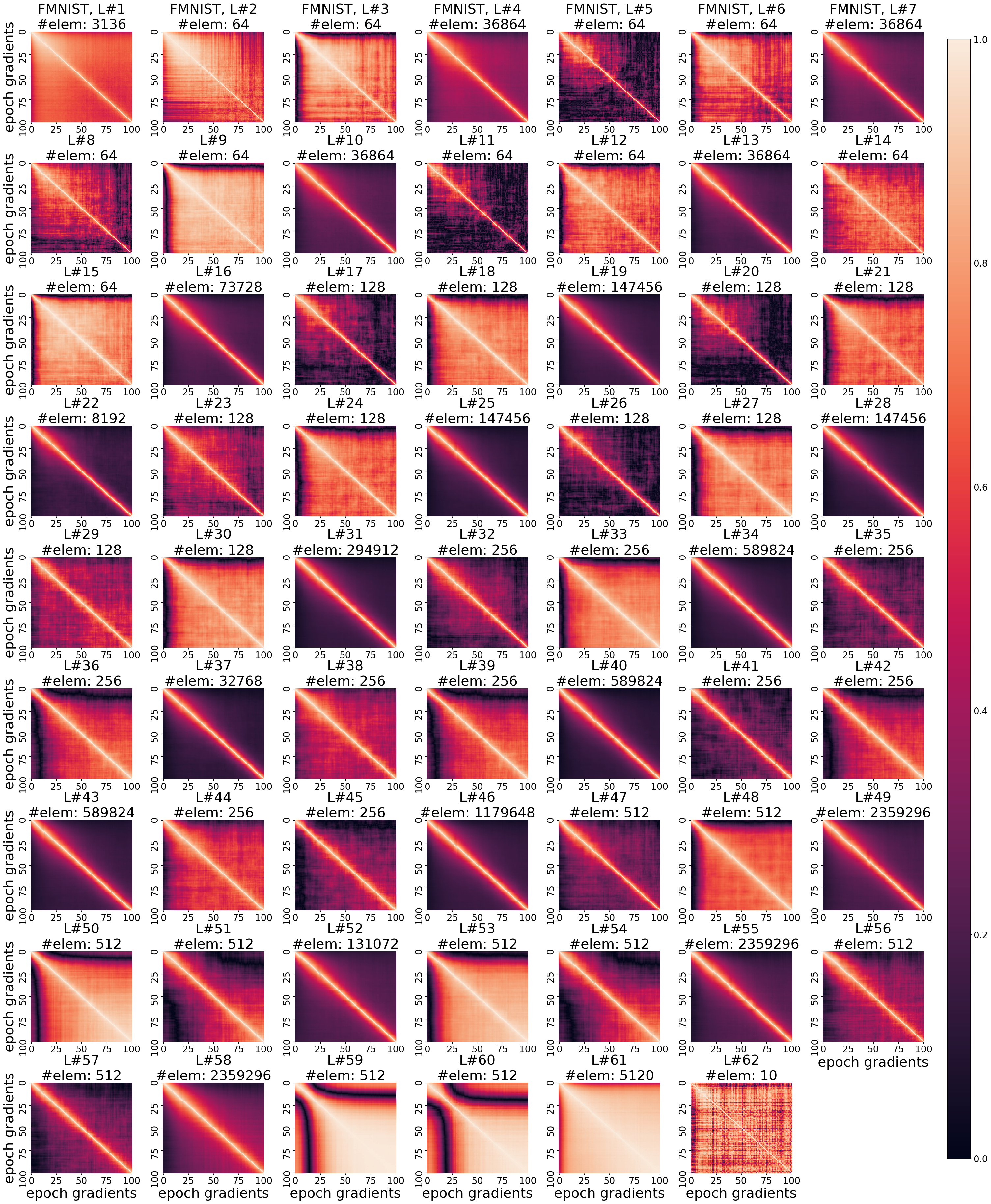}}
  \caption{\small{\textit{PCA Components Overlap with Gradient}. Repeat of Fig.~\ref{fig:prelim_3} on \textbf{ResNet18} trained on \textbf{FMNIST} dataset.}}
  \label{fig:prelim_3_fmnist_resnet18}
\end{figure}

\begin{figure}[h!]
  \centering
  \centerline{\includegraphics[width=0.7\textwidth]{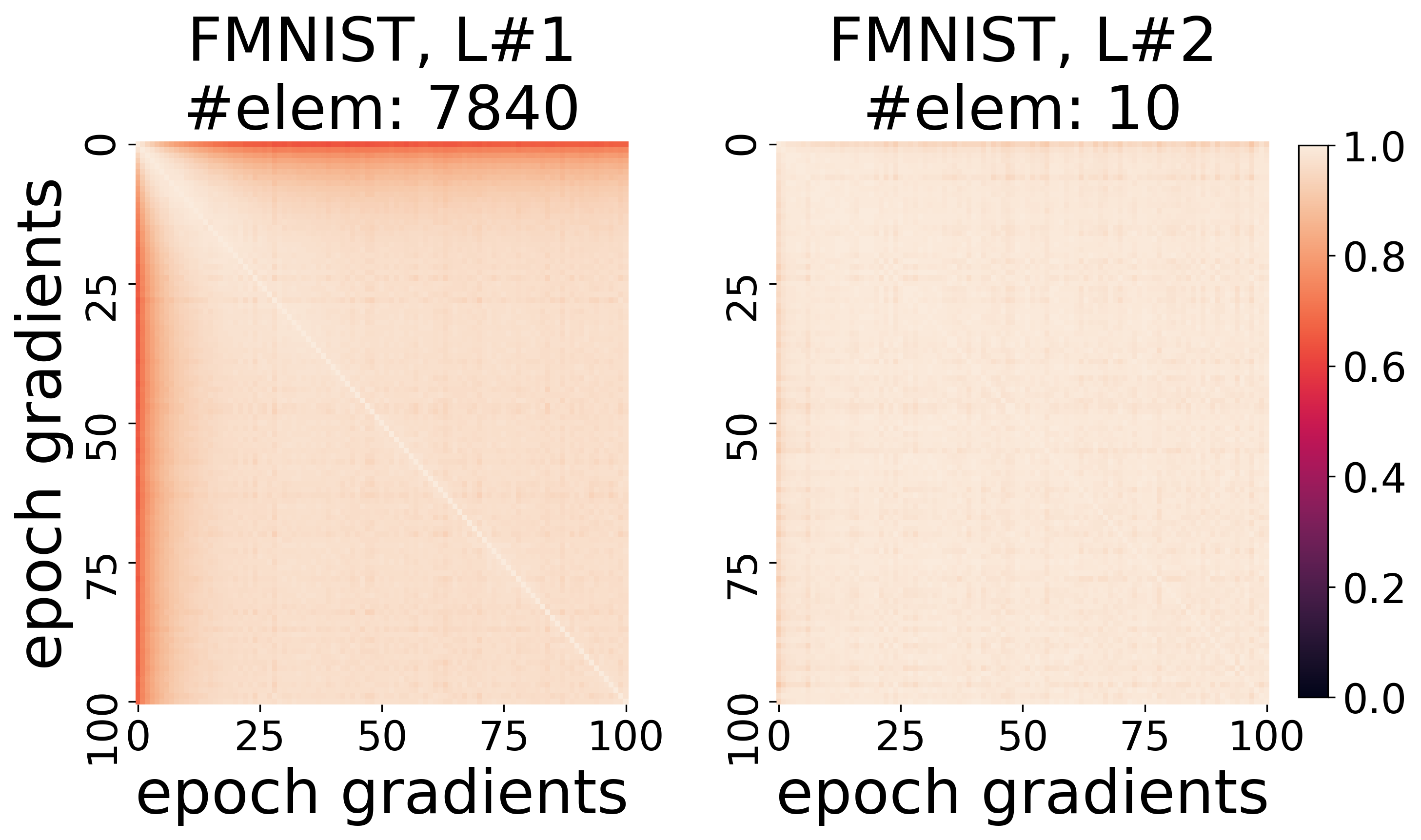}}
  \caption{\small{\textit{PCA Components Overlap with Gradient}. Repeat of Fig.~\ref{fig:prelim_3} on \textbf{FCN} trained on \textbf{FMNIST} dataset.}}
  \label{fig:prelim_3_fmnist_fcn}
\end{figure}

\begin{figure}[h!]
  \centering
  \centerline{\includegraphics[width=1.0\textwidth]{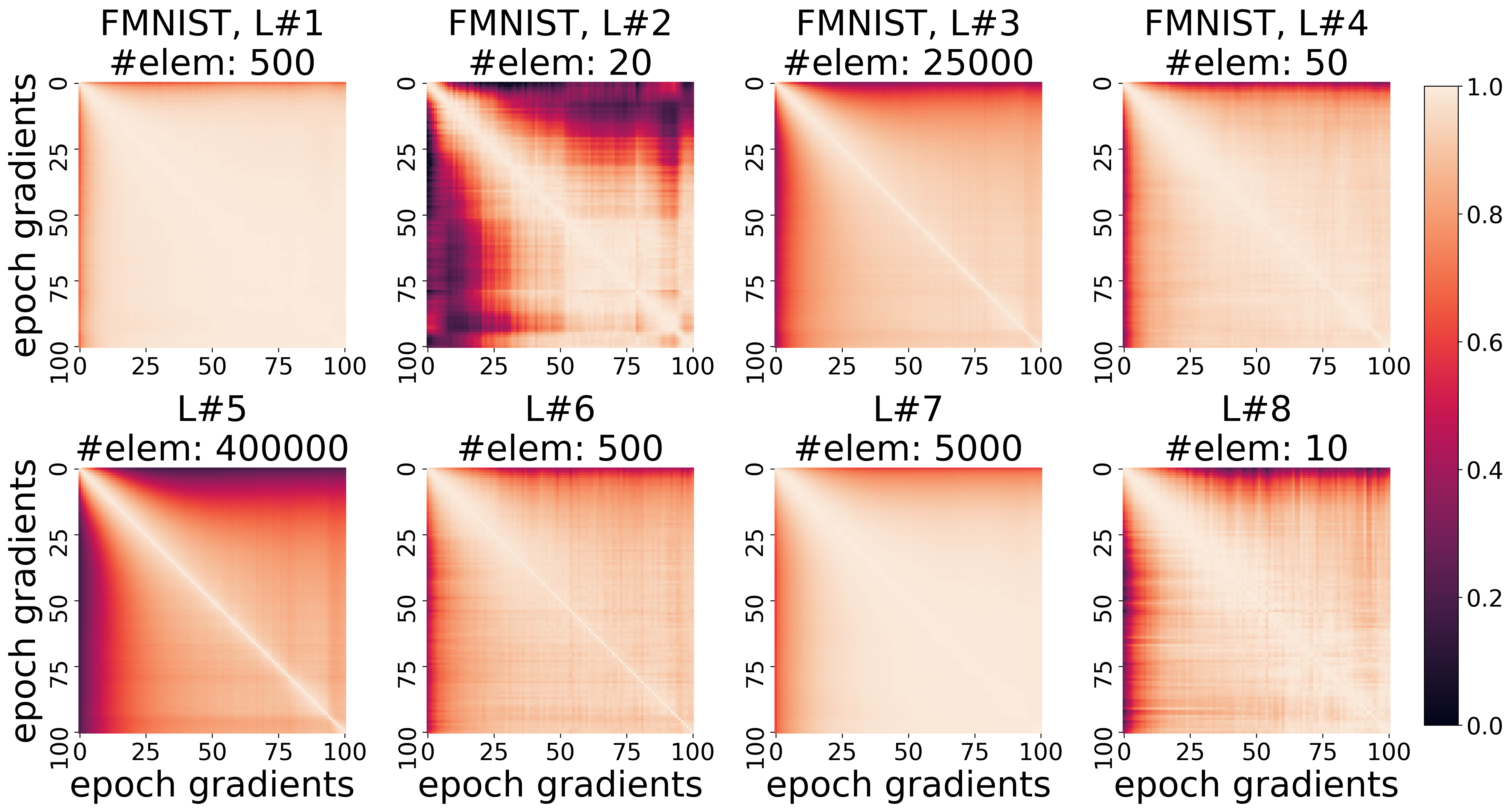}}
  \caption{\small{\textit{PCA Components Overlap with Gradient}. Fig.~\ref{fig:prelim_3} on \textbf{CNN} trained on \textbf{FMNIST} dataset.}}
  \label{fig:prelim_3_fmnist_cnn}
\end{figure}

%%%%%%%%%%%%%%%%%%%%%%%%%%%%%%%%%%%%%%%%%%%%%%%%%%%%%%%%%%%%
%%%%%%%%%%%%%%%%%%%%%%%%%%%%%%%%%%%%%%%%%%%%%%%%%%%%%%%%%%%%

\begin{figure}[h!]
  \centering
  \centerline{\includegraphics[width=1.2\textwidth]{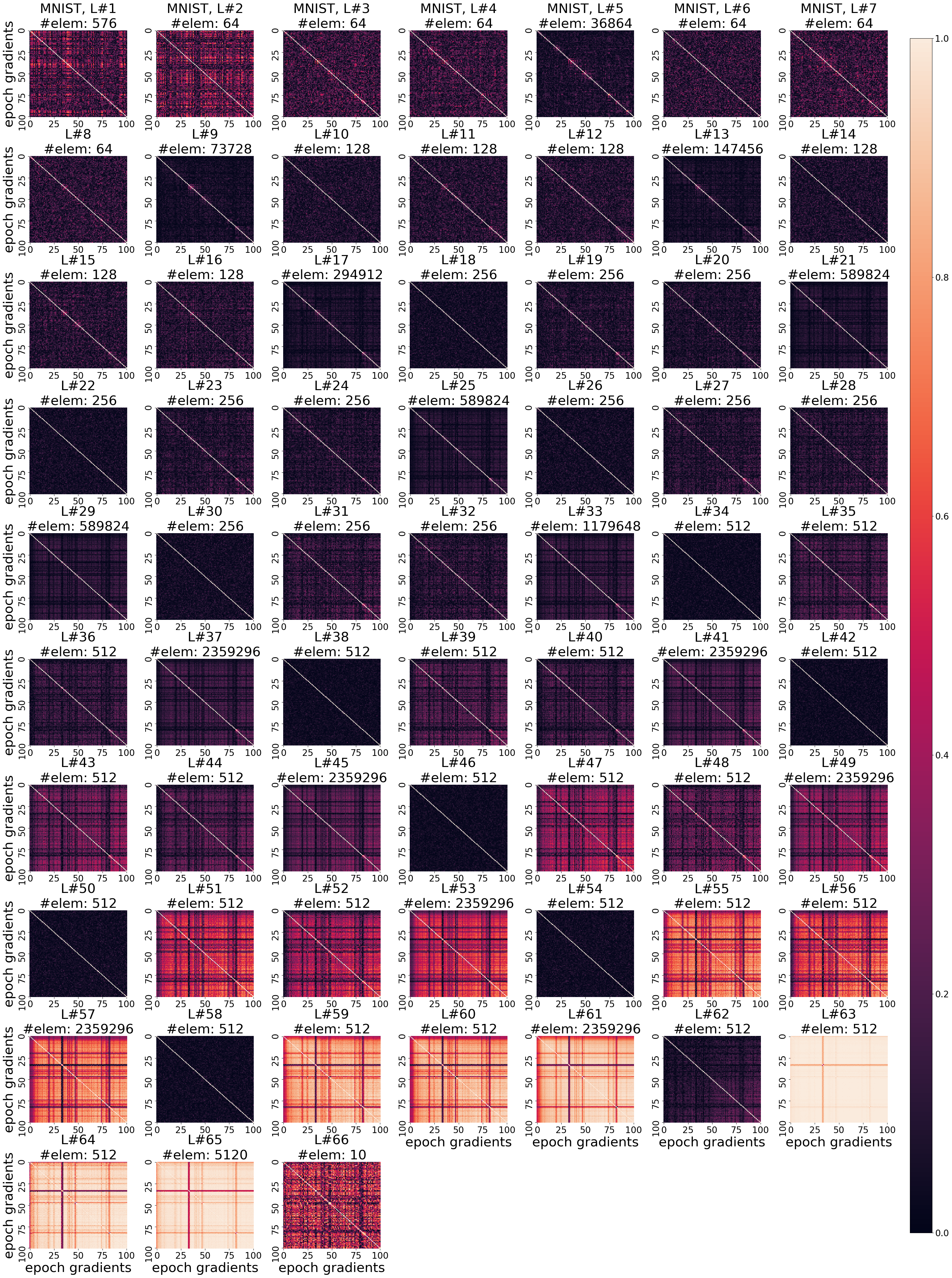}}
  \caption{\small{\textit{PCA Components Overlap with Gradient}. Repeat of Fig.~\ref{fig:prelim_3} on \textbf{VGG19} trained on \textbf{MNIST} dataset.}}
  \label{fig:prelim_3_mnist_vgg19}
\end{figure}

\begin{figure}[h!]
  \centering
  \centerline{\includegraphics[width=1.2\textwidth]{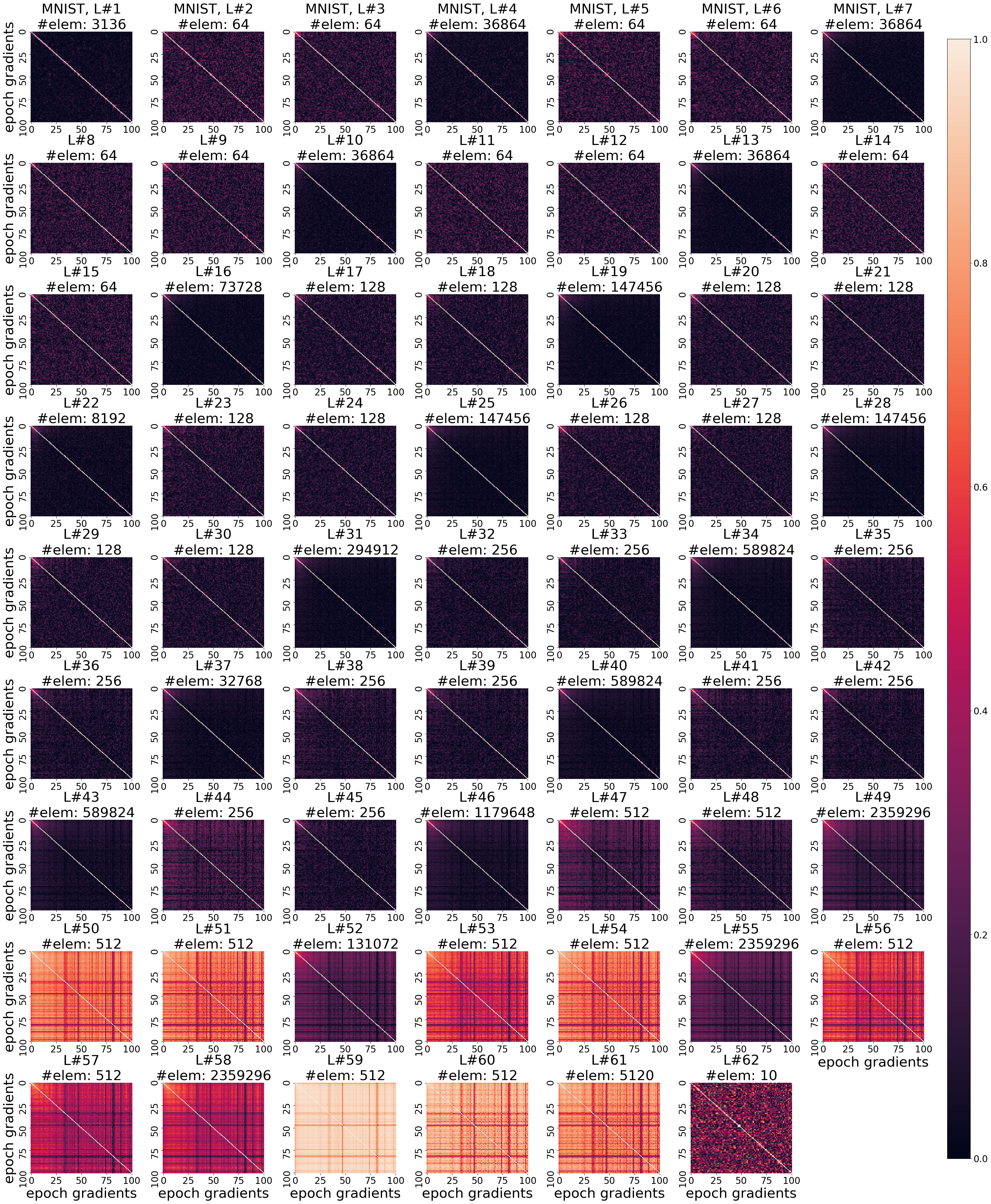}}
  \caption{\small{\textit{PCA Components Overlap with Gradient}. Repeat of Fig.~\ref{fig:prelim_3} on \textbf{ResNet18} trained on \textbf{MNIST} dataset.}}
  \label{fig:prelim_3_mnist_resnet18}
\end{figure}

\begin{figure}[h!]
  \centering
  \centerline{\includegraphics[width=0.7\textwidth]{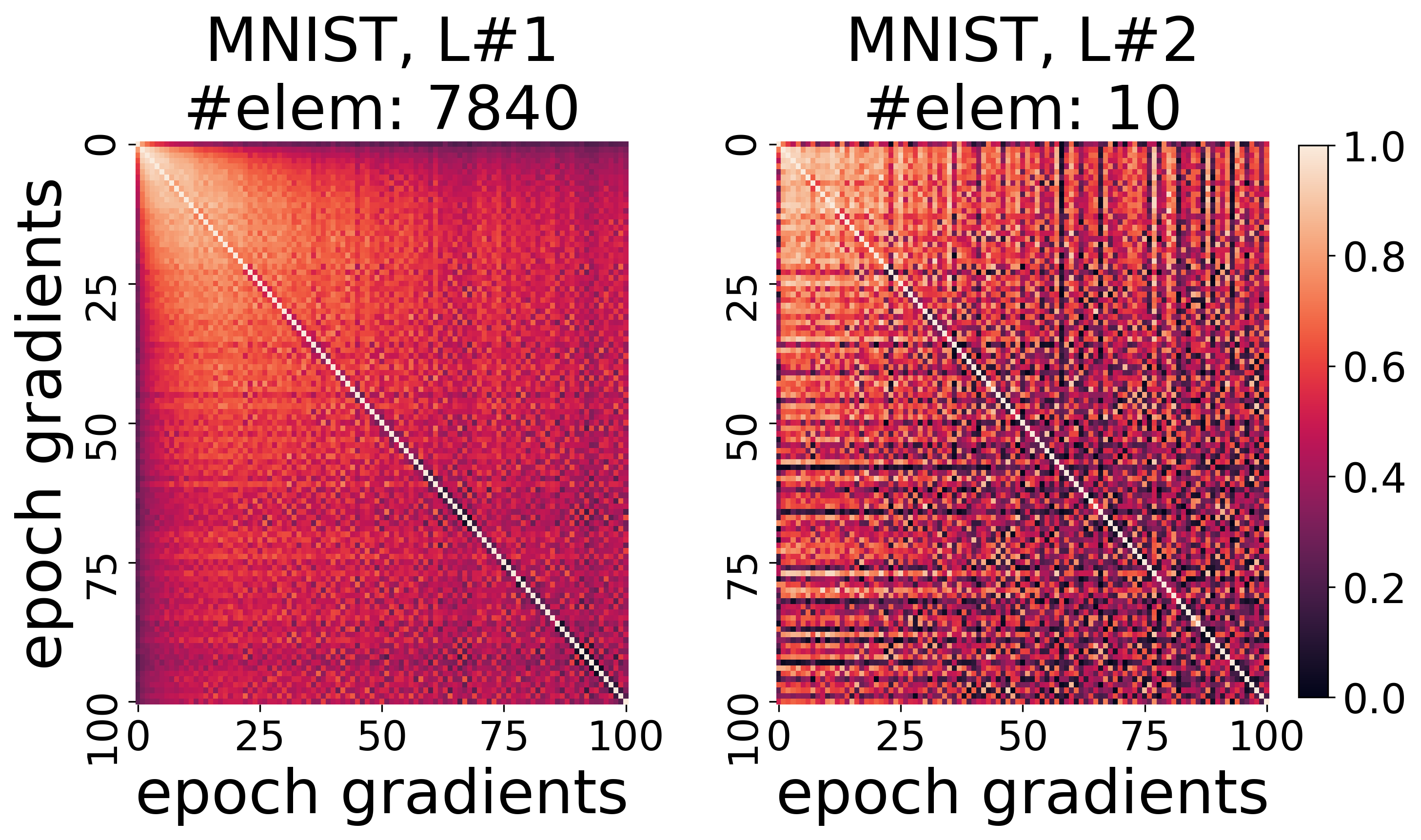}}
  \caption{\small{\textit{PCA Components Overlap with Gradient}. Repeat of Fig.~\ref{fig:prelim_3} on \textbf{FCN} trained on \textbf{MNIST} dataset.}}
  \label{fig:prelim_3_mnist_fcn}
\end{figure}

\begin{figure}[h!]
  \centering
  \centerline{\includegraphics[width=1.0\textwidth]{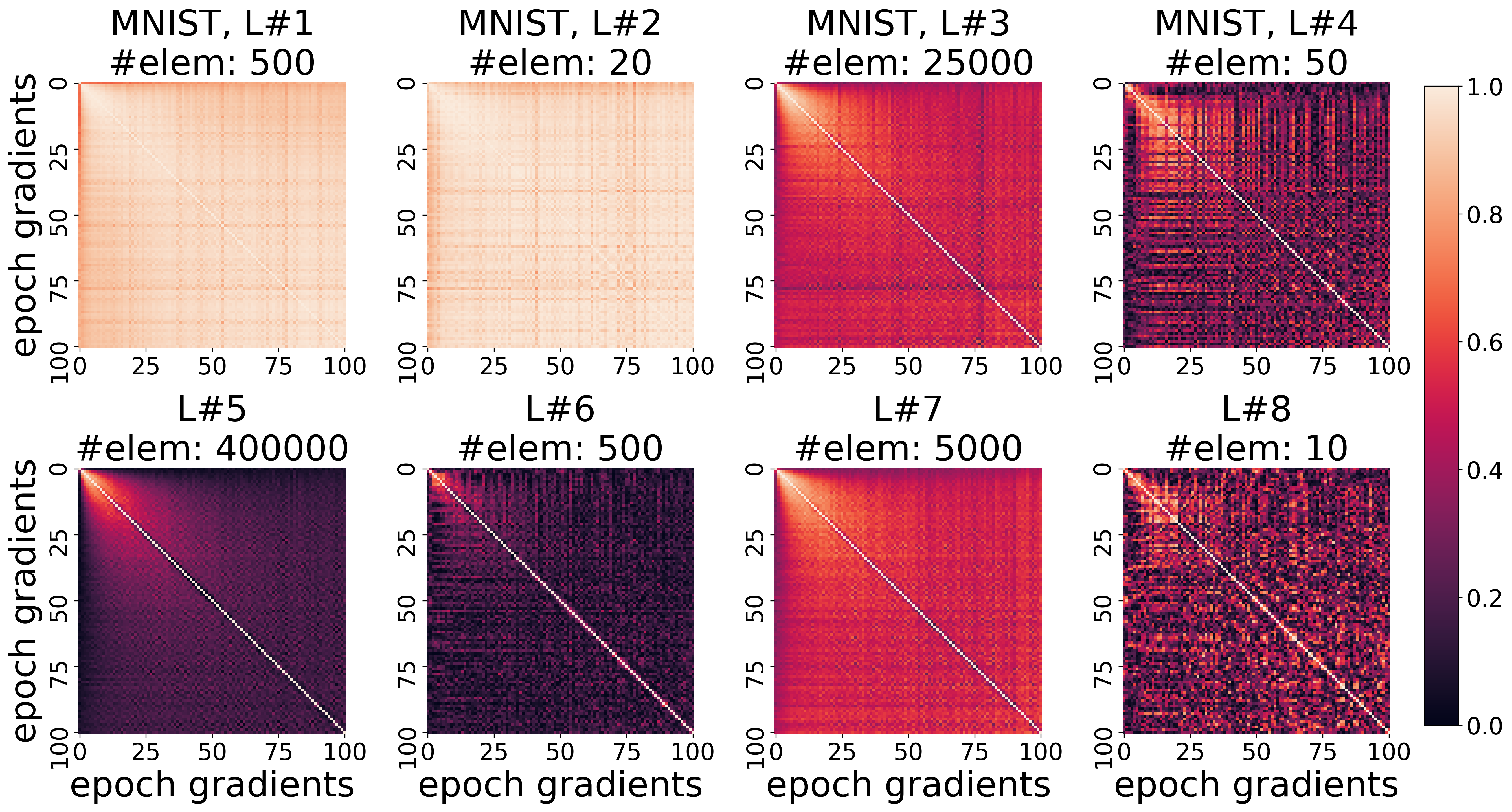}}
  \caption{\small{\textit{PCA Components Overlap with Gradient}. Fig.~\ref{fig:prelim_3} on \textbf{CNN} trained on \textbf{MNIST} dataset.}}
  \label{fig:prelim_3_mnist_cnn}
\end{figure}

%%%%%%%%%%%%%%%%%%%%%%%%%%%%%%%%%%%%%%%%%%%%%%%%%%%%%%%%%%%%
%%%%%%%%%%%%%%%%%%%%%%%%%%%%%%%%%%%%%%%%%%%%%%%%%%%%%%%%%%%%

\begin{figure}[h!]
  \centering
  \centerline{\includegraphics[width=1.2\textwidth]{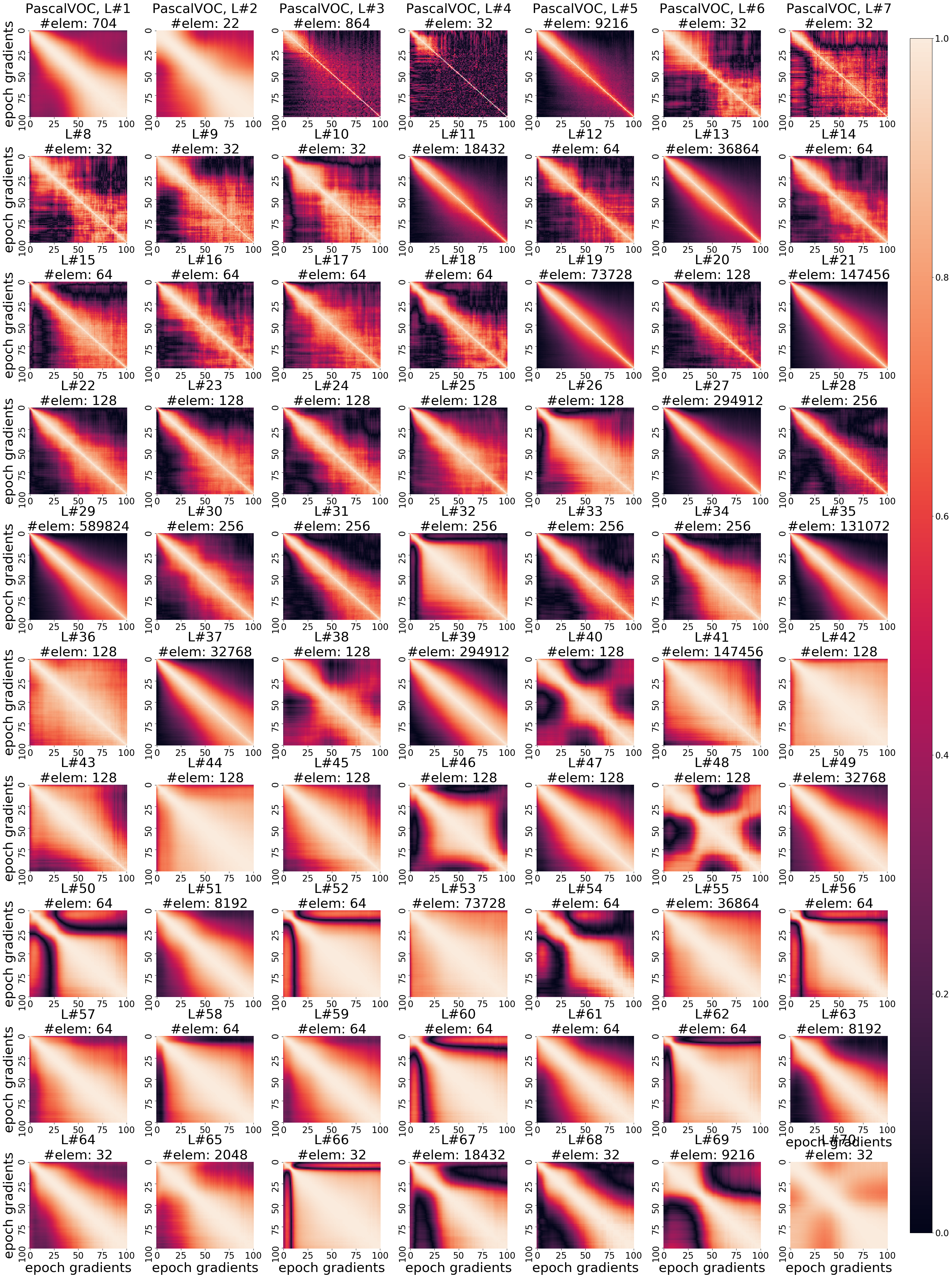}}
  \caption{\small{\textit{PCA Components Overlap with Gradient}. Repeat of Fig.~\ref{fig:prelim_3} on \textbf{U-Net} trained on \textbf{PascalVOC} dataset.}}
  \label{fig:prelim_3_voc_unet}
\end{figure}

\begin{figure}[h!]
  \centering
  \centerline{\includegraphics[width=1.2\textwidth]{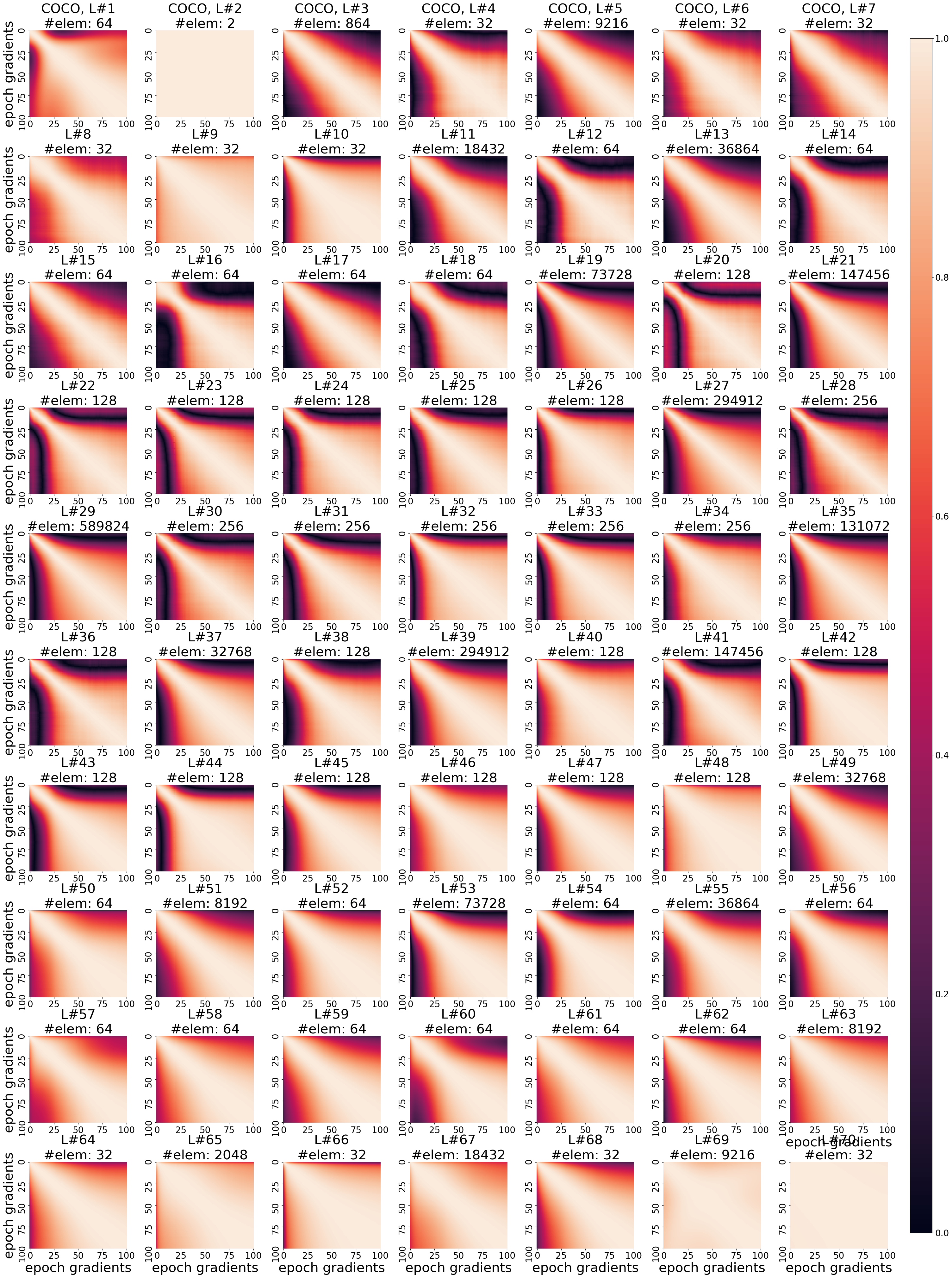}}
  \caption{\small{\textit{PCA Components Overlap with Gradient}. Repeat of Fig.~\ref{fig:prelim_3} on \textbf{U-Net} trained on \textbf{COCO} dataset.}}
  \label{fig:prelim_3_coco_unet}
\end{figure}

%%%%%%%%%%%%%%%%%%%%%%%%%%%%%%%%%%%%%%%%%%%%%%%%%%%%%%%%%%%%
\newpage
%%%%%%%%%%%%%%%%%%%%%%%%%%%%%%%%%%%%%%%%%%%%%%%%%%%%%%%%%%%%

\begin{figure}[t]
  \centering
    \centerline{\includegraphics[width=0.9\textwidth]{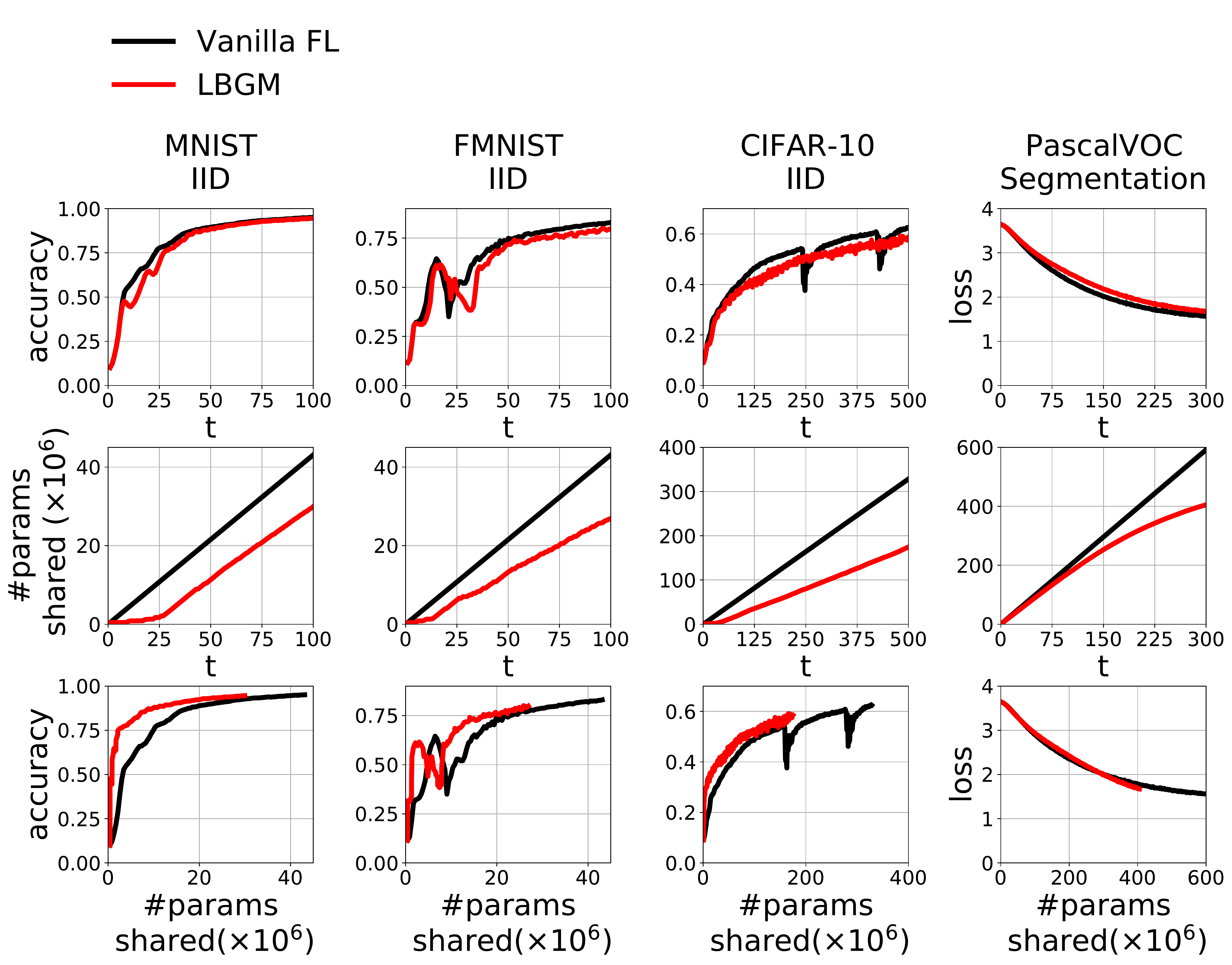}}
  \vspace{-1mm}
  \caption{\small{\textit{{\tt \algName} as a Standalone Algorithm}. Experimental results in Fig.~\ref{fig:standalone} repeated on datasets: \textbf{CIFAR-10}, \textbf{FMNIST}, and \textbf{MNIST} (iid data distribution) using classifier: \textbf{CNN}, and dataset: \textbf{PascalVOC} using \textbf{U-Net} architecture.}}
  \label{fig:standalone_cnn}
\end{figure}

\begin{figure}[t]
  \centering
  \centerline{\includegraphics[width=0.9\textwidth]{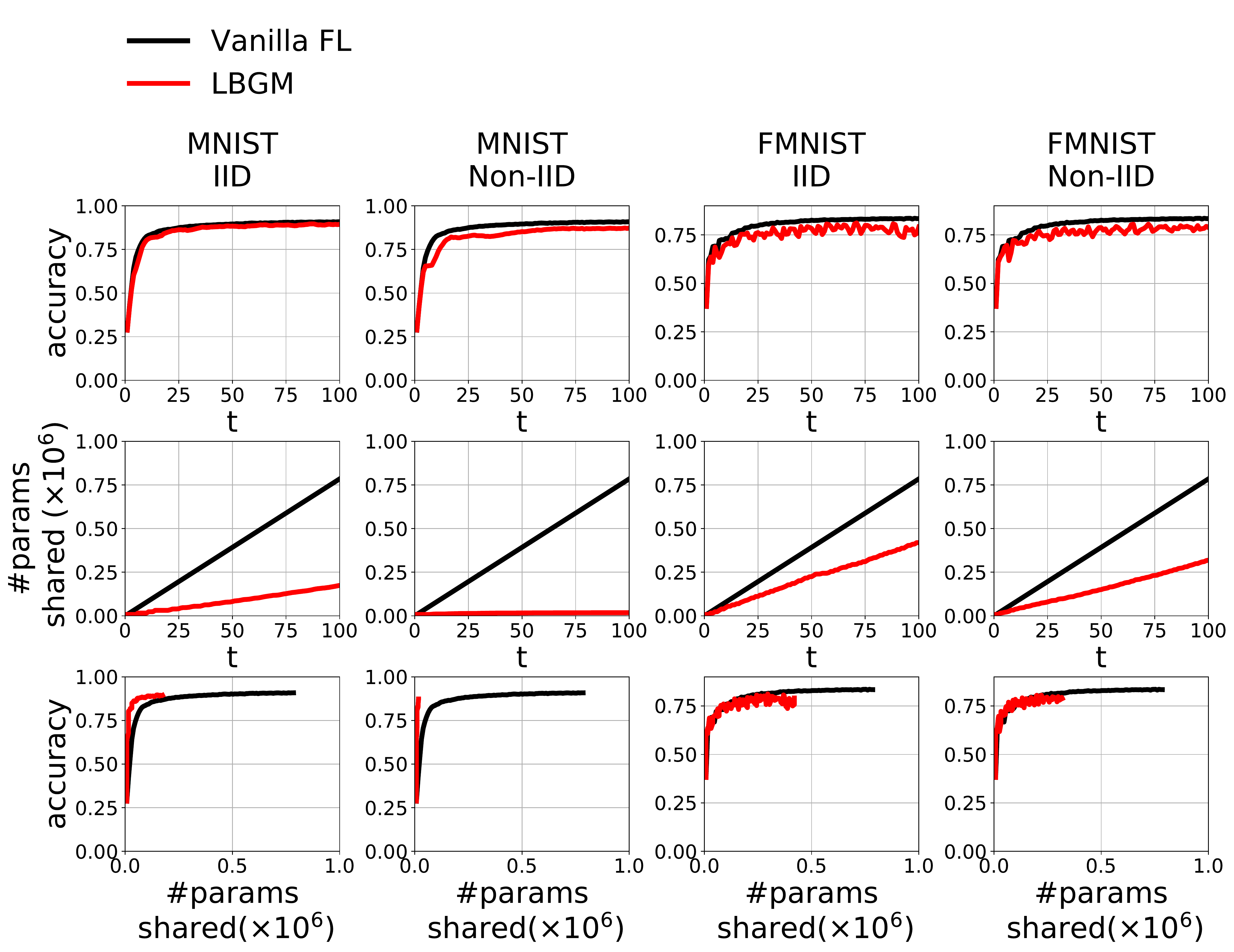}}
  \caption{\small{\textit{{\tt \algName} as a Standalone Algorithm}. Experimental results in Fig.~\ref{fig:standalone} repeated for datasets: \textbf{FMNIST} and \textbf{MNIST} (both iid and non-iid data distribution) using classifier: \textbf{FCN}.}}
  \label{fig:standalone_fcn}
\end{figure}

\begin{figure}[t]
  \centering
    \centerline{\includegraphics[width=0.9\textwidth]{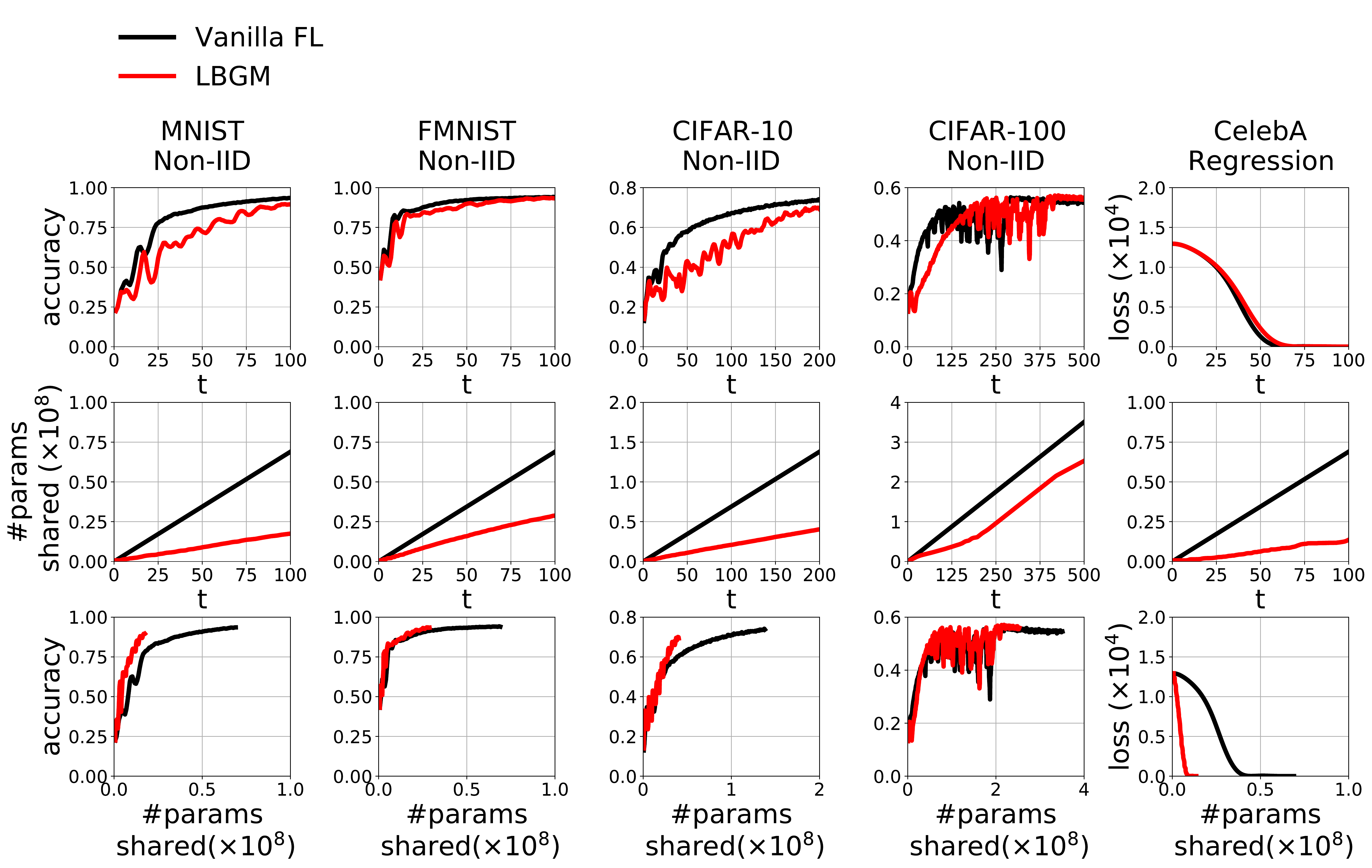}}
  \caption{\small{\textit{{\tt \algName} as a Standalone Algorithm}. Experimental results in Fig.~\ref{fig:standalone} repeated for datasets: \textbf{CIFAR-10}, \textbf{CIFAR-100}, \textbf{FMNIST}, \textbf{MNIST} (non-iid data distribution), and \textbf{CelebA} (face landmark regression task) using classifier: \textbf{Resnet18}.}}
  \label{fig:standalone_resnet18}
\end{figure}

%%%%%%%%%%%%%%%%%%%%%%%%%%%%%%%%%%%%%%%%%%%%%%%%%%%%%%%%%%%%
%%%%%%%%%%%%%%%%%%%%%%%%%%%%%%%%%%%%%%%%%%%%%%%%%%%%%%%%%%%%

\begin{figure}[t]
  \centering
    \centerline{\includegraphics[width=0.75\textwidth]{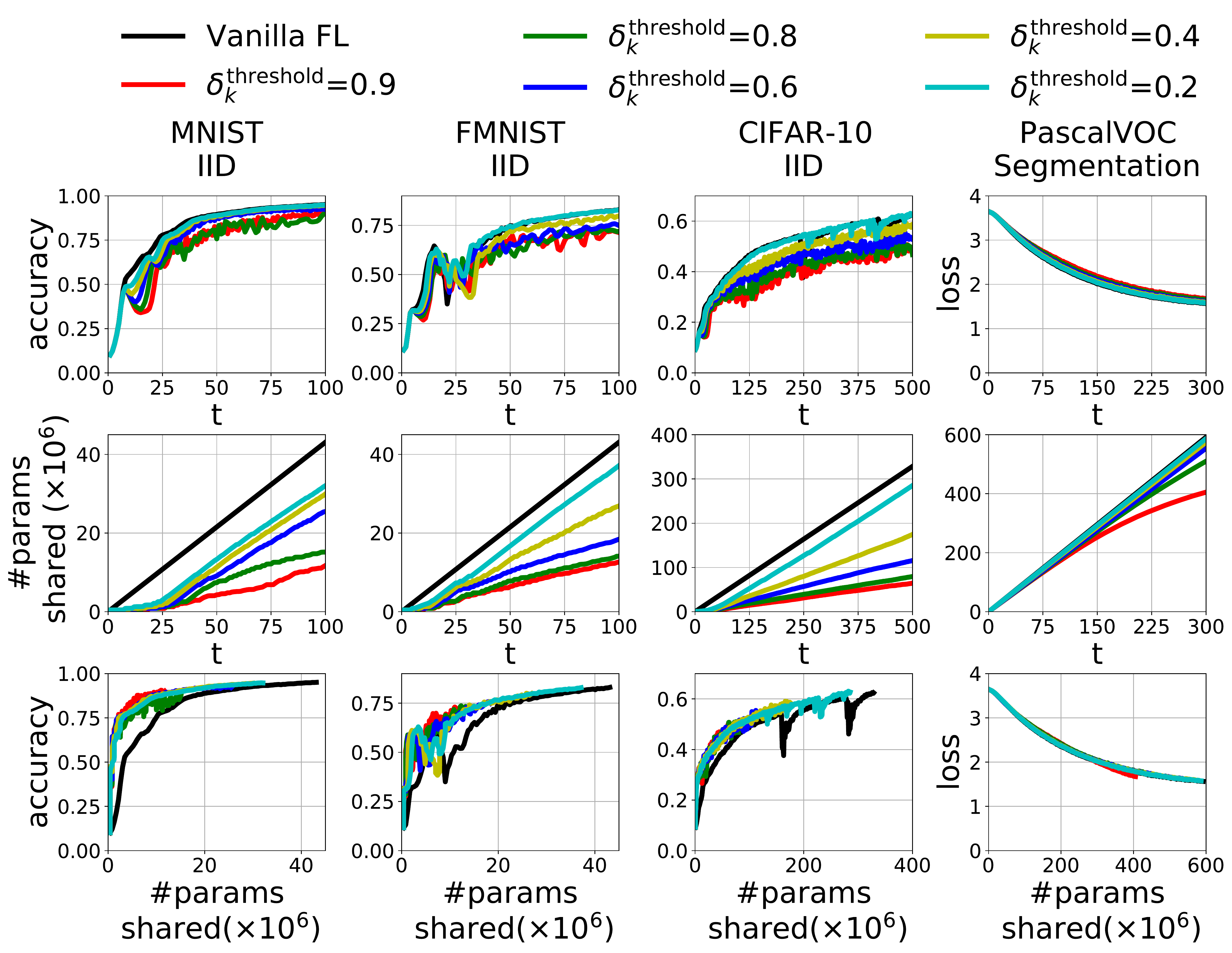}}
  \vspace{-1mm}
  \caption{\small{\textit{Effect of $\subsup{\delta}{k}{\mathsf{threshold}}$ on {\algName}}. Experimental results in Fig.~\ref{fig:rho_effect} repeated for datasets: \textbf{CIFAR-10}, \textbf{FMNIST}, and \textbf{MNIST} (iid data distribution) using classifier: \textbf{CNN}, and dataset: \textbf{PascalVOC} using \textbf{U-Net} architecture.}}
  \label{fig:rho_effect_cnn}
\end{figure}

\begin{figure}[t]
  \centering
  \centerline{\includegraphics[width=0.9\textwidth]{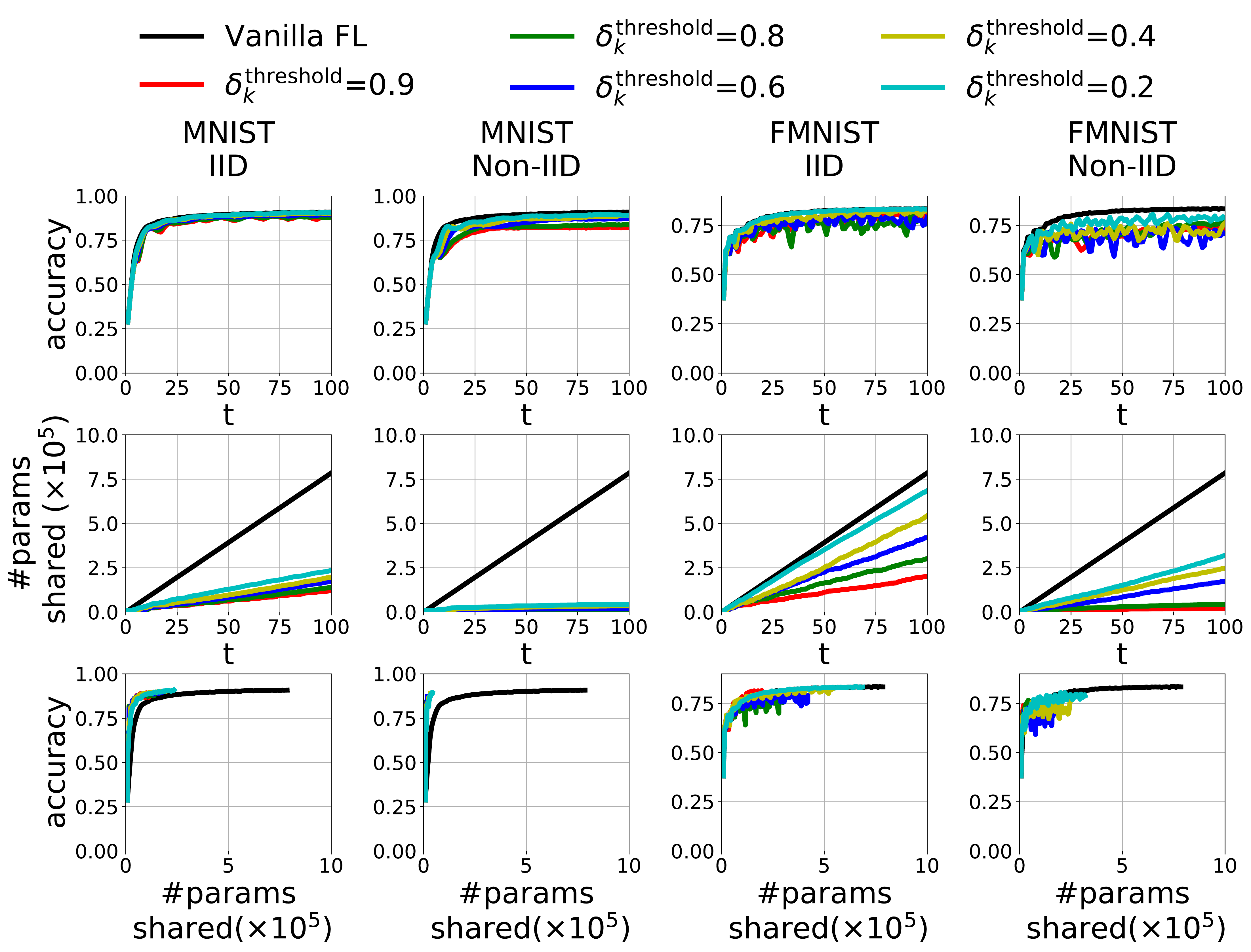}}
  \caption{\small{\textit{Effect of $\subsup{\delta}{k}{\mathsf{threshold}}$ on {\algName}}. Experimental results in Fig.~\ref{fig:rho_effect} repeated for datasets: \textbf{FMNIST} and \textbf{MNIST} (both iid and non-iid data distribution) using classifier: \textbf{FCN}.}}
      \label{fig:rho_effect_fcn}
\end{figure}

\begin{figure}[t]
  \centering
    \centerline{\includegraphics[width=0.75\textwidth]{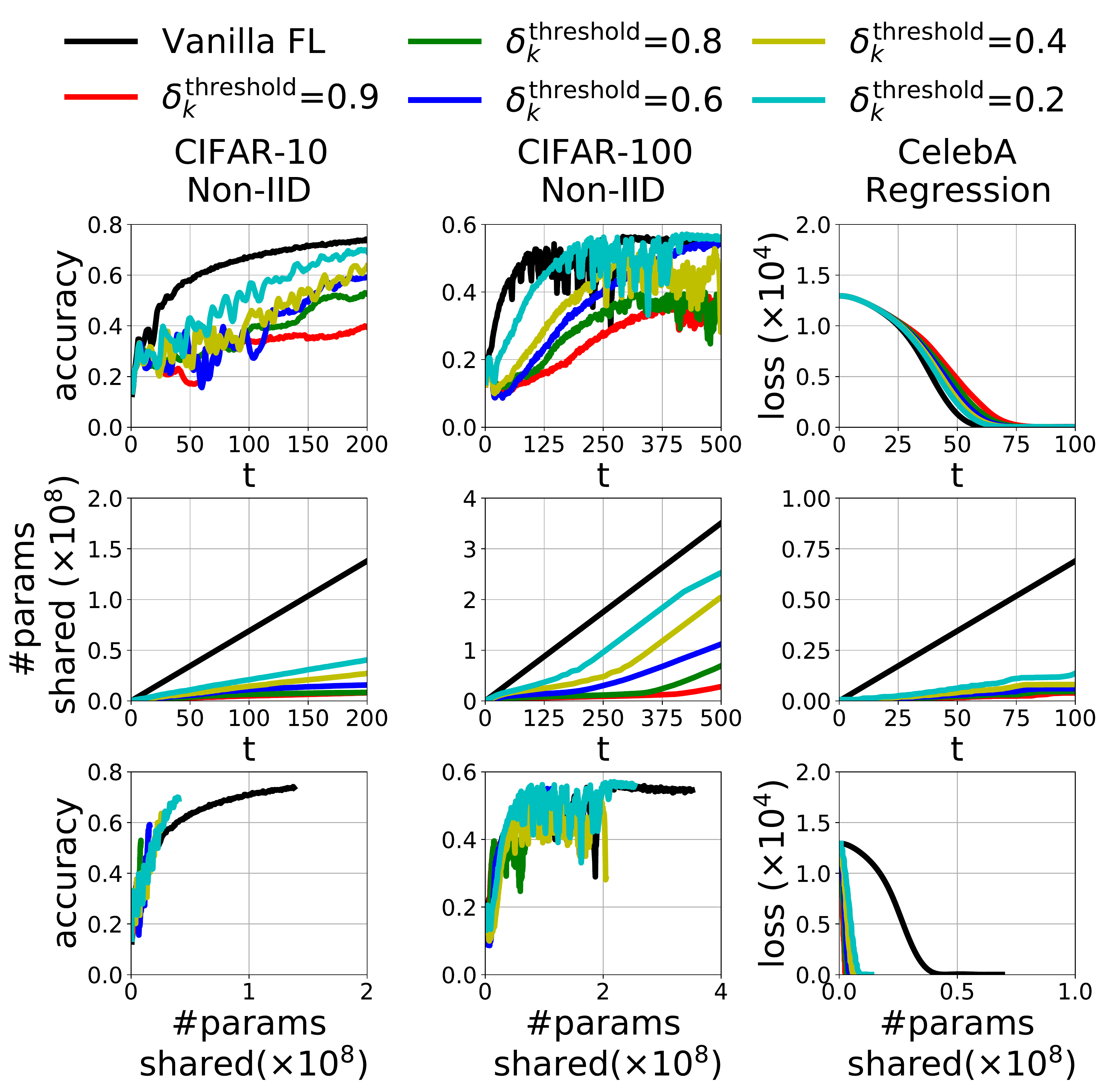}}
  \caption{\small{\textit{Effect of $\subsup{\delta}{k}{\mathsf{threshold}}$ on {\algName}}. Experimental results in Fig.~\ref{fig:rho_effect} repeated for datasets: \textbf{CIFAR-10}, \textbf{CIFAR-100} (non-iid data distribution), and \textbf{CelebA} (face landmark regression task) using classifier: \textbf{Resnet18}.}}
  \label{fig:rho_effect_resnet18}
\end{figure}

%%%%%%%%%%%%%%%%%%%%%%%%%%%%%%%%%%%%%%%%%%%%%%%%%%%%%%%%%%%%
%%%%%%%%%%%%%%%%%%%%%%%%%%%%%%%%%%%%%%%%%%%%%%%%%%%%%%%%%%%%

\begin{figure}[t]
  \centering
    \centerline{\includegraphics[width=1.0\textwidth]{}}
  \caption{\small{\textit{{\algName} as a Plug-and-Play Algorithm.}. Experimental results in Fig.~\ref{fig:plugnplay} repeated for dataset: \textbf{CIFAR-10}, \textbf{FMNIST}, and \textbf{MNIST} (iid data distribution) using classifier: \textbf{CNN}.}}
  \label{fig:plugnplay_cnn}
\end{figure}

\begin{figure}[t]
  \centering
    \centerline{\includegraphics[width=1.2\textwidth]{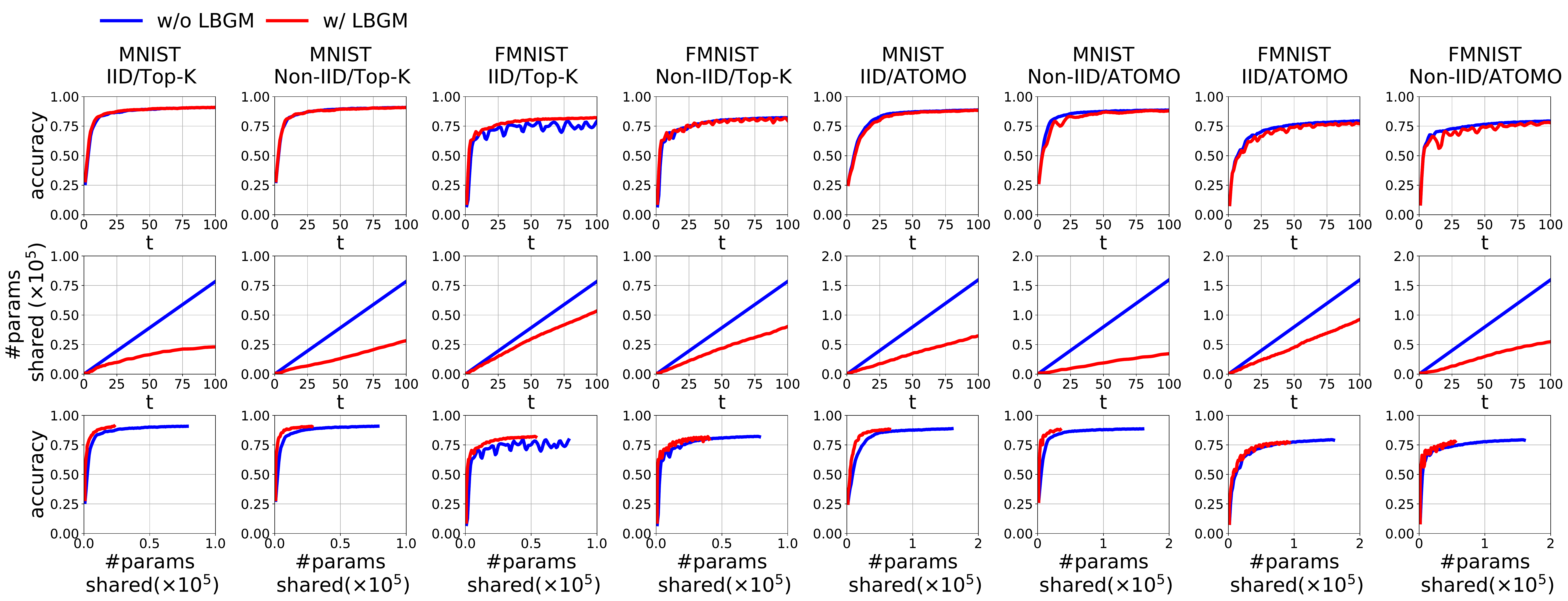}}
    \caption{\small{\textit{{\algName} as a Plug-and-Play Algorithm.}. Experimental results in Fig.~\ref{fig:plugnplay} repeated for datasets: \textbf{FMNIST} and \textbf{MNIST} (both iid and non-iid data distribution) using classifier: \textbf{FCN}.}}
  \label{fig:plugnplay_fcn}
  \vspace{-4mm}
\end{figure}

\begin{figure}[t]
  \centering
    \centerline{\includegraphics[width=1.0\textwidth]{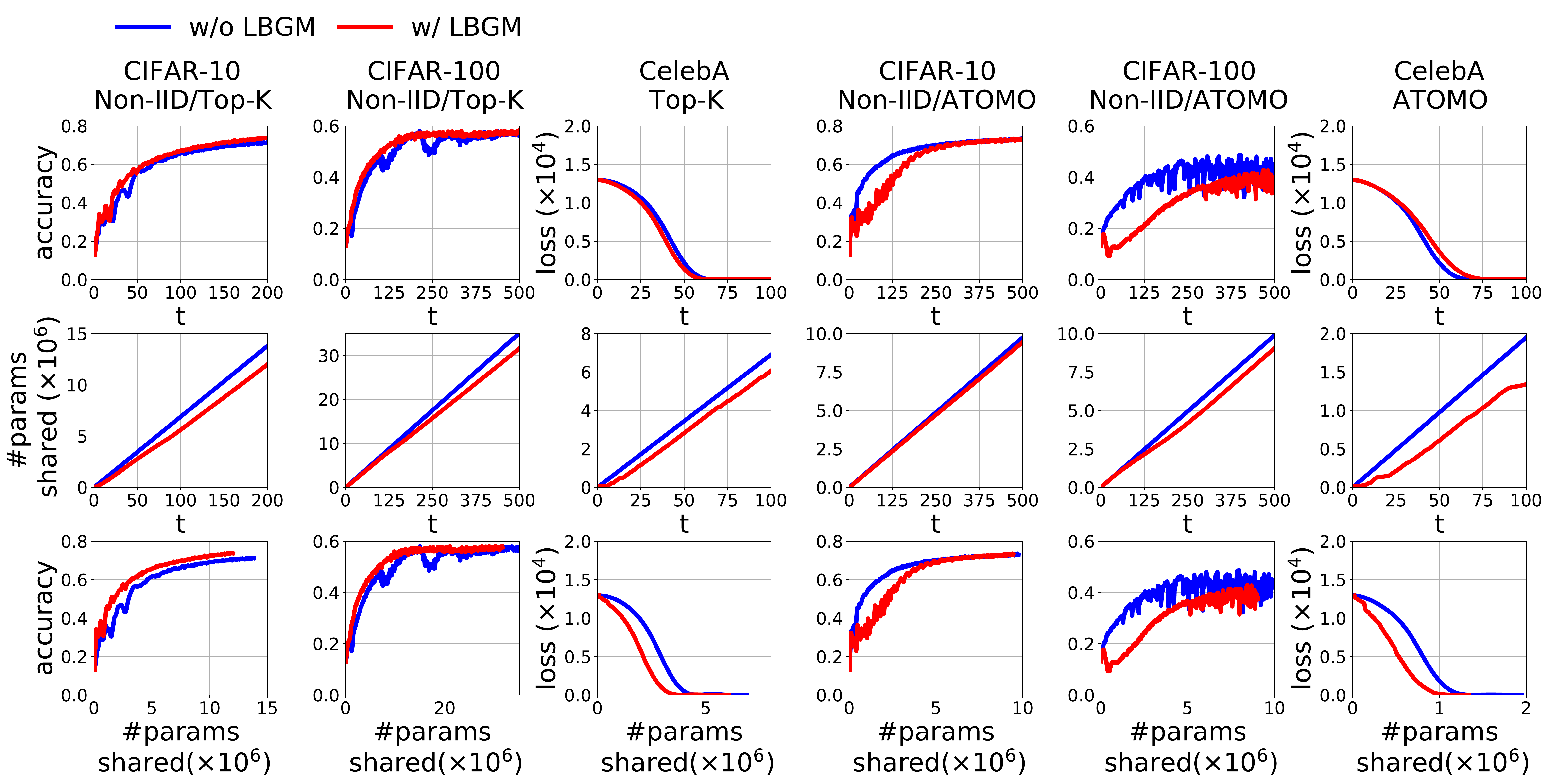}}
  \caption{\small{\textit{Effect of $\subsup{\delta}{k}{\mathsf{threshold}}$ on {\algName}}. Experimental results in Fig.~\ref{fig:plugnplay} repeated for datasets: \textbf{CIFAR-10}, \textbf{CIFAR-100} (non-iid data distribution), and \textbf{CelebA} (face landmark regression task) using classifier: \textbf{Resnet18}.}}
  \label{fig:plugnplay_resnet18}
\end{figure}

%%%%%%%%%%%%%%%%%%%%%%%%%%%%%%%%%%%%%%%%%%%%%%%%%%%%%%%%%%%%
%%%%%%%%%%%%%%%%%%%%%%%%%%%%%%%%%%%%%%%%%%%%%%%%%%%%%%%%%%%%

\begin{figure}[t]
  \centering
    \centerline{\includegraphics[width=0.75\textwidth]{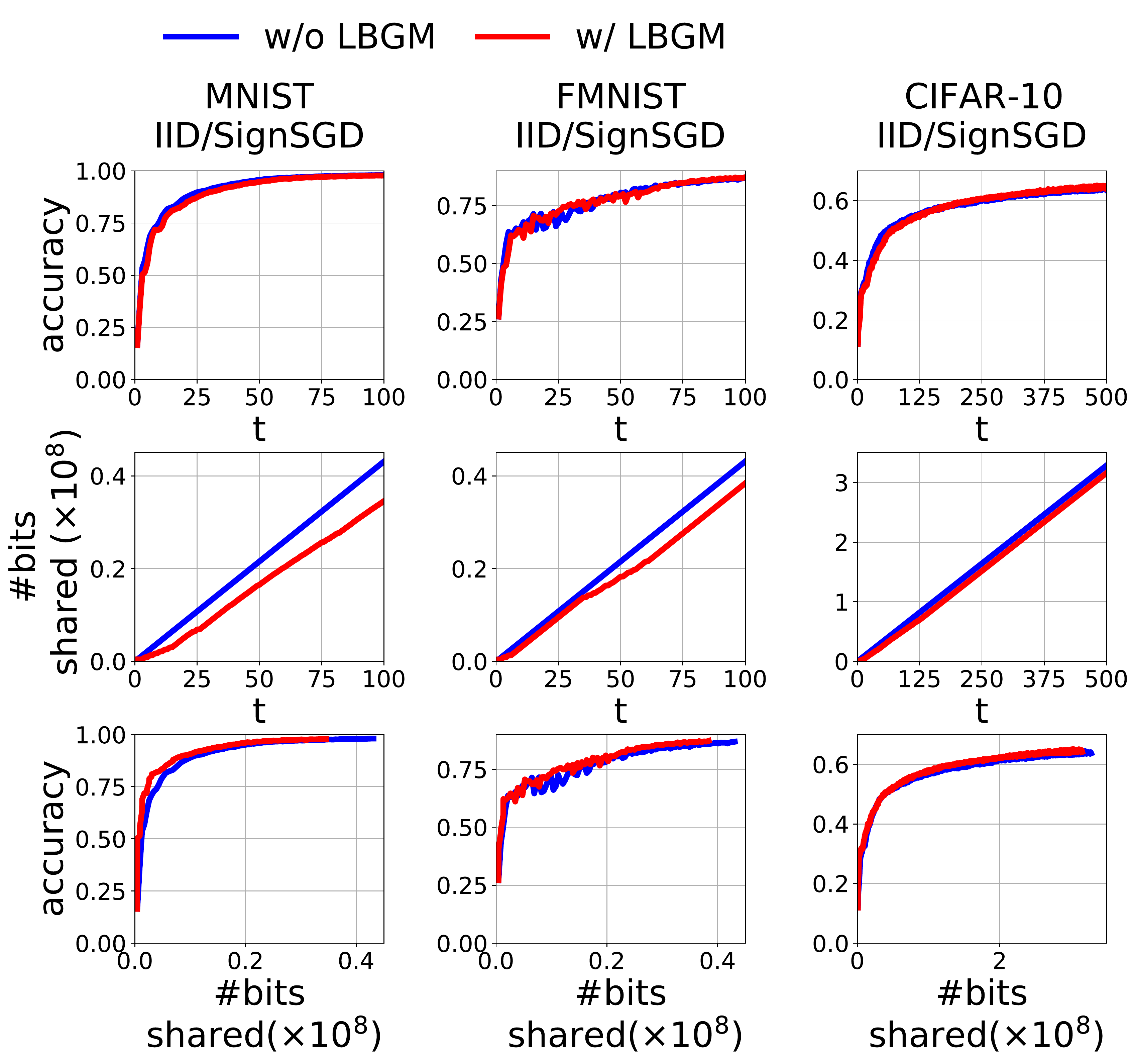}}
  \caption{\small{Experimental results in Fig.~\ref{fig:dist_training} repeated for dataset: \textbf{CIFAR-10}, \textbf{FMNIST}, \textbf{FMNIST} (iid data distribution) using classifier: \textbf{CNN}.}}
  \label{fig:dist_training_cnn}
\end{figure}

\begin{figure}[t]
  \centering
    \centerline{\includegraphics[width=0.9\textwidth]{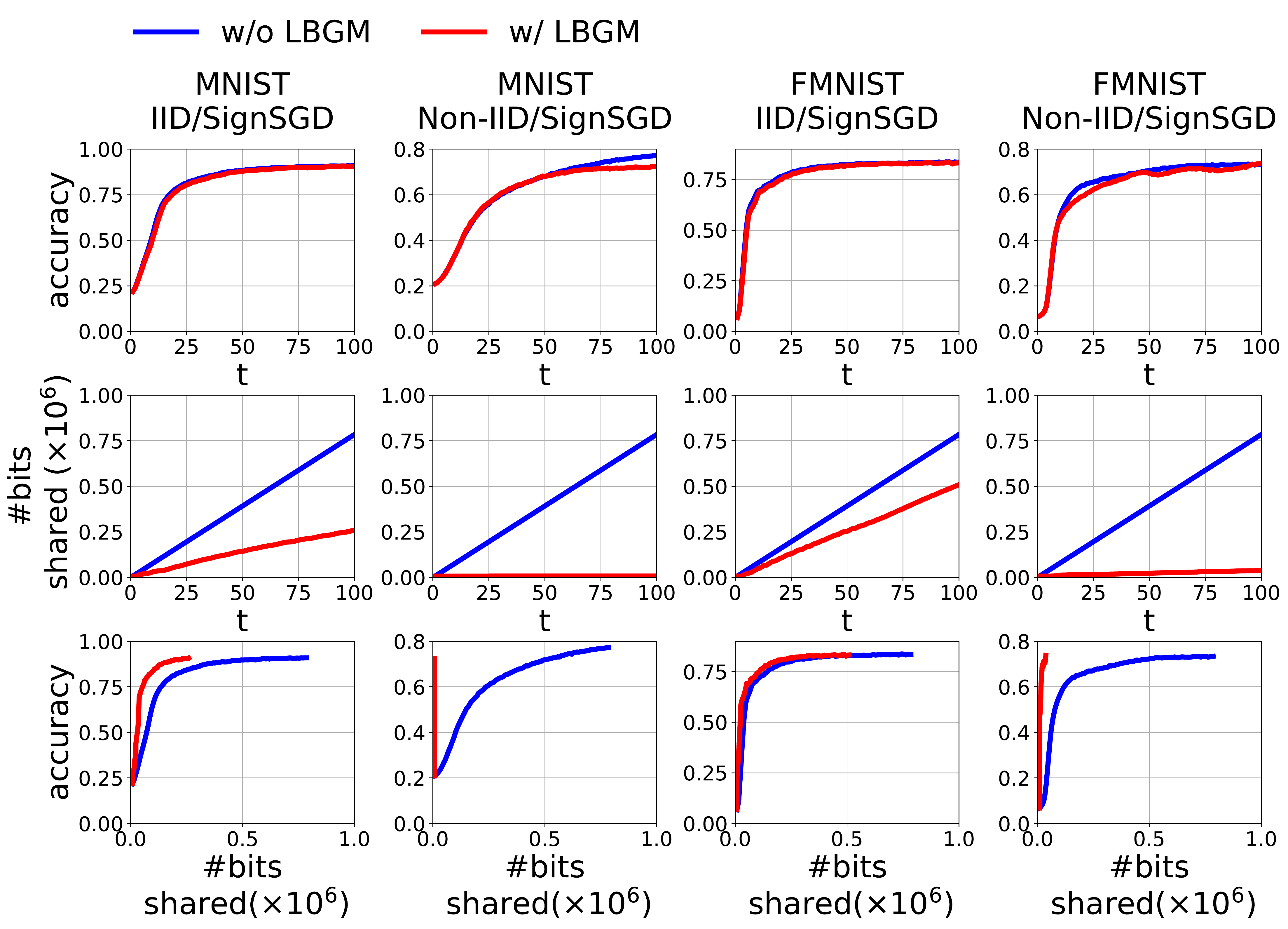}}
  \caption{\small{Experimental results in Fig.~\ref{fig:dist_training} repeated for dataset: \textbf{FMNIST} and \textbf{MNIST} (both iid and non-iid data distribution) using classifier: \textbf{FCN}.}}
  \label{fig:dist_training_fcn}
\end{figure}

\begin{figure}[t]
  \centering
    \centerline{\includegraphics[width=0.75\textwidth]{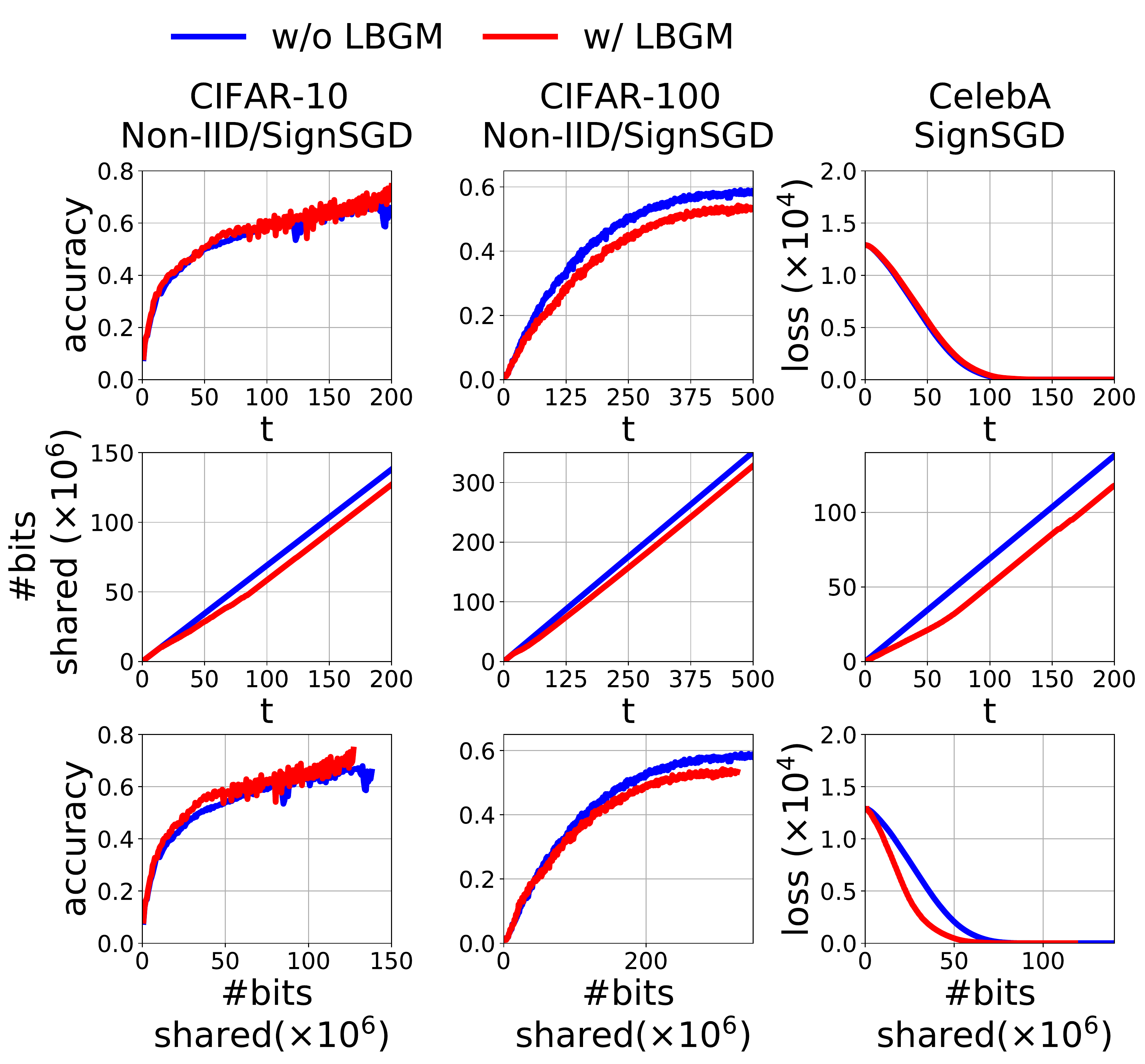}}
  \caption{\small{\textit{Effect of $\subsup{\delta}{k}{\mathsf{threshold}}$ on {\algName}}. Experimental results in Fig.~\ref{fig:dist_training} repeated for datasets: \textbf{CIFAR-10}, \textbf{CIFAR-100} (non-iid data distribution), and \textbf{CelebA} (face landmark regression task) using classifier: \textbf{Resnet18}.}}
  \label{fig:dist_training_resnet18}
\end{figure}

%%%%%%%%%%%%%%%%%%%%%%%%%%%%%%%%%%%%%%%%%%%%%%%%%%%%%%%%%%%%
%%%%%%%%%%%%%%%%%%%%%%%%%%%%%%%%%%%%%%%%%%%%%%%%%%%%%%%%%%%%

\begin{figure}[t]
  \centering
    \centerline{\includegraphics[width=0.9\textwidth]{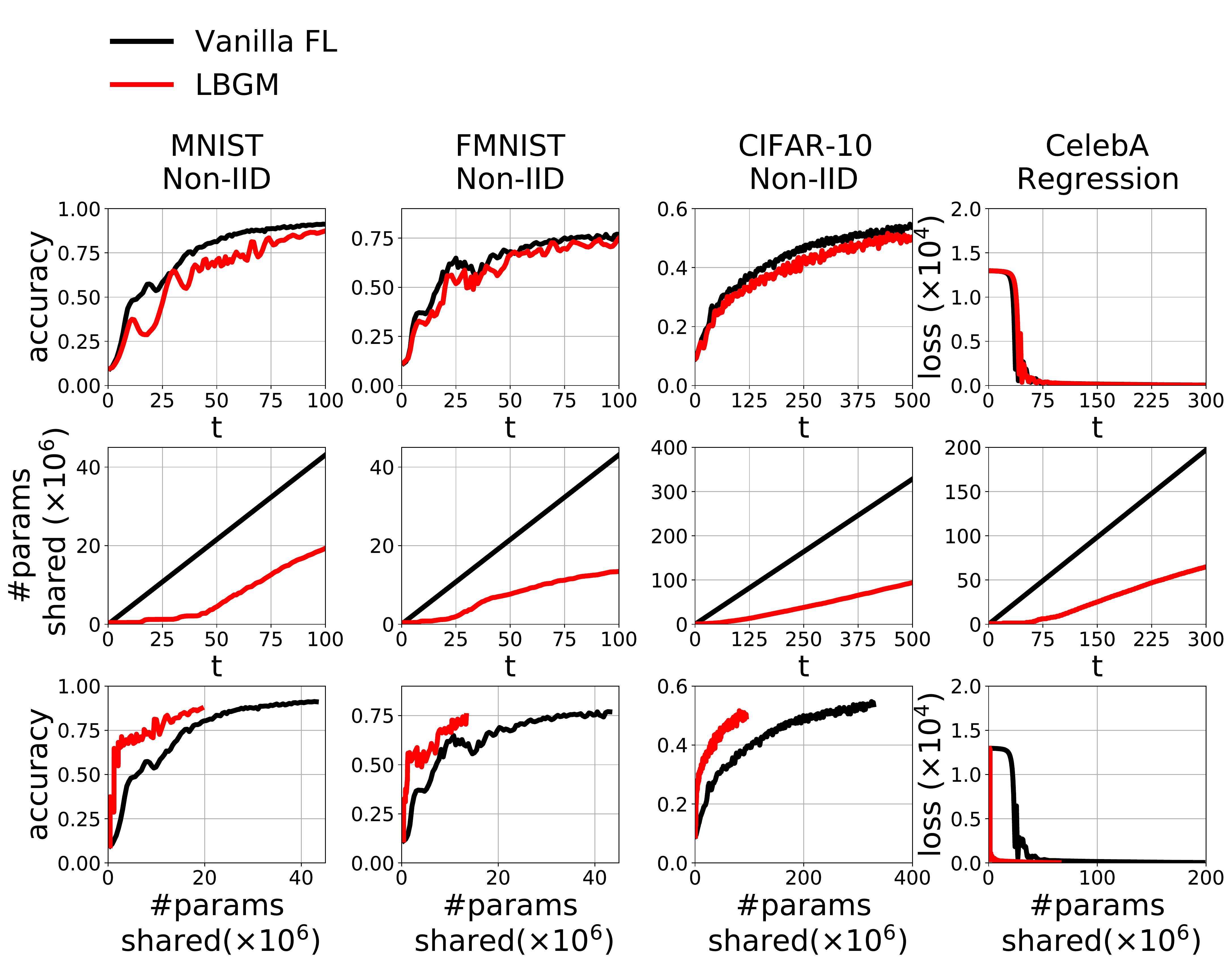}}
  \caption{\small{Experimental results in Fig.~\ref{fig:standalone} repeated for dataset: \textbf{CIFAR-10}, \textbf{FMNIST}, \textbf{FMNIST} (non-iid data distribution) and \textbf{CelebA} (face landmark regression taks) using classifier: \textbf{CNN} under \textbf{$50\%$ client sampling} for both Vanilla FL and LBGM.}}
  \label{fig:sampled_training_cnn_non_iid}
\end{figure}

\begin{figure}[t]
  \centering
    \centerline{\includegraphics[width=0.7\textwidth]{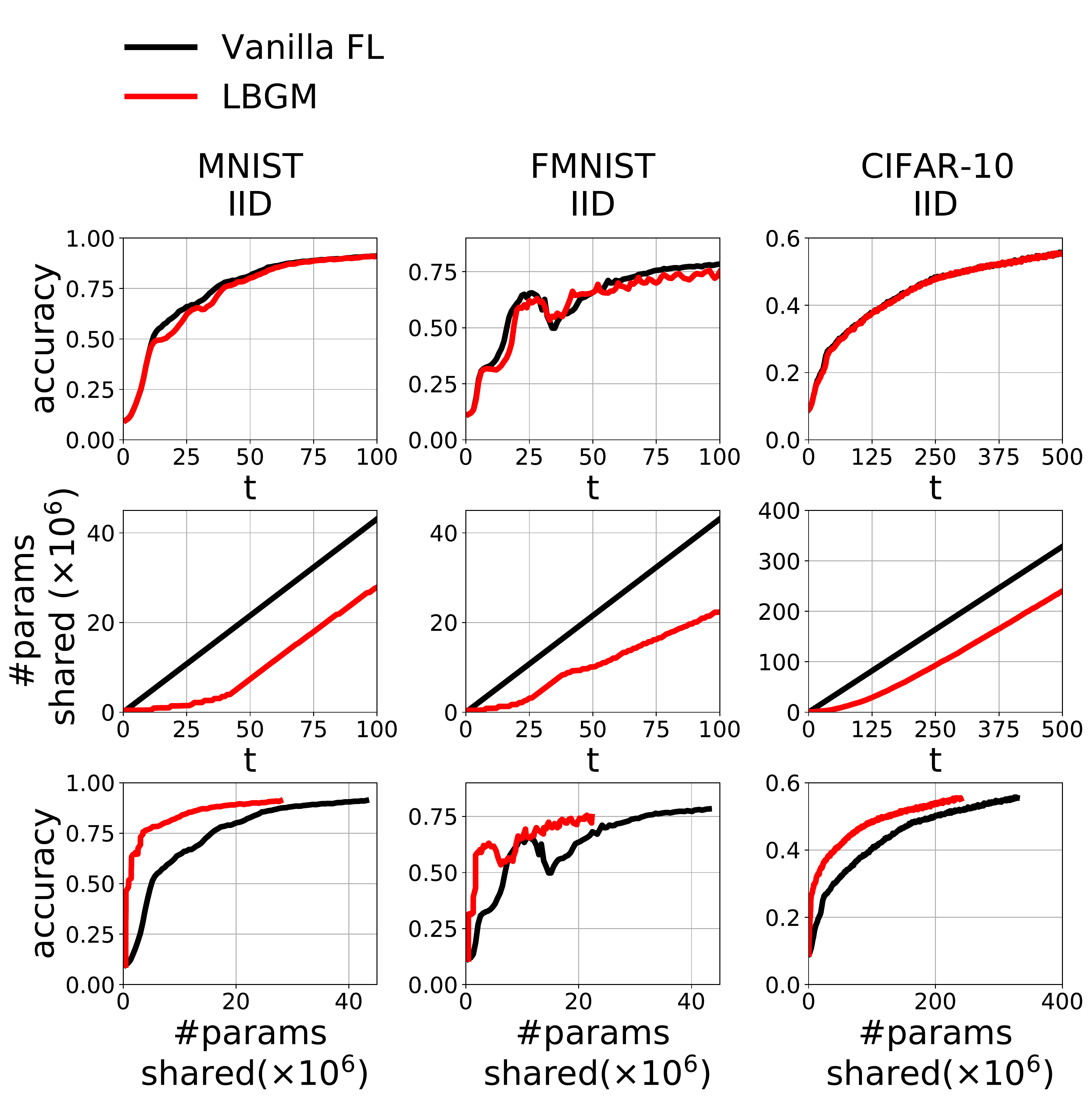}}
  \caption{\small{Experimental results in Fig.~\ref{fig:standalone} repeated for dataset: \textbf{CIFAR-10}, \textbf{FMNIST}, \textbf{FMNIST} (iid data distribution) using classifier: \textbf{CNN} under \textbf{$50\%$ client sampling} for both Vanilla FL and LBGM.}}
  \label{fig:sampled_training_cnn_iid}
\end{figure}

\end{document}